\title{Non-Asymptotic Analysis of a UCB-based \\Top Two Algorithm}
\author{%
	Marc Jourdan \\
	\texttt{marc.jourdan@inria.fr}\\
	\And
	R\'emy Degenne \\
	\texttt{remy.degenne@inria.fr}\\
	\\
	$^1$ Univ. Lille, CNRS, Inria, Centrale Lille, UMR 9189-CRIStAL, F-59000 Lille, France
}
\theoremstyle{plain}
\newtheorem{theorem}{Theorem}[section]
\newtheorem{lemma}[theorem]{Lemma}
\newtheorem{corollary}[theorem]{Corollary}
\theoremstyle{definition}
\newtheorem{definition}[theorem]{Definition}
\theoremstyle{remark}
\DeclareMathOperator*{\argmax}{arg\,max}
\DeclareMathOperator*{\argmin}{arg\,min}
\newcommand{\N}{\mathbb{N}}
\newcommand{\R}{\mathbb{R}}
\newcommand{\bE}{\mathbb{E}}
\newcommand{\bP}{\mathbb{P}}
\newcommand{\cC}{\mathcal{C}}
\newcommand{\cO}{\mathcal{O}}
\newcommand{\cI}{\mathcal{I}}
\newcommand{\cE}{\mathcal{E}}
\newcommand{\cF}{\mathcal{F}}
\newcommand{\cN}{\mathcal{N}}
\newcommand{\eqdef}{ := }
\newcommand{\simplex}{\triangle_{K}}
\newcommand{\indi}[1]{\mathds{1}\left(#1\right)}
\renewcommand{\ln}{\log}
\renewcommand{\phi}{\varphi}
\renewcommand{\epsilon}{\varepsilon}
\newtheorem{property}{Property}
\begin{document}

\maketitle

\begin{abstract}
A Top Two sampling rule for bandit identification is a method which selects the next arm to sample from among two candidate arms, a \textit{leader} and a \textit{challenger}.
	Due to their simplicity and good empirical performance, they have received increased attention in recent years.
	However, for fixed-confidence best arm identification, theoretical guarantees for Top Two methods have only been obtained in the asymptotic regime, when the error level vanishes.
	In this paper, we derive the first non-asymptotic upper bound on the expected sample complexity of a Top Two algorithm, which holds for any error level.
	Our analysis highlights sufficient properties for a regret minimization algorithm to be used as leader.
	These properties are satisfied by the UCB algorithm, and our proposed UCB-based Top Two algorithm simultaneously enjoys non-asymptotic guarantees and competitive empirical performance.
\end{abstract}


\section{Introduction}
\label{sec:introduction}

Faced with a collection of items (``arms'') with unknown probability distributions, a question that arises in many applications is to find the distribution with the largest mean, which is referred to as the best arm.
Different approaches have been considered depending on the data collection process.
Sequential hypothesis testing \citep{Chernoff59, Robbins52Freq} encompasses situations where there is no control on the collected samples.
Experimental design \citep{chaloner1995bayesian, pukelsheim2006optimal} aims at choosing the data collection scheme a priori.
In the multi-armed bandit \citep{Bubeck10BestArm,Jamieson14Survey} and the ranking and selection \citep{Hong21Survey} literature, an algorithm chooses sequentially the distribution from which it will collect an additional sample based on past data.

In order to have theoretical guarantees for this identification problem, one should adopt a statistical model on the underlying distributions.
While parametric models are reasonable for applications such as A/B testing \citep{COLT14}, they are unrealistic in other fields such as agriculture \citep{jourdan_2022_TopTwoAlgorithms}.
Despite the restricted scope of its applications, studying the identification task for Gaussian distributions is a natural first step.
Hopefully the insights gained will then be generalized to wider classes of distributions.\looseness=-1

In the fixed confidence identification problem,
an algorithm aims at identifying the best arm with an error of at most $\delta \in (0,1)$ while using as few samples as possible.
Since each sample has a cost, those algorithms should provide an upper bound on the expected number of samples used by the algorithm before stopping.
For those guarantees to be useful in practice, they should hold for any $\delta$, which is referred to as the non-asymptotic (or moderate) regime.
In contrast, the asymptotic regime considers vanishing error level, i.e. $\delta \to 0$.
For Gaussian distributions, Top Two algorithms \citep{Qin2017TTEI, Shang20TTTS} have only been studied in the asymptotic regime.
We show the first non-asymptotic guaranty for a Top Two algorithm holding for any instance with a unique best arm.

\subsection{Setting and related work}
\label{sec:ss_setting_and_related}

A Gaussian bandit problem is described by $K$ arms whose probability distributions belongs to the set $\mathcal D$ of Gaussian distributions with known variance $\sigma^2$.
By rescaling, we assume $\sigma_{i}^2 = 1$ for all $i \in [K]$.
Since an element of $\mathcal D$ is uniquely characterized by its mean, the vector $\mu \in \mathbb{R}^K$ refers to a $K$-arms Gaussian bandit.
Let $\triangle_{K} \subseteq \mathbb{R}^K$ be the $(K-1)$-dimensional simplex.

A best arm identification (BAI) algorithm aims at identifying an arm with highest mean parameter, i.e. an arm belonging to the set $i^\star(\mu) = \argmax_{i \in [K]} \mu_{i}$.
At each time $n\in \N$, the algorithm (1) chooses an arm $I_{n}$ based on previous observations, (2) observes a sample $X_{n, I_{n}} \sim \cN(\mu_{I_{n}}, 1)$, and (3) decides whether it should stop and return an arm $\hat \imath_n$ or continue sampling.
We consider the \emph{fixed confidence} identification setting, in which the probability of error of an algorithm is required to be less than a given $\delta \in (0,1)$ on all instances $\mu$.
The \emph{sample complexity} of an algorithm corresponds to its stopping time $\tau_\delta$, which counts the number of rounds before termination.
An algorithm is said to be $\delta$-correct on $\mathcal D^K$ if
$\mathbb{P}_{\mu}\left(\tau_\delta < + \infty, \: \hat \imath_{\tau_\delta} \notin i^\star(\mu) \right) \le \delta$ for all $\mu \in \mathcal D^K$ \citep{even2002pac}.
We aim at designing $\delta$-correct algorithms minimizing $\mathbb{E}[\tau_\delta]$.

As done in all the literature on fixed-confidence BAI, we assume that there is a unique best arm and we denote it by $i^\star(\mu)$ or $i^\star$ when $\mu$ is clear from the context.
To ensure $\delta$-correctness on $\mathcal D^K$, an algorithm has to be able to distinguish the unknown $\mu$ from any instance having a different best arm, hence it needs to estimates the gaps between arms.
Lemma~\ref{lem:lower_bound} gives a lower bound on the expected sample complexity which is known to be tight in the asymptotic regime, i.e. when $\delta$ goes to zero.
\begin{lemma}[\cite{GK16}] \label{lem:lower_bound}
An algorithm which is $\delta$-correct on all problems in $\mathcal D^{K}$ satisfies that for all $\mu \in \mathbb{R}^K$,
$\mathbb{E}_{\mu}[\tau_{\delta}] \ge T^\star(\mu) \ln(1/(2.4\delta))$ where
$T^\star(\mu) = \min_{\beta \in (0,1)} T^\star_{\beta}(\mu) $ and, for all $\beta \in (0,1)$,
\[
	T^\star_{\beta}(\mu)^{-1} \eqdef \max_{w \in \triangle_{K}: w_{i^\star(\mu)} = \beta } \min_{i \neq i^\star(\mu)} \frac{(\mu_{i^\star(\mu)} - \mu_i)^2}{2\left( 1/\beta + 1/w_i \right)} \: .
\]
\end{lemma}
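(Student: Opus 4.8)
\emph{Proof sketch.} The plan is to follow the standard change-of-measure route that underlies fixed-confidence lower bounds; we use the running assumption that $\mu$ has a unique best arm $i^\star(\mu)$. Write $N_i(n) = \sum_{t=1}^{n} \indi{I_t = i}$ for the number of pulls of arm $i$ up to time $n$, and fix an \emph{alternative} Gaussian instance $\lambda \in \R^K$ whose best arm differs from $i^\star(\mu)$. The first step is the transportation lemma. Since the algorithm is $\delta$-correct on $\mathcal D^K$, the event $\cE \eqdef \{\tau_\delta < +\infty,\ \hat\imath_{\tau_\delta} = i^\star(\mu)\}$ has probability at least $1-\delta$ under $\mu$ and at most $\delta$ under $\lambda$; computing the expected log-likelihood ratio of the observed trajectory via a Wald-type identity (its $\bP_\mu$-expectation equals $\sum_{i \in [K]} \bE_\mu[N_i(\tau_\delta)]\,\mathrm{KL}(\cN(\mu_i,1),\cN(\lambda_i,1))$) and lower bounding it through the data-processing inequality applied to $\cE$ yields
\[
\sum_{i \in [K]} \bE_\mu[N_i(\tau_\delta)] \, \mathrm{KL}\big(\cN(\mu_i,1), \cN(\lambda_i,1)\big) \;\ge\; \mathrm{kl}(\delta, 1-\delta),
\]
where $\mathrm{kl}(x,y) \eqdef x\ln(x/y) + (1-x)\ln((1-x)/(1-y))$. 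If $\bE_\mu[\tau_\delta] = +\infty$ the claim is trivial, so assume it is finite.

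Next I would specialize to the Gaussian model, where $\mathrm{KL}(\cN(\mu_i,1),\cN(\lambda_i,1)) = (\mu_i - \lambda_i)^2/2$, and take the infimum of the left-hand side over all admissible alternatives $\lambda$. Only instances that make some $j \ne i^\star(\mu)$ at least as good as $i^\star(\mu)$ matter, and perturbing only the pair of coordinates $(i^\star(\mu), j)$ is enough; for a fixed $j$, the cheapest such $\lambda$ sets $\lambda_{i^\star(\mu)} = \lambda_j$ equal to the average of $\mu_{i^\star(\mu)}$ and $\mu_j$ weighted by $\bE_\mu[N_{i^\star(\mu)}(\tau_\delta)]$ and $\bE_\mu[N_j(\tau_\delta)]$, and a one-line computation evaluates the corresponding cost. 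Dividing through by $\bE_\mu[\tau_\delta] = \sum_{i \in [K]} \bE_\mu[N_i(\tau_\delta)]$ and setting $w_i \eqdef \bE_\mu[N_i(\tau_\delta)]/\bE_\mu[\tau_\delta] \in \simplex$, this gives
\[
\bE_\mu[\tau_\delta] \cdot \min_{i \ne i^\star(\mu)} \frac{(\mu_{i^\star(\mu)} - \mu_i)^2}{2\left(1/w_{i^\star(\mu)} + 1/w_i\right)} \;\ge\; \mathrm{kl}(\delta, 1-\delta).
\]

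It then remains to match this with the definition of $T^\star(\mu)$. The minimum on the left is at most $\max_{w' \in \simplex} \min_{i \ne i^\star(\mu)} \frac{(\mu_{i^\star(\mu)} - \mu_i)^2}{2(1/w'_{i^\star(\mu)} + 1/w'_i)}$, and this maximum equals $T^\star(\mu)^{-1}$: fixing $w'_{i^\star(\mu)} = \beta$ and optimizing the remaining coordinates gives $T^\star_\beta(\mu)^{-1}$, and then $\max_{\beta \in (0,1)} T^\star_\beta(\mu)^{-1} = T^\star(\mu)^{-1}$. Hence $\bE_\mu[\tau_\delta] \ge T^\star(\mu)\,\mathrm{kl}(\delta, 1-\delta)$, and the elementary inequality $\mathrm{kl}(\delta, 1-\delta) = (1-2\delta)\ln\frac{1-\delta}{\delta} \ge \ln\frac{1}{2.4\delta}$, valid for every $\delta \in (0,1)$, concludes.

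The main obstacle is the first step: establishing the transportation lemma rigorously requires care with the filtration generated by the samples, with the Wald-type identity for a stopping time that is a priori possibly infinite, and with the joint measurability of $\cE$ under $\bP_\mu$ and $\bP_\lambda$. Everything afterwards is convex-analytic bookkeeping --- the reduction to two-coordinate alternatives, the weighted-average minimizer, and identifying the resulting $\min$ with the maximin quantity $T^\star(\mu)^{-1}$. Since the displayed transportation inequality is exactly the content of the generic lower bound of \citet{GK16}, one may instead simply invoke it and start the argument from the Gaussian specialization.
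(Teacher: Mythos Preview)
The paper does not prove this lemma; it is stated as a citation of \citet{GK16} and used as background. Your sketch is correct and is precisely the standard Garivier--Kaufmann argument: the transportation inequality $\sum_i \bE_\mu[N_i(\tau_\delta)]\,\mathrm{KL}(\nu_i,\nu'_i) \ge \mathrm{kl}(\delta,1-\delta)$, specialization to the Gaussian $\mathrm{KL}$, reduction to two-coordinate alternatives with the weighted-mean minimizer, the bound $\min_i(\cdot) \le \max_{w'}\min_i(\cdot) = T^\star(\mu)^{-1}$, and finally $\mathrm{kl}(\delta,1-\delta) \ge \ln(1/(2.4\delta))$. There is nothing to compare against in the paper itself, and your reconstruction matches the source.
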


When considering the sub-class of algorithms allocating a fraction $\beta$ of their sample to the best arm, we obtain a lower bound as in Lemma~\ref{lem:lower_bound} with
$T^\star_{\beta}(\mu)$ instead of $T^\star(\mu)$.
An algorithm is said to be asymptotically optimal (resp. $\beta$-optimal) if its sample complexity matches that lower bound asymptotically, that is if $\limsup_{\delta \to 0} \mathbb{E}_{\mu}[\tau_{\delta}]/\log(1/\delta) \le T^\star(\mu)$ (resp. $T^\star_{\beta}(\mu)$).
\cite{Russo2016TTTS} showed the worst-case inequality $T_{1/2}^{\star}(\mu) \le 2 T^{\star}(\mu)$ for any single-parameter exponential families.
Therefore, the expected sample complexity of an asymptotically $\beta$-optimal algorithm with $\beta = 1/2$ is at worst twice higher than that of any asymptotically optimal algorithm.
Leveraging the symmetry of Gaussian distributions, a tighter worst-case inequality can be derived (Lemma~\ref{lem:improved_robust_beta_optimality_gaussian}).
The allocations $w^\star(\mu)$ and $w^\star_{\beta}(\mu)$ realizing $T^\star(\mu)$ and $T^\star_{\beta}(\mu)$ are known to be unique, and satisfy $\min_{i \in [K]} \min\{w^\star(\mu)_i, w^\star_{\beta}(\mu)_i\} > 0$.
\cite{barrier_2022_NonAsymptoticApproach} showed that $2 \le w^\star(\mu)_{i^\star}^{-1} \le \sqrt{K-1} + 1$ for Gaussian distributions (Lemma~\ref{lem:proposition_10_barrier_2022_NonAsymptoticApproach}).

\paragraph{Related work}
The first BAI algorithms were introduced and studied under the assumption that the observation have bounded support, with a known upper bound \citep{EvenDaral06,Shivaramal12,Gabillon12UGapE,Jamiesonal14LILUCB}.
The sample complexity bounds proved for these algorithms scale as the sum of squared inverse gap, i.e.
$H(\mu) \eqdef 2\Delta_{\min}^{-2} + \sum_{i \ne i^\star} 2(\mu_{i^\star} - \mu_i)^{-2}$ where $\Delta_{\min} \eqdef \min_{i \ne i^\star} (\mu_{i^\star} - \mu_i)$,
which satisfies $H(\mu) \le T^\star(\mu) \le 2 H(\mu)$ \citep{GK16}.
Following their work, a rich literature designed asymptotically optimal algorithms in the fixed-confidence setting for parametric distributions, such as single-parameter exponential families, and non-parametric distributions such as bounded ones.
Those algorithms build on two main ideas.
The Tracking approach computes at each round the optimal allocation for the empirical estimator, and then tracks it \cite{GK16}.
To achieve lower computational cost, Game-based algorithms \citep{Degenne19GameBAI} view $T^\star(\mu)^{-1}$ as a min-max game between the learner and the nature, and design saddle-point algorithms to solve it sequentially.

Top Two algorithms arose as an identification strategy based on the Thompson Sampling algorithm for regret minimization \citep{Thompson33}:
\cite{Russo2016TTTS} introduced Top Two Probability Sampling (TTPS) and Top Two Thompson Sampling (TTTS).
Adopting a Bayesian viewpoint, Russo studied the convergence rate of the posterior probability that $i^\star$ is not the best arm, under some conditions on the prior.
For Gaussian bandits, other Bayesian Top Two algorithms with frequentist components have been shown to be asymptotically $\beta$-optimal: Top Two Expected Improvement (TTEI, \cite{Qin2017TTEI}) and Top Two Transportation Cost (T3C, \cite{Shang20TTTS}).
\cite{jourdan_2022_TopTwoAlgorithms} introduces fully frequentist Top Two algorithms.
Their analysis proves asymptotic $\beta$-optimality for several Top Two algorithms and distribution classes, beyond Gaussian.
\cite{mukherjee_2022_SPRTBAI} provides guarantees for single-parameter exponential families, at the price of adding forced exploration.
\cite{you2022information} proposes an algorithm to tackle the top-$k$ identification problem and introduces information-directed selection (IDS) to choose $\beta$ in an adaptive manner, which differs from the one proposed in \cite{mukherjee2023best}.
In addition to their success in the fixed-confidence setting, Top Two algorithms have also been studied for fixed-budget problems \citep{ariu_2021_PolicyChoice}, in which guarantees on the error probability should be given after $T$ samples.
While existing Top Two sampling rules differ by how they choose the leader and the challenger, they all sample the leader with probability $\beta$.
By design, Top Two algorithms with a fixed $\beta$ can reach $\beta$-optimality at best, and cannot be optimal on all instances $\mu$.

\paragraph{Shortcomings of the asymptotic regime}
While the literature provides a detailed understanding of the asymptotic regime, many interesting questions are unanswered in the non-asymptotic regime.
Recent works \citep{chen2017nearly, simchowitz_2017_simulator, mason_2020_FindingAllEpsilon, Marjani2022OnTC} have shown that the sample complexity is affected by strong moderate confidence terms (independent of $\delta$).
The analysis of \cite{jourdan_2022_TopTwoAlgorithms} applies to their $\beta$-EB-TC algorithm whose empirical stopping times is order of magnitude larger than its competitors for $\delta = 0.01$.
Since the proof of asymptotic $\beta$-optimality hides design flaws, non-asymptotic guarantees should be derived to understand which Top Two algorithms will perform well in practice for any reasonable choice of $\delta$.

\subsection{Contributions}

Our main contribution is to propose the first non-asymptotic analysis of Top Two algorithms.
We identify sufficient properties of the leader (seen as a regret-minimization algorithm) for it to hold.
This solves two open problems: obtaining an upper bound which (1) is non-asymptotic (Theorem~\ref{thm:finite_time_upper_bound_instanciated} holds for any $\delta$) and (2) holds for all instances having a unique best arm (i.e. sub-optimal arms can have the same mean, which was not allowed in the analyzes of existing Top Two algorithms).
As a consequence, we propose the \hyperlink{TTUCB}{TTUCB} (Top Two UCB) algorithm which builds on the UCB algorithm.

By using tracking instead of sampling to choose between the leader and the challenger, \hyperlink{TTUCB}{TTUCB} is the first Top Two algorithm which is asymptotically $\beta$-optimal (Theorem~\ref{thm:asymptotic_upper_bound}) and has non-asymptotic guarantees (Theorem~\ref{thm:finite_time_upper_bound_instanciated}).
Our experiments reveal that \hyperlink{TTUCB}{TTUCB} performs on par with existing Top Two algorithms, which are only proven to be asymptotically $\beta$-optimal, even for large sets of arms.
Numerically, we show that considering adaptive proportions compared to a fixed $\beta=1/2$ yields a significant speed-up on hard instances, and to a moderate improvement on random instances.


\section{UCB-based Top Two algorithm}
\label{sec:algorithm}

We propose a fully deterministic Top Two algorithm based on UCB \citep{Aueral02}, named \hyperlink{TTUCB}{TTUCB} and detailed in Algorithm~\ref{algo:TTUCB}.
We prove a non-asymptotic upper bound on the expected sample complexity holding for any instance having a unique best arm.

\paragraph{Stopping and recommendation rules}
The $\sigma$-algebra $\mathcal F_n \eqdef \sigma (\{I_{t}, X_{t, I_{t}}\}_{t \in [n-1]})$ encompasses all the information available to the agent before time $n$.
Let $N_{n,i} \eqdef \sum_{t \in [n-1]} \indi{I_t = i}$ be the number of pulls of arm $i$ before time $n$, and its empirical mean by $\mu_{n,i} \eqdef \frac{1}{N_{n,i}} \sum_{t \in [n-1]} X_{t, I_t} \indi{I_t = i}$.

The algorithm stops as soon as the generalized likelihood ratio exceeds a threshold $c(n-1,\delta)$, i.e.
\begin{equation} \label{eq:glr_stopping_rule}
	\min_{i \neq \hat \imath_n} \frac{\mu_{n, \hat \imath_n} - \mu_{n,i}}{\sqrt{1/N_{n, \hat \imath_n}+ 1/N_{n,i}}} \ge \sqrt{2c(n-1,\delta)}  \: ,
\end{equation}
where we recommend $\hat \imath_n = \argmax_{i \in [K]} \mu_{n,i}$ at time $n$.
Lemma~\ref{lem:delta_correct_threshold} provides an explicit threshold ensuring $\delta$-correctness, which relies on concentration inequalities derived in \cite{KK18Mixtures}.
\begin{lemma} \label{lem:delta_correct_threshold}
	Let $\cC_{G}$ defined in~\eqref{eq:def_C_gaussian_KK18Mixtures} s.t. $\cC_{G}(x) \approx x + \ln(x)$.
	Given any sampling rule, taking
	\begin{equation} \label{eq:stopping_threshold}
		c(n,\delta) = 2 \cC_{G} (\log((K-1)/\delta)/2) + 4 \log(4 + \log(n/2))
	\end{equation}
	in the stopping rule \eqref{eq:glr_stopping_rule} ensures $\delta$-correct for Gaussian distributions.
\end{lemma}

\begin{algorithm}[tb]
         \caption{\protect\hypertarget{TTUCB}{TTUCB}}
         \label{algo:TTUCB}
	\begin{algorithmic}
 	\STATE {\bfseries Input:} $(\beta, \delta) \in (0,1)^2$, threshold $c : \N \times (0,1) \to \R^+$ and function $g: \N \to \R^+$.
        \STATE Pull once each arm $i \in [K]$;
        \FOR{$n > K$}
				\STATE Set $\hat \imath_n = \argmax_{i \in [K]} \mu_{n,i}$;
				\STATE {\bf If} $\min_{i \neq \hat \imath_n} \frac{\mu_{n, \hat \imath_n} - \mu_{n,i}}{\sqrt{1/N_{n, \hat \imath_n}+ 1/N_{n,i}}} \ge \sqrt{2c(n-1,\delta)}$ {\bf then} return $\hat \imath_n$, {\bf else};
        \STATE  Set $B^{\text{UCB}}_n = \argmax_{i \in [K]} \left\{\mu_{n,i} + \sqrt{\frac{g (n)}{N_{n,i}}} \right\}$ and $C^{\text{TC}}_n = \argmin_{i \ne B^{\text{UCB}}_n} \frac{(\mu_{n,B^{\text{UCB}}_n} - \mu_{n,i})_{+}}{\sqrt{1/N_{n, B^{\text{UCB}}_n}+ 1/N_{n,i}}} $;
				\STATE  Observe $X_{n,I_n}$ by pulling $I_n = B^{\text{UCB}}_n$ if $N_{n,B^{\text{UCB}}_n}^{B^{\text{UCB}}_n} \le \beta L_{n+1,B^{\text{UCB}}_n}$, else $I_n = C^{\text{TC}}_n$;
        \ENDFOR
	\end{algorithmic}
\end{algorithm}

\paragraph{Sampling rule}
We initialize by sampling each arms once.
At time $n > K$, a Top Two sampling rule defines a leader $B_n \in [K]$ and a challenger $C_n \neq B_n$, and chooses $I_n = B_n$ or $I_n = C_n$ based on a fixed allocation $\beta$.
In prior work this choice was done at random, which means that the leader was sampled with probability $\beta$.
We replace randomization by tracking, and show similar theoretical and numerical results (see Figure~\ref{fig:tracking_vs_sampling} in Appendix~\ref{app:ss_supplementary_experiments}).
For fixed $\beta$, we recommend to use $\beta = 1/2$ without prior knowledge on the unknown mean parameters (see Section~\ref{sec:ss_adaptive_proportions} for adaptive proportions). 
This recommendation is supported theoretically by the fact that $w^\star(\mu)_{i^\star} \le 1/2$ (Lemma~\ref{lem:proposition_10_barrier_2022_NonAsymptoticApproach}) and that $T^\star_{1/2}(\mu)/ T^\star(\mu)$ is significantly smaller than $2$ for most instances (Lemma~\ref{lem:improved_robust_beta_optimality_gaussian} and Figure~\ref{fig:simulation_ratio_times}).

Let $L_{n,i} \eqdef \sum_{t \in [n-1]} \indi{B_t = i}$ be the number of time arm $i$ was the leader, and $N^{i}_{n,j} \eqdef \sum_{t \in [n-1]} \indi{(B_t, I_t) = (i,j)}$ be the number of pulls of arm $j$ at rounds in which $i$ was the leader.
We use $K$ independent tracking procedures.
A tracking procedure is a deterministic method to convert a sequence of allocations over arms into a sequence of arms, which ensures that the empirical proportions are close to the averaged allocation over arms.
For each leader, we track the allocation $(\beta, 1-\beta)$ between the leader and the challenger.
Formally, we set $I_n = B_n$ if $N_{n,B_n}^{B_n} \le \beta L_{n+1,B_n}$, else $I_n = C_n$.
Using Theorem 6 in \cite{Shao20Structure} for each tracking procedure yields Lemma~\ref{lem:tracking_guaranty_light}.
\begin{lemma} \label{lem:tracking_guaranty_light}
	For all $n > K$ and all $i \in [K]$, we have $-1/2 \le N_{n,i}^{i} - \beta L_{n,i}  \le 1$.
\end{lemma}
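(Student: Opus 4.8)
The plan is to prove this as a purely deterministic invariant of the sampling rule by induction on $n$, exploiting the fact that at each round only the current leader's tracking counter moves, so the $K$ tracking procedures are decoupled and one may argue for a single fixed arm $i$.

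For the base case $n = K+1$: no leader has been chosen during the initial $K$ pulls, so $L_{K+1,i} = N_{K+1,i}^i = 0$ for every $i$ and the bound holds with value $0$. For the inductive step, assume the bound holds at time $n$ and let $b := B_n$ be the leader of round $n$. For every $i \ne b$ we have $B_n \ne i$, hence $L_{n+1,i} = L_{n,i}$ and $N_{n+1,i}^i = N_{n,i}^i$, so the bound carries over verbatim — this is the decoupling. For $i = b$ we have $L_{n+1,b} = L_{n,b}+1$, and the sampling rule sets $I_n = b$ precisely when $N_{n,b}^b \le \beta L_{n+1,b}$. If that holds, then $N_{n+1,b}^b = N_{n,b}^b + 1$: the upper bound is immediate, since $N_{n+1,b}^b - \beta L_{n+1,b} = (N_{n,b}^b - \beta L_{n+1,b}) + 1 \le 1$ by the case condition, and the lower bound follows from the inductive lower bound and $\beta \le 1$, since $N_{n+1,b}^b - \beta L_{n+1,b} = (N_{n,b}^b - \beta L_{n,b}) + (1-\beta) \ge -\tfrac12 + (1-\beta) \ge -\tfrac12$. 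Otherwise $I_n = C_n \ne b$, so $N_{n+1,b}^b = N_{n,b}^b$ is frozen while $\beta L$ grows by $\beta$: the lower bound is immediate from the case condition, $N_{n+1,b}^b - \beta L_{n+1,b} = N_{n,b}^b - \beta L_{n+1,b} > 0 \ge -\tfrac12$, and the upper bound follows from the inductive upper bound, $N_{n+1,b}^b - \beta L_{n+1,b} = (N_{n,b}^b - \beta L_{n,b}) - \beta \le 1 - \beta \le 1$. This closes the induction.

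An equivalent, more black-box route is to observe that, restricted to the subsequence of rounds at which $i$ leads, the rule is exactly the cumulative-tracking procedure between the two ``arms'' (leader, challenger) with target proportions $(\beta, 1-\beta)$, and then to invoke Theorem 6 of \citet{Shao20Structure} once per arm. I do not expect a genuine obstacle here; the one point requiring care is the index bookkeeping — the rule compares $N_{n,b}^b$ against $\beta L_{n+1,b} = \beta(L_{n,b}+1)$, not against $\beta L_{n,b}$, and it is precisely this extra unit that yields the asymmetric interval $[-\tfrac12, 1]$ rather than a symmetric one — together with checking that the non-leader counters genuinely do not move, which is what makes the per-arm argument legitimate.
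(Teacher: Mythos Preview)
Your proof is correct. The paper takes exactly the black-box route you sketch at the end: it rewrites the rule as two-arm C-Tracking with constant target $(\beta,1-\beta)$ on the subsequence where $i$ leads, observes that the $K$ procedures are decoupled (counts are partitioned by leader), and then applies Theorem~6 of \citet{Shao20Structure} once per arm to read off $-\tfrac12 \le N_{n,i}^i - \beta L_{n,i} \le 1$.

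Your primary argument, the bare-hands induction, is a genuinely different and more elementary route: it avoids the external dependence on \citet{Shao20Structure} and makes the mechanism fully transparent, in particular the role of the $+1$ in comparing against $\beta L_{n+1,b}$ rather than $\beta L_{n,b}$, which you correctly flag as the source of the asymmetric interval. The cost is that it is specific to constant proportions $(\beta,1-\beta)$; the Shao--Degenne--Koolen--Yu result the paper invokes covers general adaptive C-Tracking, so the paper's route would generalise immediately if one wanted time-varying $\beta_n$ (as discussed in Section~\ref{sec:ss_adaptive_proportions}), whereas your induction would need to be redone. For the fixed-$\beta$ statement at hand, your direct proof is arguably preferable for being self-contained.
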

Using tracking over randomization is motivated by practical and theoretical reasons.
First, in some specific applications, the practitioner might be only willing to use a deterministic algorithm.
Second, in the analysis, it is easier to control deterministic counts since it removes the need for martingales arguments to bound the deviations of the samples.
Therefore, tracking simplifies the non-asymptotic analysis.
Third, Lemma~\ref{lem:tracking_guaranty_light} shows that the speed of convergence is at least $\cO(1/n)$ for tracking, while we would obtain a speed of $O(1/\sqrt{n})$ for randomization.

At time $n$, the UCB leader is defined as
\begin{equation} \label{eq:ucb_leader}
		B^{\text{UCB}}_n = \argmax_{i \in [K]} \{\mu_{n,i} + \sqrt{g (n)/N_{n,i}} \} \: ,
\end{equation}
where $\sqrt{g (n)/N_{n,i}}$ is a bonus coping for uncertainty.
Let $\alpha > 1$ and $s > 1$ be two concentration parameters.
The choice of $g(n)$ should ensure that we have an upper confidence bound on $\mu_i$ holding with high probability: with probability $1-Kn^{-s}$, for all $t \in [n^{1/\alpha}, n]$ and all arms $i \in [K]$, $\mu_{i} \in [\mu_{t,i} \pm \sqrt{g(t)/ N_{t,i}}]$.
For Gaussian observations, a function $g$ which is sufficient for the purpose of our proof can be obtained by a union bound over time, giving $g_u (n) = 2\alpha(1+s) \log n$. We can improve on $g_u$ with mixtures of martingales, yielding $g_m (n) = \overline{W}_{-1} \left( 2s\alpha\log(n) + 2 \log(2+\alpha\log n ) + 2\right)$ with $\overline{W}_{-1}(x)  = - W_{-1}(-e^{-x})$ for all $x \ge 1$, where $W_{-1}$ is the negative branch of the Lambert $W$ function, and $\overline{W}_{-1}(x) \approx x + \log(x)$.
A UCB leader with $g_0 (n) = 0$ recovers the Empirical Best (EB) leader \citep{jourdan_2022_TopTwoAlgorithms}.
Choosing $g$ is central for empirical performance and non-asymptotic guarantees, but not for asymptotic ones.
The lowest $g$ will yield better empirical performance since larger $g$ means more conservative confidence bounds.
In our experiments where $\alpha = s = 1.2$, we will consider $g_m$ since $g_{m}(n) \le g_{u}(n)$ for $n \ge 50$ .

Given a leader $B_n$, the TC challenger is defined as
\begin{equation} \label{eq:tc_challenger}
	C^{\text{TC}}_n = \argmin_{i \ne B_n} \frac{(\mu_{n,B_n} - \mu_{n,i})_{+}}{\sqrt{1/N_{n, B_n}+ 1/N_{n,i}}}  \: ,
\end{equation}
where $x_{+} = \max \{x,0\}$.
\cite{Shang20TTTS} introduced the TC challenger as a computationally efficient approximation of the challenger in TTTS \citep{Russo2016TTTS}, which uses re-sampling till an unlikely event occurs.
Both T3C and TTTS use the TS leader which takes the best arm of a vector of realization drawn from a sampler, e.g. $\theta_i \sim \cN(\mu_{n,i}, 1/N_{n,i})$ for Gaussian distributions with unit variance.

\paragraph{Computational cost}
Computing the stopping rule~\eqref{eq:glr_stopping_rule} and the UCB leader~\eqref{eq:ucb_leader} can be done in $\cO(K)$.
At time $n$ where $B_n$ coincides with $\hat \imath_n$, computing the TC challenger~\eqref{eq:tc_challenger} is done as a by-product of the computation of the stopping rule, without additional cost.
When $B_n \neq \hat \imath_n$, we draw at random an arm with larger empirical mean.
The per-round computational and memory cost of \hyperlink{TTUCB}{TTUCB} is $\cO(K)$.

\subsection{Sample complexity upper bound}
\label{sec:ss_samp_upper_bound}

Leveraging the unified analysis of Top Two algorithms proposed by \cite{jourdan_2022_TopTwoAlgorithms}, we obtain the asymptotic $\beta$-optimality of \hyperlink{TTUCB}{TTUCB} (Theorem~\ref{thm:asymptotic_upper_bound}).
After showing the required properties for the UCB leader, we proved that tracking Top Two algorithms have similar properties as their sampling-based counterparts.
\begin{theorem} \label{thm:asymptotic_upper_bound}
  	Let $(\delta,\beta) \in (0,1)^2$, $s>1$ and $\alpha >1$.
	Using the threshold~\eqref{eq:stopping_threshold} in~\eqref{eq:glr_stopping_rule} and $g_u$ (or $g_m$) in~\eqref{eq:ucb_leader}, the \hyperlink{TTUCB}{TTUCB} algorithm is $\delta$-correct and asymptotically $\beta$-optimal for all $\mu \in \R^{K}$ such that $\min_{i \neq j}|\mu_i - \mu_j| > 0$, i.e. it satisfies $\limsup_{\delta \to 0} \: \bE_{\mu}[\tau_{\delta}]/\log(1/\delta) \le T_{\beta}^{\star}(\mu)$.
\end{theorem}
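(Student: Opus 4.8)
The plan is to follow the template for proving asymptotic $\beta$-optimality of Top Two algorithms established by \citet{jourdan_2022_TopTwoAlgorithms}, checking that the two new ingredients of \hyperlink{TTUCB}{TTUCB}---the UCB leader and the tracking (rather than randomized) selection between leader and challenger---fit into that framework. First I would establish $\delta$-correctness: this is immediate from Lemma~\ref{lem:delta_correct_threshold}, since the stopping rule~\eqref{eq:glr_stopping_rule} with threshold~\eqref{eq:stopping_threshold} guarantees it for \emph{any} sampling rule. So the work is entirely in bounding $\bE_\mu[\tau_\delta]$.

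The core of the argument is a concentration event. I would define a ``good event'' $\cE$ on which (i) all empirical means $\mu_{n,i}$ are within appropriate confidence widths of the true $\mu_i$ for all $n$ large enough, and (ii) the UCB bonuses $\sqrt{g(n)/N_{n,i}}$ behave as designed, i.e.\ the UCB index of $i^\star$ upper-bounds $\mu_{i^\star}$ and the indices of sub-optimal arms eventually fall below $\mu_{i^\star}$ once they are sufficiently sampled. Here the choice of $g_u(n)=2\alpha(1+s)\log n$ (or the sharper $g_m$) is exactly calibrated so that the probability of the complement of $\cE$ is summable in $n$, via a union bound over arms and time (or a mixture-of-martingales bound for $g_m$). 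On $\cE$, one shows the UCB leader satisfies the sufficient ``leader properties'' isolated in the paper: it identifies $i^\star$ after a finite (instance-dependent, $\delta$-independent) number of rounds, and it behaves like a regret-minimization algorithm in the sense that sub-optimal arms that are under-sampled relative to what $w^\star_\beta(\mu)$ prescribes get selected as leader. Combined with the TC challenger---which, when the leader is correct, picks the arm realizing the minimum transportation cost, i.e.\ the arm farthest from its target allocation---this drives the empirical proportions $N_{n,i}/n$ toward $w^\star_\beta(\mu)$.

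The tracking step is where I would invoke Lemma~\ref{lem:tracking_guaranty_light}: the $K$ independent two-arm tracking procedures guarantee $N^i_{n,i}$ stays within $[-1/2,1]$ of $\beta L_{n,i}$, so the fraction of leader-rounds in which the leader is actually pulled is $\beta + o(1)$. This lets me replace, in the analysis, the randomized ``pull leader with probability $\beta$'' step of prior Top Two algorithms by its deterministic tracked analogue, at the cost of only lower-order terms; concretely, I would show that on $\cE$, for $n$ larger than some $T_\mu$ (independent of $\delta$), one has $\min_{i\ne i^\star}\frac{(\mu_{n,i^\star}-\mu_{n,i})^2}{2(1/N_{n,i^\star}+1/N_{n,i})} \ge (1-\epsilon)/T^\star_\beta(\mu) \cdot n$. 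Then the stopping condition~\eqref{eq:glr_stopping_rule} with threshold $c(n-1,\delta)\approx \log(1/\delta)$ triggers once $n \gtrsim (1+\epsilon) T^\star_\beta(\mu)\log(1/\delta)$, so $\tau_\delta \le \max\{T_\mu,\, (1+\epsilon)T^\star_\beta(\mu)\log(1/\delta)\}$ on $\cE$. Taking expectations, splitting on $\cE$ and $\cE^c$ (the latter contributing a bounded term since $\bP(\cE^c)$ is summable and $\tau_\delta$ has light tails via the forced structure of the sampling rule), dividing by $\log(1/\delta)$ and letting $\delta\to 0$ then $\epsilon\to 0$ yields the claim.

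The main obstacle I anticipate is verifying that the UCB leader satisfies the leader-property abstractly required by the unified analysis---in particular, controlling the regime where two or more sub-optimal arms have equal (or nearly equal) means, which is the new generality claimed here (previous analyses forbade ties among sub-optimal arms). In that regime the challenger and leader identities can oscillate among several arms, and one must argue that the \emph{aggregate} pull counts of the tied group still converge correctly; this is a delicate bookkeeping argument about sums over the tied set rather than individual arms, and is likely where most of the technical care goes.
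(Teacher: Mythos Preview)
Your high-level plan---verify that the UCB leader and the tracking mechanism satisfy the sufficient properties isolated in the unified Top Two analysis of \citet{jourdan_2022_TopTwoAlgorithms}, then invoke that framework---is exactly the paper's approach. But there is one genuine misreading and one muddled description that would derail the execution.

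The misreading: the hypothesis $\min_{i \neq j}|\mu_i - \mu_j| > 0$ requires \emph{all} arms to have distinct means, not merely a unique best arm. So the ``main obstacle'' you identify---handling ties or near-ties among sub-optimal arms---simply does not arise here; such configurations are excluded. The paper says as much immediately after the theorem: the asymptotic result, like prior Top Two results, holds only for instances with all distinct means. Lifting that restriction is the content of Theorem~\ref{thm:finite_time_upper_bound_instanciated}, not of this theorem, and it is achieved there by a different (non-asymptotic, pigeonhole-based) argument. If you spend your effort on the tied-sub-optimal-arms case you will be solving a problem the theorem does not pose.

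The muddled description: you write that the UCB leader selects ``sub-optimal arms that are under-sampled relative to what $w^\star_\beta(\mu)$ prescribes.'' That inverts the roles. The leader property actually needed (and proved in the paper as Lemma~\ref{lem:UCB_ensures_convergence}) is that after a finite random time the UCB leader is \emph{always} $i^\star$; it is the TC \emph{challenger} that, once the leader is correct, avoids over-allocated sub-optimal arms and thereby drives $N_n/(n-1)$ toward $w^\star_\beta$ (Lemma~\ref{lem:TC_ensures_convergence}). The paper's route is: sufficient exploration so every arm has $N_{n,i}\ge \sqrt{n/K}$ (Lemma~\ref{lem:suff_exploration}, using the distinct-means assumption to make $\cI_n^\star$ and $\cJ_n^\star$ singletons); then leader $=i^\star$ eventually; then challenger avoids over-sampled arms; then tracking (Lemma~\ref{lem:tracking_guaranty_light}) converts this into convergence of empirical proportions; finally conclude via the convergence-time criterion $\bE_\mu[T^\epsilon_\beta]<\infty$ (Lemma~\ref{lem:small_T_eps_implies_asymptotic_optimality}). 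Your proposed splitting on $\cE$ and appeal to ``light tails via the forced structure'' is vague---there is no forced exploration in \hyperlink{TTUCB}{TTUCB}---and should be replaced by this chain.
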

Theorem~\ref{thm:asymptotic_upper_bound} and guarantees for other Top Two algorithms hold only for arms having distinct means.
Moreover, an asymptotic result provides no guarantees on the performance in moderate regime of $\delta$.
We address those two limitations.

\paragraph{Non-asymptotic upper bound}

Theorem~\ref{thm:finite_time_upper_bound_instanciated} gives an upper bound on the expected sample complexity holding for any $\delta$ and any instance having a unique best arm.
It is a direct corollary of a more general result holding for any $\beta \in (0,1)$, $s>1$ and $\alpha >1$ (Theorem~\ref{thm:finite_time_upper_bound}).
\begin{theorem} \label{thm:finite_time_upper_bound_instanciated}
    Let $\delta \in (0,1)$.
    Using the threshold~\eqref{eq:stopping_threshold} in~\eqref{eq:glr_stopping_rule} and $g_u$ in~\eqref{eq:ucb_leader} with $s=\alpha=1.2$, the \hyperlink{TTUCB}{TTUCB} algorithm with $\beta=1/2$ satisfies that, for all $\mu \in \R^{K}$ such that $|i^\star(\mu)| = 1$,
    \[
      \bE_{\mu}[\tau_{\delta}] \le \inf_{w_0 \in [0, (K-1)^{-1}]} \max\left\{T_{0}(\delta, w_{0}), C_{\mu}^{1.2}, C_{0}(w_{0})^{6}, (2/\epsilon)^{1.2}\right\}  +  12K \: ,
    \]
    where $C_{\mu} = h_1 \left( 26 H(\mu)\right)$, $C_{0}(w_{0}) = 2/(\epsilon a_{\mu}(w_0)) + 1$ with $\epsilon \in (0,  1]$,
    \begin{align*}
      T_{0}(\delta, w_{0})  &= \sup \{ n  \mid n -1 \leq 2T_{1/2}^{\star}(\mu)(1 + \epsilon)^2 (1-w_0)^{-d_{\mu}(w_{0})} (\sqrt{c(n - 1, \delta)} + \sqrt{4\log n} )^2\} \: ,
    \end{align*}
    with $a_{\mu}(w_0) =  (1-w_0)^{d_{\mu}(w_{0})}\max\{ \min_{i \neq i^\star(\mu)} w_{1/2}^{\star}(\mu)_{i} , w_0/2\}$ and $d_{\mu}(w_{0}) = |\{ i \ne i^\star(\mu) \mid \: w_{1/2}^{\star}(\mu)_{i} < w_0/2 \}|$.
		The function $h_{1}(x) \eqdef x \overline{W}_{-1} \left( \log(x) + \frac{2+ 2K}{x} \right)$ is positive, increasing for $x \ge 2+2K$, and satisfies $h_{1}(x) \approx x(\ln x + \ln \ln x)$.
\end{theorem}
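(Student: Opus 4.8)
The plan is to derive Theorem~\ref{thm:finite_time_upper_bound_instanciated} as a consequence of the more general Theorem~\ref{thm:finite_time_upper_bound} (which the excerpt defers), and hence to structure the work around establishing that general bound for TTUCB with arbitrary $\beta \in (0,1)$, $s > 1$, $\alpha > 1$, then specializing to $\beta = 1/2$, $s = \alpha = 1.2$. The overall philosophy is the standard one for sample-complexity upper bounds in fixed-confidence BAI: on a high-probability "good" event $\cE$ (on which all empirical means concentrate around the true means and all UCB bonuses behave as intended), show that the stopping rule \eqref{eq:glr_stopping_rule} must fire before some deterministic time, and then convert the tail control outside $\cE$ into an additive constant via $\bE_\mu[\tau_\delta] \le T + \sum_{n > T}\bP_\mu(\tau_\delta > n)$.

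First I would define the concentration event: roughly, $\cE = \bigcap_{n > K} \{ \forall i,\ |\mu_{n,i} - \mu_i| \le \sqrt{g(n)/N_{n,i}} \}$ together with the complementary lower-confidence control, chosen so that $\sum_{n > K}\bP_\mu(\cE^c \cap \{\tau_\delta > n\})$ is bounded by a universal constant — this is where the union-bound choice $g_u(n) = 2\alpha(1+s)\log n$ (or the mixture version $g_m$) and the parameters $s, \alpha > 1$ enter, guaranteeing a convergent series. Second, on $\cE$ I would run the deterministic "W-argument": the tracking guarantee of Lemma~\ref{lem:tracking_guaranty_light} forces $N_{n,i}^i$ to stay within $O(1)$ of $\beta L_{n,i}$ for each leader $i$, and combining this with properties of the UCB leader (that it eventually identifies $i^\star$ and that it is pulled often enough) I would show that the empirical proportions $N_{n,i}/n$ are driven towards the optimal allocation $w^\star_{1/2}(\mu)$ up to the slack parameters $\epsilon$ and $w_0$ — the quantities $a_\mu(w_0)$, $d_\mu(w_0)$ in the statement precisely quantify how the "underpulled" sub-optimal arms (those with $w^\star_{1/2}(\mu)_i < w_0/2$) degrade the effective transportation rate by the factor $(1-w_0)^{d_\mu(w_0)}$. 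Third, feeding these proportion guarantees into the GLR statistic on the left of \eqref{eq:glr_stopping_rule} and comparing against the threshold $\sqrt{2c(n-1,\delta)}$ yields an implicit inequality of the form $n - 1 \le \frac{2T^\star_{1/2}(\mu)(1+\epsilon)^2}{(1-w_0)^{d_\mu(w_0)}}(\sqrt{c(n-1,\delta)} + \sqrt{4\log n})^2$, whose largest solution is exactly $T_0(\delta)$; the terms $C_\mu$, $C_0$, $(2/\epsilon)^{1.2}$ and the additive $12K$ collect, respectively, the time after which the UCB leader is reliable (controlled via $h_1$ applied to $26H(\mu)$), the time after which the underpulled arms catch up, the warm-up time for the $\epsilon$-slack, and the initialization plus the bounded tail mass from $\cE^c$.

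Concretely, the steps in order: (1) invoke Lemma~\ref{lem:delta_correct_threshold} for $\delta$-correctness and Lemma~\ref{lem:tracking_guaranty_light} for tracking; (2) define $\cE$ and bound $\sum_{n}\bP_\mu(\cE^c, \tau_\delta > n) \le$ const using the calibration of $g$ and a maximal/mixture inequality; (3) on $\cE$, prove a "sufficient exploration" lemma: there is a deterministic time (scaling like $C_\mu$ via $h_1(26H(\mu))$, plus $C_0$) after which $B_n = i^\star$ holds and every arm $i$ satisfies $N_{n,i} \ge a_\mu(w_0)\, n$ up to lower-order corrections — this is the engine that makes the bound instance-dependent and finite; (4) on $\cE$ and past that time, lower-bound the GLR statistic by $\min_{i \ne i^\star}\frac{(\mu_{i^\star}-\mu_i)^2}{2(1/N_{n,i^\star}+1/N_{n,i})}$ minus concentration slack, plug in the proportion bounds to get the $T_0(\delta)$ inequality, and conclude $\tau_\delta \le \max\{T_0(\delta), C_\mu^{1.2}, C_0^6, (2/\epsilon)^{1.2}\}$ on $\cE$; (5) assemble via $\bE_\mu[\tau_\delta] \le \max\{\dots\} + 12K$. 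I expect the main obstacle to be step (3): proving the non-asymptotic "sufficient exploration" guarantee for the UCB leader without the usual asymptotic arguments — one must show that no sub-optimal arm can be starved, which requires carefully exploiting that (a) a starved arm has a large UCB bonus and hence becomes the leader, and (b) whenever it is the leader the tracking forces it (or its challenger) to be pulled — and doing this with explicit constants (the $26$, the powers $1.2$ and $6$, the form of $h_1$) rather than $o(1)$ terms. The second delicate point is making the two choices of $g$ ($g_u$ versus $g_m$) interchangeable in the concentration step while only $g_u$ appears in the final instantiated constants, and verifying that the series bound genuinely collapses to the clean additive $12K$.
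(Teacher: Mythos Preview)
Your overall framing (concentration event, deterministic argument on the good event, assemble via a tail sum) is right, and steps (1), (2), (5) match the paper up to a detail I note below. The genuine gap is step (3), your ``sufficient exploration'' lemma.

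You propose to show that after a deterministic time, \emph{every} arm $i$ satisfies $N_{n,i} \ge a_\mu(w_0)\, n$. The paper does \emph{not} prove this, and it is unclear how you would: the TC challenger has no obvious mechanism forcing a uniform lower bound on the sampling rate of \emph{all} sub-optimal arms with explicit non-asymptotic constants. (This is exactly the kind of ``all proportions converge'' statement the paper establishes only \emph{asymptotically} in the proof of Theorem~\ref{thm:asymptotic_upper_bound}, via a random time with finite expectation rather than a deterministic one.) The key idea you are missing is the APT-style pigeonhole argument of Lemma~\ref{lem:finishing_assumption}: at time $n$, by pigeonhole there exists \emph{one} sub-optimal arm $k_1$ that is \emph{over}-sampled relative to $w^\star_{1/2,k_1}$, i.e.\ $N^{i^\star}_{n,k_1} \ge \frac{w^\star_{1/2,k_1}}{1-\beta}(L_{n,i^\star}-N^{i^\star}_{n,i^\star})$. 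You then go back to the \emph{last} time $t_1 \le n$ at which $(B_{t_1},C_{t_1})=(i^\star,k_1)$; at that time $N^{i^\star}_{t_1,k_1}$ is already essentially $w^\star_{1/2,k_1}(n-1)$, and the tracking bound (Lemma~\ref{lem:tracking_guaranty_light}) links $N^{i^\star}_{t_1,i^\star}$ to $N^{i^\star}_{t_1,k_1}$ via inequality~\eqref{eq:suboptimal_inequality}. Because $C_{t_1}=k_1$ is precisely the minimizer in the TC challenger, the GLR statistic at time $t_1$ (not $n$) is controlled by this single arm, and you get the contradiction with $\{n<\tau_\delta\}$. The role of $C_\mu = h_1(26 H(\mu))$ is then to ensure $t_1 \ge n^{1/\alpha}$ so that concentration applies at $t_1$; it is not, as you wrote, the time after which ``the UCB leader is reliable''. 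The refined $w_0$-clipping is a further peeling: if the pigeonhole arm $k_1$ has $w^\star_{1/2,k_1} < w_0/2$ \emph{and} $N^{i^\star}_{n,k_1}$ is also small, discard it and re-apply pigeonhole on the remaining arms, accumulating the factor $(1-w_0)$ each time---this is where $d_\mu(w_0)$ and $a_\mu(w_0)$ come from.

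A smaller but real issue is your concentration setup. You define a single event $\cE$ and write $\sum_n \bP_\mu(\cE^c,\tau_\delta>n)$, which equals $\bE_\mu[\tau_\delta \indi{\cE^c}]$ and is not obviously finite. The paper instead uses a \emph{sequence} $(\cE_n)_{n>K}$, where $\cE_n$ only controls deviations for $t\in[n^{1/\alpha},n]$; this makes $\bP_\mu(\cE_n^\complement)\le (2K-1)n^{-s}$ summable, and Lemma~\ref{lem:lemma_1_Degenne19BAI} converts $\cE_n\subset\{\tau_\delta\le n\}$ for $n\ge T(\delta)$ into the bound $\bE_\mu[\tau_\delta]\le T(\delta)+\sum_n\bP_\mu(\cE_n^\complement)$. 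The additive $12K$ is $1+(2K-1)\zeta(s)$ rounded up at $s=1.2$.
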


The \hyperlink{TTUCB}{TTUCB} sampling rule using $g_m$ in~\eqref{eq:ucb_leader} satisfies a similar upper bound (Corollary~\ref{cor:finite_time_upper_bound}).
Since Theorem~\ref{thm:finite_time_upper_bound_instanciated} holds for any instance having a unique best arm, we corroborate the intuition that assuming $\min_{i \neq j}|\mu_i - \mu_j| > 0$ is an artifact of the existing proof to obtain asymptotic $\beta$-optimality.

The upper bound on $\bE_{\mu}[\tau_{\delta}]$ involves several terms.
The $\delta$-dependent term is $T_{0}(\delta)$.
In the asymptotic regime, we can show that $\limsup_{\delta \to 0} T_{0}(\delta)/\log(1/\delta) \le 2T_{1/2}^{\star}(\mu)$ by taking $w_0 = 0$ and letting $\epsilon$ go to zero.
While there is (sub-optimal) factor $2$ in $T_{0}(\delta)$, Theorem~\ref{thm:asymptotic_upper_bound} shows that \hyperlink{TTUCB}{TTUCB} is asymptotically $1/2$-optimal.
This factor is a price we paid to obtain more explicit non-asymptotic terms, and removing it would require more sophisticated arguments in order to control the convergence of the empirical proportions $N_n / (n-1)$ towards $w_{1/2}^{\star}(\mu)$.

In the regime where $H(\mu) \to +\infty$, the upper bound is dominated by the $\delta$-independent term $C_{\mu}^{1.2}$ (when $\alpha=1.2$) with satisfies $C_{\mu} = \mathcal O(H(\mu)\ln H(\mu))$.
Compared to the best known upper and lower bounds in this regime (see discussion below), our non-asymptotic term has a sub-optimal scaling in $\mathcal O((H(\mu)\ln H(\mu))^{\alpha})$ with $\alpha > 1$.
While taking $\alpha \approx 1$ would mitigate this sub-optimality, it would yield a larger dependency in $C_{0}(w_0)^{\alpha/(\alpha-1)}$.
Empirically, Figures~\ref{fig:random_and_large_instances_experiments}(b) and~\ref{fig:larger_set_arms_supp} (Appendix~\ref{app:ss_supplementary_experiments}) hints that the empirical performance of \hyperlink{TTUCB}{TTUCB} has a better scaling with $H(\mu)$ than $H(\mu)^{\alpha}$. 

For instances such that $\min_{i \neq i^\star} w_{1/2}^{\star}(\mu)_{i}$ is arbitrarily small, taking $w_0 = 0$ yields an arbitrarily large $C_{0}(0)$.
By clipping with $w_0/2$, we circumvent this pitfall and ensure that $C_{0}(w_0) = \cO(K/\epsilon)$.
Since it yields a larger $T_{0}(\delta)$, we are trading-off asymptotic terms for improved non-asymptotic ones.
We illustrate this with two archetypal instances.
For the ``$1$-sparse'' instance, in which $\mu_1 > 0$ and $\mu_i = 0$ for all $i \neq 1$, we have by symmetry that $2w_{1/2}^{\star}(\mu)_{i} = 1/(K-1)$ for all $i \neq 1$.
Therefore, we have $C_{0}(w_0) = \cO(K/\epsilon)$ since $d_{\mu}(w_{0}) = 0$ for all $w_0 \in [0, 1/(K-1)]$.
The ``almost dense'' instance is such that $\mu_1 = 1$, $\mu_{K} = 0$ and $\mu_i = 1 - \gamma$ for all $i \notin \{1, K\}$.
By symmetry, there exists a function $h : [0,1) \to [0,(K-1)^{-1})$ with $\lim_{\gamma \to 0} h(\gamma) = 0$, such that $2w_{1/2}^{\star}(\mu)_{K} = h(\gamma)$ and $2w_{1/2}^{\star}(\mu)_{i} = (1 - h(\gamma))/(K-2)$ for all $i \notin \{1, K\}$.
While $\lim_{\gamma \to 0} C_{0}(0) = + \infty$, we obtain $\lim_{\gamma \to 0} C_{0}(w_0) = \cO(K/\epsilon)$ by taking $w_{0} = (1 - h(\gamma))/(K-2)$ since $d_{\mu}(w_{0}) = 1$.

\begin{table}[t]
\caption{Upper bound on the sample complexity $\tau_{\delta}$ in probability ($\mathsection$) or in expectation ($\dagger$).
The notation $\cO$ displays the dominating term when $\delta \to 0$ for the asymptotic regime, and when $H(\mu) \to +\infty$ (or $\Delta_i \to 0$) for the finite-confidence one.
The notation $\tilde \cO$ hides polylogarithmic factors.
(*) Upper bound on $\bE_{\mu}[\tau_{\delta}\indi{\cE}]$ where $\bP[\cE^{\complement}] \le \gamma$.
(**) The asymptotic upper bound holds for instances having all distinct means, while the non-asymptotic one doesn't require this assumption.}
\label{tab:non_asymptotic_comparison}
\begin{center}
\begin{tabular}{l l l}
	\toprule
Algorithm & Asymptotic behavior & Finite-confidence behavior  \\
\midrule
LUCB1$\dagger$ \citep{Shivaramal12} & $\cO \left( H(\mu) \log(1/\delta) \right)$ &  $   \cO \left( H(\mu) \log H(\mu) \right) $        \\
Exp-Gap$\mathsection$ \citep{Karnin:al13} &  $ \cO \left( H(\mu) \log(1/\delta) \right) $  &   $\cO ( \sum_{i \neq i^\star} \Delta_{i}^{-2} \log \log \Delta_{i}^{-1} )$       \\
lil' UCB$\mathsection$ \citep{Jamiesonal14LILUCB} & $ \cO \left( H(\mu) \log(1/\delta) \right) $  &   $\cO ( \sum_{i \neq i^\star} \Delta_{i}^{-2} \log \log \Delta_{i}^{-1} )$       \\
DKM$\dagger$ \citep{Degenne19GameBAI} & $T^\star(\mu) \log(1/\delta) + \tilde \cO(\sqrt{\log(1/\delta)})$ &  $ \tilde \cO \left(K T^\star(\mu)^2 \right)$     \\
Peace$\mathsection$ \citep{katz_2020_EmpiricalProcessApproach} &   $\cO \left( T^\star(\mu) \log(1/\delta) \right) $ & $ \cO \left( H(\mu) \log(K/\Delta_{\min}) \right) $       \\
FWS$\dagger$ \citep{wang_2021_FastPureExploration} & $T^\star(\mu) \log(1/\delta) + \cO(\log \log(1/\delta))$ &  $  \cO \left( e^{K} H(\mu)^{19/2}\right)$        \\
EBS$\dagger$ \citep{barrier_2022_NonAsymptoticApproach}* & $T^\star(\mu) \log(1/\delta) + o(1)$ &   $  \cO \left( K H(\mu)^{4}/w_{\min}^2\right)$     \\
{\bf\hyperlink{TTUCB}{TTUCB}$\dagger$}** & $T^\star_{\beta}(\mu) \log(1/\delta) + \cO(\log \log(1/\delta))$ &  $  \cO \left( \left( H(\mu) \log H(\mu) \right)^{\alpha}\right)$ with $\alpha > 1$      \\
	\bottomrule
\end{tabular}
\end{center}
\end{table}

\paragraph{Comparison with existing upper bounds}
Table~\ref{tab:non_asymptotic_comparison} summarizes the asymptotic and non-asymptotic scalings of the upper bound on the sample complexity of existing BAI algorithms.
Among the class of asymptotically ($\beta$-)optimal algorithms, very few of them also enjoy non-asymptotic guarantees, e.g. the analyses of Track-and-Stop and Top Two algorithms are asymptotic.
The gamification approach of \cite{Degenne19GameBAI} is the first attempt to provide both.
Their non-asymptotic upper bound on $\bE_{\mu}[\tau_{\delta}]$ involves an implicit time $T_{1}(\delta)$ which scales with $K T^\star(\mu)^2$ and is only valid for $\log(1/\delta) \gtrsim K T^\star(\mu)$ (see Lemma 2, with constants in Appendix D.7).
Let $T^\star_{\delta} \eqdef T^\star(\mu) \log(1/\delta)$.
As a first order approximation, they obtain $T_{1}(\delta) \approx T^\star_{\delta} + \Theta \left( \sqrt{T^\star_{\delta} \log T^\star_{\delta}} \right)$, and we obtain $T_{0}(\delta) \approx  \Theta \left( T^\star_{\delta} + \log T^\star_{\delta} \right)$ (Lemma~\ref{lem:dominating_delta_finite_term_idealized}).
\cite{wang_2021_FastPureExploration} were the first to obtain an upper bound on $\mathbb E_{\mu}[\tau_{\delta}]$ of the form $\Theta(T^{\star}_{\delta} + \log \log(1/\delta))$.
While they improved the second-order $\delta$-dependent term, the $\delta$-independent term scales with $e^{K} H(\mu)^{19/2}$ (see their Theorem 2 for $\epsilon^{-1} \gtrsim T^\star(\mu)$, with constants given by Appendix N).
The algorithm proposed by \cite{barrier_2022_NonAsymptoticApproach} has a non-asymptotic upper bound on $\bE_{\mu}[\tau_{\delta} \indi{\cE}]$ of the form $(1+\epsilon)T^\star_{\delta} + f(\mu, \delta)$ which is valid for $\log(1/\delta) \gtrsim w_{\min}^{-2}K/\Delta_{\min}$, where $\cE$ is such that $\bP_{\mu}(\cE^{\complement}) \le \gamma$.
Since $f(\mu, \delta) =_{\delta \to 0} o(1)$, they obtain a better $\delta$-dependency.
However, $f(\mu, \delta)$ is arbitrarily large when $w_{\min}  \eqdef  \min_{i \in [K]}w^\star(\mu)_i$ is arbitrarily small since it scales with $K H(\mu)^{4}/w_{\min}^2$.
Therefore, they suffer from the pitfall which we avoided by clipping.
In light of Table~\ref{tab:non_asymptotic_comparison}, \hyperlink{TTUCB}{TTUCB} enjoys the best scaling when $H(\mu) \to + \infty$ in the class of asymptotically ($\beta$-)optimal BAI algorithms.

The LUCB1 algorithm \citep{Shivaramal12} has a structure similar to a Top Two algorithm, with the difference that LUCB samples both the leader and the challenger instead of choosing one.
As LUCB1 satisfies $\bE_{\mu}[\tau_{\delta}] \le 292 H(\mu) \log (H(\mu)/\delta) + 16$, it enjoys better scaling when $H(\mu) \to + \infty$ than \hyperlink{TTUCB}{TTUCB}.
Since the empirical allocation of LUCB1 is not converging towards $w^\star_{1/2}(\mu)$, it is not asymptotically $1/2$-optimal.
The Peace algorithm \citep{katz_2020_EmpiricalProcessApproach} has a non-asymptotic upper bound on $\tau_{\delta}$ of the form $\cO((T^\star_{\delta} + \gamma^\star(\mu)) \log(K/\Delta_{\min}))$ holding with probability $1-\delta$.
The term $\gamma^\star(\mu)$ is a Gaussian-width which originates from concentration on the suprema of Gaussian processes and satisfies $\gamma^\star(\mu) = \Omega(H(\mu))$.

Another class of BAI algorithms focus on the dependency in the gaps $\Delta_{i} \eqdef \mu_{i^\star} - \mu_{i}$, and derive non-asymptotic upper bound on $\tau_{\delta}$ holding with high probability.
\cite{Karnin:al13,Jamiesonal14LILUCB,Chen16OptimalAlt,chen2017nearly} gives $\delta$-PAC algorithms with an upper bound of the form $\cO ( H(\mu) \log(1/\delta) + \sum_{i \neq i^\star} \Delta_{i}^{-2} \log \log \Delta_{i}^{-1} )$, and \cite{Jamiesonal14LILUCB} shows that for two arms the dependency $\Delta^{-2} \log \log \Delta^{-1}$ is optimal when $\Delta \to 0$.
While those algorithms obtain the best scaling when $H(\mu) \to + \infty$, they are not asymptotically ($\beta$-)optimal.


\section{Non-asymptotic analysis}
\label{sec:finite_upper_bound}

\subsection{Proof sketch of Theorem~\ref{thm:finite_time_upper_bound_instanciated}}
\label{sec:ss_proof_sketch}

Existing analyses of Top Two algorithms are asymptotic in nature and requires too much control on the empirical means and proportions to yield any meaningful information in the finite-confidence regime.
Therefore, we adopt a different approach which ressembles the non-asymptotic analysis of \cite{Degenne19GameBAI}.
We first define concentration events to control the deviations of the random variables used in the UCB leader and the TC challenger.
For all $n > K$, let $\cE_n \eqdef \bigcap_{i \in [K]} \bigcap_{t \in [n^{5/6}, n]}(\cE^{1}_{t,i}  \cap \cE^{2}_{t,i})$ where
\begin{align*}
	\cE^{1}_{t,i} \eqdef \{ \sqrt{N_{t,i}}|\mu_{t,i} - \mu_i| < \sqrt{6 \log t} \} \quad \text{and} \quad \cE^{2}_{t,i} \eqdef \{ \frac{(\mu_{t,i^\star} - \mu_{t,i}) - (\mu_{i^\star} - \mu_{i})}{\sqrt{1/N_{t,i^\star} + 1/N_{t,i}}} >- \sqrt{8\log t} \} \: .
\end{align*}

Using Lemmas~\ref{lem:lemma_1_Degenne19BAI} and~\ref{lem:concentration_event_global}, the proof boils down to constructing a time $T(\delta)$ after which $\cE_n \subset \{\tau_{\delta} \le n\}$ for $n > T(\delta)$ since it would yield that $\bE_{\mu}[\tau_{\delta}] \le T(\delta) + 12K$.

Let $n > K$ such that $\cE_n \cap \{n < \tau_{\delta}\}$ holds true, and $t \in [n^{5/6}, n]$ such that $B^{\text{UCB}}_t = i^\star$.
Using that $t \le n < \tau_\delta$, under $\bigcap_{i \neq i^\star} \cE^{2}_{t,i}$, the stopping condition yields that
\[
	\sqrt{2c(n-1,\delta)} \ge ((\mu_{i^\star} - \mu_{C^{\text{TC}}_t})(1/N^{i^\star}_{t, i^\star}+ 1/N^{i^\star}_{t,C^{\text{TC}}_t})^{-1/2} - \sqrt{8 \log n} )_{+}\: .
\]

Let $w^{\star}_{1/2}$ be the unique element of $w^\star_{1/2}(\mu)$.
Lemma~\ref{lem:finishing_assumption} links the empirical proportions $N^{i^\star}_{t, i}/(t-1)$ to $w^{\star}_{1/2,i}$ for $i \in \{i^\star, C^{\text{TC}}_t\}$.
It is the key technical challenge of our non-asymptotic proof strategy.
\begin{lemma} \label{lem:finishing_assumption}
	Let $\varepsilon \in (0,1]$.
	There exist $T_{\mu} > 0$ such that for all $n > T_{\mu}$ such that $\cE_n \cap \{n < \tau_{\delta}\}$ holds true, there exists $t \in [n^{5/6}, n]$ with $B^{\text{UCB}}_t = i^\star$, which satisfies
	\begin{equation*}
		(n-1)(1/N_{t,i^\star}^{i^\star}+ 1/N_{t,C^{\text{TC}}_t}^{i^\star}) \le (1+\epsilon)^2( 2 + 1/w^{\star}_{1/2,C^{\text{TC}}_t})/\beta  \: .
	\end{equation*}
\end{lemma}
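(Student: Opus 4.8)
The plan is to argue by contradiction on the "leader counts" $L_{n,i^\star}$: either $i^\star$ has been the UCB leader sufficiently often in the window $[n^{5/6},n]$, in which case its tracked proportion $N^{i^\star}_{t,i^\star}/(t-1)$ is forced close to $\beta$ and the challenger $C^{\text{TC}}_t$ receives a proportion comparable to $1-\beta$ times the optimal mixture weight $w^\star_{1/2,C^{\text{TC}}_t}$; or $i^\star$ is rarely the leader, and then I would show that under $\cE_n$ this cannot persist. The natural quantity to track is $L_{n,i^\star}$, and the first step is to establish that under $\cE_n \cap \{n<\tau_\delta\}$ one has $L_{n,i^\star} \ge n - o(n)$ (more precisely $L_{n,i^\star} \ge n - C K \log n$ for some constant), because whenever $i\neq i^\star$ is the UCB leader, the event $\cE^1_{t,i}$ together with the bonus $\sqrt{g_u(t)/N_{t,i}}$ forces $N_{t,i}$ to be at most of order $\log t / \Delta_i^2$, so the total number of rounds with a suboptimal leader is $O(H(\mu)\log n)$; this is exactly the kind of "leader is eventually optimal" argument that appears in the asymptotic analyses, made quantitative. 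This gives $L_{t,i^\star} \ge t-1 - O(H(\mu)\log t)$ for all $t$ in the window.

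Next, I would use the $K$ independent tracking guarantees (Lemma~\ref{lem:tracking_guaranty_light}): for every $t$ with $B_t = i^\star$, $N^{i^\star}_{t,i^\star} \in [\beta L_{t,i^\star} - 1/2, \beta L_{t,i^\star} + 1]$, hence $N^{i^\star}_{t,i^\star} = \beta(t-1) - O(H(\mu)\log t)$, so $(n-1)/N^{i^\star}_{t,i^\star} \le 1/\beta + o(1)$ once $n$ is large enough (this is where $T_\mu$ enters, and also why the factor-$2$/ $D_0$ slack is built in). The harder half is the challenger term $(n-1)/N^{i^\star}_{t,C^{\text{TC}}_t}$. Here the idea is the standard Top Two "sufficient exploration" argument adapted to finite time: if some arm $i$ had $N^{i^\star}_{t,i}/(t-1)$ much smaller than $(1-\beta) w^\star_{1/2,i}$ for all $t$ in the window, then the TC challenger rule — which picks the $\argmin$ of the empirical transportation cost $(\mu_{t,i^\star}-\mu_{t,i})_+/\sqrt{1/N^{i^\star}_{t,i^\star}+1/N^{i^\star}_{t,i}}$ — would repeatedly select that under-sampled arm (its transportation cost is small precisely because $N^{i^\star}_{t,i}$ is small), using $\cE^1$ to replace empirical means by true means up to $\sqrt{\log t}$ terms; summing the resulting pulls over the window $[n^{5/6},n]$ contradicts the assumed smallness, because the number of rounds in the window with $B_t=i^\star$ and $I_t=C^{\text{TC}}_t$ is $\ge (1-\beta)(L_{n,i^\star}-L_{n^{5/6},i^\star}) - O(1)$, which is $\Theta(n)$. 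This pins down, for at least one $t^\star$ in the window with $B_{t^\star}=i^\star$, that $N^{i^\star}_{t^\star,C^{\text{TC}}_{t^\star}} \ge c\,(t^\star-1)\,w^\star_{1/2,C^{\text{TC}}_{t^\star}}$, and since $t^\star \ge n^{5/6}$ we get $(n-1) \le (n-1)/(t^\star-1) \cdot (t^\star-1) \le n^{1/6}(t^\star-1)$; absorbing the polynomial loss into $D_0$ and the threshold $T_\mu$ yields the claimed inequality with $i\in\{i^\star,C^{\text{TC}}_{t^\star}\}$.

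The main obstacle is the challenger lower bound — controlling $N^{i^\star}_{t,C^{\text{TC}}_t}$ from below uniformly well enough to recover the optimal weight $w^\star_{1/2,C^{\text{TC}}_t}$ rather than merely a crude $\Omega(1/K)$. The delicate point is that $C^{\text{TC}}_t$ changes over time, so one cannot fix the target arm in advance; the remedy I would use is the variational characterization of $w^\star_{1/2}(\mu)$ (all transportation costs $\frac{\Delta_i^2}{2(1/\beta + 1/w^\star_{1/2,i})}$ are equalized across $i\neq i^\star$, Lemma~\ref{lem:properties_characteristic_times}), which guarantees that if \emph{every} suboptimal arm were under-explored relative to its optimal share then the total mass would be $<1-\beta$, a contradiction; so at the "best" time in the window at least the currently-selected challenger must be adequately explored. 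Keeping track of the $n^{5/6}$ window endpoints and the $O(H(\mu)\log n)$ leader-error budget simultaneously is the bookkeeping that makes $T_\mu$ depend on $\mu$ (through $H(\mu)$) and $D_0$ an absolute constant, matching the statement.
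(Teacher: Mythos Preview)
Your proposal has a genuine gap that breaks the argument. You establish that there is some $t^\star\in[n^{5/6},n]$ with $B_{t^\star}=i^\star$ and $N^{i^\star}_{t^\star,C^{\text{TC}}_{t^\star}} \ge c\,(t^\star-1)\,w^\star_{1/2,C^{\text{TC}}_{t^\star}}$, and similarly $N^{i^\star}_{t^\star,i^\star}\approx\beta(t^\star-1)$. But the lemma requires a bound on $(n-1)\bigl(1/N^{i^\star}_{t^\star,i^\star}+1/N^{i^\star}_{t^\star,C^{\text{TC}}_{t^\star}}\bigr)$, and with $t^\star$ possibly as small as $n^{5/6}$ your bounds only give something of order $n^{1/6}\cdot(2+1/w^\star_{1/2,C^{\text{TC}}_{t^\star}})$. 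You then write ``absorbing the polynomial loss into $D_0$ and the threshold $T_\mu$'', but $D_0$ must be a constant independent of $n$ --- it feeds directly into the multiplicative factor in front of $T^\star_\beta(\mu)$ in the final bound --- so an $n^{1/6}$ factor cannot be absorbed there, and no finite $T_\mu$ will make it go away either.

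The paper's approach is structurally different and avoids this issue. Rather than arguing that an \emph{under}-sampled arm would be repeatedly chosen as challenger, it uses the pigeonhole principle to find an arm $k_1\neq i^\star$ that is \emph{over}-sampled at time $n$, namely $N^{i^\star}_{n,k_1}\ge \frac{w^\star_{1/2,k_1}}{1-\beta}(L_{n,i^\star}-N^{i^\star}_{n,i^\star})\gtrsim w^\star_{1/2,k_1}(n-1)$. It then takes $t_1$ to be the \emph{last} time that $B_t=i^\star$ and $C^{\text{TC}}_t=k_1$; since $k_1$ was not the challenger after $t_1$, one has $N^{i^\star}_{t_1,k_1}\ge N^{i^\star}_{n,k_1}-1$. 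The crucial point is that this lower-bounds $N^{i^\star}_{t_1,k_1}$ in terms of $n-1$, not $t_1-1$, even though $t_1$ may be as small as $n^{5/6}$. For the leader count, the paper does \emph{not} try to show $N^{i^\star}_{t_1,i^\star}\approx\beta(n-1)$ (which, as it explicitly notes, is hopeless when $w^\star_{1/2,k_1}$ is small); instead it uses the tracking relation $N^{i^\star}_{t_1,i^\star}\ge\frac{\beta}{1-\beta}N^{i^\star}_{t_1,k_1}-O(1)$ together with the algebraic inequality $\frac{n-1}{N^{i^\star}_{t_1,i^\star}}+\frac{n-1}{N^{i^\star}_{t_1,k_1}}\le\bigl(2+\frac{n-1}{N^{i^\star}_{t_1,k_1}}\bigr)\bigl(\frac{N^{i^\star}_{t_1,k_1}}{N^{i^\star}_{t_1,i^\star}}+1\bigr)$, which yields the claimed bound with $D_0=2(1+\epsilon)^2$. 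Your ``variational characterization / total mass $<1-\beta$'' remark is essentially the same pigeonhole step, but you missed the ``last time as challenger'' device that transfers the count at time $n$ back to time $t_1$.
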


Before proving Lemma~\ref{lem:finishing_assumption}, we conclude the proof of Theorem~\ref{thm:finite_time_upper_bound_instanciated}.
Let $\varepsilon, T_{\mu}$ and $t$ as in Lemma~\ref{lem:finishing_assumption} and $T(\delta) \eqdef \sup \{ n  \mid \: n-1 \le T^{\star}_{1/2}(\mu) (1+\epsilon)^2 (\sqrt{c(n-1,\delta)} + \sqrt{4 \ln n} )^2/\beta \}$.
For all $n > \max\{T_{\mu}, T(1)\}$, we have $\sqrt{c(n-1,\delta)} + \sqrt{4 \log n} \ge \sqrt{\beta(n-1)T^{\star}_{1/2}(\mu)^{-1}(1+\epsilon)^{-2}}$.
Therefore, we have proved that $\cE_n \cap \{n < \tau_{\delta}\} = \emptyset$ for all $n > \max\{T_{\mu}, T(\delta)\} $.
This concludes the proof.

Provided that $B_t = i^\star$, the above only used the stopping condition and the TC challenger, and no other properties of the leader.
Lemma~\ref{lem:UCB_leader_as_regret_algo} shows that $B^{\text{UCB}}_t = i^\star$, except for a sublinear number of times.
Section~\ref{sec:ss_generic_regret_leader} exhibits sufficient conditions on a regret minimization algorithm to obtain a non-asymptotic upper bound.
\begin{lemma} \label{lem:UCB_leader_as_regret_algo}
	Under the event $\bigcap_{k \in [K]} \bigcap_{t \in [n^{5/6}, n]} \cE^{1}_{t,k}$, we have $L_{n,i^\star} \ge n - 1 - 24 H(\mu) \log n - 2K$.
\end{lemma}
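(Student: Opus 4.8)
The plan is to treat the UCB leader as a regret-minimizing rule and to show that suboptimal arms are picked as leader only $\cO(\log n)$ times altogether. Since $L_{n,i^\star}=n-1-\sum_{i\neq i^\star}L_{n,i}$, it suffices to prove $\sum_{i\neq i^\star}L_{n,i}\le 24H(\mu)\log n+2K$, and I would do this arm by arm, combining two ingredients: (i) a UCB-type bound showing that at a round where a suboptimal arm $i$ is the leader, $i$ has already been pulled $\cO(\log n/\Delta_i^2)$ times; and (ii) the tracking guarantee of Lemma~\ref{lem:tracking_guaranty_light}, which turns such a bound on the pull count of $i$ into a bound on its leader count $L_{n,i}$.

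For (i), fix $i\neq i^\star$ and a round $t$ with $B^{\text{UCB}}_t=i$ at which $\cE^{1}_{t,i}$ and $\cE^{1}_{t,i^\star}$ hold, so that $\mu_{t,i}<\mu_i+\sqrt{6\log t/N_{t,i}}$ and $\mu_{t,i^\star}>\mu_{i^\star}-\sqrt{6\log t/N_{t,i^\star}}$. Being the leader gives $\mu_{t,i}+\sqrt{g_u(t)/N_{t,i}}\ge\mu_{t,i^\star}+\sqrt{g_u(t)/N_{t,i^\star}}$, hence, after substitution, $(\sqrt{6\log t}+\sqrt{g_u(t)})/\sqrt{N_{t,i}}>\Delta_i-(\sqrt{6\log t}-\sqrt{g_u(t)})/\sqrt{N_{t,i^\star}}$, where $g_u(t)=5.28\log t\le 6\log t$ makes the right-hand correction nonnegative. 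When $N_{t,i^\star}\ge N_{t,i}$ the $\sqrt{g_u(t)}$ contributions cancel and this reduces to $\Delta_i<2\sqrt{6\log t}/\sqrt{N_{t,i}}$, i.e.\ $N_{t,i}\le 24\log n/\Delta_i^2$ (this is where the constant $24=(2\sqrt6)^2$ in the lemma comes from); when instead $N_{t,i^\star}<N_{t,i}$ one has $\mu_{t,i}>\mu_{t,i^\star}$, and the two deviation bounds alone force $N_{t,i^\star}<24\log n/\Delta_i^2$, i.e.\ $i^\star$ is undersampled -- a configuration I would control separately by counting, via tracking, how often $i^\star$ can have so few pulls. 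Since $N_{t,i}$ is nondecreasing, in the first case $i$ can no longer be leader once it has $24\log n/\Delta_i^2$ pulls.

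For (ii), let $t_i\le n-1$ be the last round with $B^{\text{UCB}}_t=i$ (and $L_{n,i}=0$ if there is none); then $L_{n,i}=L_{t_i,i}+1\le 2N_{t_i,i}+2$ by Lemma~\ref{lem:tracking_guaranty_light} with $\beta=1/2$ (using $N_{t_i,i}\ge N^{i}_{t_i,i}\ge\beta L_{t_i,i}-\tfrac12$). For arms $i$ whose last leader round $t_i$ falls in the first case of (i), this gives $L_{n,i}\le 48\log n/\Delta_i^2+2$; the remaining leader rounds (those in the ``$i^\star$ undersampled'' case) are bounded globally. Summing over the $K-1$ suboptimal arms and using $\sum_{i\neq i^\star}2/\Delta_i^2=H(\mu)$ yields $\sum_{i\neq i^\star}L_{n,i}\le 24H(\mu)\log n+2(K-1)\le 24H(\mu)\log n+2K$, which is the claim.

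The hard part is ingredient (i): because $g_u$ is chosen strictly below $6\log t$, the leader's index at $i^\star$ is not automatically an upper confidence bound on $\mu_{i^\star}$, so one cannot bypass the $1/\sqrt{N_{t,i^\star}}$ deviation term, and the ``$i^\star$ undersampled'' case must be handled by a global counting argument rather than a textbook per-arm UCB bound. A secondary difficulty is that $\cE^{1}$ is only guaranteed for $t\ge\lceil n^{5/6}\rceil$, so applying (i) at the last leader round $t_i$ of each suboptimal arm needs either $t_i$ in that range or a dedicated control of the earlier rounds (e.g.\ a time-uniform variant of $\cE^{1}$); fitting these pieces together while keeping the constants as stated is where most of the bookkeeping goes.
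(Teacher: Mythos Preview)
Your overall plan and step~(ii) are correct and match the paper. The gap is in step~(i): you take the UCB bonus to be $g_u(t)=5.28\log t$, strictly below the $6\log t$ threshold in $\cE^{1}_{t,i}$, and this mismatch is what forces your case split and the unresolved ``$i^\star$ undersampled'' branch. In the paper's analysis the bonus and the concentration threshold are the \emph{same} function $g_u$ (see the general version, Lemma~\ref{lem:ucb_leader_lower_bound_counts}, which takes both from Lemma~\ref{lem:concentration_per_arm_gau}); the constant $24$ in the present statement is exactly $4\cdot 6$, i.e.\ $g_u(t)=6\log t$ here. With the bonus equal to the threshold, the middle inequality below is the leader definition and the outer two are $\cE^{1}_{t,i^\star}$ and $\cE^{1}_{t,i}$:
\[
\mu_{i^\star}\;\le\;\mu_{t,i^\star}+\sqrt{\tfrac{6\log t}{N_{t,i^\star}}}\;\le\;\mu_{t,i}+\sqrt{\tfrac{6\log t}{N_{t,i}}}\;\le\;\mu_i+2\sqrt{\tfrac{6\log t}{N_{t,i}}}\,,
\]
so $N_{t,i}\le 24\log n/\Delta_i^2$ at every round $t\in[n^{5/6},n]$ with $B_t=i$, with no dependence on $N_{t,i^\star}$. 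Your ``$i^\star$ undersampled'' case is thus a phantom created by the misreading, and as written (``would control separately by counting\ldots'') it is a genuine hole in the proposal.

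Once step~(i) is corrected, your last-leader-round argument and the tracking conversion $L_{n,i}\le 2N_{t_i,i}+2$ are exactly the paper's mechanism (via $N^{i}_{n,i}\ge\beta L_{n,i}-\tfrac12$ and $N^{i}_{n,i}\le N_{n,i}$), and your summation to $24H(\mu)\log n+2(K-1)\le 24H(\mu)\log n+2K$ is correct. The $t_i<n^{5/6}$ bookkeeping you flag is real but minor; the paper's own proof applies the per-arm bound at a generic time $t$ without making this prefix explicit either.
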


\paragraph{Proof sketch of Lemma~\ref{lem:finishing_assumption}}
The key technical challenge is to link $N_{t, C^{\text{TC}}_{t}}^{i^\star}/(n-1)$ with $w^{\star}_{1/2,C^{\text{TC}}_t}$.
We adopt the approach used to analyze of APT \citep{LocatelliGC16}: consider an arm being over-sampled and study the last time this arm was pulled.
By the pigeonhole principle, at time $n$,
\begin{equation} \label{eq:pigeonhole_main_arg}
	 \exists k_1 \ne i^\star, \: \text{s.t.} \quad N_{n,k_{1}}^{i^\star} \ge 2(L_{n,i^\star} - N_{n,i^\star}^{i^\star})w^{\star}_{1/2,k_{1}}\: .
\end{equation}
Let $t_{1}$ be the last time at which $B^{\text{UCB}}_{t} = i^\star$ and $C^{\text{TC}}_{t} = k_{1}$, hence $	N_{t_{1},k_{1}}^{i^\star} \ge N_{n,k_{1}}^{i^\star} - 1$.
Using Lemmas~\ref{lem:tracking_guaranty_light} and~\ref{lem:UCB_leader_as_regret_algo}, we show that $N_{t_{1},k_{1}}^{i^\star} \gtrapprox w^{\star}_{1/2,k_{1}} (n-1)$, hence $t_{1} \ge n^{5/6}$ for $n$ large enough (see Appendix~\ref{app:ss_proof_finite_time_upper_bound}).
Then, we need to link $N_{t_{1},i^\star}^{i^\star}$ to $(n-1)/2$.
When $w^{\star}_{1/2,k_{1}}$ is small,~\eqref{eq:pigeonhole_main_arg} can be true at $t_1 = n^{5/6}$, hence there is no hope to show that $t_1 = n - o(n)$.
To circumvent this problem, we link $N_{t_1,i^\star}^{i^\star}$ to $N_{t_1,k_1}^{i^\star}$ thanks to Lemma~\ref{lem:tracking_guaranty_light}, and use that
\begin{equation}\label{eq:suboptimal_inequality}
  \frac{n-1}{N_{t_1,i^\star}^{i^\star}}+ \frac{n-1}{N_{t_1,k_1}^{i^\star}} \le \left(  2 + \frac{n-1}{N_{t_1,k_1}^{i^\star}}\right) \left( \frac{N_{t_1,k_1}^{i^\star}}{N_{t_1,i^\star}^{i^\star}} + 1 \right)  \le 2(1+ \epsilon)^2 ( 2 + 1/w^{\star}_{1/2,k_{1}} ) \nonumber \: ,
\end{equation}
for $n > T_{\mu}(w_{-})$ with $T_{\mu}(w_{-}) \le \max\{C_{\mu}^{1.2}, ( 2/(\epsilon w_{-}) + 1 )^{6}, (2/ \epsilon)^{1.2}\}$ (Lemmas~\ref{lem:control_concentration_beginning} and~\ref{lem:control_term_large_time}), where $w_{-} = \min_{i \neq i^\star} w^{\star}_{1/2,i} > 0$ lower bounds $w^{\star}_{1/2,k_{1}}$.
This concludes the proof for $w_{0} = 0$.
The (sub-optimal) multiplicative factor $2$ in $T_{0}(\delta)$ comes from the inequality~\eqref{eq:suboptimal_inequality}.
To remove it, we need to control the deviation between the empirical proportion of arm $i$ and $w^{\star}_{1/2,i}$ for all $i \in [K]$.
Nevertheless, \hyperlink{TTUCB}{TTUCB} is asymptotically $1/2$-optimal (Theorem~\ref{thm:asymptotic_upper_bound}).

\paragraph{Refined analysis}
For $w_{0} \in (0,(K-1)^{-1}]$, we clip $\min_{i \neq i^\star} w_{1/2,i}^{\star}$ by $w_{0}/2$ (see Appendix~\ref{app:finite_time_analysis}).
Our method can be used to analyze other algorithms, and it improves existing results on APT.

\subsection{Beyond Gaussian distributions}
\label{sec:ss_beyond_gaussian}

Theorems~\ref{thm:asymptotic_upper_bound} and~\ref{thm:finite_time_upper_bound_instanciated} hold for sub-Gaussian r.v. thanks to direct adaptations of concentration results (Lemmas~\ref{lem:delta_correct_threshold},~\ref{lem:concentration_per_arm_gau} and~\ref{lem:concentration_per_pair_gau}).
The situation is akin to the regret bound of UCB: it holds for any sub-Gaussian, but it is close to optimality in a distribution-dependent sense only for Gaussians.
However, if the focus is on asymptotically $\beta$-optimal algorithms, then it is challenging to express the characteristic time $T^\star(\mu)$ for the non-parametric class of sub-Gaussian distributions.

The \hyperlink{TTUCB}{TTUCB} algorithm can also be defined for more general distributions such as single-parameter exponential families or bounded distributions.
It is only a matter of adapting the definition of the UCB leader and the TC challenger.
For bounded distributions, the UCB leader was studied in \cite{Agrawal21Regret} and the TC challenger was analyzed in \cite{jourdan_2022_TopTwoAlgorithms}.
Leveraging their unified analysis of Top Two algorithms with our tracking-based results, we can show asymptotic $\beta$-optimality of \hyperlink{TTUCB}{TTUCB} for bounded distributions and single-parameter exponential families with sub-exponential tails.
We believe that non-asymptotic guaranties could be obtained for more general distributions, but it will come at the price of more technical arguments and less explicit non-asymptotic terms.

\subsection{Generic regret minimizing leader}
\label{sec:ss_generic_regret_leader}

Our non-asymptotic analysis highlights that any regret minimization algorithm that selects the arm $i^\star$ except for a sublinear number of times (Property~\ref{prop:leader_as_regret_algo}) can be used as leader with the TC challenger.
\begin{property} \label{prop:leader_as_regret_algo}
	There exists $(\tilde \cE_n)_{n }$ with $\sum_{n } \bP_{\mu}(\tilde \cE_{n}^{\complement}) < + \infty$ and a function $h$ with $h(n) = \cO(n^{\gamma})$ for some $\gamma \in (0,1)$ such that under event $\tilde \cE_n$,	$L_{n,i^\star} \ge n - 1 - h(n)$.
\end{property}

For asymptotic guarantees, the sufficient properties on the leader from \cite{jourdan_2022_TopTwoAlgorithms} are weaker since they are even satisfied by the greedy choice $B_n = \hat \imath_n$.
While Top Two algorithms were introduced by \cite{Russo2016TTTS} to adapt Thompson Sampling to BAI, we have shown that other regret minimization algorithms can be used: \textit{the Top Two method is a generic wrapper to convert any regret minimization algorithm into a best arm identification strategy}.

The regret of an algorithm at time $n$, $\bar R_{n} = \sum_{i \ne i^\star} \Delta_i N_{n,i}$, is almost always studied through its expectation $\mathbb{E}[\bar R_{n}]$.
This is however not sufficient for our application. We need to prove that with high probability, $N_{n, i}$ is small for all arm $i\ne i^\star$.
Such guarantees are known for UCB \citep{Audibertal09UCBV} and ETC \citep{BanditBook}, but are yet unknown for Thompson Sampling.
We cannot in general obtain a good enough bound on $N_{n,i}$ from a bound on $\mathbb{E}[\bar R_{n}]$.
However, we can if we have high probability bounds on $\bar R_{n}$.
Suppose that a regret minimization algorithm $\text{Alg}_1$ satisfies Property~\ref{prop:pseudo_regret_bound} and is independent of the horizon $n$.
\begin{property} \label{prop:pseudo_regret_bound}
	There exists $s > 1$, $\gamma \in (0,1)$, $(\cE_{n,\delta})_{(n,\delta)}$ with $\sum_n \bP_{\mu}[\cE_{n,n^{-s}}^{\complement}] < + \infty$ and a function $h$ with $h(n,n^{-s}) = \cO(n^{\gamma})$ such that under event $\cE_{n,\delta}$, $\bar R_n \le h(n, \delta)$.
\end{property}
Let $\text{Alg}_2$ be the algorithm $\text{Alg}_1$ used in a Top Two procedure, but which uses only the observations obtained at times $n$ such that $I_n = B_n$ and discards the rest.
Let $\tilde \cE_{n} = \cE_{n,n^{-s}}$ and $\Delta_{\min} = \min_{i \neq i^\star} \Delta_i$.
Then, under $\tilde \cE_{n}$, $\text{Alg}_2$ satisfies $\sum_{i \neq i^\star} N^{i}_{n,i} \le h(n, n^{-s})/\Delta_{\min}$ and
Lemma~\ref{lem:tracking_guaranty_light} yields $N^{i^\star}_{n,i^\star} \ge \beta(n-1) - h(n, n^{-s})/\Delta_{\min} - K/2$.
Therefore, Property~\ref{prop:leader_as_regret_algo} holds for $\tilde \cE_{n}$ and $h(n) = (h(n, n^{-s})/\Delta_{\min} + K/2 + 1)/\beta$.
Given a specific algorithm, a finer analysis could avoid discarding information by using $\text{Alg}_1$ with every observations.

\subsection{Adaptive proportions}
\label{sec:ss_adaptive_proportions}

Given a fixed allocation $\beta$, any Top Two algorithm can at best be asymptotically $\beta$-optimal.
Since the optimal allocation $\beta^\star \in \argmin_{\beta \in (0,1)} T^\star_{\beta}(\mu)$ is unknown, it should be learned from the observations by a Top Two algorithm using an adaptive proportion $\beta_n$ at time $n$.
Recently, \cite{you2022information} proposes IDS to choose $\beta_n$ in an adaptive manner.
For BAI with Gaussian observations, IDS yields $\beta_n = N_{n,C_n}/(N_{n,B_n} + N_{n,C_n})$.
Let $\bar \beta^{i}_{n} \eqdef \frac{1}{L_{n,i}} \sum_{t \in [n-1]} \beta_t \indi{B_t = i}$ be the average proportion when arm $i$ was the leader before time $n$.
Tracking with IDS requires to use $\bar \beta^{B_n}_{n+1}$ instead of $\beta$.
Using the analysis of \cite{you2022information}, it is reasonable to believe that one could obtain asymptotic optimality of TTUCB with IDS.
However it is not clear how to adapt the non-asymptotic analysis since it heavily relies on $\beta$ being fixed and bounded away from $\{0,1\}$.
Experiments with IDS are available in Appendix~\ref{app:sss_adaptive_proportions}.


\section{Experiments}
\label{sec:experiments}

In the moderate regime ($\delta = 0.1$), we assess the empirical performance of \hyperlink{TTUCB}{TTUCB} with bonus $g_{m}$ and concentration parameters $s = \alpha = 1.2$.
As benchmarks, we compare our algorithm with three sampling-based Top Two algorithms: TTTS, T3C and $\beta$-EB-TCI.
In addition, we consider Track-and-Stop (TaS) \citep{GK16}, FWS \citep{wang_2021_FastPureExploration}, DKM \citep{Degenne19GameBAI}, LUCB \citep{Shivaramal12} and uniform sampling.
At time $n$, the LUCB algorithm computes a leader and a challenger, then sample them both (see Appendix~\ref{app:ss_implementation_details}).
To provide a clear comparison with Top Two algorithms, we define a new $\beta$-LUCB algorithm which sample the leader with probability $\beta$, else sample the challenger.
At the exception of LUCB and $\beta$-LUCB which have their own stopping rule, all algorithms uses the stopping rule~\eqref{eq:glr_stopping_rule} with the heuristic threshold $c(n,\delta) = \log((1+ \log n)/ \delta)$.
Even though this choice is not sufficient to prove $\delta$-correctness, it yields an empirical error which is several orders of magnitude lower than $\delta$.
Top Two algorithms and $\beta$-LUCB use $\beta = 1/2$.
To allow for a fair numerical comparison, LUCB and $\beta$-LUCB use $\sqrt{2c(n-1,\delta)/N_{n,i}}$ as bonus, which is too tight to yield valid confidence intervals.
Supplementary experiments are available in Appendix~\ref{app:additional_experiments}.

\begin{figure}[b]
	\centering
	\includegraphics[width=0.49\linewidth]{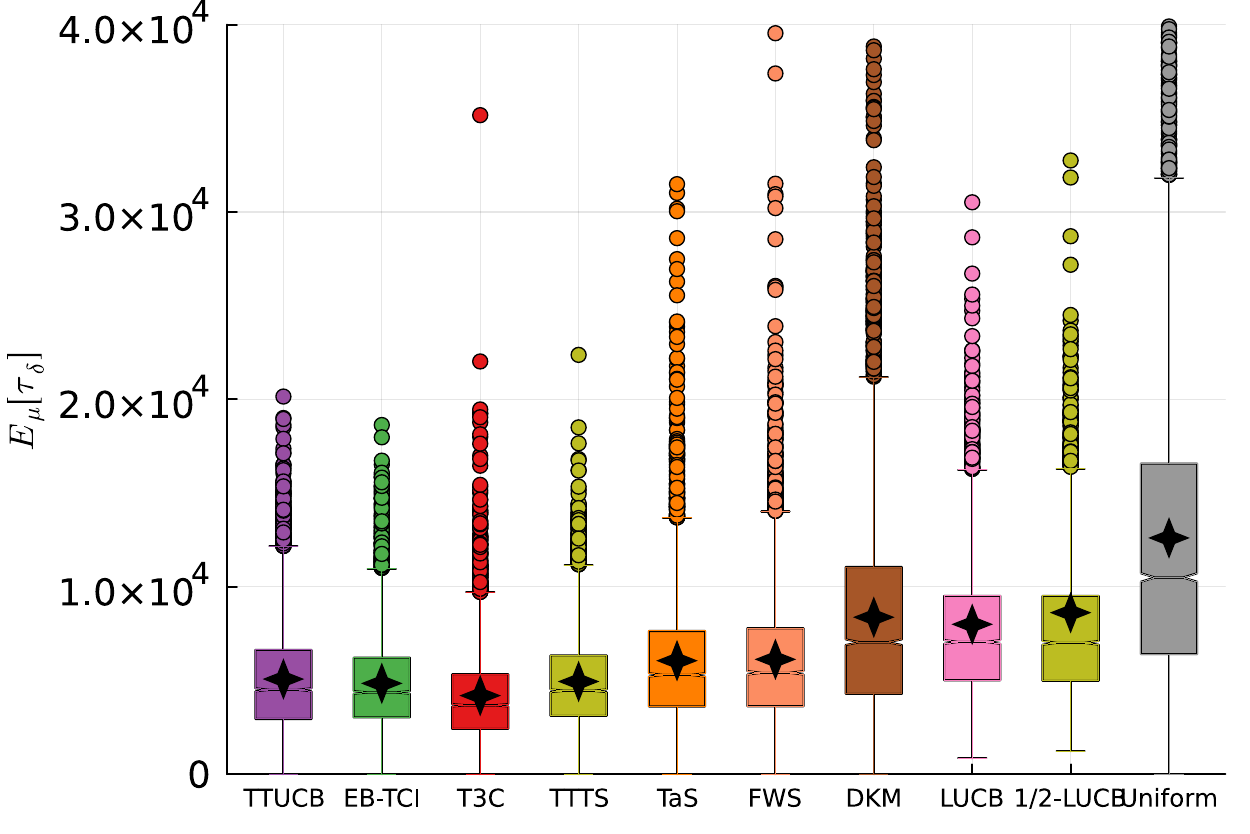}
	\includegraphics[width=0.49\linewidth]{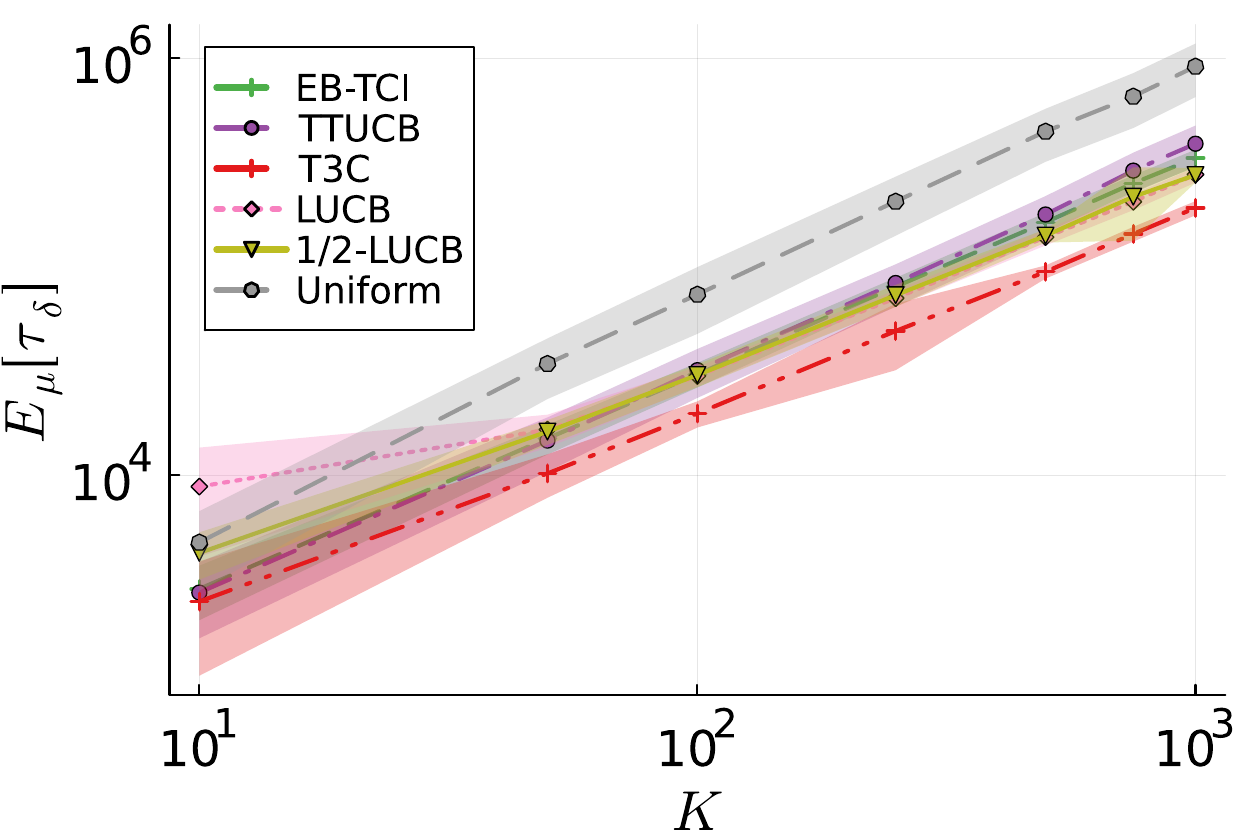}
	\caption{Empirical stopping time on (a) random instances ($K=10$) and (b) ``$1$-Sparse'' instances.}
	\label{fig:random_and_large_instances_experiments}
\end{figure}

\paragraph{Random instances}
We assess the performance on $5000$ random Gaussian instances with $K=10$ such that $\mu_1 = 0.6$ and $\mu_i \sim \mathcal U([0.2,0.5])$ for all $i \neq 1$.
Numerically, we observe $ w^\star(\mu)_{i^\star} \approx 1/3 \pm 0.02$ (mean $\pm$ std).
In Figure~\ref{fig:random_and_large_instances_experiments}(a), we see that \hyperlink{TTUCB}{TTUCB} performs on par with existing Top Two algorithms, and slightly outperforms TaS and FWS.
Our algorithm achieves significantly better result than DKM, LUCB, $1/2$-LUCB and uniform sampling.
The CPU running time is reported in Table~\ref{tab:average_number_pulls_per_arm_add}, and the observed empirical errors before stopping is displayed in Figure~\ref{fig:empirical_errors_before_stopping} (Appendix~\ref{app:ss_supplementary_experiments}).

\paragraph{Larger sets of arms}
We evaluate the impact of larger number of arms.
The ``$1$-sparse'' scenario of \cite{Jamieson14Survey} sets $\mu_1 = 1/4$ and $\mu_{i} = 0$ for all $i \neq 1$, i.e. $H(\mu) = 32 (K-1)$ (see Appendix~\ref{app:ss_supplementary_experiments} for other instances).
We consider algorithms with low computational cost.
In Figure~\ref{fig:random_and_large_instances_experiments}(b), all algorithms have the same linear scaling in $K$ (i.e. in $H(\mu)$).
Faced with an increase in the number of arms, the TS leader used in T3C appears to be more robust than the UCB leader in \hyperlink{TTUCB}{TTUCB}.
This is a common feature of UCB algorithms which have to overcome the bonus of sub-optimal arms.

\section{Conclusion}

In this paper, we have shown the first non-asymptotic upper bound on the expected sample complexity of a Top Two algorithm, which holds for any error level and for any instance having a unique best arm.
Furthermore, we have demonstrated that the \hyperlink{TTUCB}{TTUCB} algorithm achieves competitive empirical performance compared to other algorithms, including Top Two methods.

While our guarantees hold for a fixed proportion $\beta$ allocated to the leader, \cite{you2022information} recently introduced IDS to define an adaptive proportion $\beta_n$ at time $n$ and show asymptotic optimality for Gaussian distributions.
Deriving guarantees for IDS for single-parameter exponential families is a challenging open problem.
Finally, Top Two algorithms are a promising method to tackle complex settings.
While heuristics exist for some structured bandits such as Top-$k$, it would be interesting to efficiently adapt Top Two methods to deal with sophisticated structure, e.g. linear bandits.


\begin{ack}
	Experiments presented in this paper were carried out using the Grid’5000 testbed, supported by a scientific interest group hosted by Inria and including CNRS, RENATER and several Universities as well as other organizations (see https://www.grid5000.fr). This work has been partially supported by the THIA ANR program ``AI\_PhD@Lille''. The authors acknowledge the funding of the French National Research Agency under the project FATE (ANR22-CE23-0016-01).
\end{ack}

\bibliographystyle{abbrvnat}
\bibliography{TTUCB_neurips}


\newpage
\appendix
\onecolumn

\section{Outline} \label{app:outline}

The appendices are organized as follows:
\begin{itemize}
	\item Appendix~\ref{app:notations} gathers notation used in this work.
	\item In Appendix~\ref{app:characteristic_times}, we study the link between $T^\star(\mu)$ and $T^\star_{\beta}(\mu)$ for Gaussian distributions.
	\item The detailed analysis of our non-asymptotic upper bound (Theorem~\ref{thm:finite_time_upper_bound_instanciated}), sketched in Section~\ref{sec:finite_upper_bound}, is detailed in Appendix~\ref{app:finite_time_analysis}.
	We also give a non-asymptotic upper bound on the \hyperlink{TTUCB}{TTUCB} using $g_m$ in~\eqref{eq:ucb_leader} (Corollary~\ref{cor:finite_time_upper_bound}), and on uniform sampling (Theorem~\ref{thm:unif_finite_time_upper_bound}).
	\item We show the asymptotic optimality of our algorithm (Theorem~\ref{thm:asymptotic_upper_bound}) in Appendix~\ref{app:asymptotic_analysis}.
	\item Appendix~\ref{app:concentration} gathers concentration results used by the stopping rule (Lemma~\ref{lem:delta_correct_threshold}) and the sampling rule.
	\item Implementation details and supplementary experiments are detailed in Appendix~\ref{app:additional_experiments}.
\end{itemize}

\begin{table}[H]
\caption{Notation for the setting.}
\label{tab:notation_table_setting}
\begin{center}
\begin{tabular}{c c l}
	\toprule
Notation & Type & Description \\
\midrule
$K$ & $\N$ & Number of arms \\
$\mu_i$ & $\R$ & Mean of arm $i \in [K]$ \\
$\mu$ & $\R^K$ & Vector of means, $\mu \eqdef (\mu_i)_{i \in [K]}$ \\
$i^\star$ & $\R^K \to [K]$ & Best arm operator, $i^\star(\mu) = \argmax_{i \in [K]} \mu_i$ \\
$T^\star(\mu), T^\star_{\beta}(\mu)$ & $\R^{\star}_{+}$ & Asymptotic ($\beta$-)characteristic time \\
$w^\star(\mu), w^\star_{\beta}(\mu) = \{(w^{\star}_{\beta,i})_{i \in [K]}\}$ & $\simplex$ &  Asymptotic ($\beta$-)optimal allocation \\
	\bottomrule
\end{tabular}
\end{center}
\end{table}

\section{Notation} \label{app:notations}

We recall some commonly used notation:
the set of integers $[n] \eqdef \{1, \cdots, n\}$,
the complement $X^{\complement}$ and interior $\mathring X$ of a set $X$,
Landau's notation $o$, $\cO$, $\Omega$ and $\Theta$,
the $(K-1)$-dimensional probability simplex $\simplex \eqdef \left\{w \in \R_{+}^{K} \mid w \geq 0 , \: \sum_{i \in [K]} w_i = 1 \right\}$.
While Table~\ref{tab:notation_table_setting} gathers problem-specific notation, Table~\ref{tab:notation_table_algorithms} groups notation for the algorithms.
We emphasize that $N^{i}_{n,i}$ is the number of times where we pulled arm $i$ as a leader before time $n$.

\begin{table}[H]
\caption{Notation for algorithms.}
\label{tab:notation_table_algorithms}
\begin{center}
\begin{tabular}{c c l}
	\toprule
Notation & Type & Description \\
\midrule
$B_{n}$ & $[K]$ & Leader at time $n$ \\
$C_{n}$ & $[K]$ & Challenger at time $n$ \\
$I_{n}$ & $[K]$ & Arm sampled at time $n$ \\
$\beta$ & $(0,1)$ & Proportion parameter \\
$X_{n, I_{n}}$ & $\R$ & Sample observed at the end of time $n$, i.e. $X_{n, I_{n}} \sim \cN(\mu_{I_{n}}, 1)$ \\
$\cF_n$ &  & History before time $n$, i.e. $\cF_{n} \eqdef \sigma(I_1, X_{1,I_1}, \cdots, I_{n}, X_{n,I_{n}})$ \\
$\hat \imath_n$ & $[K]$ & Arm recommended before time $n$, i.e. $\hat \imath_n \in \argmax_{i\in [K]} \mu_{n,i}$ \\
$\tau_{\delta}$ & $\N$ & Sample complexity (stopping time of the algorithm) \\
$\hat{\imath}$ & $[K]$ & Arm recommended by the algorithm \\
$c(n,\delta)$ & $\R^{\star}_{+}$ & Stopping threshold function\\
$N_{n,i}$ & $\N$ & Number of pulls of arm $i$ before time $n$, $N_{n,i} \eqdef \sum_{t \in [n-1]} \indi{I_{t} = i}$ \\
$\mu_{n,i}$ & $\cI$ & Empirical mean of arm $i$ before time $n$, $\mu_{n,i} \eqdef \frac{1}{N_{n,i}} \sum_{t \in [n-1]} X_{t, I_{t}} \indi{I_{t} = i}$ \\
$L_{n,i}$ & $\N$ & Counts of $B_t = i$ before time $n$, $L_{n,i} \eqdef \sum_{t \in [n-1]} \indi{B_t = i}$ \\
$N^{i}_{n,j}$ & $\N$ & Counts of $(B_t, I_t) =(i,j)$ before time $n$, $N^{i}_{n,j} \eqdef \sum_{t \in [n-1]} \indi{(B_t, I_t) = (i,j)}$ \\
	\bottomrule
\end{tabular}
\end{center}
\end{table}


\section{Characteristic times}
\label{app:characteristic_times}

Let $\mu \in \mathcal D^{K}$ such that $i^\star(\mu) = \{i^\star\}$.
Let $\beta \in (0,1)$ and $w^{\star}_{\beta}$ be the unique allocation $\beta$-optimal allocation satisfying $w^{\star}_{\beta,i} > 0$ for all $i \in [K]$ (Lemma~\ref{lem:properties_characteristic_times}), i.e. $w_\beta^\star(\mu) = \{w^{\star}_{\beta}\}$ where
\[
  w_\beta^\star(\mu) \eqdef \argmax_{w \in \triangle_{K}: w_{i^\star} = \beta } \min_{i \neq i^\star} \frac{(\mu_{i^\star} - \mu_i)^2}{2(1/\beta + 1/w_i)} = \argmax_{w \in \triangle_{K}: w_{i^\star} = \beta } \min_{i \neq i^\star} \frac{\mu_{i^\star} - \mu_i}{\sqrt{1/\beta + 1/w_i}} \: .
\]

We restate without proof two fundamental results on the characteristic time and the associated allocation, which were first shown in \cite{Russo2016TTTS}.
Lemma~\ref{lem:robust_beta_optimality} gives an upper bound on $T^\star_{\beta}(\mu)/T^\star(\mu)$ and Lemma~\ref{lem:properties_characteristic_times} shows that the ($\beta$-)optimal allocation is unique with strictly positive values.
\cite{Russo2016TTTS} shows that these two results hold for any single-parameter exponential families.
\cite{jourdan_2022_TopTwoAlgorithms} extended their proof for the non-parametric family of bounded distributions.
Moreover, they argue that these results should hold for more general distributions provided some regularity assumptions are satisfied.

\begin{lemma}[\cite{Russo2016TTTS}] \label{lem:robust_beta_optimality}
$T^\star_{1/2}(\mu ) \leq 2 T^\star(\mu)$ and with $\beta^\star = w^\star_{i^\star}(\mu)$,
\begin{align*}
	\frac{T^\star_{\beta}(\mu)}{T^\star(\mu)} \leq \max \left\{\frac{\beta^\star}{\beta}, \frac{1-\beta^\star}{1-\beta} \right\}
	\: .
\end{align*}
\end{lemma}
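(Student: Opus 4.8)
The plan is to prove Lemma~\ref{lem:robust_beta_optimality} by exploiting the scaling structure of the inner optimization problem defining $T^\star_\beta(\mu)^{-1}$ and comparing the $\beta$-constrained allocation against an appropriately rescaled version of the unconstrained optimal allocation $w^\star \eqdef w^\star(\mu)$, which (by Lemma~\ref{lem:properties_characteristic_times}) is the unique maximizer with $w^\star_{i^\star} = \beta^\star$. Write $\Delta_i = \mu_{i^\star} - \mu_i$ and, for a weight vector $w$ with $w_{i^\star} = b$, let $\Psi_b(w) \eqdef \min_{i \neq i^\star} \frac{\Delta_i^2}{2(1/b + 1/w_i)}$, so that $T^\star_b(\mu)^{-1} = \max_{w : w_{i^\star} = b} \Psi_b(w)$. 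The key observation is that $1/b + 1/w_i$ is the quantity that controls everything, and that multiplying all coordinates of the sub-optimal part of $w^\star$ by a constant (to restore the simplex constraint after changing the mass at $i^\star$ from $\beta^\star$ to $\beta$) changes each $1/w_i$ by that same constant.

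First I would construct an explicit feasible point for the $\beta$-constrained problem: set $\tilde w_{i^\star} = \beta$ and $\tilde w_i = \frac{1-\beta}{1-\beta^\star} w^\star_i$ for $i \neq i^\star$; this lies in $\triangle_K$ and satisfies the constraint $\tilde w_{i^\star} = \beta$. Then for each $i \neq i^\star$,
\[
  \frac{1}{\beta} + \frac{1}{\tilde w_i} = \frac{1}{\beta} + \frac{1-\beta^\star}{1-\beta}\cdot\frac{1}{w^\star_i} \le c\left(\frac{1}{\beta^\star} + \frac{1}{w^\star_i}\right), \quad \text{where } c = \max\left\{\frac{\beta^\star}{\beta}, \frac{1-\beta^\star}{1-\beta}\right\},
\]
since $1/\beta \le (\beta^\star/\beta)(1/\beta^\star)$ and $(1-\beta^\star)/(1-\beta) \le c$ by definition of $c$. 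Consequently $\Psi_\beta(\tilde w) \ge c^{-1}\Psi_{\beta^\star}(w^\star) = c^{-1} T^\star(\mu)^{-1}$, and since $T^\star_\beta(\mu)^{-1} \ge \Psi_\beta(\tilde w)$, we get $T^\star_\beta(\mu) \le c\, T^\star(\mu)$, which is exactly the second displayed inequality. The first inequality $T^\star_{1/2}(\mu) \le 2 T^\star(\mu)$ then follows by specializing to $\beta = 1/2$: there $c = \max\{2\beta^\star, 2(1-\beta^\star)\} \le 2$ since $\beta^\star \in (0,1)$.

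The main (minor) obstacle is checking the direction of the inequality $\frac{1}{\beta}+\frac{1}{\tilde w_i} \le c(\frac{1}{\beta^\star}+\frac{1}{w^\star_i})$ termwise: one must verify separately that the "$1/\beta$ part" is bounded by $c$ times the "$1/\beta^\star$ part" and that the "$1/\tilde w_i$ part" is bounded by $c$ times the "$1/w^\star_i$ part", which both hold because $c$ dominates each of the two ratios $\beta^\star/\beta$ and $(1-\beta^\star)/(1-\beta)$. Since the characterization of $T^\star_\beta(\mu)^{-1}$ in Lemma~\ref{lem:lower_bound} uses the squared form $\frac{\Delta_i^2}{2(1/\beta + 1/w_i)}$, and every term in that expression except $1/\beta + 1/w_i$ is unaffected by the reweighting, the bound on $1/\beta + 1/\tilde w_i$ transfers directly to a bound on the whole objective, and taking the min over $i \neq i^\star$ preserves it. No concentration or probabilistic argument is needed; this is purely an algebraic comparison of two deterministic optimization problems.
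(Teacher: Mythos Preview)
Your argument is correct. The paper itself does not give a proof of this lemma: it is stated in Appendix~\ref{app:characteristic_times} with the comment ``We restate without proof two fundamental results on the characteristic time and the associated allocation, which were first shown in \cite{Russo2016TTTS}.'' Your rescaling construction (put mass $\beta$ on $i^\star$ and multiply every sub-optimal coordinate of $w^\star$ by $(1-\beta)/(1-\beta^\star)$) and the termwise bound $1/\beta + 1/\tilde w_i \le c\,(1/\beta^\star + 1/w^\star_i)$ with $c=\max\{\beta^\star/\beta,(1-\beta^\star)/(1-\beta)\}$ is exactly the standard proof of this inequality and matches the argument in \citet{Russo2016TTTS}.
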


\begin{lemma}[\cite{Russo2016TTTS}] \label{lem:properties_characteristic_times}
If $i^\star(\mu)$ is a singleton and $\beta \in (0,1)$, then $w^\star(\mu)$ and $w_\beta^\star(\mu)$ are singletons, i.e. the optimal allocations are unique, and $w^\star(\mu)_i > 0$ and $w_\beta^\star(\mu)_i > 0$ for all $i \in [K]$.
\end{lemma}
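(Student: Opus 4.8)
The plan is to work throughout with the \emph{squared} transportation terms, which have the same $\argmax$ as the ones written with a square root. For $\beta\in(0,1)$ set $\phi_i^{\beta}(x)\eqdef \frac{(\mu_{i^\star}-\mu_i)^2}{2(1/\beta+1/x)}$ and $\psi_i(a,b)\eqdef\frac{(\mu_{i^\star}-\mu_i)^2}{2(1/a+1/b)}$, extended continuously by $0$ when a coordinate vanishes, so that $w_\beta^\star(\mu)=\argmax_{w\in\triangle_K:\,w_{i^\star}=\beta}\min_{i\neq i^\star}\phi_i^{\beta}(w_i)$ and $w^\star(\mu)=\argmax_{w\in\triangle_K}\min_{i\neq i^\star}\psi_i(w_{i^\star},w_i)$. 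The two structural facts I would extract first are: (i) each $x\mapsto\phi_i^{\beta}(x)$ is a continuous, strictly increasing bijection from $[0,\infty)$ onto $[0,\beta(\mu_{i^\star}-\mu_i)^2/2)$; and (ii) $\psi_i$ is a positive multiple of the harmonic mean $H(a,b)=ab/(a+b)$ on $\R_{++}^2$, hence continuous, concave, strictly increasing in each argument, and positively homogeneous of degree one, with the sharp inequality $H(x+y)\ge H(x)+H(y)$ and equality iff $x$ and $y$ are proportional (which, after clearing denominators, reduces to $(a_1 b_2-a_2 b_1)^2\ge 0$). In particular both objectives are concave, so on their compact, convex, nonempty feasible sets the maxima are attained and the sets of maximizers are convex.

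Next I would prove positivity through an \emph{equalization} argument. Since $i^\star$ is the unique best arm, $\mu_{i^\star}>\mu_i$ for every $i\neq i^\star$, so the uniform allocation (resp. $w_{i^\star}=\beta$, $w_i=(1-\beta)/(K-1)$) yields a strictly positive value; hence the optimal value $z^\star$ is strictly positive. Fix any maximizer $w$ and let $A$ be the set of arms $i\neq i^\star$ attaining the minimum $\min_{j\neq i^\star}\psi_j(w_{i^\star},w_j)$. If $A\neq[K]\setminus\{i^\star\}$, pick $j\notin A\cup\{i^\star\}$ (so $\psi_j(\cdot)>z^\star>0$, forcing $w_j>0$), decrease $w_j$ by a small $\epsilon>0$ and spread it over $A$: by strict monotonicity every active term rises strictly above $z^\star$, and by continuity the inactive terms stay above $z^\star$, contradicting optimality. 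Hence at every maximizer all terms equal $z^\star>0$, which forces $w_i>0$ for all $i\neq i^\star$, and $w_{i^\star}>0$; for $w^\star(\mu)$ one also gets $w^\star_{i^\star}<1$ since $\sum_{i\neq i^\star}w^\star_i=1-w^\star_{i^\star}>0$. This settles the positivity claims.

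For uniqueness in the $\beta$-constrained problem, equalization gives $w_{\beta,i}^\star=(\phi_i^{\beta})^{-1}(z^\star)$ for all $i\neq i^\star$; the simplex constraint then reads $\sum_{i\neq i^\star}(\phi_i^{\beta})^{-1}(z^\star)=1-\beta$, and since the left-hand side is continuous and \emph{strictly} increasing in $z^\star$ it pins down $z^\star$, hence $w_\beta^\star(\mu)$, uniquely. For the unconstrained problem, suppose $w^{(1)}\neq w^{(2)}$ are both maximizers; by convexity of the maximizer set $v\eqdef\tfrac12(w^{(1)}+w^{(2)})$ is a maximizer, so by the previous paragraph $\psi_i(v_{i^\star},v_i)=z^\star$ for every $i\neq i^\star$. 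Writing $\psi_i=c_i H$ and using degree-one homogeneity, $\psi_i(v_{i^\star},v_i)=\tfrac12\psi_i\big(w^{(1)}_{i^\star}+w^{(2)}_{i^\star},\,w^{(1)}_i+w^{(2)}_i\big)\ge\tfrac12\big(\psi_i(w^{(1)}_{i^\star},w^{(1)}_i)+\psi_i(w^{(2)}_{i^\star},w^{(2)}_i)\big)=z^\star$, with equality forcing $(w^{(1)}_{i^\star},w^{(1)}_i)$ and $(w^{(2)}_{i^\star},w^{(2)}_i)$ proportional. Thus $w^{(1)}_i/w^{(2)}_i=w^{(1)}_{i^\star}/w^{(2)}_{i^\star}=:\lambda$ for all $i\neq i^\star$, i.e. $w^{(1)}=\lambda w^{(2)}$; since both lie in $\triangle_K$, $\lambda=1$ and $w^{(1)}=w^{(2)}$, a contradiction.

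The main obstacle is the uniqueness of $w^\star(\mu)$: the per-arm transportation term is only concave, not strictly concave — it is linear along radial directions because $H$ is positively homogeneous of degree one — so the textbook implication "strictly concave objective $\Rightarrow$ unique maximizer" is unavailable. The argument above circumvents this by combining the equalization property (which one must verify holds at \emph{every} maximizer, not merely at one) with the exact equality case of the superadditivity of the harmonic mean. A minor additional point is the degenerate case $K=2$, where equalization is vacuous but the feasibility constraint alone already forces a unique positive allocation.
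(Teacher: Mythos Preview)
Your argument is correct. Note, however, that the paper does not supply its own proof of this lemma: it explicitly restates the result \emph{without proof}, attributing it to \citet{Russo2016TTTS} (with the extension to bounded distributions credited to \citet{jourdan_2022_TopTwoAlgorithms}). So there is no paper proof to compare against directly.

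That said, your route is worth recording because it is more elementary than the original. Russo's argument is phrased for general single-parameter exponential families and relies on first-order optimality (KKT-type) conditions for the max--min problem together with strict concavity of the transportation costs in the natural parametrization. Your proof instead exploits the explicit Gaussian structure: the transportation cost $\psi_i(a,b)=\tfrac{(\mu_{i^\star}-\mu_i)^2}{2}\cdot\tfrac{ab}{a+b}$ is a multiple of the harmonic mean, which is degree-one homogeneous with a clean superadditivity equality case $(a_1 b_2-a_2 b_1)^2=0$. This lets you dispose of the uniqueness of $w^\star(\mu)$ by a direct proportionality argument rather than via Lagrangian analysis. The trade-off is that your proof is tied to the Gaussian form and does not immediately generalize, whereas Russo's applies verbatim to any single-parameter exponential family.

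Two small remarks. First, your equalization step for the unconstrained problem tacitly keeps $w_{i^\star}$ fixed when you redistribute mass from $j$ to $A$; this is fine, but it is worth stating since one could also move mass into $i^\star$. Second, your uniqueness argument for $w^\star(\mu)$ already covers $K=2$ (equalization is trivially satisfied, and proportionality plus the simplex constraint still force $w^{(1)}=w^{(2)}$), so the separate caveat you flag is not actually needed.
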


\paragraph{Gaussian distributions}
Since Lemma~\ref{lem:robust_beta_optimality} is a worst-case inequality holding for general distributions, we expect that tighter inequality can be achieved for Gaussian distributions by leveraging their symmetry.
This intuition is fueled by recent results of \cite{barrier_2022_NonAsymptoticApproach}.
Using a rewriting of the optimization problem underlying $T^\star (\mu)$ (Lemma~\ref{lem:proposition_8_barrier_2022_NonAsymptoticApproach}), they provide a better understanding of characteristic times and their optimal allocations (Lemma~\ref{lem:proposition_10_barrier_2022_NonAsymptoticApproach}).
In particular, for Gaussian distributions, Lemma~\ref{lem:proposition_10_barrier_2022_NonAsymptoticApproach} shows that the optimal allocation of arm $i^\star$ is never above $1/2$ and is larger than $1/(\sqrt{K-1} + 1) \ge 1/K$.

\begin{lemma}[Proposition $8$ in \cite{barrier_2022_NonAsymptoticApproach}] \label{lem:proposition_8_barrier_2022_NonAsymptoticApproach}
	Let $\mu \in \R^{K}$ be a $K$-arms Gaussian bandits and $r(\mu)$ be the solution of $\psi_{\mu}(r) = 0$, where
	\[
		\forall r \in (1/\min_{i \neq i^\star} (\mu_{i^\star} - \mu_i)^2, +\infty), \quad \psi_{\mu}(r)  = \sum_{i \neq i^\star} \frac{1}{\left( r(\mu_{i^\star} - \mu_i)^2 - 1\right)^2} - 1 \: ,
	\]
	and $\psi_{\mu}$ is convex and decreasing.
	Then,
	\[
	T^\star (\mu) = \frac{2 r(\mu)}{1 + \sum_{i \ne i^\star} \frac{1}{r(\mu) (\mu_{i^\star} - \mu_i)^2 - 1}} \: .
	\]
\end{lemma}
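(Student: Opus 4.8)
The plan is to prove Lemma~\ref{lem:proposition_8_barrier_2022_NonAsymptoticApproach} by analyzing the best-response / min-max structure of the optimization problem defining $T^\star(\mu)$ for Gaussian arms, exploiting the quadratic form $(\mu_{i^\star}-\mu_i)^2/(2(1/w_{i^\star}+1/w_i))$ of the transportation cost. First I would fix the optimal allocation $w^\star = w^\star(\mu)$ (unique and strictly positive by Lemma~\ref{lem:properties_characteristic_times}) and use the standard equalization argument: at the optimum, all the ``arm constraints'' $\min_{i\neq i^\star}\frac{(\mu_{i^\star}-\mu_i)^2}{2(1/w^\star_{i^\star}+1/w^\star_i)}$ are active, so there is a common value $v$ with $\frac{(\mu_{i^\star}-\mu_i)^2}{2(1/w^\star_{i^\star}+1/w^\star_i)} = v$ for every $i\neq i^\star$, and $v = T^\star(\mu)^{-1}$. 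This lets me solve for $w^\star_i$ in terms of $w^\star_{i^\star}$: writing $\Delta_i^2 = (\mu_{i^\star}-\mu_i)^2$, we get $1/w^\star_i = \frac{\Delta_i^2}{2v} - 1/w^\star_{i^\star}$, hence $w^\star_i = \left(\frac{\Delta_i^2}{2v} - \frac{1}{w^\star_{i^\star}}\right)^{-1}$.

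Next I would impose the simplex constraint $\sum_{i\in[K]} w^\star_i = 1$, i.e. $w^\star_{i^\star} + \sum_{i\neq i^\star} w^\star_i = 1$, which after substitution becomes one scalar equation in the two unknowns $(w^\star_{i^\star}, v)$. To close the system I need a second relation, and this comes from the first-order optimality (KKT) condition with respect to $w_{i^\star}$: differentiating the Lagrangian, the marginal gain from increasing $w_{i^\star}$ must balance against the simplex multiplier. Concretely, stationarity in $w_{i^\star}$ gives $\sum_{i\neq i^\star}\frac{(\mu_{i^\star}-\mu_i)^2}{2(1/w^\star_{i^\star}+1/w^\star_i)^2}\cdot\frac{1}{(w^\star_{i^\star})^2}$ equals the common per-coordinate marginal cost; comparing with the stationarity in each $w_i$ yields $\sum_{i\neq i^\star} (w^\star_i/w^\star_{i^\star})^2 = 1$, equivalently $\sum_{i\neq i^\star}\frac{1}{(w^\star_{i^\star}/w^\star_i)^2} = 1$. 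Now I substitute the expression for $w^\star_i/w^\star_{i^\star} = \left(\frac{\Delta_i^2 w^\star_{i^\star}}{2v} - 1\right)^{-1}$; setting $r = \frac{w^\star_{i^\star}}{2v}$ turns this into exactly $\sum_{i\neq i^\star}\frac{1}{(r\Delta_i^2 - 1)^2} = 1$, which is $\psi_\mu(r)=0$, identifying $r = r(\mu)$. Convexity and monotonicity of $\psi_\mu$ on $(1/\min_i\Delta_i^2,\infty)$ follow since each term $(r\Delta_i^2-1)^{-2}$ is convex and decreasing there, and this also gives uniqueness of $r(\mu)$.

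Finally I would back out $T^\star(\mu) = 1/v$. From $r = w^\star_{i^\star}/(2v)$ I get $v = w^\star_{i^\star}/(2r)$, so $T^\star(\mu) = 2r/w^\star_{i^\star}$; and the simplex equation, after substituting $w^\star_i = w^\star_{i^\star}/(r\Delta_i^2-1)$, reads $w^\star_{i^\star}\bigl(1 + \sum_{i\neq i^\star}\frac{1}{r\Delta_i^2-1}\bigr) = 1$, hence $1/w^\star_{i^\star} = 1 + \sum_{i\neq i^\star}\frac{1}{r(\mu)(\mu_{i^\star}-\mu_i)^2-1}$. Combining these two displays yields $T^\star(\mu) = \dfrac{2r(\mu)}{1 + \sum_{i\neq i^\star}\frac{1}{r(\mu)(\mu_{i^\star}-\mu_i)^2-1}}$, as claimed. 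I expect the main obstacle to be establishing rigorously that \emph{all} sub-optimal constraints are active and that the stationarity condition in $w_{i^\star}$ collapses to the clean identity $\sum_{i\neq i^\star}(w^\star_i/w^\star_{i^\star})^2 = 1$: this requires carefully handling the non-smoothness of the inner $\min$ (via a subgradient / active-set argument) and verifying the feasibility range $r(\mu) > 1/\min_{i\neq i^\star}(\mu_{i^\star}-\mu_i)^2$ so that every $w^\star_i$ coming out of the formula is positive, consistent with Lemma~\ref{lem:properties_characteristic_times}.
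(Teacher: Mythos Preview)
The paper does not give its own proof of this lemma; it is restated from \citet{barrier_2022_NonAsymptoticApproach}. Your approach---equalization of the transportation costs at the optimum, the balance condition $\sum_{i\neq i^\star}(w^\star_i/w^\star_{i^\star})^2 = 1$ coming from the first-order condition in $w_{i^\star}$, and the simplex constraint---is exactly the standard derivation, and it is the idea the paper itself invokes when proving the closely related Lemma~\ref{lem:rewriting_half_characteristictime} for $T^\star_\beta(\mu)$.

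There is, however, an algebraic slip in your final combination. From your two displays $T^\star(\mu)=2r/w^\star_{i^\star}$ and $1/w^\star_{i^\star}=1+\sum_{i\neq i^\star}\frac{1}{r\Delta_i^2-1}$, substitution gives
\[
T^\star(\mu) \;=\; 2\,r(\mu)\Bigl(1+\sum_{i\neq i^\star}\frac{1}{r(\mu)(\mu_{i^\star}-\mu_i)^2-1}\Bigr),
\]
a \emph{product}, not the quotient you wrote. A quick check: for $K=2$ one has $r=2/\Delta^2$ and the product yields $T^\star=8/\Delta^2$, the correct value, while the quotient gives $2/\Delta^2$; likewise, for all sub-optimal arms sharing the same gap $\Delta$ the product recovers $2(1+\sqrt{K-1})^2/\Delta^2$, consistent with Lemma~\ref{lem:proposition_10_barrier_2022_NonAsymptoticApproach}. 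In short, your derivation is correct, but the final display---and the lemma as restated in this paper---carries a typo: the denominator should be a multiplicative factor.
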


\begin{lemma}[Proposition $10$ in \cite{barrier_2022_NonAsymptoticApproach}] \label{lem:proposition_10_barrier_2022_NonAsymptoticApproach}
	Let $\mu \in \R^{K}$ be a $K$-arms Gaussian bandits.
	For $K = 2$,
	\[
		w^\star(\mu) = (0.5, 0.5) \quad \text{and} \quad T^\star(\mu) = 8(\mu_1 - \mu_2)^2 \: .
	\]
	For $K \ge 3$, we have
	\[
		1/(\sqrt{K-1} + 1) \le w^\star(\mu)_{i^\star} \le 1/2 \: ,
	\]
	and
	\[
		\max \left\{ \frac{8}{\min_{i \neq i^\star}(\mu_{i^\star} - \mu_i)^2}, 4 \frac{1+\sqrt{K-1}}{\overline{\Delta^2}}\right\} \le T^\star(\mu) \le 2 \frac{(1+ \sqrt{K-1})^2}{\min_{i \neq i^\star}(\mu_{i^\star} - \mu_i)^2} \: ,
	\]
	where $\overline{\Delta^2} = \frac{1}{K-1} \sum_{i \neq i^\star} (\mu_{i^\star} - \mu_i)^2$.
	In particular, the equalities $w^\star(\mu)_{i^\star} = 1/(\sqrt{K-1} + 1)$ and $T^\star(\mu) = \frac{2(1+ \sqrt{K-1})^2}{\min_{i \neq i^\star}(\mu_{i^\star} - \mu_i)^2}$ are reached if and only if $\mu_{i} = \max_{i \neq i^\star} \mu_{i}$ for all $i \neq i^\star$.
\end{lemma}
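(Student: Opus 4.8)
The plan is to work directly from the first-order optimality conditions of the concave max--min problem $T^\star(\mu)^{-1} = \max_{w\in\simplex}\min_{i\ne i^\star} f_i(w)$ with $f_i(w) = \frac{(\mu_{i^\star}-\mu_i)^2}{2(1/w_{i^\star}+1/w_i)}$; these conditions are exactly what underlies Lemma~\ref{lem:proposition_8_barrier_2022_NonAsymptoticApproach}. Write $\Delta_i = \mu_{i^\star}-\mu_i$ and $\Delta_{\min}=\min_{i\ne i^\star}\Delta_i$ (all sums and minima over $i$ below run over $i\ne i^\star$). Each $f_i$ is concave (as $(a,b)\mapsto ab/(a+b)$ is concave on $\R_{>0}^2$), so the problem is a concave maximization over $\simplex$ with a unique maximizer $w^\star:=w^\star(\mu)$ (Lemma~\ref{lem:properties_characteristic_times}), and $w^\star_k>0$ for all $k$. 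Lagrangian stationarity for $\sum_k w_k = 1$ (all $K-1$ constraints being active at the optimum, by a standard perturbation argument), together with the identity $\partial_{w_{i^\star}}f_i = (w_i/w_{i^\star})^2\,\partial_{w_i}f_i$, gives the two relations that drive the proof: (i) all $f_i(w^\star)$ equal $T^\star(\mu)^{-1}$, and (ii) $\sum_i (w^\star_i/w^\star_{i^\star})^2 = 1$. Setting $\beta=w^\star_{i^\star}$, $x_i = w^\star_i/\beta$, this reads $\sum_i x_i^2 = 1$ and $\beta(1+\sum_i x_i)=1$; and (i) gives $\Delta_i^2 \propto 1+1/x_i$, so $x_i$ is strictly decreasing in $\Delta_i$, whence $x_i=x_j \iff \Delta_i=\Delta_j$.

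The claims about $w^\star(\mu)_{i^\star}=\beta$ then follow fast from $\sum_i x_i^2=1$, $x_i>0$: Cauchy--Schwarz gives $\sum_i x_i\le\sqrt{K-1}$, so $1=\beta(1+\sum_i x_i)\le\beta(1+\sqrt{K-1})$, i.e. $\beta\ge 1/(1+\sqrt{K-1})$, with equality iff all $x_i$ (hence all $\Delta_i$) are equal, in which case $x_i=1/\sqrt{K-1}$ and $\beta=1/(1+\sqrt{K-1})$; and $(\sum_i x_i)^2\ge\sum_i x_i^2=1$ forces $\sum_i x_i\ge1$, i.e. $\beta\le1/2$. For $K=2$, $x_1^2=1$ forces $x_1=1$, so $w^\star(\mu)=(1/2,1/2)$, $1/w^\star_1+1/w^\star_2=4$, and the value of $T^\star(\mu)$ is read off (i).

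For the upper bound on $T^\star(\mu)$ I would test the feasible allocation $\bar w$ with $\bar w_{i^\star}=1/(1+\sqrt{K-1})$ and $\bar w_i = 1/(\sqrt{K-1}(1+\sqrt{K-1}))$ for $i\ne i^\star$: there $1/\bar w_{i^\star}+1/\bar w_i=(1+\sqrt{K-1})^2$ for every $i$, so $T^\star(\mu)^{-1}\ge\min_i f_i(\bar w)=\Delta_{\min}^2/(2(1+\sqrt{K-1})^2)$, which is the claim; by uniqueness of $w^\star$, equality holds iff $\bar w$ equalizes the $f_i$, i.e. iff all $\Delta_i$ are equal, in which case also $w^\star(\mu)_{i^\star}=1/(1+\sqrt{K-1})$. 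The first lower bound is the classical two-arm bound: for any feasible $w$ and $i_0\in\argmin_i\Delta_i$, $\min_i f_i(w)\le f_{i_0}(w)\le\Delta_{\min}^2/8$ since $1/w_{i^\star}+1/w_{i_0}\ge 4/(w_{i^\star}+w_{i_0})\ge4$; taking the sup over $w$, $T^\star(\mu)\ge 8/\Delta_{\min}^2$.

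The remaining bound $T^\star(\mu)\ge 4(1+\sqrt{K-1})/\overline{\Delta^2}$ is the main obstacle, because now the gaps cannot be dropped term by term. The plan is to aggregate (i): summing over $i\ne i^\star$ gives $\sum_i\Delta_i^2 = 2T^\star(\mu)^{-1}((K-1)/\beta+\sum_i 1/w^\star_i)$, so with $S:=\sum_i x_i$ and $1/\beta = 1+S$, $T^\star(\mu)\,\overline{\Delta^2} = 2/\beta + \frac{2}{K-1}\sum_i 1/w^\star_i = 2(1+S)\left(1 + \frac{1}{K-1}\sum_i \frac{1}{x_i}\right)$. By AM--HM, $\sum_i 1/x_i \ge (K-1)^2/S$, so $T^\star(\mu)\,\overline{\Delta^2}\ge 2(1+S)(1+(K-1)/S) = 2\left(S + K + \frac{K-1}{S}\right)$; since $S\ge1$ (shown above), AM--GM gives $S+(K-1)/S\ge 2\sqrt{K-1}$ (minimum at $S=\sqrt{K-1}\ge1$), hence $T^\star(\mu)\,\overline{\Delta^2}\ge 2(K+2\sqrt{K-1})\ge 4(1+\sqrt{K-1})$ for $K\ge2$. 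The delicate point is channeling the entire structure of $w^\star$ through only the two scalar facts $\sum_i x_i^2=1$ (used merely as $S\ge1$) and the summed form of (i); equivalently, one can parametrize via Lemma~\ref{lem:proposition_8_barrier_2022_NonAsymptoticApproach}, setting $v_i = (r(\mu)\Delta_i^2-1)^{-1}=w^\star_i/w^\star_{i^\star}$ so that $\sum_i v_i^2=1$, $S=\sum_i v_i$, and deriving the same two identities from its formula for $T^\star(\mu)$.
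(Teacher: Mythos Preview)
The paper does not give its own proof of this lemma: it is quoted as Proposition~10 of \citet{barrier_2022_NonAsymptoticApproach} and used as a black box. Your argument is correct and self-contained. Deriving from first-order optimality the equalization condition (i) and the balance identity $\sum_{i\ne i^\star}(w^\star_i/w^\star_{i^\star})^2=1$ is precisely the structure underlying Lemma~\ref{lem:proposition_8_barrier_2022_NonAsymptoticApproach}; the bounds on $w^\star_{i^\star}$ then drop out of Cauchy--Schwarz and positivity of the $x_i$, the upper bound on $T^\star(\mu)$ from the explicit test allocation $\bar w$, and the equality cases from uniqueness of $w^\star$ (Lemma~\ref{lem:properties_characteristic_times}). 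Your treatment of the $\overline{\Delta^2}$ lower bound is also valid and in fact yields the sharper inequality $T^\star(\mu)\,\overline{\Delta^2}\ge 2\big(K+2\sqrt{K-1}\big)=2(1+\sqrt{K-1})^2$, of which the stated $4(1+\sqrt{K-1})$ is a consequence for $K\ge 2$ (with equality only at $K=2$).
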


By inspecting the proof of Proposition 8 in \cite{barrier_2022_NonAsymptoticApproach}, we obtain directly the following rewriting of $T^\star_{\beta}(\mu)$.
\begin{lemma} \label{lem:rewriting_half_characteristictime}
		Let $\mu \in \R^{K}$ be a $K$-arms Gaussian bandits and $r_{\beta}(\mu)$ be the solution of $\phi_{\mu, \beta}(r) = 0$, where
		\[
			\forall r \in (1/\min_{i \neq i^\star} (\mu_{i^\star} - \mu_i)^2, +\infty), \quad \phi_{\mu, \beta}(r)  = \sum_{i \ne i^\star} \frac{1}{r (\mu_{i^\star} - \mu_{i})^2 - 1}  - \frac{1 - \beta}{\beta} \: ,
		\]
		and $\phi_{\mu, \beta}$ is convex and decreasing.
		Then,
		\[
		T^\star_{\beta} (\mu) = \frac{2 r_{\beta}(\mu)}{\beta}  \: .
		\]
\end{lemma}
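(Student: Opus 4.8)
The plan is to mimic the proof of Proposition~8 in \citet{barrier_2022_NonAsymptoticApproach}, but with the single constraint $w_{i^\star} = \beta$ fixed rather than optimized over. First I would write out the optimization defining $T^\star_\beta(\mu)^{-1}$: maximize over $w \in \simplex$ with $w_{i^\star} = \beta$ the quantity $\min_{i \ne i^\star} (\mu_{i^\star} - \mu_i)^2 / (2(1/\beta + 1/w_i))$. Since the inner minimum is increasing in each $w_i$ for $i \ne i^\star$ and the objective is separable across the sub-optimal arms, at the optimum the transportation terms $(\mu_{i^\star} - \mu_i)^2 / (2(1/\beta + 1/w_i))$ are all equal to a common value, call it $1/T^\star_\beta(\mu)$; otherwise one could shift mass from a larger term toward a smaller one and strictly increase the minimum. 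This equalization is the standard water-filling argument and should be justified by the strict monotonicity and the fact that $\sum_{i\ne i^\star} w_i = 1 - \beta$ is fixed.

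Next I would solve the equalized system explicitly. Setting $(\mu_{i^\star} - \mu_i)^2 / (2(1/\beta + 1/w_i)) = 1/T$ for each $i \ne i^\star$ gives $1/\beta + 1/w_i = T(\mu_{i^\star}-\mu_i)^2/2$, hence $w_i = \big(T(\mu_{i^\star}-\mu_i)^2/2 - 1/\beta\big)^{-1}$. Imposing $\sum_{i \ne i^\star} w_i = 1-\beta$ yields $\sum_{i\ne i^\star} \big(T(\mu_{i^\star}-\mu_i)^2/2 - 1/\beta\big)^{-1} = 1 - \beta$. Now I substitute $r = T\beta/2$ (so $T = 2r/\beta$), turning the left side into $\sum_{i \ne i^\star} \beta/\big(r(\mu_{i^\star}-\mu_i)^2 - 1\big)$, and dividing through by $\beta$ gives exactly $\sum_{i\ne i^\star} 1/(r(\mu_{i^\star}-\mu_i)^2 - 1) = (1-\beta)/\beta$, i.e.\ $\phi_{\mu,\beta}(r) = 0$. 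So $r = r_\beta(\mu)$ and $T^\star_\beta(\mu) = 2r_\beta(\mu)/\beta$, as claimed.

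It remains to check the side conditions: that $\phi_{\mu,\beta}$ is well-defined, convex and decreasing on $(1/\min_{i\ne i^\star}(\mu_{i^\star}-\mu_i)^2, +\infty)$, and that a root exists and is unique. Convexity and monotonicity follow termwise: each $r \mapsto 1/(r\Delta_i^2 - 1)$ (with $\Delta_i = \mu_{i^\star}-\mu_i$) is positive, strictly decreasing, and strictly convex on the region where $r\Delta_i^2 > 1$; summing preserves these properties. For existence and uniqueness of the root, note that as $r$ decreases to $1/\min_i \Delta_i^2$ the term for the arm attaining the minimum gap blows up to $+\infty$, so $\phi_{\mu,\beta} \to +\infty$, while as $r \to +\infty$ each term tends to $0$, so $\phi_{\mu,\beta}(r) \to -(1-\beta)/\beta < 0$; strict monotonicity then gives a unique zero $r_\beta(\mu)$. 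One should also verify that the resulting $w_i$ are all strictly positive, which holds precisely because $r_\beta(\mu)\Delta_i^2 - 1 > 0$ on the admissible interval, consistent with Lemma~\ref{lem:properties_characteristic_times}.

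The main obstacle I anticipate is making the water-filling / equalization argument fully rigorous in the presence of possibly equal sub-optimal gaps $\Delta_i$ (allowed once we drop the distinct-means assumption): one must argue that at the maximizer no transportation term can strictly exceed the minimum, handling ties carefully, and that the maximizer lies in the interior (all $w_i > 0$) rather than on a face of the simplex. Everything else is a direct algebraic substitution reading off the constants, and the convexity/monotonicity claims are routine termwise computations; I would simply cite the structure of the proof of Proposition~8 in \citet{barrier_2022_NonAsymptoticApproach}, which establishes exactly this for the unconstrained problem, and note that fixing $w_{i^\star} = \beta$ only replaces the outer optimization over $\beta$ by a fixed value, leaving the inner derivation intact.
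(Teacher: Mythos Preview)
Your proposal is correct and follows essentially the same approach as the paper: both invoke the structure of Proposition~8 in \citet{barrier_2022_NonAsymptoticApproach}, argue that at the optimum all transportation costs are equalized to a common value, use the simplex constraint $\sum_{i\ne i^\star} w_i = 1-\beta$ to obtain the implicit equation, and then perform the substitution $r = T\beta/2$ (equivalently $r=\beta/C$ in the paper's notation) to arrive at $\phi_{\mu,\beta}(r)=0$. Your version is simply more explicit about the water-filling justification and the existence/uniqueness of the root, which the paper leaves implicit by citation.
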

\begin{proof}
	Using the proof of Proposition 8 in \cite{barrier_2022_NonAsymptoticApproach}, we obtain directly that
\begin{align*}
		2T^\star_{\beta} (\mu)^{-1} = C_{\beta}(\mu)
\end{align*}
where $C_{\beta}(\mu)$ is the solution of $\Phi_{\mu}(C) = 0$ where
\[
	\Phi_{\mu}(C) = \sum_{i \ne i^\star} \frac{\beta}{\beta (\mu_{i^\star} - \mu_{i})^2/C - 1}  - (1 - \beta) \: .
\]
The idea behind the above result is that at the equilibrium, i.e. at $w^{\star}$, all the transportation costs are equal to $C$.
Then, the implicit equation defining $C$ is obtained by using the constraints that $\sum_{i \ne i^\star} w^{\star}_i = 1 - \beta$.
To conclude, we simply use $r = \beta/C$.

	Since it is the sum of $K-1$ convex and decreasing functions, $\phi_{\mu, \beta}$ is also convex and decreasing.
\end{proof}

\begin{figure}
	\centering
	\includegraphics[width=0.48\linewidth]{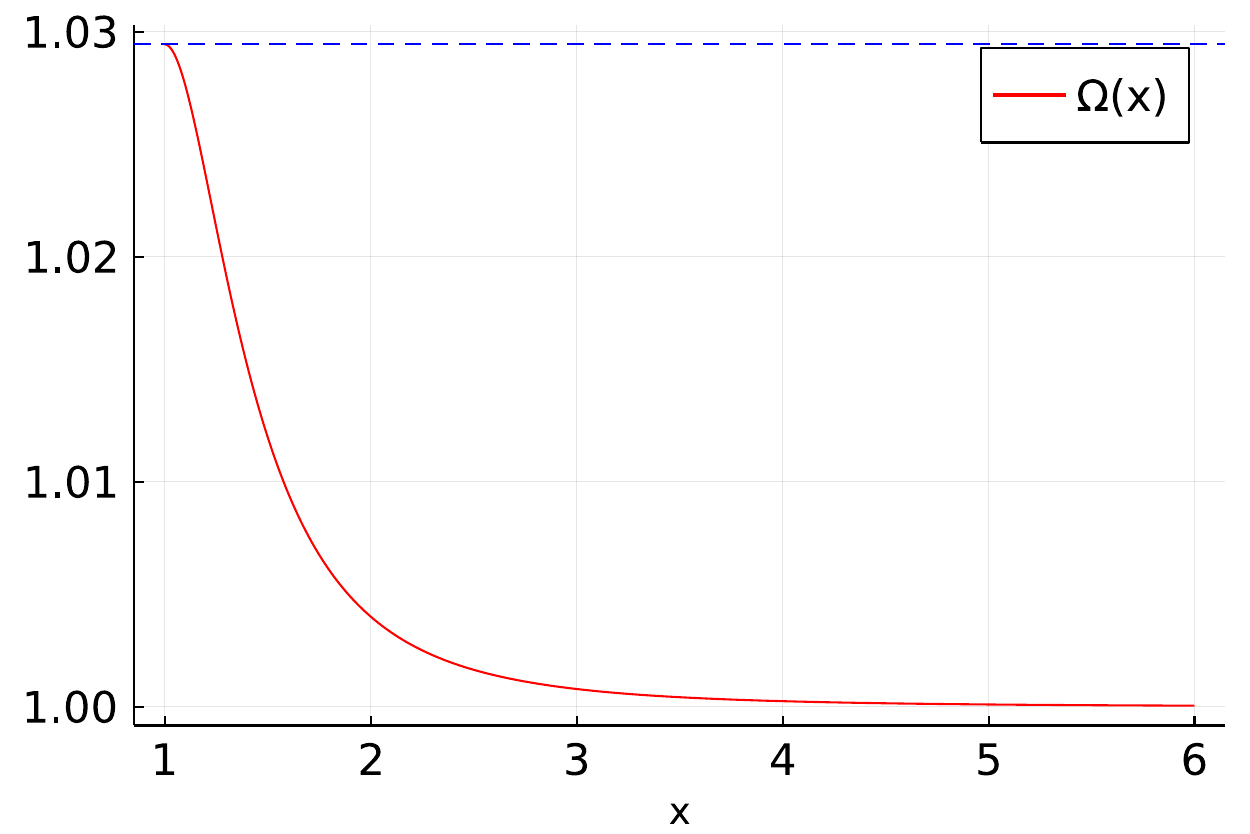}
	\includegraphics[width=0.48\linewidth]{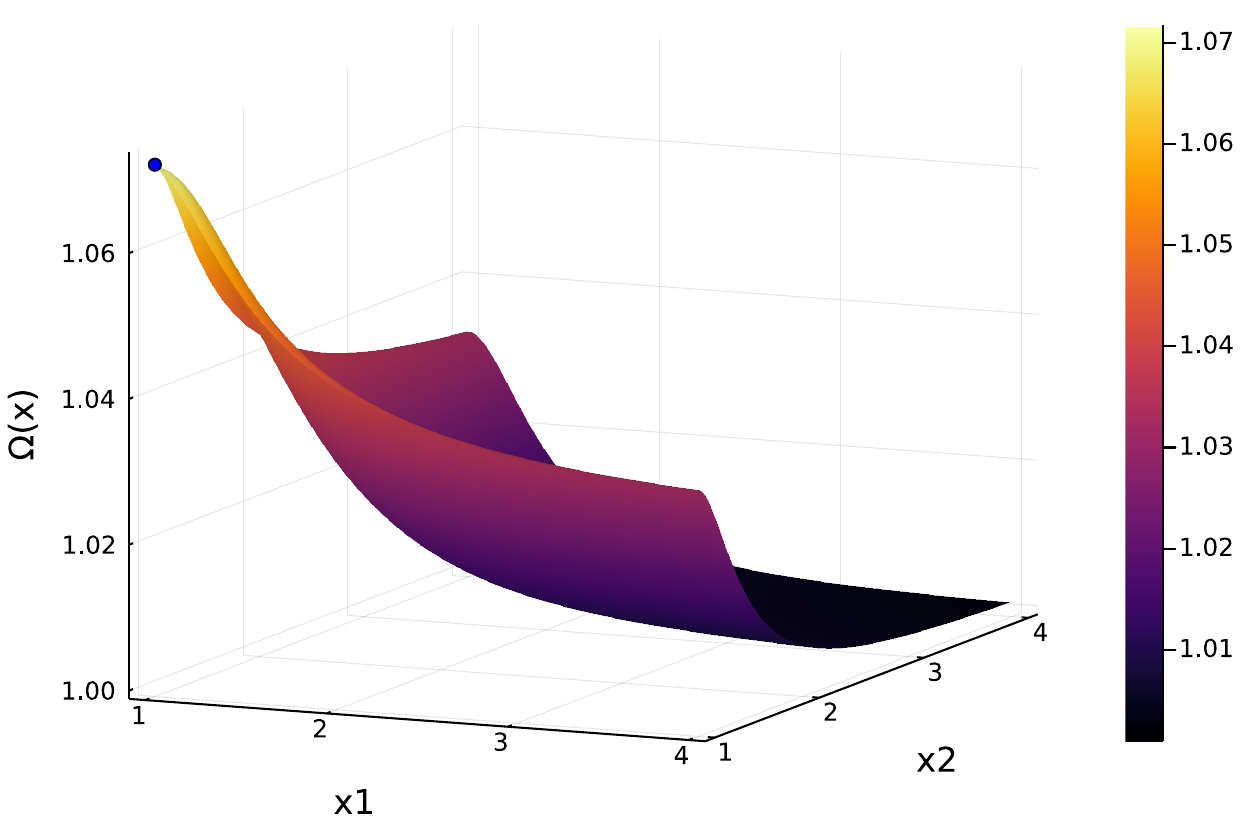}
	\caption{Ratio of characteristic times $\Omega(x) = T^\star_{1/2}(\mu) / T^\star(\mu)$ for $K=3$ (left) and $K=4$ (right). The dashed blue line is $r_{2} = 6/(1+\sqrt{2})^2$ (left), and the blue point is $r_{3} = 8/(1+\sqrt{3})^2$ (right).}
	\label{fig:simulation_ratio_times}
\end{figure}

Lemma~\ref{lem:improved_robust_beta_optimality_gaussian} aims at improving the worst-case inequality between $T^\star_{1/2}(\mu )$ and $T^\star(\mu)$ in the Gaussian setting.
For $K=2$, those two quantities are equal.
For $K \ge 3$, we showed that $\max_{\mu : |i^\star(\mu)|=1} T^\star_{1/2}(\mu) / T^\star(\mu)$ is at least $r_{K}$, which is achieved when all sub-optimal arms have the same mean.
As the gradient of the ratio is the null vector for those instances, we conjecture this is the maximum, i.e. $T^\star_{1/2}(\mu) \leq r_{K} T^\star(\mu)$.
Our conjecture is supported by numerical simulations for $K \ge 3$.
In Figure~\ref{fig:simulation_ratio_times}, we plot the ratio of characteristic times $\Omega(x) = T^\star_{1/2}(\mu) / T^\star(\mu)$ for $K \in \{3,4\}$.
We observed that our conjecture is validated empirically, and that $T^\star_{1/2}(\mu) / T^\star(\mu)$ is often close to $1$.

\begin{lemma} \label{lem:improved_robust_beta_optimality_gaussian}
	For $K=2$, we have $T^\star_{1/2}(\mu ) = T^\star(\mu)$. For $K \ge 3$, let $r_{K} = 2K/(1+\sqrt{K-1})^2$.
	Then, for all $\mu$ such that $i^\star(\mu)$ is unique, we have $\frac{T^\star_{1/2}(\mu)}{T^\star(\mu)}  = \Omega(x)$ where $x_{j} = \frac{ \mu_{i^\star} - \mu_{j}}{\mu_{i^\star} - \mu_{j^\star}} \ge 1$ for all $j \notin \{i^\star, j^\star\}$ with $j^\star \in \argmin_{j \ne i^\star} \mu_{i^\star} - \mu_{j}$.
	In other words, $\frac{T^\star_{1/2}(\mu)}{T^\star(\mu)}$ is independent from $\mu_{i^\star}$ and $\min_{j \ne i^\star} \mu_{i^\star} - \mu_{j}$.
	Moreover, we have
	\begin{align*}
		\Omega(1_{K-2})  = r_{K} \quad \text{and} \quad \nabla_{x} \Omega(1_{K-2})  = 0_{K-2} \: .
	\end{align*}
\end{lemma}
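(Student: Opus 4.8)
The plan is to handle $K=2$ directly and, for $K\ge 3$, to proceed in three steps: (i) reduce the ratio to a function of $x$ using translation- and scale-invariance, (ii) evaluate it at the equal-means point, and (iii) derive the vanishing gradient from a homogeneity-plus-symmetry argument rather than from an explicit implicit differentiation. For $K=2$ there is a single suboptimal arm with gap $\Delta=\mu_1-\mu_2$: the only feasible allocation with $w_{i^\star}=1/2$ gives $T^\star_{1/2}(\mu)^{-1}=\Delta^2/(2(2+2))=\Delta^2/8$, which by Lemma~\ref{lem:proposition_10_barrier_2022_NonAsymptoticApproach} equals $T^\star(\mu)^{-1}$; this is consistent with $r_2=1$.

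For $K\ge 3$, note from the variational formula in Lemma~\ref{lem:lower_bound} that $T^\star(\mu)$ and $T^\star_{1/2}(\mu)$ depend on $\mu$ only through the gap vector $\Delta=(\mu_{i^\star}-\mu_i)_{i\ne i^\star}$, so they are unchanged when all means (equivalently $\mu_{i^\star}$) are shifted, which already gives the stated independence from $\mu_{i^\star}$. Replacing $\Delta$ by $c\Delta$ rescales $T^\star(\mu)^{-1}$ and $T^\star_{1/2}(\mu)^{-1}$ by $c^2$, so each is homogeneous of degree $-2$ in $\Delta$ and the ratio $\Omega$ is homogeneous of degree $0$. Writing $\Delta_{j^\star}=\min_{j\ne i^\star}(\mu_{i^\star}-\mu_j)$ and $\Delta_j=\Delta_{j^\star}x_j$ with $x_j\ge 1$, degree-$0$ homogeneity removes the dependence on the common scale $\Delta_{j^\star}$, so $T^\star_{1/2}(\mu)/T^\star(\mu)$ is a function $\Omega(x)$ of $(x_j)_{j\notin\{i^\star,j^\star\}}$ alone.

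At the equal-means point $x=1_{K-2}$, with common gap $\Delta$, the symmetric choice $w_i=\tfrac1{2(K-1)}$ for $i\ne i^\star$ is optimal in the definition of $T^\star_{1/2}$, giving $T^\star_{1/2}(\mu)^{-1}=\Delta^2/(4K)$ (equivalently $r_{1/2}(\mu)=K/\Delta^2$ in Lemma~\ref{lem:rewriting_half_characteristictime}), while the equality case of Lemma~\ref{lem:proposition_10_barrier_2022_NonAsymptoticApproach} gives $T^\star(\mu)=2(1+\sqrt{K-1})^2/\Delta^2$; dividing yields $\Omega(1_{K-2})=2K/(1+\sqrt{K-1})^2=r_K$. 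For the gradient I would view $\Omega$ as a function of the squared gaps $a_i=\Delta_i^2$, which is smooth near $a=1_{K-1}$ by the implicit function theorem (since $\psi_\mu$ and $\phi_{\mu,1/2}$ are strictly decreasing in $r$, the roots $r(\mu),r_{1/2}(\mu)$ depend smoothly on $a$) and homogeneous of degree $0$ in $a$. Euler's identity then gives $\sum_{i\ne i^\star}a_i\,\partial_{a_i}\Omega\equiv 0$, so at $a=1_{K-1}$ we get $\sum_{i\ne i^\star}\partial_{a_i}\Omega(1_{K-1})=0$; and since $\Omega$ is invariant under permutations of the suboptimal arms, all these partials coincide at $a=1_{K-1}$, hence each is $0$. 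With $a_j=x_j^2$ the chain rule gives $\partial_{x_j}\Omega=2x_j\,\partial_{a_j}\Omega$, which at $x=1_{K-2}$ equals $2\,\partial_{a_j}\Omega(1_{K-1})=0$, so $\nabla_x\Omega(1_{K-2})=0_{K-2}$.

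The homogeneity bookkeeping and the closed-form value at equal means are routine (the latter is essentially already packaged in Lemmas~\ref{lem:proposition_10_barrier_2022_NonAsymptoticApproach} and~\ref{lem:rewriting_half_characteristictime}). The step that requires an idea rather than a calculation is the vanishing gradient: a direct differentiation of the implicit equations for $r(\mu)$ and $r_{1/2}(\mu)$ is possible but messy, whereas combining degree-$0$ homogeneity (Euler) with the permutation symmetry at the equal-means instance makes it immediate. I expect that to be the crux, with the only remaining care being to keep $a_{j^\star}$ fixed while the other coordinates vary and to pass correctly through the substitution $a_j=x_j^2$.
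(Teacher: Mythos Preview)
Your proof is correct. The reduction to normalized gaps, the $K=2$ case, and the evaluation $\Omega(1_{K-2})=r_K$ proceed essentially as in the paper. Where you genuinely diverge is in the gradient computation. The paper differentiates the implicit equations for $r(\mu)$ and $C(\mu)=r_{1/2}(\mu)$ explicitly, obtains closed forms for $\partial r/\partial x_j$ and $\partial C/\partial x_j$ at $x=1_{K-2}$, then differentiates each summand of the expression $\Omega(x)=\frac{C(x)}{r(x)-1}+\sum_{i\ge 3}\frac{C(x)}{r(x)(r(x)x_i^2-1)}$ term by term and checks that the $K-1$ contributions sum to zero; this is a page of algebra. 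Your route---lift to the full squared-gap vector $a\in\R^{K-1}$, observe that the ratio is homogeneous of degree $0$ in $a$, apply Euler's relation to get $\sum_i \partial_{a_i}\tilde\Omega(1_{K-1})=0$, and invoke permutation symmetry to conclude each partial vanishes---is shorter and explains conceptually \emph{why} the paper's cancellation occurs. The only care needed, which you note, is to separate the full symmetric function $\tilde\Omega(a)$ from its restriction $\Omega(x)$ obtained by fixing $a_{j^\star}=1$; the chain rule $\partial_{x_j}\Omega=2x_j\,\partial_{a_j}\tilde\Omega$ handles this correctly. (Minor slip: scaling $\Delta\mapsto c\Delta$ multiplies $T^\star(\mu)^{-1}$ by $c^2$, so the inverse times are homogeneous of degree $+2$ rather than $-2$; the ratio is still degree $0$, so nothing downstream is affected.)
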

\begin{proof}
	Using Lemma~\ref{lem:proposition_10_barrier_2022_NonAsymptoticApproach}, we have $T^\star_{1/2}(\mu ) = T^\star(\mu)$ directly for $K=2$.

	For $K \ge 3$, we want to upper bound $T^\star_{1/2}(\mu) / T^\star(\mu)$.
	If we denote by $\Delta_{\min}(\mu) = \min_{i \ne i^\star}(\mu_{i^\star} - \mu_{i})^2$, it is easy to see that $T^\star_{1/2}(\mu) / T^\star(\mu) = T^\star_{1/2}(\tilde \mu) / T^\star(\tilde \mu)$ when $\Delta_{\min}(\mu) = \Delta_{\min}(\tilde \mu)$.
	Likewise, this ratio is invariant by translation of all means by a same quantity.
	Therefore, we consider without restriction an instance $\mu$ such that $\Delta_{\min}(\mu) = 1$ and with
	\[
		\mu_{1} = 0 > \mu_{2} = -x_{2} \ge  \cdots \ge \mu_{K} = - x_{K} \: ,
	\]
	where $x_{2} = 1$ and $x_{i} \ge 1$ for all $i \ge 3$.

	First, we can rewrite
	\begin{align*}
		T^\star_{1/2}(x)^{-1} &= \frac{1}{4}\max_{q \in \triangle_{K-1}} \min_{i \ge 2} \frac{x_{i}^2}{1+1/q_i} \ge \frac{1}{4}\max_{q \in \triangle_{K-1}} \min_{i \ge 2} \frac{1}{1+1/q_i} = \frac{1}{4K} \: ,
	\end{align*}
	where the inequality is an equality if and only $x_i = 1$ for all $i \ge 2$.
	Using Lemma~\ref{lem:proposition_10_barrier_2022_NonAsymptoticApproach}, we know that
	\[
	T^\star(x) = 2 (1+ \sqrt{K-1})^2
	\]
	 if and only $x_i = 1$ for all $i \ge 2$. Therefore, we have exhibited an instance such that
	 \[
	 \frac{T^\star_{1/2}(\mu)}{T^\star(\mu)} = \frac{2K}{(1+ \sqrt{K-1})^2} \: .
	 \]

	 Applying Lemma~\ref{lem:rewriting_half_characteristictime} for $\beta = 1/2$, we obtain $T^\star_{1/2}(x) = 4 C(x)$ where $C(x)$ is the implicit solution of the equation
	 \begin{align*}
	 	\phi (x, C(x)) = 0  \quad \text{where} \quad \phi(x, C) = \frac{1}{C - 1} + \sum_{i \ge 3} \frac{1}{C x_i^2 - 1}  - 1 \: .
	 \end{align*}
	 Using Lemma~\ref{lem:proposition_8_barrier_2022_NonAsymptoticApproach}, we know that
	 \[
	 T^\star (x) = \frac{2 r(x)}{1 + \frac{1}{r(x) - 1} + \sum_{i \ge 3} \frac{1}{r(x) x_i^2 - 1}}
	 \] where $r(x)$ is the implicit solution of the equation
	 \begin{align*}
	 	\psi (x, r(x)) = 0  \quad \text{where} \quad \psi(x, r) = \frac{1}{(r - 1)^2} + \sum_{i \ge 3} \frac{1}{(r x_i^2 - 1)^2} - 1 \: .
	 \end{align*}
	 Therefore, we obtain that
	 \begin{align*}
		 \Omega(x) = \frac{1}{2}\frac{T^\star_{1/2}(x)}{T^\star(x)} =  \frac{C(x)}{r(x)-1}  + \sum_{i \ge 3} \frac{C(x)}{r(x)(r(x) x_i^2 - 1)}  \: ,
	 \end{align*}
	 and our goal is to show that the above quantity is maximum if and only $x_i = 1$ for all $i \ge 2$.
	 One way of doing this is by computing the gradient and showing it is negative, which would imply that the function is decreasing.
	 Let $j \ge 3$.
	 Using the implicit differentiation theorem, we obtain
	 \begin{align*}
	 	\frac{\partial C}{\partial x_j}(x) = - \frac{\frac{\partial \phi }{\partial x_j}(x, C(x)) }{\frac{\partial \phi }{\partial C}(x, C(x))} = - \frac{\frac{2 C(x)x_{j}}{(C(x) x_{j}^2 - 1)^2}}{\frac{1}{(C(x) - 1)^2} + \sum_{i \ge 3} \frac{x_{i}^2}{(C(x) x_{i}^2 - 1)^2}} \quad \text{and} \quad \frac{\partial C}{\partial x_j}(1_{K-2}) = -\frac{2C(1_{K-2})}{K-1}  \: ,
	 \end{align*}
	 and
	 \begin{align*}
	 	\frac{\partial r}{\partial x_j}(x) = - \frac{\frac{\partial \psi }{\partial x_j}(x, r(x)) }{\frac{\partial \psi }{\partial r}(x, r(x))} = - \frac{\frac{2 r(x)x_{j}}{(r(x) x_{j}^2 - 1)^3}}{\frac{1}{(r(x)  - 1)^3} + \sum_{i \ge 3} \frac{x_{i}^2}{(r(x) x_{i}^2 - 1)^3}} \quad \text{and} \quad  	\frac{\partial r}{\partial x_j}(1_{K-2}) = -\frac{2r(1_{K-2})}{K-1} \: .
	 \end{align*}

Direct computations yield that
\begin{align*}
	&\frac{\partial }{\partial x_j} \left( \frac{C(x)}{r(x)-1} \right) = \frac{\frac{\partial C}{\partial x_j}(x) (r(x)-1) - C(x) \frac{\partial r}{\partial x_j}(x)}{(r(x)-1)^2} \quad \text{and} \quad \frac{\partial }{\partial x_j} \left( \frac{C(x)}{r(x)-1} \right)_{x = 1_{K-2}} = \frac{1}{K-1}\frac{2C(1_{K-2})}{(r(1_{K-2})-1)^2} \: ,
\end{align*}
and
\begin{align*}
	&\frac{\partial }{\partial x_j} \left( \frac{C(x)}{r(x)(r(x) x_j^2 - 1)} \right) = \frac{\frac{\partial C}{\partial x_j}(x) r(x)(r(x) x_j^2 - 1) - C(x)  \frac{\partial r}{\partial x_j}(x) \left( 2r(x)x_j^2 - 1 \right) - 2C(x) r(x)^2 x_{j}}{ \left( r(x)(r(x) x_j^2 - 1) \right)^2} \\
	& \frac{\partial }{\partial x_j} \left( \frac{C(x)}{r(x)(r(x) x_j^2 - 1)} \right)_{x = 1_{K-2}} = \left( \frac{1}{K-1} - 1 \right) \frac{2 C(1_{K-2})}{(r(1_{K-2})  - 1)^2}   \: ,
\end{align*}
and, for $i\ge 3$ s.t. $i \ne j$,
\begin{align*}
	&\frac{\partial }{\partial x_j} \left( \frac{C(x)}{r(x)(r(x) x_i^2 - 1)} \right) = \frac{\frac{\partial C}{\partial x_j}(x) r(x)(r(x) x_i^2 - 1) - C(x)  \frac{\partial r}{\partial x_j}(x) \left( 2r(x)x_i^2 - 1 \right)}{ \left( r(x)(r(x) x_i^2 - 1) \right)^2} \\
	&  \frac{\partial }{\partial x_j} \left( \frac{C(x)}{r(x)(r(x) x_i^2 - 1)} \right)_{x = 1_{K-2}} =  \frac{1}{K-1}\frac{2 C(1_{K-2})  }{ (r(1_{K-2})  - 1)^2}   \: .
\end{align*}
Then, plugging everything together, we obtained that
\begin{align*}
	\frac{\partial \Omega}{\partial x_j}(1_{K-2}) =  \frac{2 C(1_{K-2})}{(r(1_{K-2})  - 1)^2}  \left( \frac{1}{K-1} + \frac{1}{K-1} - 1 + \sum_{i \ge 3, i \ne j} \frac{1}{K-1} \right) = 0 \: .
\end{align*}

Therefore, we have shown that $\nabla \Omega(1_{K-2}) = 0_{K-2}$.
\end{proof}


\section{Non-asymptotic analysis}
\label{app:finite_time_analysis}

In Appendix~\ref{app:ss_useful_properties}, we state and prove one key result for each one of the three main components of the \hyperlink{TTUCB}{TTUCB} sampling rule: the UCB leader (Lemma~\ref{lem:ucb_leader_lower_bound_counts}), the TC challenger (Lemma~\ref{lem:lower_bound_tc_challenger_costs}) and the tracking (Lemma~\ref{lem:tracking_guaranty}).
The proof of Theorem~\ref{thm:finite_time_upper_bound_instanciated} is detailed in Appendix~\ref{app:ss_proof_finite_time_upper_bound}, which uses the stopping rule~\eqref{eq:glr_stopping_rule} and a proof method from \cite{LocatelliGC16}.
It is a direct consequence of a more general result (Theorem~\ref{thm:finite_time_upper_bound}).
In Appendix~\ref{app:ss_tighter_ucb_leader}, we prove a non-asymptotic upper bound for the \hyperlink{TTUCB}{TTUCB} when using $g_m$ instead of $g_u$ (Corollary~\ref{cor:finite_time_upper_bound}).
We compare our results with uniform sampling in Appendix~\ref{app:ss_uniform_sampling} (Theorem~\ref{thm:unif_finite_time_upper_bound}).
Other technicalities are gathered in Appendix~\ref{app:ss_technicalities}.

\subsection{Key properties}
\label{app:ss_useful_properties}

Before delving in the proof of Theorem~\ref{thm:finite_time_upper_bound} itself, we present the key properties of each component of the \hyperlink{TTUCB}{TTUCB} sampling rule under a some concentration event.

Let $\alpha > 1$ and $s > 1$.
Let $(\cE_n)_{n > K}$ be the sequence of concentration events defined as $\cE_n \eqdef \cE_{1,n} \cap \cE_{2,n}$ for all $n > K$ where $\cE_{1,n}$ and $\cE_{2,n}$ are defined in~\eqref{eq:event_concentration_per_arm} and~\eqref{eq:event_concentration_per_pair} as
\begin{align*}
	\cE_{1,n} &\eqdef \left\{ \forall k \in [K], \forall t \in [n^{1/\alpha}, n], \: |\mu_{t,k} - \mu_k| < \sqrt{\frac{g_u(t)}{N_{t,k}}} \right\} \: , \\
	\cE_{2,n} &\eqdef \left\{ \forall k \neq i^\star, \forall t \in [n^{1/\alpha}, n], \: (\mu_{t,i^\star} - \mu_{t,k}) - (\mu_{i^\star} - \mu_{k}) >- \sqrt{2\alpha (2+s) \log (t)\left(\frac{1}{N_{t,i^\star}} + \frac{1}{N_{t,k}}\right)} \right\} \: ,
\end{align*}
with $g_u(n) = 2\alpha(1+s) \log n$.
In Lemma~\ref{lem:concentration_event_global}, it is shown that $\sum_{n > K} \bP (\cE_{n}^{\complement}) \le (2K-1) \zeta(s)$.

\paragraph{UCB leader}
Lemma~\ref{lem:ucb_leader_lower_bound_counts} shows that the UCB leader is different from $i^\star$ for only a sublinear number of times under a certain concentration event.
It is slightly more general than Lemma~\ref{lem:UCB_leader_as_regret_algo} presented in Section~\ref{sec:algorithm}, which follows from $H_{1}(\mu) \le H_{1}(\mu) + 2\Delta_{\min}^{-2} = H(\mu)$.
\begin{lemma} \label{lem:ucb_leader_lower_bound_counts}
	Let $(\cE_{1,n})_{n}$ and $g_u$ as in Lemma~\ref{lem:concentration_per_arm_gau}.
	Let $H_{1}(\mu) = \sum_{i \ne i^\star(\mu)} \frac{2}{(\mu_{i^\star(\mu)} - \mu_i)^{2}}$.
	For all $n > K$, under the event $\cE_{1,n}$,
\begin{equation} \label{eq:ucb_leader_property}
	\forall t \in [n^{1/\alpha}, n], \quad L_{t,i^\star} \ge t-1 - 2 H(\mu)g_{u}(t)/\beta - K/\beta \: ,
\end{equation}
Let $(\cE_{3,n})_{n}$ and $g_m$ as in Lemma~\ref{lem:imp_concentration_per_arm_gau}.
Under the event $\cE_{3,n}$,~\eqref{eq:ucb_leader_property} holds by using $g_m$ instead of $g_{u}$.
\end{lemma}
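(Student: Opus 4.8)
The plan is to run the classical UCB regret bookkeeping on the concentration event and then translate the resulting control on sampling counts into control on leader counts via the tracking guarantee. Fix $n > K$ and work on $\cE_{1,n}$ (resp. $\cE_{3,n}$ in the $g_m$ version). By definition of that event, for every round $s \in [n^{1/\alpha},n]$ and every arm $k$ we have $\mu_{s,k} < \mu_k + \sqrt{g_u(s)/N_{s,k}}$, and, evaluated at $i^\star$, $\mu_{s,i^\star} + \sqrt{g_u(s)/N_{s,i^\star}} > \mu_{i^\star}$: the UCB index of $i^\star$ is a valid upper bound on $\mu_{i^\star}$, and the UCB index of any arm $k$ is at most $\mu_k + 2\sqrt{g_u(s)/N_{s,k}}$. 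Hence, if the UCB leader $B^{\text{UCB}}_s$ equals a suboptimal arm $i$ at some $s \in [n^{1/\alpha},n]$, its index dominates that of $i^\star$, which exceeds $\mu_{i^\star}$; combining with the upper bound for $i$ and rearranging gives $N_{s,i} < 4 g_u(s)/\Delta_i^2 \le 4 g_u(t)/\Delta_i^2$ for all $t \ge s$, where $\Delta_i = \mu_{i^\star} - \mu_i > 0$. Since $g_u$ is logarithmic, a suboptimal arm cannot be the leader once it has been pulled $\Omega(g_u(t)/\Delta_i^2)$ times.

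Next I would pass from the number of rounds where $i$ is the leader \emph{and} gets sampled to the number of rounds where $i$ is the leader. Using the last-pull argument of \citet{LocatelliGC16}, I would look at the last round $s^\star < t$ with $(B_{s^\star}, I_{s^\star}) = (i,i)$: the display above applied at $s^\star$ gives $N^{i}_{t,i} = N^{i}_{s^\star,i} + 1 \le N_{s^\star,i} + 1 \le 4 g_u(t)/\Delta_i^2 + 1$ (the borderline case where no such $s^\star$ falls in the window is handled by evaluating the confidence bounds at the first round of $[n^{1/\alpha},n]$, after which an over-sampled suboptimal arm is no longer picked as leader, so its leader count stops growing). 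The two-arm tracking guarantee, Lemma~\ref{lem:tracking_guaranty_light}, then yields $\beta L_{t,i} \le N^{i}_{t,i} + \tfrac12$, hence $L_{t,i} \le \beta^{-1}\big(4 g_u(t)/\Delta_i^2 + \tfrac32\big)$.

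Summing over $i \ne i^\star$ and using $\sum_{i \ne i^\star} 2/\Delta_i^2 = H(\mu)$ gives $\sum_{i \ne i^\star} L_{t,i} \le 2 H(\mu) g_u(t)/\beta + K/\beta$ (with the constants not optimized), so $L_{t,i^\star} = (t-1) - \sum_{i \ne i^\star} L_{t,i} \ge t - 1 - 2 H(\mu) g_u(t)/\beta - K/\beta$ for every $t \in [n^{1/\alpha},n]$, which is~\eqref{eq:ucb_leader_property}. The argument for $g_m$ is verbatim, with $\cE_{1,n}, g_u$ replaced by $\cE_{3,n}, g_m$: the only properties of the bonus used are that $\sqrt{g(s)/N_{s,k}}$ is the one-sided confidence width on the relevant event and that $g$ is sublinear, so that the threshold $4g(t)/\Delta_i^2$ is $o(t)$.

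The main obstacle is the bridge in the second step rather than the UCB inequality, which is textbook: the UCB argument only limits how often $i$ is the leader \emph{and} sampled, whereas the statement concerns how often $i$ is merely the leader, and it is precisely the deterministic tracking of the split $(\beta, 1-\beta)$ between leader and challenger that makes these two counts comparable up to an additive constant. A secondary point requiring care is that the concentration event covers only the window $[n^{1/\alpha}, n]$, so the behaviour during the short initial phase must be absorbed into the same counting.
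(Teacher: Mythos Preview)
Your proof is correct and takes the same route as the paper: on $\cE_{1,n}$ the UCB inequality gives $N_{s,k} \le 4g_u(s)/\Delta_k^2$ whenever $B_s = k \ne i^\star$ for $s$ in the window, and the tracking guarantee $\beta L_{t,k} \le N^k_{t,k} + \tfrac12$ converts this into the bound on $\sum_{k\ne i^\star} L_{t,k}$. The paper compresses the argument into the single chain $L_{t,k} \le (N^k_{t,k}+\tfrac12)/\beta \le (N_{t,k}+\tfrac12)/\beta \le (4g_u(t)/\Delta_k^2+\tfrac12)/\beta$, whereas you spell out the last-pull step that justifies the final inequality; your version is if anything the more careful presentation, and your caveat about the initial window $[1,n^{1/\alpha})$ applies equally to the paper's write-up.
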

\begin{proof}
Suppose that at time $t \in [n^{1/\alpha}, n]$, the UCB leader is different from $i^\star$, i.e. $B_t = k \neq i^\star$.
Using the event $\cE_{1,n}$ and the definition of $B_t$ yields
\begin{align*}
\mu_{i^\star}
\le \mu_{t,i^\star} + \sqrt{\frac{g_{u}(t)}{N_{t, i^\star}}}
\le \mu_{t,k} + \sqrt{\frac{g_{u}(t)}{N_{t, k}}}
\le \mu_k + \sqrt{\frac{4 g_{u}(t)}{N_{t, k}}}
\: .
\end{align*}
We get that if $t \ge n^{1/\alpha}$, then $N_{t,k} \le \frac{4 g_{u}(t)}{(\mu_{i^\star} - \mu_{k})^2}$.
Therefore, we obtain the following upper bound on the number of times the leader is different from $i^\star$ up to time $t$
\[
t-1 - L_{t,i^\star} = \sum_{k \neq i^\star} L_{t,k} \le \frac{1}{\beta}\sum_{k \neq i^\star} N_{t, k}^{k} + \frac{K-1}{2\beta} \le \frac{1}{\beta}\sum_{k \neq i^\star} N_{t, k} + \frac{K-1}{2\beta} \le \sum_{k \ne i^\star} \frac{4 g_{u}(t)}{(\mu_{i^\star} - \mu_{k})^2 \beta}  + \frac{K-1}{2\beta} \: ,
\]
where we used Lemma~\ref{lem:tracking_guaranty} for the second inequality and $N^{k}_{t,k} \le N_{t,k}$ for the third.
This concludes the proof for $g_u$.
The same reasoning can be applied for $g_m$.
\end{proof}

\paragraph{TC challenger}
Lemma~\ref{lem:lower_bound_tc_challenger_costs} shows a lower bound on the ``transportation'' costs used by the TC challenger provided a certain concentration holds.
This lower bound depends only on the empirical counts when the best arm is the leader.
\begin{lemma} \label{lem:lower_bound_tc_challenger_costs}
	For all $n > K$, under the event $\cE_{2,n}$, for all $t \in [n^{1/\alpha}, n]$ such that $B_{t} = i^\star$,
	\[
	\forall k \neq i^\star, \quad \frac{\mu_{t,B_t} - \mu_{t,k}}{\sqrt{1/N_{t, B_t} + 1/N_{t,k}}}  \ge   \frac{\mu_{i^\star} - \mu_k}{\sqrt{1/N_{t,i^\star}^{i^\star}+ 1/N_{t,k}^{i^\star}}} - \sqrt{2\alpha (2+s)  \log n}   \: .
	\]
\end{lemma}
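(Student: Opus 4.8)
The plan is to unfold the definition of the event $\cE_{2,n}$ and then replace the global pull counts by the leader-restricted counts $N^{i^\star}_{t,\cdot}$, exploiting that the latter are always the smaller. The only input from the sampling rule is the hypothesis $B_t = i^\star$, which gives $\mu_{t,B_t} = \mu_{t,i^\star}$, so that the numerator of the left-hand side is exactly $\mu_{t,i^\star} - \mu_{t,k}$.

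Concretely, fix $t \in [n^{1/\alpha}, n]$ with $B_t = i^\star$ and $k \neq i^\star$. On $\cE_{2,n}$,
\[
	(\mu_{t,i^\star} - \mu_{t,k}) - (\mu_{i^\star} - \mu_k) > - \sqrt{2\alpha(2+s)\log(t)} \cdot \sqrt{\tfrac{1}{N_{t,i^\star}} + \tfrac{1}{N_{t,k}}} \: .
\]
All counts are positive (each arm is sampled once during initialization and $t > K$), so dividing by $\sqrt{1/N_{t,i^\star} + 1/N_{t,k}}$ yields
\[
	\frac{\mu_{t,i^\star} - \mu_{t,k}}{\sqrt{1/N_{t,i^\star} + 1/N_{t,k}}} > \frac{\mu_{i^\star} - \mu_k}{\sqrt{1/N_{t,i^\star} + 1/N_{t,k}}} - \sqrt{2\alpha(2+s)\log t} \: ,
\]
and since $t \le n$ the trailing term is at least $-\sqrt{2\alpha(2+s)\log n}$.

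It then remains to check that $\frac{\mu_{i^\star}-\mu_k}{\sqrt{1/N_{t,i^\star}+1/N_{t,k}}} \ge \frac{\mu_{i^\star}-\mu_k}{\sqrt{1/N^{i^\star}_{t,i^\star}+1/N^{i^\star}_{t,k}}}$. By definition $N^{i^\star}_{t,j} \le N_{t,j}$ for every $j$, hence $1/N_{t,i^\star}+1/N_{t,k} \le 1/N^{i^\star}_{t,i^\star}+1/N^{i^\star}_{t,k}$ (with $1/0 = +\infty$), so the left denominator is no larger than the right one; since $i^\star$ is the unique best arm we have $\mu_{i^\star}-\mu_k > 0$, which is exactly what lets us divide this positive quantity by the smaller denominator to get the larger value. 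Chaining this with the previous display gives the claim. The step needing the tiny bit of care is the degenerate case $N^{i^\star}_{t,i^\star}=0$ or $N^{i^\star}_{t,k}=0$ (which occurs the first time $i^\star$ is elected leader): there the right-hand fraction is $0$ and the bound $\ge -\sqrt{2\alpha(2+s)\log n}$ is already delivered by the second display. No further properties of the leader, challenger, or tracking are invoked.
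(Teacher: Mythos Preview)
Your proof is correct and follows essentially the same route as the paper's: unfold $\cE_{2,n}$, divide through, use $t\le n$, then pass from $N_{t,\cdot}$ to $N^{i^\star}_{t,\cdot}$ via $N^{i^\star}_{t,j}\le N_{t,j}$. You are in fact slightly more careful than the paper, making explicit the sign condition $\mu_{i^\star}-\mu_k>0$ needed for the denominator swap and handling the degenerate zero-count case.
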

\begin{proof}
	Under $\cE_{2,n}$, using $B_{t} = i^\star$ yields
	\begin{align*}
	\frac{\mu_{t,B_t} - \mu_{t,k}}{\sqrt{1/N_{t, B_t} + 1/N_{t,k}}}
	\ge \frac{\mu_{i^\star} - \mu_k}{\sqrt{1/N_{t, i^\star} + 1/N_{t,k}}}  - \sqrt{2\alpha (2+s)  \log t} \ge \frac{\mu_{i^\star} - \mu_k}{\sqrt{1/N_{t,i^\star}^{i^\star}+ 1/N_{t,k}^{i^\star}}}  - \sqrt{2\alpha (2+s)  \log n} \: ,
	\end{align*}
	where the second inequality uses that $N_{t, k} \ge N^{i^\star}_{t, k}$ for all $k \neq i^\star$.
\end{proof}

\paragraph{Tracking}
Lemma~\ref{lem:tracking_guaranty} shows the key property satisfied by the $K$ independent tracking procedures used by the \hyperlink{TTUCB}{TTUCB} sampling rule.
It is slightly more general than Lemma~\ref{lem:tracking_guaranty_light} presented in Section~\ref{sec:algorithm}.
It is a simple corollary of Theorem 6 in \cite{Shao20Structure}.
\begin{lemma} \label{lem:tracking_guaranty}
	For all $n > K$ and $i \in  [K]$, we have $-1/2 \le N_{n,i}^{i} - \beta L_{n,i}  \le 1$ and for all $k \neq i$,
\[
 		N_{n,i}^{i} \ge \frac{\beta}{1 - \beta} N_{n,k}^{i} - \frac{1}{2} \frac{1}{1 - \beta}   \: .
\]
\end{lemma}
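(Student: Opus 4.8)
The plan is to handle the two claims separately: derive the sandwich bound $-1/2 \le N_{n,i}^i - \beta L_{n,i} \le 1$ from the cited structural tracking result applied to each leader, and then obtain the second inequality by an elementary counting argument.

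First I would fix $i \in [K]$ and observe that the update rule of \hyperlink{TTUCB}{TTUCB}, restricted to the subsequence of rounds $t$ with $B_t = i$, is exactly a cumulative (C-)tracking of the constant allocation $(\beta, 1-\beta)$ over the two pseudo-arms ``pull the leader $i$'' and ``pull the challenger''. Concretely, on the $m$-th round at which $i$ is the leader one has $L_{n+1,i} = m$ while $N_{n,i}^i$ is the number of leader-pulls among the first $m-1$ such rounds, so the rule ``pull $B_n$ iff $N_{n,B_n}^{B_n} \le \beta L_{n+1,B_n}$'' reads ``pull the leader iff its current count is at most $\beta m$'', which is the standard C-tracking step for the fixed proportion $\beta$. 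Applying Theorem~6 of \cite{Shao20Structure} to this two-arm tracking, with $L_{n,i}$ playing the role of the elapsed time and $N_{n,i}^i$ that of the count of the first pseudo-arm, would give $-1/2 \le N_{n,i}^i - \beta L_{n,i} \le 1$ for all $n > K$.

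For the second inequality I would use that every round $t \in [n-1]$ with $B_t = i$ results in $I_t = i$ (the leader) or $I_t = C_t \ne i$ (the challenger, which is never equal to the leader), so the counts partition as $N_{n,i}^i + \sum_{k' \ne i} N_{n,k'}^i = L_{n,i}$. Since all summands are nonnegative, for any fixed $k \ne i$ this gives $N_{n,k}^i \le L_{n,i} - N_{n,i}^i$, i.e. $L_{n,i} \ge N_{n,i}^i + N_{n,k}^i$. Substituting into the lower bound $N_{n,i}^i \ge \beta L_{n,i} - 1/2$ yields $N_{n,i}^i \ge \beta\bigl(N_{n,i}^i + N_{n,k}^i\bigr) - 1/2$, and, after rearranging and dividing by $1-\beta > 0$, $N_{n,i}^i \ge \frac{\beta}{1-\beta} N_{n,k}^i - \frac{1}{2}\frac{1}{1-\beta}$, as claimed. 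Lemma~\ref{lem:tracking_guaranty_light} is then recovered as the special case in which only the sandwich bound is retained.

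The only delicate step is the first one: one must verify that the per-leader rule genuinely instantiates the C-tracking scheme analyzed in \cite{Shao20Structure} --- in particular that using $L_{n+1,i}$ (which already includes the current round) rather than $L_{n,i}$ in the threshold matches their convention --- and that, with exactly two pseudo-arms, the constants furnished by Theorem~6 are precisely $-1/2$ on the low side and $1$ on the high side. Everything after that is routine bookkeeping of the counts, and no property of the leader or challenger identities beyond ``the challenger differs from the leader'' is used.
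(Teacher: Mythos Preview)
Your proposal is correct and matches the paper's proof essentially line for line: the first bound is obtained by recognizing the per-leader rule as a two-arm C-tracking of the constant allocation $(\beta,1-\beta)$ and invoking Theorem~6 of \cite{Shao20Structure}, and the second inequality follows from $N_{n,k}^i \le L_{n,i} - N_{n,i}^i$ combined with $N_{n,i}^i \ge \beta L_{n,i} - 1/2$. Your substitution $L_{n,i} \ge N_{n,i}^i + N_{n,k}^i$ is in fact a touch more direct than the paper's intermediate step $N_{n,k}^i \le (1-\beta)L_{n,i} + 1/2$, but the two are algebraically equivalent.
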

\begin{proof}
We can rewrite the tracking condition $N_{n,B_n}^{B_n} \le \beta L_{n+1,B_n}$ as
\[
	N_{n,B_n}^{B_n} - \beta L_{n+1,B_n} \le (L_{n+1,B_n}-N_{n,B_n}^{B_n}) - (1-\beta) L_{n+1,B_n} \: .
\]
For all $k \in [K]$, this corresponds to a two-arms C-Tracking between the leader $k$ and the challengers with $w_n = (\beta, 1-\beta)$ for all $n$ such that $B_n = k$. The leader's pulling count is $N_{n,B_n}^{B_n}$ and the challengers' pulling count is $L_{n+1,B_n}-N_{n,B_n}^{B_n}$. We recall that C-Tracking was defined as $I_n = \argmin_{k\in [K]} N_{n,k} - \sum_{t \in [n]} w_{t,k}$.

Theorem 6 in \cite{Shao20Structure} yields for all $n > K$, $-\frac{1}{2} \le N_{n,B_n}^{B_n} - \beta L_{n,B_n} \le 1$.
The $K$ parallel tracking procedures are independent since they are considering counts partitioned on the considered leader.
Therefore, the above results holds for all $i \in [K]$.
This concludes the first part of the proof.

Direct manipulations yield the second part of the result, namely
\begin{align*}
	N_{n,k}^{i} \le L_{n,i} - N_{n,i}^{i} \le (1 - \beta) L_{n,i} + \frac{1}{2} \quad \text{and} \quad N_{n,i}^{i} \ge -\frac{1}{2} + L_{n,i} \beta \ge \frac{\beta}{1 - \beta} (N_{n,k}^{i} - \frac{1}{2}) -\frac{1}{2} \: .
\end{align*}
\end{proof}

The choice of $K$ independent tracking procedures was made for two reasons.
First, independent procedures are simpler to analyze for theoretical purpose.
Second, independent procedures yields better empirical performance since it avoids over-sampling a sub-optimal arm when it is mistakenly chosen as leader.
To understand the second argument, let's look at another design with one tracking procedure.
Namely we set $I_n = B_n$ if $N_{n,B_n} \le \beta n$, else $I_n = C_n$.
When $B_n \neq i^\star$, then it will (almost always) take $I_n = B_n$ since $N_{n,B_n}$ is lower than $\beta n$.
On the other hand, when $B_n \neq i^\star$ with $K$ independent tracking procedures, both $N_{n,B_n}^{B_n}$ and $L_{n,B_n}$ are small, hence it is less systematic.

\subsection{Proof of Theorem~\ref{thm:finite_time_upper_bound_instanciated}}
\label{app:ss_proof_finite_time_upper_bound}

Theorem~\ref{thm:finite_time_upper_bound_instanciated} is a direct corollary of Theorem~\ref{thm:finite_time_upper_bound}, which holds for any $\beta \in (0,1)$, $s>1$ and $\alpha >1$.
\begin{theorem} \label{thm:finite_time_upper_bound}
    Let $(\delta, \beta) \in (0,1)^2$, $s>1$ and $\alpha >1$.
    Using the threshold~\eqref{eq:stopping_threshold} in~\eqref{eq:glr_stopping_rule} and $g_u$ in~\eqref{eq:ucb_leader}, the \hyperlink{TTUCB}{TTUCB} algorithm satisfies that, for all $\mu \in \R^{K}$ such that $|i^\star(\mu)| = 1$,
    \[
      \bE_{\mu}[\tau_{\delta}] \le  \max\left\{T_{0}(\delta), C_{\mu}^{\alpha}, C_{0}^{\frac{\alpha}{\alpha-1}}, C_{1}^{\alpha}\right\}  +  C_{2} \: ,
    \]
    where
    \begin{align*}
      &T_{0}(\delta)  = \sup \left\{ n  > K \mid n -1 \leq T_{\beta}^{\star}(\mu)\frac{(1 + \epsilon)^2}{\beta(1-w_0)^{d_{\mu}(w_{0})}}  (\sqrt{c(n - 1, \delta)} + \sqrt{\alpha (2+s)  \log n} )^2 \right\} \: , \\
      &C_{\mu} = h_1 \left( 4 \alpha^2(1+s) H(\mu)/\beta\right) \:, \quad  C_{0} =  \frac{2 }{ \epsilon a_{\mu}(w_0)} + 1 \:, \quad  C_{1} =  1/(\beta \epsilon) \:, \quad C_{2} = (2K-1)\zeta(s) + 1 \: ,\\
      &a_{\mu}(w_0) =  (1-w_0)^{d_{\mu}(w_{0})}\max\{ \min_{i \neq i^\star(\mu)} w_{\beta}^{\star}(\mu)_{i} , (1-\beta)w_0\}  \:, \quad  d_{\mu}(w_{0}) = |\{ i \in [K]\setminus \{ i^\star(\mu)\} \mid \: w_{\beta}^{\star}(\mu)_{i} < (1-\beta) w_0 \}|  \: ,
    \end{align*}
    with $\epsilon \in (0,  1]$, $w_0 \in [0, 1/(K-1)]$ and $\zeta$ is the Riemann $\zeta$ function.
	For all $x > 0$, the function $h_{1}(x) \eqdef x \overline{W}_{-1} \left( \log(x) + \frac{2+ K/\beta}{x} \right)$ is positive, increasing for $x \ge 2+K/\beta$, and satisfies $h_{1}(x) \approx x(\ln x + \ln \ln x)$.
\end{theorem}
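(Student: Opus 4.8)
The plan is to keep $\alpha>1$, $s>1$ and $\beta\in(0,1)$ symbolic and follow the template of the proof sketch of Theorem~\ref{thm:finite_time_upper_bound_instanciated}, combining the three ``key property'' lemmas---Lemma~\ref{lem:ucb_leader_lower_bound_counts} for the UCB leader, Lemma~\ref{lem:lower_bound_tc_challenger_costs} for the TC challenger, Lemma~\ref{lem:tracking_guaranty} for the tracking---with the stopping rule~\eqref{eq:glr_stopping_rule}. By Lemma~\ref{lem:lemma_1_Degenne19BAI} together with the summability estimate $\sum_{n>K}\bP_\mu(\cE_n^{\complement})\le(2K-1)\zeta(s)$ of Lemma~\ref{lem:concentration_event_global}, it suffices to produce a deterministic time $T$ such that $\cE_n\cap\{n<\tau_\delta\}=\emptyset$ for every $n>T$; this immediately gives $\bE_\mu[\tau_\delta]\le T+(2K-1)\zeta(s)+1=T+C_2$. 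I would then show one can take $T=\max\{T_0(\delta),C_\mu^{\alpha},C_0^{\alpha/(\alpha-1)},C_1^{\alpha}\}$. So fix $n>K$ on which $\cE_n\cap\{n<\tau_\delta\}$ holds, and derive a contradiction once $n$ exceeds that maximum.

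\paragraph{Closing the gap at a good round.}
Under $\cE_{1,n}$, Lemma~\ref{lem:ucb_leader_lower_bound_counts} gives $L_{t,i^\star}\ge t-1-2H(\mu)g_u(t)/\beta-K/\beta$ for all $t\in[n^{1/\alpha},n]$, so for $n$ past the first threshold there is a round $t\in[n^{1/\alpha},n]$ with $B^{\text{UCB}}_t=i^\star$. Since $t\le n<\tau_\delta$ the stopping rule fails at $t$; splitting on whether $\hat\imath_t=i^\star$ and using that $C^{\text{TC}}_t$ minimizes the \emph{positive-part} transportation cost among arms $i\ne B_t=i^\star$, one obtains $\bigl(\mu_{t,i^\star}-\mu_{t,C^{\text{TC}}_t}\bigr)_{+}/\sqrt{1/N_{t,i^\star}+1/N_{t,C^{\text{TC}}_t}}\le\sqrt{2c(t-1,\delta)}\le\sqrt{2c(n-1,\delta)}$. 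Feeding in Lemma~\ref{lem:lower_bound_tc_challenger_costs} (valid because $B_t=i^\star$ and $t\ge n^{1/\alpha}$) and using monotonicity of $x\mapsto x_+$ gives
\[
	\frac{\mu_{i^\star}-\mu_{C^{\text{TC}}_t}}{\sqrt{1/N^{i^\star}_{t,i^\star}+1/N^{i^\star}_{t,C^{\text{TC}}_t}}}\le\sqrt{2c(n-1,\delta)}+\sqrt{2\alpha(2+s)\log n}\,.
\]
Squaring, then using the bound of Lemma~\ref{lem:finishing_assumption} in the form $(n-1)\bigl(1/N^{i^\star}_{t,i^\star}+1/N^{i^\star}_{t,C^{\text{TC}}_t}\bigr)\le D_0\bigl(1/\beta+1/w^\star_{\beta,C^{\text{TC}}_t}\bigr)$ with $D_0=(1+\epsilon)^2/\bigl(\beta(1-w_0)^{d_\mu(w_0)}\bigr)$, and invoking the defining property of $w^\star_\beta(\mu)$ (Lemma~\ref{lem:properties_characteristic_times}), namely $\bigl(1/\beta+1/w^\star_{\beta,k}\bigr)/(\mu_{i^\star}-\mu_k)^2\le T^\star_\beta(\mu)/2$ for every $k\ne i^\star$, one gets
\[
	n-1\le T^\star_\beta(\mu)\,\frac{(1+\epsilon)^2}{\beta(1-w_0)^{d_\mu(w_0)}}\bigl(\sqrt{c(n-1,\delta)}+\sqrt{\alpha(2+s)\log n}\bigr)^2,
\]
which contradicts $n>T_0(\delta)$. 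Hence $\cE_n\cap\{n<\tau_\delta\}=\emptyset$ whenever $n$ exceeds both $T_0(\delta)$ and the instance threshold $T_\mu$ demanded by Lemma~\ref{lem:finishing_assumption}, and it remains to bound $T_\mu$.

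\paragraph{The core estimate (Lemma~\ref{lem:finishing_assumption}).}
This is the heart of the proof, and I would establish it by the last-pull technique used for APT in \citet{LocatelliGC16}. Start with a pigeonhole on the leader-$i^\star$ rounds as in~\eqref{eq:pigeonhole_main_arg}: since $\sum_{k\ne i^\star}N^{i^\star}_{n,k}=L_{n,i^\star}-N^{i^\star}_{n,i^\star}$ and a suitable \emph{clipped} allocation $\tilde w$ (see below) satisfies $\sum_{k\ne i^\star}\tilde w_k\le1-\beta$, there is a challenger $k_1\ne i^\star$ with $N^{i^\star}_{n,k_1}\ge\bigl(L_{n,i^\star}-N^{i^\star}_{n,i^\star}\bigr)\tilde w_{k_1}/(1-\beta)$. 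Let $t_1\le n$ be the last round with $(B_{t_1},C^{\text{TC}}_{t_1})=(i^\star,k_1)$, so that $N^{i^\star}_{t_1,k_1}=N^{i^\star}_{n,k_1}-1$ and $B_{t_1}=i^\star$. Using $N^{i^\star}_{n,i^\star}\le\beta L_{n,i^\star}+1$ (Lemma~\ref{lem:tracking_guaranty}) and $L_{n,i^\star}\ge n-1-4\alpha(1+s)H(\mu)\log n/\beta-K/\beta$ (Lemma~\ref{lem:ucb_leader_lower_bound_counts}), one gets $N^{i^\star}_{t_1,k_1}\gtrsim\tilde w_{k_1}(n-1)$, which forces $t_1\ge n^{1/\alpha}$ as soon as $n\ge\max\{C_\mu^\alpha,C_0^{\alpha/(\alpha-1)},C_1^\alpha\}$: the first threshold kills the $H(\mu)\log n$ correction, which requires inverting a Lambert-$W$-type inequality---this is where $h_1$ and $\overline{W}_{-1}$ enter $C_\mu$; the second makes $\tilde w_{k_1}(n-1)$ overtake $n^{1/\alpha}$; the third makes $N^{i^\star}_{t_1,k_1}$ exceed $1/(\beta\epsilon)$. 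It also yields $(n-1)/N^{i^\star}_{t_1,k_1}\le(1+\epsilon)/\tilde w_{k_1}$. Finally, Lemma~\ref{lem:tracking_guaranty} gives $N^{i^\star}_{t_1,i^\star}\ge\tfrac{\beta}{1-\beta}N^{i^\star}_{t_1,k_1}-\tfrac{1}{2(1-\beta)}$, so $N^{i^\star}_{t_1,k_1}/N^{i^\star}_{t_1,i^\star}$ is within $1+\epsilon$ of $(1-\beta)/\beta$; plugging both bounds into the elementary inequality~\eqref{eq:suboptimal_inequality} yields $(n-1)\bigl(1/N^{i^\star}_{t_1,i^\star}+1/N^{i^\star}_{t_1,k_1}\bigr)\le D_0\bigl(1/\beta+1/w^\star_{\beta,k_1}\bigr)$ with $D_0$ as above (using $\tilde w_{k_1}\ge(1-w_0)^{d_\mu(w_0)}w^\star_{\beta,k_1}$, which is how $(1-w_0)^{-d_\mu(w_0)}$ enters $D_0$).

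\paragraph{Clipping and the main obstacle.}
When $w_-=\min_{k\ne i^\star}w^\star_{\beta,k}$ is arbitrarily small, the threshold forcing $t_1\ge n^{1/\alpha}$ blows up at $w_0=0$, so for $w_0\in(0,(K-1)^{-1}]$ I would run the pigeonhole with the clipped allocation $\tilde w_k=(1-w_0)^{d_\mu(w_0)}\max\{w^\star_{\beta,k},(1-\beta)w_0\}$; since only $d_\mu(w_0)$ coordinates are raised to $(1-\beta)w_0$, the deflation factor $(1-w_0)^{d_\mu(w_0)}$ keeps $\sum_{k\ne i^\star}\tilde w_k\le1-\beta$, while $\tilde w_{k_1}\ge a_\mu(w_0)$ is now instance-independently of order $w_0$, so the threshold becomes $C_0^{\alpha/(\alpha-1)}=\cO\bigl((K/\epsilon)^{\alpha/(\alpha-1)}\bigr)$, at the price of the factor $(1-w_0)^{-d_\mu(w_0)}$ in $D_0$ and hence in $T_0(\delta)$. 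Assembling everything gives $\bE_\mu[\tau_\delta]\le\max\{T_0(\delta),C_\mu^\alpha,C_0^{\alpha/(\alpha-1)},C_1^\alpha\}+C_2$, and Theorem~\ref{thm:finite_time_upper_bound_instanciated} follows by taking $\alpha=s=1.2$, $\beta=1/2$ (so $\alpha/(\alpha-1)=6$, $C_1=2/\epsilon$, $\alpha(2+s)\le4$, $C_2\le12K$, $C_\mu\le h_1(26H(\mu))$). The step I expect to be the main obstacle is Lemma~\ref{lem:finishing_assumption}: controlling $N^{i^\star}_{t_1,i^\star}$ against $(n-1)/2$, where inequality~\eqref{eq:suboptimal_inequality} injects the lossy multiplicative factor $2$ into $T_0(\delta)$; and carrying out the clipping bookkeeping precisely enough that $C_\mu$, $C_0$, $C_1$ and the $(1-w_0)^{d_\mu(w_0)}$ corrections emerge with the claimed constants (the intermediate estimates being Lemmas~\ref{lem:control_concentration_beginning} and~\ref{lem:control_term_large_time}). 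Removing the factor $2$ would instead require a genuine concentration of the empirical proportions $N_n/(n-1)$ onto $w^\star_\beta(\mu)$, which the present approach deliberately avoids.
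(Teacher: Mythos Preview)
Your proposal is correct and follows the paper's argument closely: the concentration events, the reduction via Lemma~\ref{lem:lemma_1_Degenne19BAI}, the chain of inequalities from the stopping rule through Lemma~\ref{lem:lower_bound_tc_challenger_costs} to the definition of $T^\star_\beta(\mu)$, the last-pull/pigeonhole construction of $t_1$, and the suboptimal inequality~\eqref{eq:suboptimal_inequality} splitting into two factors controlled by the thresholds $C_\mu$, $C_0$, $C_1$, all match the paper.

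The one genuine difference is how you handle the clipping for $w_0>0$. The paper does \emph{not} run a single pigeonhole with a pre-deflated allocation. Instead it runs the ordinary pigeonhole (with $w^\star_\beta$) to get an arm $k_1$, and then distinguishes three cases: Case~1 ($w^\star_{\beta,k_1}\ge(1-\beta)w_0$), Case~2 ($w^\star_{\beta,k_1}$ small but $N^{i^\star}_{n,k_1}$ large), and Case~3 (both small). In Case~3 the arm $k_1$ is discarded and the pigeonhole is re-run on the remaining arms; this is iterated, producing a sequence $k_1,\dots,k_d$ until some $k_d$ lands in Case~1 or~2. Each peel contributes a factor $(1-w_0)$ to the lower bound on $N^{i^\star}_{n,k_d}$, and in the worst case $d-1\le d_\mu(w_0)$, which is where the factor $(1-w_0)^{d_\mu(w_0)}$ in $D_0$ and $a_\mu(w_0)$ comes from. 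Your single-shot variant with $\tilde w_k=(1-w_0)^{d_\mu(w_0)}\max\{w^\star_{\beta,k},(1-\beta)w_0\}$ achieves the same constants more compactly (the check $\sum_{k\ne i^\star}\tilde w_k\le1-\beta$ reduces to $(1-w_0)^{d}(1+dw_0)\le1$, which holds for $w_0\in[0,1]$), and avoids the case analysis and induction. The paper's peeling is more general in spirit---it can be applied to other algorithms and is advertised as improving existing bounds for APT---whereas your version is tailored to producing exactly the factor $(1-w_0)^{d_\mu(w_0)}$ that the theorem statement calls for.
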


Let $n > K$ such that $\cE_n$ holds true and the algorithm has not stop yet, i.e. $\cE_n \cap \{n < \tau_{\delta}\}$.
Let $t \in [n^{1/\alpha}, n]$ such that $B_t = i^\star$.
Let $c(n,\delta)$ as in~\eqref{eq:stopping_threshold}, which satisfies that $n \mapsto c(n,\delta)$ is increasing.
Using the stopping rule~\eqref{eq:glr_stopping_rule} and $t \le n < \tau_{\delta}$, we obtain
\begin{align*}
	\sqrt{2c(n-1,\delta)} \ge \sqrt{2c(t-1,\delta)} \ge \min_{i \neq \hat \imath_t} \frac{\mu_{t, \hat \imath_t} - \mu_{t,i}}{\sqrt{1/N_{t, \hat \imath_t}+ 1/N_{t,i}}} \ge \min_{i \neq B_t} \frac{(\mu_{t, B_t} - \mu_{t,i})_{+}}{\sqrt{1/N_{t, B_t}+ 1/N_{t,i}}} = \frac{(\mu_{t, B_t} - \mu_{t,C_t})_{+}}{\sqrt{1/N_{t, B_t}+ 1/N_{t,C_t}}} \: ,
\end{align*}
The last inequality is an equality when $B_t = \hat \imath_t$, and trivially true when $B_t \neq \hat \imath_t$ since a positive term is higher than zero (already null when taking $i = \imath_t$).
Using Lemma~\ref{lem:lower_bound_tc_challenger_costs}, we obtain
\begin{align*}
	\frac{\mu_{t, B_t} - \mu_{t,C_t}}{\sqrt{1/N_{t, B_t}+ 1/N_{t,C_t}}} &\ge\frac{\mu_{i^\star} - \mu_{C_t}}{\sqrt{1/N_{t,i^\star}^{i^\star}+ 1/N_{t,C_t}^{i^\star}}} - \sqrt{2\alpha (2+s)  \log n} \\
	&\ge  \sqrt{\frac{1/\beta + 1/w^{\star}_{\beta,C_t}}{1/N_{t,i^\star}^{i^\star}+ 1/N_{t,C_t}^{i^\star}}}  \min_{i \neq i^\star} \frac{\mu_{i^\star} - \mu_{i}}{\sqrt{1/\beta + 1/w^{\star}_{\beta,i}}} - \sqrt{2\alpha (2+s)  \log n}  \\
	&\ge \sqrt{\frac{1/\beta + 1/w^{\star}_{\beta,C_t}}{1/N_{t,i^\star}^{i^\star}+ 1/N_{t,C_t}^{i^\star}}}  \sqrt{2T^{\star}_{\beta}(\mu)^{-1}} - \sqrt{2\alpha (2+s)  \log n}  \: ,
\end{align*}
where the second inequality is obtained by artificially making appear $1/\beta + 1/w^{\star}_{\beta,C_t}$ and taking the minimum of $i \neq i^\star$.
The last inequality simply uses the definition of $w^{i^\star}_{\beta}$.

While combining Lemma~\ref{lem:ucb_leader_lower_bound_counts} and Lemma~\ref{lem:tracking_guaranty} links $N_{t,i^\star}^{i^\star}/(t-1)$ with $\beta$, we need another argument to compare the empirical allocation $N_{t,C_t}^{i^\star}/(t-1)$ of the sub-optimal arm $C_t$ with its $\beta$-optimal allocation $w^{\star}_{\beta,C_t}$.
Before delving into this key argument, we conclude the proof under an assumption that will be shown to hold later:
there exists $D_0 > 0$ and $T_{\mu} > 0$ such that for all $n > T_{\mu}$, there exists a well chosen $t \in [n^{1/\alpha}, n]$ with $B_t = i^\star$, which satisfies
\begin{equation} \label{eq:control_ratio_allocation_arms}
	1/N_{t,i^\star}^{i^\star}+ 1/N_{t,C_t}^{i^\star} \le \frac{D_0}{n-1}\left( 1/\beta + 1/w^{\star}_{\beta,C_t} \right)  \: .
\end{equation}
Let's define
\begin{equation*} 
		T(\delta, D_{0}) \eqdef \sup \left\{ n > K \mid n - 1 \leq T_{\beta}^{\star}(\mu) D_0 \left(\sqrt{c(n - 1, \delta)} + \sqrt{\alpha (2+s)  \log n} \right)^2\right \} \: .
\end{equation*}
For all $n > \max\{T_{\mu}, T(1, D_{0})\}$, the lower bound on $\frac{\mu_{t, B_t} - \mu_{t,C_t}}{\sqrt{1/N_{t, B_t}+ 1/N_{t,C_t}}}$ is positive, hence we can use that it is upper bounded by $\sqrt{2c(n-1,\delta)}$.
Putting everything together, we have shown that
\[
		\forall n > \max\{T_{\mu}, T(1, D_{0})\}, \quad n-1 \le T^{\star}_{\beta}(\mu) D_{0} \left( \sqrt{c(n-1,\delta)} + \sqrt{\alpha (2+s)  \log n} \right)^2  \: .
\]
Therefore, we have $\cE_n \cap \{n < \tau_{\delta}\} = \emptyset$ (i.e. $\cE_n \subset \{\tau_{\delta} \le n\}$) for all $n \ge \max \{ T(\delta, D_{0}), T(1, D_{0}), T_{\mu} \} + 1$.
Using that $\delta \to T(\delta, D_{0})$ is an decreasing function (since $\cC_{G}$ is increasing), we obtain that $T(\delta, D_{0}) = \max \{ T(\delta, D_{0}), T(1, D_{0}) \}$.

Combining Lemmas~\ref{lem:lemma_1_Degenne19BAI} and~\ref{lem:concentration_event_global} yields
\begin{align*}
	\bE_{\mu}[\tau_{\delta}] \le \max\{T(\delta, D_{0}), T_{\mu}\} + 1 + \sum_{n \ge 1} \bP_{\mu}(\cE_{n}^{\complement}) \le \max\{T(\delta, D_{0}), T_{\mu}\} + 1 + (2K-1)\zeta(s) \: .
\end{align*}

At this stage, the proof of Theorem~\ref{thm:finite_time_upper_bound} boils down to exhibiting $D_0 > 0$ and $T_{\mu} > 0$ such that: for all $n > T_{\mu}$, there exists a well chosen $t \in [n^{1/\alpha}, n]$ with $B_t = i^\star$ and such that~\eqref{eq:control_ratio_allocation_arms} holds.
As mentioned above, the crux of the problem is to relate $N_{t,C_t}^{i^\star}/(t-1)$ and $w^{\star}_{\beta,C_t}$.
To do so, we will build on the idea behind the proof for APT from \cite{LocatelliGC16}: consider an arm being over-sampled and study the last time this arm was pulled.

By the pigeonhole principle, at time $n$, there is an index $k_{1} \ne i^\star$ such that~\eqref{eq:pigeonhole_main_arg} holds, i.e.
\[
	N_{n,k_{1}}^{i^\star} \ge \frac{w^{\star}_{\beta,k_{1}}}{1 - \beta} (L_{n,i^\star} - N_{n,i^\star}^{i^\star}) \: ,
\]
and we take such $k_{1}$.
Let $t_{1} \eqdef \sup \left\{ t < n \mid  (B_{t}, C_{t}) = (i^\star, k_{1}) \right\}$ be the last time at which $i^\star$ was the leader and $k_{1}$ was the challenger.
If $I_{t_1} = B_{t_1}$ then $N_{t_{1},k_{1}}^{i^\star} = N_{n,k_{1}}^{i^\star}$, else $N_{t_{1},k_{1}}^{i^\star} = N_{n,k_{1}}^{i^\star} - 1$.
In both cases, we have $N_{t_{1},k_{1}}^{i^\star} \ge N_{n,k_{1}}^{i^\star} - 1$.
Let $f_1$ as in~\eqref{eq:fct_1_approx_error}.
Combined the above with Lemma~\ref{lem:lower_bound_challenger_prop}, we obtain
\[
	N_{t_{1},k_{1}}^{i^\star} \ge N_{n,k_{1}}^{i^\star} - 1 \ge \frac{w^{\star}_{\beta,k_{1}}}{1 - \beta} (L_{n,i^\star} - N_{n,i^\star}^{i^\star}) - 1 \ge w^{\star}_{\beta,k_{1}} (n-1) - f_{1}(n) \: .
\]
Let $w_{-} > 0$ be a lower bound on $w^{\star}_{\beta,k_{1}}$, for example consider $w_{-} = \min_{i \neq i^\star} w^{\star}_{\beta,i}$.
Let
\begin{equation} \label{eq:control_concentration_beginning}
	C_{0}(w_{-}) = \sup \left\{ n \ge 1 \mid  n-1 < \frac{1}{w_{-}} \left( n^{1/\alpha} + f_{1}(n) \right)    \right\}		 \: .
\end{equation}
Since $t_{1} - 1 \ge N^{i^\star}_{t_{1},k_{1}} \ge w^{\star}_{\beta,k_{1}} (n-1) - f_{1}(n)$, we obtain that $t_{1} \ge n^{1/\alpha}$ for all $n > N_0(w_{-})$.
For instances $\mu$ such that $w^{\star}_{\beta,k_{1}}$ is small, the equation~\eqref{eq:pigeonhole_main_arg} can be satisfied at the very beginning, hence $t_1$ might be sublinear in $n$.
Therefore, while combining Lemma~\ref{lem:ucb_leader_lower_bound_counts} and Lemma~\ref{lem:tracking_guaranty} yields $N_{t_1,i^\star}^{i^\star} \gtrapprox \beta(t_1-1)$, it is not possible to obtain $N_{t_1,i^\star}^{i^\star} \gtrapprox \beta(n - 1)$.
Due to the missing link between $t_1$ and $n$, we use the following inequality
\begin{align*}
	1/N_{t_1,i^\star}^{i^\star}+ 1/N_{t_1,k_1}^{i^\star} \le \frac{1}{n-1} \left( 1/\beta + \frac{n-1}{N_{t_1,k_1}^{i^\star}}\right) \left( \frac{N_{t_1,k_1}^{i^\star}}{N_{t_1,i^\star}^{i^\star}} + 1 \right) \: ,
\end{align*}
which is a suboptimal step which artificially introduces $1/\beta$, and is responsible for the multiplicative factor $1/\beta$ in Theorem~\ref{thm:finite_time_upper_bound}.
Improving on this suboptimal step is an interesting question, whose answer still eludes us.
One idea would be to leverage the information of the sampled arm at time $t$ since we have $N_{t,i^\star}^{i^\star} \le \beta L_{t+1,i^\star}$ when $i^\star = B_t = I_t$, else $N_{t,i^\star}^{i^\star} \ge \beta L_{t+1,i^\star}$.

Let $\epsilon \in (0,1]$.
It remains to control both terms.
First, we obtain
\begin{align*}
	 1/\beta + \frac{n-1}{N_{t_1,k_1}^{i^\star}} \le  1/\beta + \frac{1}{w^{\star}_{\beta,k_{1}} - f_{1}(n)/(n-1)} \le (1+ \epsilon) \left( 1/\beta + 1/w^{\star}_{\beta,k_{1}} \right) \: ,
\end{align*}
for all $n > C_{1}(w_{-})$.
The last inequality is obtained by definition of
\begin{equation} \label{eq:control_term_large_time}
	C_{1}(w_{-}) = \sup \left\{ n \ge 1 \mid  n-1 < \frac{f_{1}(n)}{w_{-}} \left( 1+ \frac{1}{\epsilon}\right) \right\} \: ,
\end{equation}
which ensures that, for all $n > C_{1}(w_{-})$, the last condition of the equivalence
\begin{align*}
 	w^{\star}_{\beta,k_{1}} - f_{1}(n)/(n-1) \ge (1+ \epsilon)^{-1}w^{\star}_{\beta,k_{1}}  \quad \iff \quad n-1 \ge \frac{f_{1}(n)}{w^{\star}_{\beta,k_{1}}} \left( 1+ \frac{1}{\epsilon}\right)
\end{align*}
is satisfied since $w^{\star}_{\beta,k_{1}} \ge w_{-}$ and $n > C_{1}(w_{-})$.
Second, using Lemma~\ref{lem:tracking_guaranty} with $N_{t_1, k_1}^{i^\star} \ge w^{\star}_{\beta,k_{1}} (n-1) - f_{1}(n)$, we obtain
\begin{align*}
	 \frac{N_{t_1,k_1}^{i^\star}}{N_{t_1,i^\star}^{i^\star}} + 1 \le \left( \frac{\beta}{1-\beta} - \frac{1}{2(1-\beta) \left(w^{\star}_{\beta,k_{1}} (n-1) - f_{1}(n) \right)}\right)^{-1} + 1 \le (1+\epsilon)/\beta \: ,
\end{align*}
for all $n > C_{2}(w_{-})$.
The last inequality is obtained by definition of
\begin{equation} \label{eq:control_ratio_counts}
	C_{2}(w_{-}) \eqdef \sup \left\{ n \in \N^\star \mid  n-1 < \frac{1}{w_{-}} \left( f_{1}(n) + \frac{1 - \beta +\epsilon}{2\beta \epsilon} \right) \right\} \: ,
\end{equation}
which ensures that, for all $n > C_{2}(w_{-})$, the last condition of the equivalence
\begin{align*}
 	&\left( \frac{\beta}{1-\beta} - \frac{1}{2(1-\beta) \left(w^{\star}_{\beta,k_{1}} (n-1) - f_{1}(n) \right)}\right)^{-1} + 1 \le (1+\epsilon)/\beta  \\
	&\iff \: \frac{1}{w^{\star}_{\beta,k_{1}} (n-1) - f_{1}(n) }  \le \frac{2\beta \epsilon}{1 - \beta +\epsilon}  \iff \: n-1 \ge \frac{1}{w^{\star}_{\beta,k_{1}}} \left( f_{1}(n) + \frac{1 - \beta +\epsilon}{2\beta \epsilon} \right)
\end{align*}
is satisfied since $w^{\star}_{\beta,k_{1}} \ge w_{-}$ and $n > C_{2}(w_{-})$.
By comparison between~\eqref{eq:control_concentration_beginning} and~\eqref{eq:control_ratio_counts}, we notice that
\[
	\max \left\{ C_{0}(w_{-}) , C_{2}(w_{-})\right\} \le \max \left\{ C_{0}(w_{-}) , \left( \frac{1 - \beta +\epsilon}{2\beta \epsilon} \right)^{\alpha}\right\}
\]

Putting everything together, we have shown that taking $D_{0} = (1+\epsilon)^2/\beta$, we have for all $n > \max\{ K, C_{0}(w_{-}), C_{1}(w_{-}), C_{2}(w_{-})\}$, there exists $t_1 \in [n^{1/\alpha}, n]$ with $B_{t_1} = i^\star$ and such that~\eqref{eq:control_ratio_allocation_arms} holds.
Let $h_1$ defined in~\eqref{eq:link_fct_hardness_finite_upper}.
Since $\epsilon \le 1$, using Lemmas~\ref{lem:control_concentration_beginning} and~\ref{lem:control_term_large_time} with the above yields
\[
	\max\{ C_{0}(w_{-}), C_{1}(w_{-}), C_{2}(w_{-})\} \le  \max\left\{ h_1 \left( 4 \alpha^2 (1+s)H(\mu) \right) , \left( \frac{2}{\epsilon w_{-}} + 1\right)^{1/(\alpha-1)}, \frac{1}{\beta \epsilon}\right\}^{\alpha} \: .
\]
Using $w_{-} = \min_{i \neq i^\star} w^{\star}_{\beta,i}$ yields the first part of Theorem~\ref{thm:finite_time_upper_bound}, i.e. the special case of $w_0 = 0$.

\paragraph{Refined non-asymptotic upper bound}
When considering large $K$ or instances with unbalanced $\beta$-optimal allocation, $\min_{i \neq i^\star} w^{\star}_{\beta,i}$ can become arbitrarily small.
Therefore, the dependency in the inverse of $\min_{i \neq i^\star} w^{\star}_{\beta,i}$ is undesired, and we would like to clip it with a value of our choosing which is away from zero.

Let $w_{0} \in (0, 1/(K-1)]$ be an allocation threshold and $d_{\mu}(w_{0}) \eqdef |\{ i \in [K]\setminus \{ i^\star\} \mid \: w^{\star}_{\beta,i} < (1-\beta)w_0 \}|$ be the number of arms having a $\beta$-optimal allocation strictly smaller than $(1-\beta)w_0$.
As discussed above, for instances $\mu$ such that $w^{\star}_{\beta,k_{1}}$ (defined above) is small, the equation~\eqref{eq:pigeonhole_main_arg} can be satisfied at the very beginning for a small empirical allocation $N_{n,k_{1}}^{i^\star}$.
To provide a more meaningful result, one needs to have a sub-optimal arm $k_{1}$ such that either:
\begin{itemize}
	\item {\bf Case 1:} $w^{\star}_{\beta,k_{1}}$ is not too small, i.e. $w^{\star}_{\beta,k_{1}} \ge (1-\beta)w_0$.
	\item {\bf Case 2:} $w^{\star}_{\beta,k_{1}}$ is too small but $N_{n,k_{1}}^{i^\star}$ is large enough, i.e. $w^{\star}_{\beta,k_{1}} < (1-\beta)w_0$ and $N_{n,k_{1}}^{i^\star} \ge w_0 (L_{n,i^\star} - N_{n,i^\star}^{i^\star}) $.
\end{itemize}

In case 1, we can conduct the same manipulations as above simply by using $w_{-} = \max \{(1-\beta)w_0, \min_{i \neq i^\star} w^{\star}_{\beta,i}\}$ instead of $w_{-} = \min_{i \neq i^\star} w^{\star}_{\beta,i}$, since it is a lower bound for $w^{\star}_{\beta,k_{1}}$.

In case 2, the above proof can also be applied by conducting the same algebraic manipulations with $(1-\beta) w_0$ instead of $w^{\star}_{\beta,k_{1}}$, and using $w_{-} = (1-\beta)w_0 = \max \{(1-\beta)w_0, \min_{i \neq i^\star} w^{\star}_{\beta,i}\}$.
To slightly detail the argument, we can show similarly that $N_{t_1,k_{1}}^{i^\star} \ge (1-\beta) w_{0} (n-1) - f_{1}(n)$, where $f_1$ as in~\eqref{eq:fct_1_approx_error} since we have $(1-\beta) w_{0} < 1-\beta$.
Then, for all $n > C_{1}((1-\beta) w_{0})$,
\begin{align*}
	 1/\beta + \frac{n-1}{N_{t_1,k_{1}}^{i^\star}} \le   (1+ \epsilon) \left( 1/\beta + \frac{1}{(1-\beta) w_{0}} \right) \le   (1+ \epsilon) \left( 1/\beta + 1/w^{\star}_{\beta,k_{1}} \right) \: .
\end{align*}

The problematic situation happens when we are neither in case 1 nor in case 2:
\begin{itemize}
	\item {\bf Case 3:} both $w^{\star}_{\beta,k_{1}}$ and $N_{n,k_{1}}^{i^\star}$ are too small, i.e. $w^{\star}_{\beta,k_{1}} < (1-\beta)w_0$ and $N_{n,k_{1}}^{i^\star} < w_0 (L_{n,i^\star} - N_{n,i^\star}^{i^\star}) $.
\end{itemize}
In case 3 it is not possible to conclude the proof with the arm $k_{1}$ without paying the price of an inverse of $\min_{i \neq i^\star} w^{\star}_{\beta,i}$.
To overcome this price, we need to find another arm $k$ such that either case 1 or case 2 happens.
Since $N_{n,k_{1}}^{i^\star}$ and $w^{\star}_{\beta,k_{1}}$ are small, we will ignore arm $k_{1}$ and use the pigeonhole principle on all arm $i \in [K] \setminus \{i^\star, k_{1}\}$.
As in~\eqref{eq:pigeonhole_main_arg}, we obtain that there exists $k_{2}$ such that
\begin{align*}
		N_{n,k_{2}}^{i^\star} \ge \frac{w^{\star}_{\beta,k_{2}}}{1 - \beta -w^{\star}_{\beta,k_{1}}} (L_{n,i^\star} - N_{n,i^\star}^{i^\star} - N_{n,k_{1}}^{i^\star})
		\ge \frac{(1-w_0)w^{\star}_{\beta,k_{2}}}{1 - \beta} (L_{n,i^\star} - N_{n,i^\star}^{i^\star} ) \: ,
\end{align*}
where the last inequality is obtained by using $w^{\star}_{\beta,k_{1}} > 0$ and $N_{n,k_{1}}^{i^\star} < w_0 (L_{n,i^\star} - N_{n,i^\star}^{i^\star}) $.
Based on $k_2$ the same dichotomy happens: either we can conclude the proof when we are in case 1 or 2 or we cannot since we are in case 3.
If case 3 occurs also for $k_2$, i.e. $w^{\star}_{\beta,k_{2}} < (1-\beta)w_0$ and $N_{n,k_{2}}^{i^\star} < w_0 (L_{n,i^\star} - N_{n,i^\star}^{i^\star} - N_{n,k_{1}}^{i^\star}) $, we should also ignore it since it is non informative.

The main idea is then to peel off arms that are not informative, till we find an informative one.
By induction, we construct a sequence $(k_{a})_{a \in [d]}$ of such arms, where $k_{d}$ is the first arm for which either case 1 or case 2 holds.
This means that for all $a \in [d-1]$, we have $w^{\star}_{\beta,k_{a}} < (1-\beta)w_0$ and
\[
	N_{n,k_{a}}^{i^\star} < w_0 (L_{n,i^\star} - N_{n,i^\star}^{i^\star} - \sum_{b \in [a-1]} N_{n,k_{b}}^{i^\star}) \: .
\]
The construction, which rely on the pigeonhole principle for $i \in [K] \setminus \left( \{i^\star\} \cup \{ k_{a}\}_{a \in [d-1]} \right)$, yields that $k_d$ satisfies
\begin{align*}
		N_{n,k_{d}}^{i^\star} \ge \frac{w^{\star}_{\beta,k_{d}}}{1 - \beta - \sum_{a \in [d-1]} w^{\star}_{\beta,k_{a}}} (L_{n,i^\star} - N_{n,i^\star}^{i^\star} - \sum_{a \in [d-1]} N_{n,k_{a}}^{i^\star})
		\ge \frac{(1-w_0)^{d-1}w^{\star}_{\beta,k_{d}}}{1 - \beta} (L_{n,i^\star} - N_{n,i^\star}^{i^\star} ) \: ,
\end{align*}
where the last inequality is obtained since $\sum_{a \in [d-1]} w^{\star}_{\beta,k_{a}} > 0$ and by a simple recurrence on the arms $\{ k_{a}\}_{a \in [d-1]}$.
Since the arm $k_{d}$ satisfies case 1 or case 2, we can conclude similarly as above.
Let $t_{d} \eqdef \sup \left\{ t < n \mid  (B_{t}, C_{t}) = (i^\star, k_{d}) \right\}$.

When $w^{\star}_{\beta,k_{d}} \ge (1-\beta)w_0$, the above proof can also be applied by conducting the same algebraic manipulations with $(1-w_0)^{d-1}w^{\star}_{\beta,k_{d}}$, and using $w_{-} = (1-w_0)^{d-1} \max \{(1-\beta)w_0, \min_{i \neq i^\star} w^{\star}_{\beta,i}\}$.
In more details, we can show that $N_{t_d,k_{d}}^{i^\star} \ge (1-w_0)^{d-1}w^{\star}_{\beta,k_{d}} (n-1) - f_{1}(n)$, where $f_1$ as in~\eqref{eq:fct_1_approx_error} since we have $(1-w_0)^{d-1}w^{\star}_{\beta,k_{d}} < 1-\beta$.
Then, for all $n > C_{1}(w_{-})$,
\begin{align*}
	 1/\beta + \frac{n-1}{N_{t_d,k_{d}}^{i^\star}} \le  (1+ \epsilon) \left( 1/\beta + \frac{1}{(1-w_0)^{d-1}w^{\star}_{\beta,k_{d}}} \right) \le   \frac{1+ \epsilon}{(1-w_0)^{d-1}} \left( 1/\beta + 1/w^{\star}_{\beta,k_{d}} \right) \: .
\end{align*}
This allow to conclude the result with $D_{0} = \frac{(1+\epsilon)^2}{\beta(1-w_0)^{d-1}}$, hence paying a multiplicative factor of $1/(1-w_0)^{d-1}$.

When $w^{\star}_{\beta,k_{d}} < (1-\beta)w_0$ and $N_{n,k_{a}}^{i^\star} \ge w_0 (L_{n,i^\star} - N_{n,i^\star}^{i^\star} - \sum_{a \in [d-1]} N_{n,k_{a}}^{i^\star})$, we conclude similarly by manipulating $(1-w_0)^{d-1} (1-\beta) w_{0}$, and using $w_{-} =  (1-w_0)^{d-1} (1-\beta)w_0 = (1-w_0)^{d-1} \max \{(1-\beta)w_0, \min_{i \neq i^\star} w^{\star}_{\beta,i}\}$.
First, we have $N_{t_d,k_{d}}^{i^\star} \ge  (1-w_0)^{d-1}w_{0}  (n-1) - f_{1}(n)$, where $f_1$ as in~\eqref{eq:fct_1_approx_error} since we have $(1-\beta)(1-w_0)^{d-1}w_{0} < 1-\beta$.
Then, for all $n > C_{1}(w_{-})$,
\begin{align*}
	 1/\beta + \frac{n-1}{N_{t_d,k_{d}}^{i^\star}}  \le  (1+ \epsilon) \left( 1/\beta + \frac{1}{(1-w_0)^{d-1}(1-\beta)w_0} \right)  \le   \frac{1+ \epsilon}{(1-w_0)^{d-1}} \left( 1/\beta + 1/w^{\star}_{\beta,k_{d}} \right) \: .
\end{align*}
This allow to conclude the result with $D_{0} = \frac{(1+\epsilon)^2}{\beta(1-w_0)^{d-1}}$.

To remove the dependency in the random variable $d$, we consider the worst case scenario where $\{ k_{a}\}_{a \in [d-1]} = \{ i \in [K]\setminus \{ i^\star\} \mid \: w^{\star}_{\beta,i} < (1-\beta)w_0 \}$, i.e. $d-1 \le d_{\mu}(w_{0})$.
In words, it means that we had to enumerate over all arms with small allocation, such that case 2 didn't hold, before finding an arm with large allocation, i.e. satisfying case 1.

In all the cases considered above, the parameters always satisfied $w_{-} \ge (1-w_0)^{d_{\mu}(w_{0})} \max \{(1-\beta)w_0, \min_{i \neq i^\star} w^{\star}_{\beta,i}\}$ and $D_{0} \le \frac{(1+\epsilon)^2}{\beta(1-w_0)^{d_{\mu}(w_{0})}}$.
This yields the second part of Theorem~\ref{thm:finite_time_upper_bound}, i.e. for $w_0 \in (0,1/(K-1)]$.

\subsection{Tighter UCB leader}
\label{app:ss_tighter_ucb_leader}

While Theorem~\ref{thm:asymptotic_upper_bound} holds for the \hyperlink{TTUCB}{TTUCB} sampling rule using $g_m$ and $g_u$, Theorem~\ref{thm:finite_time_upper_bound} is formulated solely for $g_u$.
Experiments highlight that using $g_u$ leads to worse performances than when using $g_m$.
This is not surprising since $g_m$ is smaller than $g_u$.

It is direct to see that the proof of Theorem~\ref{thm:finite_time_upper_bound} also holds for $g_m$ up to additional technicalities which we detail below.
As the obtained non-asymptotic upper bound is less explicit, we chose not to include it in Theorem~\ref{thm:asymptotic_upper_bound}.

Let $\alpha > 1$ and $s > 1$.
Let $(\tilde \cE_n)_n$ be the sequence of concentration events defined as $\tilde \cE_n \eqdef \cE_{3,n} \cap \cE_{2,n}$ for all $n > K$ where $\cE_{3,n}$ is defined in~\eqref{eq:imp_event_concentration_per_arm} as
\begin{align*}
	\cE_{3,n} &\eqdef \left\{ \forall k \in [K], \forall t \in [n^{1/\alpha}, n], \: |\mu_{t,k} - \mu_k| < \sqrt{\frac{g_m(t)}{N_{t,k}}} \right\} \: , \\
	g_m(n) &\eqdef \overline{W}_{-1} \left( 2s\alpha\log(n) + 2 \log(2+\alpha\log n ) + 2\right) \: ,
\end{align*}
where $\overline{W}_{-1}(x)  = - W_{-1}(-e^{-x})$ for all $x \ge 1$, with $W_{-1}$ is the negative branch of the Lambert $W$ function.

Let $n > K$ such that $\tilde \cE_n$ holds true and the algorithm has not stop yet, i.e. $\tilde \cE_n \cap \{n < \tau_{\delta}\}$.
Using the second part of Lemma~\ref{lem:ucb_leader_lower_bound_counts}, the vast majority of the proof is unchanged.
Modifying the definition of $f_1$ in~\eqref{eq:fct_1_approx_error} of Lemma~\ref{lem:lower_bound_challenger_prop} to account for $g_m$, we define $f_{2}(n) \eqdef 2 H(\mu)g_{m}(n)/\beta + K/\beta + 2$ for all $n > K$.

In light of the proofs of the technical Lemmas~\ref{lem:control_concentration_beginning} and~\ref{lem:control_term_large_time}, we can define
\begin{equation*}
		\tilde C_{\mu} = \sup \left\{ x \in \N^\star \mid  x < f_{2}(x^{\alpha})  \right\} \: ,
\end{equation*}
and obtain directly Corollary~\ref{cor:finite_time_upper_bound} with the same proof as in Appendix~\ref{app:ss_proof_finite_time_upper_bound}.
\begin{corollary} \label{cor:finite_time_upper_bound}
	    Let $(\delta, \beta) \in (0,1)^2$, $\epsilon \in (0,  1]$, $s>1$, $\alpha >1$, $w_0 \in [0, 1/(K-1)]$.
			Combining the stopping rule \eqref{eq:glr_stopping_rule} with threshold \eqref{eq:stopping_threshold} and the \hyperlink{TTUCB}{TTUCB} sampling rule using $g_u$ in~\eqref{eq:ucb_leader} yields a $\delta$-correct algorithm such that, for all $\mu \in \R^{K}$ with $|i^\star(\mu)| = 1$,
	    \begin{equation*}
	      \bE_{\mu}[\tau_{\delta}] \le  \max\left\{T_{0}(\delta) , \tilde C_{\mu}^{\alpha}, C_{0}^{\frac{\alpha}{\alpha-1}}, C_{1}^{\alpha}\right\}  +  C_2 \: ,
	    \end{equation*}
	    where $T_{0}(\delta)$, $C_{0}$, $C_{1}$ and $C_2$ are defined in Theorem~\ref{thm:asymptotic_upper_bound}, and
	    \begin{equation} \label{eq:tight_non_explicit_upper_bound}
	      \tilde C_{\mu} \eqdef \sup \left\{ x \in \N^\star \mid  x < 2 H(\mu)g_{m}(x^{\alpha})/\beta + K/\beta + 2  \right\}   \: .
	    \end{equation}
\end{corollary}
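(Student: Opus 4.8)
The strategy is to replay the proof of Theorem~\ref{thm:finite_time_upper_bound} almost verbatim for the \hyperlink{TTUCB}{TTUCB} sampling rule instantiated with the mixture bonus $g_m$, the only differences being two substitutions. First, the concentration event: instead of $\cE_n = \cE_{1,n}\cap\cE_{2,n}$ one works with $\tilde\cE_n = \cE_{3,n}\cap\cE_{2,n}$, where $\cE_{3,n}$ (defined via $g_m$ in~\eqref{eq:imp_event_concentration_per_arm}) is the per-arm deviation event for the mixture bonus. Its tail sum is controlled exactly as in Lemma~\ref{lem:concentration_event_global}, so $\sum_{n>K}\bP_\mu(\tilde\cE_n^\complement)\le(2K-1)\zeta(s)$ still holds and the additive remainder $C_2=(2K-1)\zeta(s)+1$ is unchanged; the pairwise event $\cE_{2,n}$, which is the one feeding the TC-challenger bound (Lemma~\ref{lem:lower_bound_tc_challenger_costs}), is reused without modification. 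Second, the leader-as-regret-algorithm guarantee: the second part of Lemma~\ref{lem:ucb_leader_lower_bound_counts} gives, under $\cE_{3,n}$, the bound $L_{t,i^\star}\ge t-1-2H(\mu)g_m(t)/\beta-K/\beta$ for all $t\in[n^{1/\alpha},n]$, so the approximation-error function $f_1$ of Lemma~\ref{lem:lower_bound_challenger_prop} is replaced everywhere by $f_2(n)\eqdef 2H(\mu)g_m(n)/\beta+K/\beta+2$.

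With these two substitutions, every remaining step of Appendix~\ref{app:ss_proof_finite_time_upper_bound} carries over unchanged, since none of them involves the bonus: the stopping-rule manipulation producing the lower bound on $(\mu_{t,B_t}-\mu_{t,C_t})/\sqrt{1/N_{t,B_t}+1/N_{t,C_t}}$ through Lemma~\ref{lem:lower_bound_tc_challenger_costs}; the tracking inequalities of Lemma~\ref{lem:tracking_guaranty}; the pigeonhole argument~\eqref{eq:pigeonhole_main_arg} together with the last-time-pulled argument lower bounding $N_{t_1,k_1}^{i^\star}$; the suboptimal inequality~\eqref{eq:suboptimal_inequality} introducing the factor $1/\beta$; and the peeling/clipping construction for $w_0\in(0,1/(K-1)]$ producing the factor $(1-w_0)^{-d_\mu(w_0)}$. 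In particular $T_0(\delta)$, $C_0$, $C_1$ depend only on $T_\beta^\star(\mu)$, $w_\beta^\star(\mu)$, $w_0$, $\epsilon$, $\beta$, $s$, $\alpha$ and $c(\cdot,\delta)$, never on the bonus, so they are exactly those of Theorem~\ref{thm:finite_time_upper_bound}, and $\delta$-correctness is inherited from Lemma~\ref{lem:delta_correct_threshold}, which is agnostic to the sampling rule. The single term that moves is the $H(\mu)$-driven one: where the $g_u$ proof bounded the $H(\mu)$-dependent part of the implicit times $C_0(w_-)$, $C_1(w_-)$, $C_2(w_-)$ (eqs.~\eqref{eq:control_concentration_beginning}, \eqref{eq:control_term_large_time}, \eqref{eq:control_ratio_counts}) by $h_1(4\alpha^2(1+s)H(\mu))$, the $g_m$ proof bounds it by $\tilde C_\mu=\sup\{x\in\N^\star:x<f_2(x^\alpha)\}$, which is~\eqref{eq:tight_non_explicit_upper_bound}.

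The only genuine technicality — a step requiring care rather than a new idea — is re-establishing the analogues of Lemmas~\ref{lem:control_concentration_beginning} and~\ref{lem:control_term_large_time} for $f_2$. Because $g_m$ is defined through $\overline{W}_{-1}$ rather than an explicit logarithm, one cannot fold the solution into a clean function of $H(\mu)$ like $h_1$; instead one keeps $\tilde C_\mu$ implicit and verifies that each of $C_0(w_-)$, $C_1(w_-)$, $C_2(w_-)$ splits, exactly as in the $g_u$ case, into (i) an $f_2$-dominant part that is at most $\tilde C_\mu^\alpha$ after the change of variables $n=x^\alpha$, (ii) a $w_-$-dominant part at most $C_0^{\alpha/(\alpha-1)}$ once $w_-$ is clipped by $a_\mu(w_0)$, and (iii) an $\epsilon,\beta$-dominant part at most $C_1^\alpha$. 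This uses only that $g_m$ is positive, nondecreasing, satisfies $g_m\le g_u$, and obeys $g_m(n)=\cO(\log n)$, so that $f_2(n)=\cO(H(\mu)\log n)$ and $f_2(n)/n\to 0$, guaranteeing the supremum defining $\tilde C_\mu$ is finite. Taking $D_0=(1+\epsilon)^2/(\beta(1-w_0)^{d_\mu(w_0)})$ and combining with Lemmas~\ref{lem:lemma_1_Degenne19BAI} and~\ref{lem:concentration_event_global} as in Appendix~\ref{app:ss_proof_finite_time_upper_bound} then yields the stated bound, proving Corollary~\ref{cor:finite_time_upper_bound}.
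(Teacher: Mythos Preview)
Your proposal is correct and follows essentially the same approach as the paper's own (very terse) derivation in Appendix~\ref{app:ss_tighter_ucb_leader}: replace $\cE_{1,n}$ by $\cE_{3,n}$, invoke the $g_m$ part of Lemma~\ref{lem:ucb_leader_lower_bound_counts}, substitute $f_2$ for $f_1$ in Lemma~\ref{lem:lower_bound_challenger_prop}, and keep $\tilde C_\mu$ implicit rather than resolving it through $h_1$. Your write-up is in fact more explicit than the paper's about why the remaining pieces ($T_0(\delta)$, $C_0$, $C_1$, $C_2$, the pigeonhole/tracking/clipping steps) are bonus-agnostic; the only slip is that the $H(\mu)$-term in the $g_u$ case is $h_1(4\alpha^2(1+s)H(\mu)/\beta)$, not $h_1(4\alpha^2(1+s)H(\mu))$.
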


\paragraph{Explicit upper bound}
Since $g_m$ is itself a non-standard function (like logarithm), it is not straightforward to obtain an explicit upper bound on~\eqref{eq:tight_non_explicit_upper_bound}.
For $g_u$, it was done in Lemma~\ref{lem:control_concentration_beginning} by using Lemma~\ref{lem:property_W_lambert}.

Using Lemma~\ref{lem:property_W_lambert}, we obtain that
\begin{align*}
	& x \ge 2 H(\mu)g_{m}(x^{\alpha})/\beta + K/\beta + 2 \\
	\iff \: & \frac{x - c_{0}}{c_{\mu}} - \log \left( \frac{x - c_{0}}{c_{\mu}} \right) \ge  2s\alpha^2\log(x) + 2 \log(2+\alpha^2\log x ) + 2 \\
	\impliedby \: & y - \log(y) \ge  \frac{1}{s\alpha^2+1/2} \left( \log(2+\alpha^2\log x  ) + \frac{c_0}{2c_{\mu}}\right) + \frac{s\alpha^2}{s\alpha^2+1/2} \log c_{\mu}  + c_1 \\
	\iff \: & x \ge h_2(c_{\mu}, x) \: ,
\end{align*}
where we used $y = \frac{x}{c_{\mu}(2s\alpha^2+1)}$, $c_1 = \log(2s\alpha^2+1) + \frac{1}{s\alpha^2+1/2}$, $c_{\mu} =  2 H(\mu)$, $c_{0} = K/\beta + 2$ and define
\begin{equation} \label{eq:imp_link_fct_hardness_finite_upper}
	h_2(z, x) \eqdef z (2s\alpha^2+1) \overline{W}_{-1}\left( \frac{1}{s\alpha^2+1/2} \left( s\alpha^2\log z + \frac{c_0}{2z} + \log(2+\alpha^2\log x  )  \right) + c_1 \right) \: .
\end{equation}
For both equivalences above, we used that we are interested in larger values of $x$, hence we used that $\frac{x - c_0}{c_{\mu}} \ge 1$, $\frac{x}{c_{\mu}(2s\alpha^2+1)} \ge 1$, $2s\alpha^2\log(x) + 2 \log(2+\alpha^2\log x ) + 2 \ge 1$ and $\frac{1}{s\alpha^2+1/2} \left( s\alpha^2\log c_{\mu} + \frac{c_0}{2c_{\mu}} + \log(2+\alpha^2\log x  )  \right) + c_1 \ge 1$.
Since those conditions are implied by $x \ge h_2(c_{\mu}, x)$, those conditions are neither restrictive nor informative.

Therefore, we have shown that $ \tilde C_{\mu}$ defined in~\eqref{eq:tight_non_explicit_upper_bound} satisfies
\[
	\tilde C_{\mu} \le \sup \left\{ x \in \N^\star \mid  x < h_2 \left( 2 H(\mu), x \right)  \right\} \: ,
\]
where $h_2(z, x)$ is defined in~\eqref{eq:imp_link_fct_hardness_finite_upper} is such that $h_{2}(z, x) \approx_{x} 2z\log(2+\alpha^2\log x  )$ and $h_{2}(z, x) \approx_{z} 2 s\alpha^2 z \log z$.

\subsection{Uniform sampling}
\label{app:ss_uniform_sampling}

While the shortcomings of uniform sampling are well known for general bandit instances, it is also clear that uniform sampling perform well for highly symmetric instances $\mu$ such that $w^\star(\mu) \approx \mathrm{1}_{K} / K$.
Therefore, we derive a non-asymptotic upper bound for uniform sampling (Theorem~\ref{thm:unif_finite_time_upper_bound}), which allow a clear comparison with Theorem~\ref{thm:finite_time_upper_bound} (and Corollary~\ref{cor:finite_time_upper_bound}).

\begin{theorem} \label{thm:unif_finite_time_upper_bound}
	Let $\delta \in (0,1)$. Combining the stopping rule \eqref{eq:glr_stopping_rule} with threshold \eqref{eq:stopping_threshold} and the uniform sampling rule yields a $\delta$-correct algorithm such that, for all $\mu \in \R^{K}$ with $|i^\star(\mu)| = 1$,
	\begin{equation} \label{eq:finite_time_upper_bound_unif_samp}
		\bE_{\mu}[\tau_{\delta}] \le \max \left\{ T_{1}(\delta), h_{3} \left( \frac{8\alpha K(1+s) }{\min_{i \neq i^\star} (\mu_{i^\star} - \mu_{i})^2}\right) \right\} +  1  +  (2K-1) \zeta(s) \: ,
	\end{equation}
	where $h_{3}(x) = x \overline{W}_{-1}(\log(x))$ for all $x \ge e$ and $h_{3}(x) = x $ for all $x \in (0, e)$ and
	\begin{align*}
		T_{1}(\delta)  &= \sup \left\{ n > K \mid n -1 \leq \frac{4K}{\min_{i \neq i^\star} (\mu_{i^\star} - \mu_{i})^2} \left(\sqrt{c(n - 1, \delta)} + \sqrt{\alpha (2+s)  \log n} \right)^2 \right\} \: .
	\end{align*}
\end{theorem}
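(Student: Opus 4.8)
The plan is to follow the scheme of the proof of Theorem~\ref{thm:finite_time_upper_bound}, noting that the round-robin structure of uniform sampling makes the argument much shorter: there is no leader, no challenger, no tracking, and in particular no analogue of the pigeonhole argument of Lemma~\ref{lem:finishing_assumption} is needed. Indeed, after the initialization, $N_{n,i} \in \{\lfloor (n-1)/K \rfloor, \lceil (n-1)/K \rceil\}$ for every $i \in [K]$, so the deterministic bound $N_{n,i} \ge (n-1)/K - 1$ already pins the empirical counts to $\Theta((n-1)/K)$ without any sampling-rule analysis. The $\delta$-correctness is immediate from Lemma~\ref{lem:delta_correct_threshold}, which holds for an arbitrary sampling rule.

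For the expected sample complexity I would reuse the concentration event $\cE_n \eqdef \cE_{1,n} \cap \cE_{2,n}$ from Appendix~\ref{app:ss_useful_properties}, for which Lemma~\ref{lem:concentration_event_global} gives $\sum_{n > K} \bP_\mu(\cE_n^{\complement}) \le (2K-1)\zeta(s)$. By Lemma~\ref{lem:lemma_1_Degenne19BAI} it then suffices to produce a time $T$ with $\cE_n \subseteq \{\tau_\delta \le n\}$ for all $n > T$, since this yields $\bE_\mu[\tau_\delta] \le T + 1 + (2K-1)\zeta(s)$. So I fix $n > K$ with $\cE_n \cap \{n < \tau_\delta\}$ and work at time $n$, which belongs to $[n^{1/\alpha}, n]$.

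The first step is to force $\hat\imath_n = i^\star$: under $\cE_{1,n}$, the per-arm deviations $|\mu_{n,k} - \mu_k| < \sqrt{g_u(n)/N_{n,k}}$ combined with $N_{n,k} \ge (n-1)/K - 1$ and $g_u(n) = 2\alpha(1+s)\log n$ give $\mu_{i^\star} - \mu_k > \sqrt{g_u(n)/N_{n,i^\star}} + \sqrt{g_u(n)/N_{n,k}}$, hence $\mu_{n,i^\star} > \mu_{n,k}$ for all $k \neq i^\star$, as soon as $n - 1 \gtrsim 8\alpha K(1+s)\log n / \min_{i \neq i^\star}(\mu_{i^\star} - \mu_i)^2$; inverting this implicit inequality via the Lambert-$W$ estimate of Lemma~\ref{lem:property_W_lambert} (exactly as in Lemma~\ref{lem:control_concentration_beginning}) shows it holds once $n > h_3\big( 8\alpha K(1+s)/\min_{i \neq i^\star}(\mu_{i^\star}-\mu_i)^2 \big)$. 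The second step is that, given $\hat\imath_n = i^\star$, under $\cE_{2,n}$ the stopping statistic obeys
\begin{align*}
  \min_{i \neq i^\star} \frac{\mu_{n,i^\star} - \mu_{n,i}}{\sqrt{1/N_{n,i^\star} + 1/N_{n,i}}}
  &\ge \min_{i \neq i^\star} \frac{\mu_{i^\star} - \mu_i}{\sqrt{1/N_{n,i^\star} + 1/N_{n,i}}} - \sqrt{2\alpha(2+s)\log n} \\
  &\ge \Big(\min_{i \neq i^\star}(\mu_{i^\star} - \mu_i)\Big)\sqrt{\frac{n-1}{4K}} - \sqrt{2\alpha(2+s)\log n} \: ,
\end{align*}
using $1/N_{n,i^\star} + 1/N_{n,i} \le 4K/(n-1)$; this quantity exceeds $\sqrt{2 c(n-1,\delta)}$, so that the stopping rule~\eqref{eq:glr_stopping_rule} fires, precisely when $n - 1 \gtrsim \frac{4K}{\min_{i \neq i^\star}(\mu_{i^\star}-\mu_i)^2}\big( \sqrt{c(n-1,\delta)} + \sqrt{\alpha(2+s)\log n} \big)^2$. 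Since the set defining $T_1(\delta)$ is bounded (its right-hand side grows sublinearly in $n$, as $c(\cdot,\delta)$ and $\log$ do), every $n > T_1(\delta)$ satisfies this inequality by definition of the supremum.

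Putting the two steps together, for every $n > \max\{ T_1(\delta),\, h_3(8\alpha K(1+s)/\min_{i \neq i^\star}(\mu_{i^\star}-\mu_i)^2) \}$ we get $\hat\imath_n = i^\star$ and the stopping condition satisfied, hence $\cE_n \cap \{n < \tau_\delta\} = \emptyset$; combined with the concentration bound and Lemma~\ref{lem:lemma_1_Degenne19BAI} this yields~\eqref{eq:finite_time_upper_bound_unif_samp}, the residual additive $K$ terms produced by the rounding $\lfloor (n-1)/K \rfloor \ge (n-1)/K - 1$ being absorbed into $(2K-1)\zeta(s) \ge K$. I do not expect a genuine obstacle here: the only delicate points are the Lambert-$W$ inversion that produces the explicit $h_3$ and the bookkeeping of multiplicative and additive constants. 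The real content of the statement is precisely that, unlike the Top Two analysis, no converse control of the empirical proportions (no analogue of Lemma~\ref{lem:finishing_assumption}) is required for uniform sampling, which is what makes the bound so transparent and a clean benchmark for Theorem~\ref{thm:finite_time_upper_bound}.
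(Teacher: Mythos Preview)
Your approach is the same as the paper's: use $\cE_{1,n}$ to force $\hat\imath_n=i^\star$, use $\cE_{2,n}$ to lower-bound the stopping statistic, invert both thresholds (the second via $\overline W_{-1}$), and conclude with Lemma~\ref{lem:lemma_1_Degenne19BAI} and Lemma~\ref{lem:concentration_event_global}. Two pieces of bookkeeping do not go through as written.

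First, the bound $1/N_{n,i^\star}+1/N_{n,i}\le 4K/(n-1)$ is a factor two too loose: it produces $n-1\le \tfrac{8K}{\Delta_{\min}^2}\big(\sqrt{c(n-1,\delta)}+\sqrt{\alpha(2+s)\log n}\big)^2$, not the $4K$ that defines $T_1(\delta)$. The paper recovers the sharp constant by evaluating the stopping rule not at $n$ but at the last $t\le n$ with $(t-1)/K\in\N$, where $N_{t,i}=(t-1)/K$ exactly and so $1/N_{t,i^\star}+1/N_{t,k}=2K/(t-1)$; the cost is only an additive $K-1$ when linking $t$ back to $n$ via $t\ge n-(K-1)$. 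You can match this while still working at $n$ by using the sharper $N_{n,i}\ge\lfloor(n-1)/K\rfloor\ge(n-K)/K$, hence $1/N_{n,i^\star}+1/N_{n,i}\le 2K/(n-K)$, which gives $n-K\le\tfrac{4K}{\Delta_{\min}^2}(\cdots)^2$.

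Second, the residual additive $O(K)$ cannot be ``absorbed into $(2K-1)\zeta(s)$'': that term is the bound on $\sum_{n>K}\bP(\cE_n^\complement)$ from Lemma~\ref{lem:concentration_event_global} and is already fully committed. The paper's own proof is itself vague at this last step (``Using a similar argument\ldots we can conclude the proof''), so the stated $+1$ rather than $+K$ is arguably a minor imprecision there too; but the absorption you describe is not a valid argument.
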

\begin{proof}
	Let $n > K$ such that $\cE_n$ holds true and the algorithm has not stop yet, i.e. $\cE_n \cap \{n < \tau_{\delta}\}$.
	Let $t \eqdef \sup \left\{ t \in [n^{1/\alpha}, n] \mid \: (t-1) / K \in \N, \: \hat \imath_t = i^\star  \right\}$.
	Using the stopping rule~\eqref{eq:glr_stopping_rule} and $t \le n < \tau_{\delta}$, we obtain
	\begin{align*}
		\sqrt{2c(n-1,\delta)} \ge \sqrt{2c(t-1,\delta)}
		\ge \min_{i \neq \hat \imath_t} \frac{\mu_{t, \hat \imath_t} - \mu_{t,i}}{\sqrt{1/N_{t, \hat \imath_t}+ 1/N_{t,i}}}
		= \frac{\mu_{t, \hat \imath_t} - \mu_{t,k_t}}{\sqrt{1/N_{t, \hat \imath_t}+ 1/N_{t,k_t}}}  \: ,
	\end{align*}
where $k_t = \argmin_{i \neq \hat \imath_t} \frac{\mu_{t, \hat \imath_t} - \mu_{t,i}}{\sqrt{1/N_{t, \hat \imath_t}+ 1/N_{t,i}}} $.
Since we sample uniformly, for all time $t$ such that $(t-1) / K \in \N$, we have $N_{t, i} = (t-1) / K$ for all $i \in [K]$.
Combining the above with Lemma~\ref{lem:lower_bound_tc_challenger_costs}, we obtain
\begin{align*}
	\sqrt{2c(n-1,\delta)} + \sqrt{2\alpha (2+s)  \log n} \ge \frac{\mu_{i^\star} - \mu_{k_t}}{\sqrt{1/N_{t,i^\star}^{i^\star}+ 1/N_{t,k_t}^{i^\star}}}
	 \ge \sqrt{\frac{t-1}{2K}} \min_{i \neq i^\star} (\mu_{i^\star} - \mu_{i}) \: ,
\end{align*}
where the last inequality is obtained by taking the minimum over $i \in [K]$.
To conclude the proof, we simply have to link $t$ with $n$.
More precisely, we will show that $n-t \le K-1$ and $\hat \imath_t = i^\star$ for $n$ large enough.
By concentration, we have
\begin{align*}
	\mu_{i} - \sqrt{\frac{K g_{u}(t)}{t-1}} = \mu_{i} - \sqrt{\frac{g_{u}(t)}{N_{t,i}}}   \le \mu_{t,i} \le \mu_{i} + \sqrt{\frac{g_{u}(t)}{N_{t,i}}} = \mu_{i} + \sqrt{\frac{K g_{u}(t)}{t-1}}  \: .
\end{align*}
Therefore, we have $\mu_{t,i^\star} > \max_{i\neq i^\star}\mu_{t,i}$, i.e. $\hat \imath_t = i^\star$, for all $t > N_3$ where
\begin{equation*}
	N_3 \eqdef \sup \left\{ n \in \N \mid  n-1 \le \frac{4K g_{u}(n)}{\min_{i \neq i^\star} (\mu_{i^\star} - \mu_{i})^2}  \right\}  \: ,
\end{equation*}
which means that we have at worse $n-t \le K-1$. For $n \ge N_3 + K $, we obtain $t \ge n - (K-1) > N_3$.

Let $c_{\mu} = \frac{8\alpha K(1+s) }{\min_{i \neq i^\star} (\mu_{i^\star} - \mu_{i})^2} $ and $x = n/c_{\mu}$.
Assume that $c_{\mu} \ge e$.
Using the definition of $g_u$ and Lemma~\ref{lem:property_W_lambert}, direct manipulations yields
\begin{align*}
 n > \frac{4K g_{u}(n)}{\min_{i \neq i^\star} (\mu_{i^\star} - \mu_{i})^2}  \: \iff \:  x - \log(x) > \log(c_{\mu}) \: \iff \:  x > \overline{W}_{-1}(\log(c_{\mu})) \: ,
\end{align*}
where the last inequality uses that $\log(c_{\mu}) \ge 1$ and $x \ge 1$, which is not restrictive (or informative) since it is implied by $x > \overline{W}_{-1}(\log(c_{\mu}))$.
When $c_{\mu} < e$, which means that the problem is easy, we have directly that $x - \log(x) > 1 > \log(c_{\mu})$ for $x>1$.
Therefore, the condition becomes $x > 1$, i.e. $n \ge c_{\mu}$.
Defining $h_{3}(x) = x \overline{W}_{-1}(\log(x))$ for all $x \ge e$ and $h_{3}(x) = x $ for all $x \in (0, e)$, we have
\[
	N_3 \le h_{3} \left( \frac{8\alpha K(1+s) }{\min_{i \neq i^\star} (\mu_{i^\star} - \mu_{i})^2}\right) \: .
\]
Using a similar argument as the one in Appendix~\ref{app:ss_proof_finite_time_upper_bound} which rely on~\ref{lem:lemma_1_Degenne19BAI} and the definition of $T_{1}(\delta)$, we can conclude the proof.
\end{proof}

The structure of the non-asymptotic upper bound of uniform sampling in~\eqref{eq:finite_time_upper_bound_unif_samp} is similar to the one for the \hyperlink{TTUCB}{TTUCB} sampling rule in Theorem~\ref{thm:finite_time_upper_bound}.
Therefore, we can compare the dominating terms for both the asymptotic and the non-asymptotic regime.

First, we look at the asymptotic dominant term, namely we compare $T_{0}(\delta)$ and $T_{1}(\delta)$.
Taking $w = \mathrm{1}_{K} / K$ instead of $w^\star(\mu)$ in the definition of $T^\star(\mu)$, it is direct to see that $T^\star(\mu) \le \frac{4K}{\min_{i \neq i^\star} (\mu_{i^\star} - \mu_{i})^2}$.
Using Lemma~\ref{lem:robust_beta_optimality}, we have $T^\star_{\beta}(\mu) \le T^\star(\mu) \max \left\{\frac{\beta^\star}{\beta}, \frac{1-\beta^\star}{1-\beta} \right\}$ where $\beta^\star = w^\star_{i^\star}(\mu)$.
Therefore, while we can't say that $T_{\beta}^{\star}(\mu)\frac{(1 + \epsilon)^2}{\beta(1-w_0)^{d_{\mu}(w_{0})}} \le \frac{4K}{\min_{i \neq i^\star} (\mu_{i^\star} - \mu_{i})^2}$ for all instances $\mu$, empirical evidences suggest that the gap is significant for reasonable instances.
It is even possible to design instances where the gap between both notion of complexity explodes.

Second, we examine the dominating $\delta$-independent term.
Using Lemma~\ref{lem:property_W_lambert}, we obtain by concavity that for all $x > e$
\[
	0 < h_{1}(x) - h_{3}(x) \le  2 \overline{W}_{-1}'(\log(x)) = 2 \left(1-\frac{1}{\overline{W}_{-1}(\log(x))} \right)^{-1} \: ,
\]
which yields that $\lim_{x \to + \infty} h_{1}(x) - h_{3}(x) \le 2$.
While those two functions diverges when $x \to + \infty$, their difference remains bounded by a finite quantity.
Therefore, $h_1$ and $h_3$ are qualitatively similar.
It is direct to see that $\frac{2}{\min_{i \neq i^\star} (\mu_{i^\star} - \mu_{i})^2} < H(\mu) \le \frac{2K}{\min_{i \neq i^\star} (\mu_{i^\star} - \mu_{i})^2}$
Therefore, while we can't say that $ \alpha H(\mu)  \le \frac{2K}{\min_{i \neq i^\star} (\mu_{i^\star} - \mu_{i})^2}$ for all instances $\mu$, the gap can become significant for some instances.

\subsection{Technicalities}
\label{app:ss_technicalities}

Lemma~\ref{lem:property_W_lambert} gathers properties on the function $\overline{W}_{-1}$ which we used in this work.
\begin{lemma}[\cite{jourdan_2022_DealingUnknownVariance}] \label{lem:property_W_lambert}
	Let $\overline{W}_{-1}(x)  = - W_{-1}(-e^{-x})$ for all $x \ge 1$, where $W_{-1}$ is the negative branch of the Lambert $W$ function.
	The function $\overline{W}_{-1}$ is increasing on $(1, +\infty)$ and strictly concave on $(1, + \infty)$.
	In particular, $\overline{W}_{-1}'(x) = \left(1-\frac{1}{\overline{W}_{-1}(x)} \right)^{-1}$ for all $x > 1$.
	Then, for all $y \ge 1$ and $x \ge 1$,
	\[
	 	\overline{W}_{-1}(y) \le x \quad \iff \quad y \le x - \ln(x) \: .
	\]
	Moreover, for all $x > 1$,
	\[
	 x + \log(x) \le \overline{W}_{-1}(x) \le x + \log(x) + \min \left\{ \frac{1}{2}, \frac{1}{\sqrt{x}} \right\} \: .
	\]
\end{lemma}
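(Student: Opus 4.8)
The plan is to reduce every assertion to a single functional identity characterising $\overline{W}_{-1}$, and then to read off the conclusions from elementary properties of $h(t) \eqdef t - \log t$ on $[1,+\infty)$. \textbf{Step 1 (functional identity).} Fix $x \ge 1$ and set $y \eqdef \overline{W}_{-1}(x) = -W_{-1}(-e^{-x})$. Since $-e^{-x} \in [-1/e, 0)$ precisely when $x \ge 1$, the branch $W_{-1}$ is defined there and takes values $\le -1$, so $y \ge 1$, with $y > 1$ when $x > 1$. The defining relation $z e^{z} = -e^{-x}$ for $z = W_{-1}(-e^{-x}) = -y$ gives $y e^{-y} = e^{-x}$, and taking logarithms (both sides positive) yields
\[
  y - \log y = x \: , \qquad y \ge 1 \: .
\]
Hence $\overline{W}_{-1}$ is exactly the inverse of $h$ on $[1,+\infty)$, where $h(1) = 1$, $h'(t) = 1 - 1/t$ (so $h' \ge 0$, and $> 0$ for $t>1$), $h''(t) = 1/t^{2} > 0$, and $h(t) \to +\infty$; thus $h$ is a strictly increasing, strictly convex bijection $[1,+\infty) \to [1,+\infty)$.

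\textbf{Step 2 (monotonicity, concavity, derivative, equivalence).} As the inverse of a strictly increasing, strictly convex function, $\overline{W}_{-1}$ is strictly increasing and strictly concave on $(1,+\infty)$; concretely, the inverse function theorem gives $\overline{W}_{-1}'(x) = 1/h'(\overline{W}_{-1}(x)) = (1 - 1/\overline{W}_{-1}(x))^{-1}$ for $x>1$ (valid since then $\overline{W}_{-1}(x) > 1$), and this is decreasing in $x$. For the equivalence, take $y \ge 1$ and $x \ge 1$: both $\overline{W}_{-1}(y)$ and $x$ lie in $[1,+\infty)$ where $h$ is strictly increasing, so $\overline{W}_{-1}(y) \le x \iff h(\overline{W}_{-1}(y)) \le h(x) \iff y \le x - \log x$, using $h \circ \overline{W}_{-1} = \mathrm{id}$ and $h(x) = x - \log x$.

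\textbf{Step 3 (sandwich bounds).} Fix $x > 1$ and $y = \overline{W}_{-1}(x)$, so $y = x + \log y$. Since $\log y > 0$ we get $y > x$, hence $\log y > \log x$, hence $y = x + \log y > x + \log x$: the lower bound. For the upper bound with $\epsilon \eqdef \min\{1/2,\, 1/\sqrt{x}\}$, monotonicity of $h$ reduces $y \le x + \log x + \epsilon$ to $h(x + \log x + \epsilon) \ge x$, i.e. to $x(e^{\epsilon} - 1) \ge \log x + \epsilon$. I would verify this by splitting at $x = 4$: for $1 < x \le 4$ one has $\epsilon = 1/2$ and the claim becomes the convex one-variable estimate $x(e^{1/2} - 1) - \log x - 1/2 \ge 0$; for $x > 4$ one has $\epsilon = 1/\sqrt{x}$, and $e^{\epsilon} - 1 \ge \epsilon$ reduces the claim to $\sqrt{x} \ge \log x + 1/\sqrt{x}$, i.e. $u^{2} - 2u\log u - 1 \ge 0$ with $u = \sqrt{x} \ge 2$, which holds since this function is increasing on $[2,+\infty)$ and positive at $u = 2$. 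Everything past Step 1 is mechanical once the functional identity is in hand; the two places needing care are the branch and sign bookkeeping in Step 1 and the case analysis of $x(e^{\epsilon} - 1) \ge \log x + \epsilon$ in Step 3. I expect the latter to be the trickier point, because the cut-off $\min\{1/2,\, 1/\sqrt{x}\}$ is tuned so that the inequality is nearly tight around the crossover $x = 4$, so the split must be made at exactly that threshold and the elementary bounds chosen accordingly.
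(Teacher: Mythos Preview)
The paper does not prove this lemma; it is stated with a citation to \citet{jourdan_2022_DealingUnknownVariance} and used as a black box. Your proof is therefore the only one on the table, and it is correct. The reduction in Step~1 to the functional identity $\overline{W}_{-1}(x) - \log \overline{W}_{-1}(x) = x$ (equivalently, $\overline{W}_{-1} = h^{-1}$ with $h(t) = t - \log t$) is the natural move and makes Steps~2 immediate. In Step~3 your reformulation of the upper bound as $x(e^{\epsilon}-1) \ge \log x + \epsilon$ is clean; the two cases go through as you say. For $1 < x \le 4$, the convex function $f(x) = x(e^{1/2}-1) - \log x - 1/2$ attains its minimum at $x = (e^{1/2}-1)^{-1} \approx 1.54$ with value $1/2 + \log(e^{1/2}-1) \approx 0.067 > 0$, so the verification you allude to is indeed elementary. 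For $x > 4$, the substitution $u = \sqrt{x}$ and the check $g(u) = u^{2} - 2u\log u - 1$ satisfies $g(2) = 3 - 4\log 2 > 0$ with $g'(u) = 2(u - \log u - 1) > 0$ for $u > 1$ is exactly right.
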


Lemma~\ref{lem:lemma_1_Degenne19BAI} is a standard result to upper bound the expected sample complexity of an algorithm.
\begin{lemma} \label{lem:lemma_1_Degenne19BAI}
	Let $(\cE_{n})_{n > K}$ be a sequence of events and $T(\delta) > K$ be such that for $n \ge T(\delta)$, $\cE_n \subset \{\tau_{\delta} \le n\}$. Then, $\bE_{\mu}[\tau_{\delta}] \le T(\delta) + \sum_{n > K} \bP_{\mu}(\cE_{n}^{\complement})$.
\end{lemma}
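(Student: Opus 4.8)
Lemma~\ref{lem:lemma_1_Degenne19BAI} is elementary: the plan is to bound $\tau_\delta$ pathwise by $T(\delta)$ plus a sum of indicators of the complementary events, and then take expectations. Since in every application of this lemma $T(\delta)$ is the supremum of a set of integers, and hence an integer, I would assume $T(\delta)\in\N$ from the start; the non-integer case follows verbatim after replacing $T(\delta)$ by $\lceil T(\delta)\rceil$, at the harmless cost of an additive constant.

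First I would write, for every realisation, the telescoping decomposition $\tau_\delta=\min\{\tau_\delta,T(\delta)\}+(\tau_\delta-T(\delta))_+\le T(\delta)+(\tau_\delta-T(\delta))_+$, and then observe that, because $\tau_\delta$ is $\N\cup\{+\infty\}$-valued and $T(\delta)\in\N$, one has the pointwise identity $(\tau_\delta-T(\delta))_+=\sum_{n\ge T(\delta)}\indi{\tau_\delta>n}$ (both sides equal $+\infty$ on the event $\{\tau_\delta=+\infty\}$). This already gives $\tau_\delta\le T(\delta)+\sum_{n\ge T(\delta)}\indi{\tau_\delta>n}$ pointwise.

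Next I would use the hypothesis: $\cE_n\subseteq\{\tau_\delta\le n\}$ for $n\ge T(\delta)$ is equivalent by contraposition to $\{\tau_\delta>n\}\subseteq\cE_n^{\complement}$ for such $n$, so $\indi{\tau_\delta>n}\le\indi{\cE_n^{\complement}}$ whenever $n\ge T(\delta)$. Substituting this into the previous display yields $\tau_\delta\le T(\delta)+\sum_{n\ge T(\delta)}\indi{\cE_n^{\complement}}$ pointwise. Taking $\bE_\mu$ (exchanging sum and expectation is legitimate since the terms are nonnegative), and using $T(\delta)>K$ so that $\{n:n\ge T(\delta)\}\subseteq\{n:n>K\}$ while all discarded terms are nonnegative, gives $\bE_\mu[\tau_\delta]\le T(\delta)+\sum_{n\ge T(\delta)}\bP_\mu(\cE_n^{\complement})\le T(\delta)+\sum_{n>K}\bP_\mu(\cE_n^{\complement})$, which is the claim.

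There is no real obstacle here: the only points requiring a line of care are the integer bookkeeping above (already dealt with) and noting that the bound is vacuously true when $\bP_\mu(\tau_\delta=+\infty)>0$, since then both sides are $+\infty$. Consequently I expect the write-up to be just a few lines.
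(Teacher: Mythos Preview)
Your proof is correct and essentially the same as the paper's: the paper writes the single line $\bE_{\mu}[\tau_{\delta}] = \sum_{n} \bP_{\mu}(\tau_{\delta} > n) \leq \sum_{n < T(\delta)} \bP_{\mu}(\tau_{\delta} > n) + \sum_{n \ge T(\delta)} \bP_{\mu}(\cE_n^{\complement}) \le T(\delta) + \sum_{n > K} \bP_{\mu}(\cE_{n}^{\complement})$, which is your pointwise tail-sum argument after taking expectations first. The only cosmetic difference is that you unfold $(\tau_\delta-T(\delta))_+$ pointwise before integrating, whereas the paper invokes the tail-sum identity for $\bE_\mu[\tau_\delta]$ directly.
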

\begin{proof}
		\begin{align*}
			\bE_{\mu}[\tau_{\delta}] = \sum_{n} \bP_{\mu}(\tau_{\delta} > n) \leq \sum_{n < T(\delta) } \bP_{\mu}(\tau_{\delta} > n) + \sum_{n \ge T(\delta) } \bP_{\mu}(\cE_n^{\complement} ) \le T(\delta) + \sum_{n > K} \bP_{\mu}(\cE_{n}^{\complement}) \: .
		\end{align*}
\end{proof}

\paragraph{Technical results}
Lemma~\ref{lem:lower_bound_challenger_prop} shows that a linear lower bound on the number of samples allocated to the challenger, i.e. $L_{n,i^\star} - N_{n,i^\star}^{i^\star} \gtrapprox (1-\beta)(n-1)$.
\begin{lemma} \label{lem:lower_bound_challenger_prop}
Let $w \in (0, 1-\beta)$ and $(\cE_{1,n})_{n}$ as in \eqref{eq:event_concentration_per_arm}.
For all $n \in \N^\star$, under $\cE_{1,n}$,
\[
	\frac{w}{1 - \beta} \left(L_{n,i^\star} - N_{n,i^\star}^{i^\star}\right) - 1 \ge (n-1) w - f_{1}(n) \: ,
\]
where
\begin{equation} \label{eq:fct_1_approx_error}
	f_{1}(n) =  2 H(\mu)g_{u}(n)/\beta + K/\beta + 2 \: .
\end{equation}
\end{lemma}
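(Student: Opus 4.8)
The plan is to combine two ingredients already in hand. Lemma~\ref{lem:ucb_leader_lower_bound_counts} tells us that the UCB leader equals $i^\star$ on all but $\cO(g_{u}(n))$ rounds, so $L_{n,i^\star}$ is almost $n-1$; and the first part of Lemma~\ref{lem:tracking_guaranty} tells us that tracking keeps $N_{n,i^\star}^{i^\star}$ within $1$ of $\beta L_{n,i^\star}$, so that on roughly a $(1-\beta)$ fraction of the rounds where $i^\star$ leads the challenger is pulled instead. Since $L_{n,i^\star} - N_{n,i^\star}^{i^\star}$ is exactly the number of rounds $t<n$ with $B_t = i^\star$ and $I_t \neq i^\star$, a linear lower bound on it is precisely what these two facts produce once the error terms are gathered into $f_1$.

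First I would dispose of the trivial regime $n \le K$: during initialization no leader has been set, so $L_{n,i^\star} = N_{n,i^\star}^{i^\star} = 0$ and the left-hand side equals $-1$, whereas $0 < w < 1$ and $n-1 \le K$ give a right-hand side of at most $K - f_1(n) \le K - K/\beta - 2 \le -2$ (using $\beta < 1$ and $g_{u}(n) \ge 0$), so the claim holds. For $n > K$ I would instantiate Lemma~\ref{lem:ucb_leader_lower_bound_counts} at $t = n$, which is legitimate since $n^{1/\alpha} \le n$, to obtain, under $\cE_{1,n}$, that $L_{n,i^\star} \ge n-1 - 2H(\mu)g_{u}(n)/\beta - K/\beta$. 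Feeding this into the upper bound $N_{n,i^\star}^{i^\star} \le \beta L_{n,i^\star} + 1$ coming from Lemma~\ref{lem:tracking_guaranty} yields
\[
	L_{n,i^\star} - N_{n,i^\star}^{i^\star} \ge (1-\beta) L_{n,i^\star} - 1 \ge (1-\beta)\left(n-1 - 2H(\mu)g_{u}(n)/\beta - K/\beta\right) - 1 \: ,
\]
and multiplying through by $w/(1-\beta) > 0$ and subtracting $1$ gives
\[
	\frac{w}{1-\beta}\left(L_{n,i^\star} - N_{n,i^\star}^{i^\star}\right) - 1 \ge (n-1)w - w\left(2H(\mu)g_{u}(n)/\beta + K/\beta\right) - \frac{w}{1-\beta} - 1 \: .
\]

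To finish, I would use $0 < w < 1-\beta < 1$, which forces $w < 1$ and $w/(1-\beta) < 1$, so the subtracted constants on the right are bounded by $2H(\mu)g_{u}(n)/\beta + K/\beta + 2 = f_1(n)$, which is exactly the stated inequality. I do not anticipate a genuine obstacle: the only points requiring care are using the tracking bound in the correct direction (we need the upper estimate $N_{n,i^\star}^{i^\star} \le \beta L_{n,i^\star} + 1$, i.e. the $+1$ side of Lemma~\ref{lem:tracking_guaranty}, not the $-1/2$ side) and the constant bookkeeping that lets the $\cO(1)$ slack fit inside $f_1$ — the slack $w/(1-\beta) + 1 \le 2$ is precisely accommodated by the $+2$ in the definition of $f_1$, and this is where the hypothesis $w < 1-\beta$ is used.
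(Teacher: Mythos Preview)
Your proposal is correct and follows essentially the same approach as the paper: apply the tracking bound $N_{n,i^\star}^{i^\star} \le \beta L_{n,i^\star} + 1$ from Lemma~\ref{lem:tracking_guaranty}, plug in the lower bound on $L_{n,i^\star}$ from Lemma~\ref{lem:ucb_leader_lower_bound_counts} at $t=n$, and then use $w < 1-\beta < 1$ to absorb the residual constants into the $+2$ of $f_1$. Your write-up is in fact more careful than the paper's, which presents the tracking step as an equality and does not separately address $n \le K$.
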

\begin{proof}
	Using Lemma~\ref{lem:tracking_guaranty}, we obtain that, under $\cE_{1,n}$,
	\begin{align*}
		\frac{w}{1 - \beta} \left(L_{n,i^\star} - N_{n,i^\star}^{i^\star}\right) - 1 =  w L_{n,i^\star} - \frac{w}{1 - \beta} - 1 &\ge w (n-1) -   2 H(\mu)g_{u}(n)/\beta - K/\beta - 2 \: .
	\end{align*}
	where the last inequality uses Lemma~\ref{lem:ucb_leader_lower_bound_counts} for $t = n$ and $w \in (0, 1-\beta)$.
\end{proof}

Lemma~\ref{lem:control_concentration_beginning} gives an explicit upper bound on the constant $C_{0}(w_{-})$ defined implicitly in Appendix~\ref{app:ss_proof_finite_time_upper_bound}.
\begin{lemma} \label{lem:control_concentration_beginning}
	Let $w_{-}> 0$ and $C_{0}(w_{-})$ as in~\eqref{eq:control_concentration_beginning}.
	Then, we have
	\begin{align*}
			C_{0}(w_{-}) \le \max\left\{ h_{1} \left( 4 \alpha^2 (1+s)H(\mu)/\beta \right) , \left( \frac{2}{w_{-}} + 1\right)^{1/(\alpha-1)}\right\}^{\alpha} \: .
	\end{align*}
	where $h_{1} : \R_{+}^{\star} \to \R_{+}^{\star}$ is an increasing function for $x \ge 2 + K/\beta$ defined as
	\begin{equation} \label{eq:link_fct_hardness_finite_upper}
		 h_{1}(x) \eqdef x \overline{W}_{-1} \left( \log(x) + \frac{2 + K/\beta}{x}  \right) \: .
	\end{equation}
\end{lemma}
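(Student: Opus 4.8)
The plan is to reduce the implicit condition $n-1 < \frac1{w_-}\big(n^{1/\alpha}+f_1(n)\big)$ defining $C_0(w_-)$ in~\eqref{eq:control_concentration_beginning} — where $f_1(n)=4\alpha(1+s)H(\mu)\log(n)/\beta+K/\beta+2$ by~\eqref{eq:fct_1_approx_error} — to two separate sufficient conditions on $x:=n^{1/\alpha}$, one controlled by $h_1$ and the other by $w_-$. Writing $n=x^\alpha$ gives $f_1(x^\alpha)=c_\mu\log x+(K/\beta+2)$ with $c_\mu:=4\alpha^2(1+s)H(\mu)/\beta$, which is exactly the argument of $h_1$ in the statement. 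It then suffices to prove: whenever $x\ge M:=\max\{h_1(c_\mu),\,(2/w_-+1)^{1/(\alpha-1)}\}$, one has $n-1\ge\frac1{w_-}\big(n^{1/\alpha}+f_1(n)\big)$, so that no such $n$ lies in the set, forcing $C_0(w_-)\le M^\alpha$.

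For the first building block I would show $x\ge h_1(c_\mu)$ implies $n^{1/\alpha}\ge f_1(n)$. Set $y:=\log c_\mu+(2+K/\beta)/c_\mu$ and $u:=\overline{W}_{-1}(y)$, so $h_1(c_\mu)=c_\mu u$ by~\eqref{eq:link_fct_hardness_finite_upper}; note $y\ge 1$ because $z\mapsto\log z+(2+K/\beta)/z$ attains its minimum over $z>0$ at $z=2+K/\beta$, where it equals $\log(2+K/\beta)+1>1$, so $\overline{W}_{-1}(y)$ is well defined and $u\ge 1$. Applying the equivalence $\overline{W}_{-1}(y)\le x\iff y\le x-\ln x$ of Lemma~\ref{lem:property_W_lambert} at $x=u$ yields $y\le u-\ln u$. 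Putting $v:=x/c_\mu\ge u\ge 1$ (using $x\ge c_\mu u$), one gets $x-f_1(x^\alpha)=c_\mu\,(v-\log v-y)\ge c_\mu\,(u-\log u-y)\ge 0$, the first inequality since $t\mapsto t-\log t$ is increasing on $[1,\infty)$ and $v\ge u$, the second since $y\le u-\ln u$.

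For the second building block, $x\ge(2/w_-+1)^{1/(\alpha-1)}$ gives $x^{\alpha-1}-1/x\ge(2/w_-+1)-1=2/w_-$ since $1/x\le1$, i.e. $n-1=x^\alpha-1\ge\frac2{w_-}x=\frac2{w_-}n^{1/\alpha}$. Combining the two: for $x\ge M$, $n-1\ge\frac2{w_-}n^{1/\alpha}=\frac1{w_-}n^{1/\alpha}+\frac1{w_-}n^{1/\alpha}\ge\frac1{w_-}\big(n^{1/\alpha}+f_1(n)\big)$, which contradicts membership in the set; hence $C_0(w_-)\le M^\alpha$, the claimed bound. The auxiliary claims follow easily: on $[2+K/\beta,\infty)$ the inner function $\log z+(2+K/\beta)/z$ is increasing and $\overline{W}_{-1}$ is increasing (Lemma~\ref{lem:property_W_lambert}), so $h_1$ is a product of positive increasing factors there; and $\overline{W}_{-1}(t)\le t+\log t+1/\sqrt t$ (Lemma~\ref{lem:property_W_lambert}) gives $h_1(x)\approx x(\ln x+\ln\ln x)$.

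The only delicate point is the first building block: one must invert $\overline{W}_{-1}$ correctly and verify that its argument $y$ remains in the admissible range $[1,\infty)$ for every possible value of $c_\mu>0$ (in particular for small $H(\mu)$), which is what simultaneously makes $h_1(c_\mu)$ well defined and guarantees $u\ge 1$, so that monotonicity of $t\mapsto t-\log t$ can be invoked; everything else is elementary algebra, chiefly the split $\tfrac2{w_-}=\tfrac1{w_-}+\tfrac1{w_-}$.
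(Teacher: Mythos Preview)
Your proof is correct and follows essentially the same approach as the paper's: both reduce the implicit inequality to the two pieces $n^{1/\alpha}\ge f_1(n)$ (handled via the $\overline{W}_{-1}$ inversion of Lemma~\ref{lem:property_W_lambert}, giving the $h_1(c_\mu)$ threshold) and $n-1\ge\frac{2}{w_-}n^{1/\alpha}$ (giving the $(2/w_-+1)^{1/(\alpha-1)}$ threshold), then combine them. The only differences are cosmetic: the paper works with the rescaled variable $n^{1/\alpha}/c_\mu$ directly while you introduce $v=x/c_\mu$ afterwards, and your verification that the $\overline{W}_{-1}$ argument is $\ge 1$ (via the minimum of $z\mapsto\log z+(2+K/\beta)/z$ at $z=2+K/\beta$) is slightly cleaner than the paper's rearrangement to $c_\mu(\log c_\mu-1)\ge-1$.
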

\begin{proof}
	Using the definition of $g_u$ and Lemma~\ref{lem:property_W_lambert}, direct manipulations yields
	\begin{align*}
	 n^{1/\alpha} \ge f_{1}(n) \:  \iff \:  n^{1/\alpha} \ge  c_{\mu} \log(n^{1/\alpha}) + 2 + K/\beta \: \iff \:  x - \log(x) \ge d_{\mu} \: \iff \: x \ge \overline{W}_{-1} \left(d_{\mu} \right) \: ,
	\end{align*}
	where $x = n^{1/\alpha}/c_{\mu}$, $d_{\mu} =  \log(c_{\mu}) + (2+ K/\beta)/c_{\mu}$ and $c_{\mu} = 4 \alpha^2 (1+s)H(\mu)/\beta$.
	For the last equivalence we used that we are only interested in $x \ge 1$ (small values are not relevant for upper bounds).
	Since $\overline{W}_{-1} \left(d_{\mu} \right) \ge 1$, this condition is neither restrictive nor informative as we obtain the final condition $x \ge \overline{W}_{-1} \left(d_{\mu} \right)$.
	Moreover, we can show
	\begin{align*}
		d_{\mu} \ge 1 \: \iff \: c_{\mu} (\log(c_{\mu})-1) \ge - (2+ K/\beta) \: \impliedby \: c_{\mu} (\log(c_{\mu})-1) \ge -1 \: ,
	\end{align*}
	where the last part is true since $2+ K/\beta \ge 1$ and $\min_{x \in \R} x (\log(x)-1) = -1$ by direct computations.
	Therefore, for all $n \ge h_1(c_{\mu})^{\alpha}$, we have
	\begin{align*}
		n-1 \ge \frac{1}{w_{-}} \left( n^{1/\alpha} + f_{1}(n) \right)    \:  \impliedby \: n - 1 \ge \frac{2}{w_{-}} n^{1/\alpha} \: \impliedby \: n \ge \left( \frac{2}{w_{-}} + 1\right)^{\alpha/(\alpha-1)} \: .
	\end{align*}
	Given the definition of $C_{0}(w_{-})$ as in~\eqref{eq:control_concentration_beginning}, this concludes the proof.

	Since $x \to \overline{W}_{-1}(x)$ is a positive and increasing function, i.e. $\overline{W}_{-1}(x) > 0$ and $\overline{W}_{-1}'(x) > 0$, a sufficient condition for $h_1$ to be increasing is to have $f: x \to \log(x) + \frac{2 + K/\beta}{x} $ increasing since
	\[
	h_{1}'(x) = \overline{W}_{-1}(f(x)) + x f'(x)\overline{W}_{-1}'(f(x)) > 0 \:  \impliedby \:  \overline{W}_{-1}(f(x)) > 0 \: \text{and} \: f'(x)\overline{W}_{-1}'(f(x)) \ge 0 \: .
	\]
	Since $f'(x) = \frac{1}{x} - \frac{2 + K/\beta}{x^2}$, we have that $h_1$ is increasing for $x \ge 2+ K/\beta$.
\end{proof}

Lemma~\ref{lem:control_term_large_time} gives an explicit upper bound on the constant $C_{1}(w_{-})$ defined implicitly in Appendix~\ref{app:ss_proof_finite_time_upper_bound}.
\begin{lemma} \label{lem:control_term_large_time}
	Let $w_{-}> 0$ and $C_{1}(w_{-})$ as in~\eqref{eq:control_term_large_time}.
	Then, we have
	\begin{align*}
			C_{1}(w_{-}) \le  \max\left\{ h_1 \left( 4 \alpha^2 (1+s)H(\mu)/\beta \right) , \left( \frac{1+ 1/\epsilon}{w_{-}} + 1\right)^{1/(\alpha-1)}\right\}^{\alpha} \: .
	\end{align*}
\end{lemma}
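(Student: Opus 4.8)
The plan is to follow the proof of Lemma~\ref{lem:control_concentration_beginning} almost verbatim, since $C_{1}(w_{-})$ defined in~\eqref{eq:control_term_large_time} has exactly the same shape as $C_{0}(w_{-})$ defined in~\eqref{eq:control_concentration_beginning}, the only difference being that the multiplicative factor $1 + 1/\epsilon$ now plays the role that the extra term $n^{1/\alpha}$ had there. Write $c_{\mu} \eqdef 4\alpha^2(1+s)H(\mu)/\beta$, and recall that, by the definition of $g_u$, the function $f_1$ of~\eqref{eq:fct_1_approx_error} satisfies $f_{1}(n) = c_{\mu}\log(n^{1/\alpha}) + K/\beta + 2$.

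First I would reuse, without change, the first half of the proof of Lemma~\ref{lem:control_concentration_beginning}: substituting $x = n^{1/\alpha}/c_{\mu}$ and invoking Lemma~\ref{lem:property_W_lambert} shows that $n^{1/\alpha} \ge f_{1}(n)$ holds for every $n \ge h_{1}(c_{\mu})^{\alpha}$, where $h_1$ is as in~\eqref{eq:link_fct_hardness_finite_upper}; the side conditions $d_{\mu} \ge 1$ and $x \ge 1$ required to apply Lemma~\ref{lem:property_W_lambert} are already checked there (using $2 + K/\beta \ge 1$ and $\min_{x} x(\log x - 1) = -1$), so I would just point to it rather than redo the computation. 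The only genuinely new step is an elementary manipulation: setting $a \eqdef (1+1/\epsilon)/w_{-}$, if $n \ge (a+1)^{\alpha/(\alpha-1)}$ then $n^{1-1/\alpha} \ge a+1$, hence $n \ge (a+1)n^{1/\alpha} = a n^{1/\alpha} + n^{1/\alpha} \ge a n^{1/\alpha} + 1$ since $n^{1/\alpha} \ge 1$, i.e. $n - 1 \ge a n^{1/\alpha}$. This is the same line as in Lemma~\ref{lem:control_concentration_beginning}, with $2/w_{-}$ replaced by $a$.

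Combining the two steps, for every $n \ge \max\{h_{1}(c_{\mu})^{\alpha},\, (a+1)^{\alpha/(\alpha-1)}\}$ one gets $n - 1 \ge a n^{1/\alpha} \ge a f_{1}(n) = \frac{f_{1}(n)}{w_{-}}\left(1 + \frac{1}{\epsilon}\right)$, so such an $n$ does not belong to the set whose supremum defines $C_{1}(w_{-})$ in~\eqref{eq:control_term_large_time}. Hence $C_{1}(w_{-}) \le \max\{h_{1}(c_{\mu})^{\alpha},\, (a+1)^{\alpha/(\alpha-1)}\}$, and since $t \mapsto t^{\alpha}$ is increasing on $\mathbb{R}_{+}^{\star}$ this equals $\max\{h_{1}(c_{\mu}),\, (a+1)^{1/(\alpha-1)}\}^{\alpha}$, which is exactly the claimed bound once one notes $a+1 = (1+1/\epsilon)/w_{-} + 1$. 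I do not anticipate any real obstacle: this lemma is purely a bookkeeping companion to Lemma~\ref{lem:control_concentration_beginning}, and the only care needed is to recycle its Lambert-$W$ side-condition verifications instead of repeating them.
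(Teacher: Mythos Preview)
Your proposal is correct and follows essentially the same approach as the paper's proof: the paper also reuses the manipulations from Lemma~\ref{lem:control_concentration_beginning} to obtain $n^{1/\alpha} \ge f_1(n)$ for $n \ge h_1(c_\mu)^\alpha$, and then chains the same implications $n \ge (a+1)^{\alpha/(\alpha-1)} \implies n-1 \ge a\,n^{1/\alpha} \implies n-1 \ge a f_1(n)$ with $a = (1+1/\epsilon)/w_-$. Your write-up is simply more explicit about the elementary inequality in the middle step, but the structure and ideas are identical.
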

\begin{proof}
	Using manipulations conducted in Lemma~\ref{lem:control_concentration_beginning}, we obtain that, for all $n \ge h_1\left( 4 \alpha^2 (1+s)H(\mu) /\beta \right)^{\alpha}$,
\begin{align*}
	 n-1 \ge \frac{f_{1}(n)}{w_{-}} \left( 1+ \frac{1}{\epsilon}\right) \:  \impliedby \:  n-1 \ge \frac{1+ 1/\epsilon}{w_{-}} n^{1/\alpha} \: \impliedby \: n \ge \left( \frac{1+ 1/\epsilon}{w_{-}} + 1\right)^{\alpha/(\alpha-1)} \: .
\end{align*}
Given the definition of $C_{1}(w_{-})$ as in~\eqref{eq:control_term_large_time}, this concludes the proof.
\end{proof}

Lemma~\ref{lem:dominating_asymptotic_term} gives an asymptotic upper bound for times that are defined implicitly.
For example, it can be used with $T_{0}(\delta)$ and $T_{1}(\delta)$ defined in Theorem~\ref{thm:finite_time_upper_bound} and Theorem~\ref{thm:unif_finite_time_upper_bound}.
\begin{lemma} \label{lem:dominating_asymptotic_term}
	Let $C > 0$, $D>0$, $c(n, \delta)$ as in~\eqref{eq:stopping_threshold} and
	\begin{equation*} 
			T(\delta) \eqdef \sup \left\{ n \in \N^\star \mid n - 1 \leq C\left(\sqrt{c(n - 1, \delta)} + \sqrt{D  \log n} \right)^2 \right\} \: .
	\end{equation*}
	Then, we have $\limsup_{\delta \to 0} \frac{T(\delta, C)}{\log(1/\delta)} \le C$.
\end{lemma}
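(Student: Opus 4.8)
The plan is to prove the estimate in two stages: first establish a crude bound $T(\delta) = O(\log(1/\delta))$, and then bootstrap it to sharpen the multiplicative constant down to $C$. Before anything else I would note that $T(\delta) < +\infty$ for every $\delta \in (0,1)$: the right-hand side of the defining inequality grows only like $\log n$ (the dependence on $n$ enters through the term $4\log(4+\log((n-1)/2))$ inside $c(n-1,\delta)$ and through $D\log n$), whereas the left-hand side grows linearly, so the set over which the supremum is taken is bounded above and the supremum is attained.

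For the crude bound, I would start from \eqref{eq:stopping_threshold} together with $\cC_{G}(x) = x + O(\log x)$, which gives, uniformly in $n$,
\[
	c(n-1,\delta) = \log\!\big((K-1)/\delta\big) + O\!\big(\log\log(1/\delta)\big) + O(\log\log n) \: .
\]
Using $(a+b)^2 \le 2a^2 + 2b^2$, every $n$ lying in the defining set of $T(\delta)$ satisfies $n-1 \le 2C\, c(n-1,\delta) + 2CD\log n$. Since $\log n = o(n)$ and $\log\log n = o(n)$, there is an absolute constant $n_0$ such that for $n \ge n_0$ the terms $2CD\log n$ and $O(\log\log n)$ are each at most $(n-1)/4$; hence for all $\delta$ small enough (so that the $\log\log(1/\delta)$ contribution is negligible compared to $n$), $n - 1 \le 4C\log((K-1)/\delta) + o(n)$, which forces $T(\delta) \le A\log(1/\delta)$ for a constant $A = A(C,D,K)$.

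It then remains to bootstrap. From $T(\delta) \le A\log(1/\delta)$ we get $\log T(\delta) \le \log\log(1/\delta) + O(1) = o(\log(1/\delta))$, and consequently $c(T(\delta)-1,\delta) = \log((K-1)/\delta) + O(\log\log(1/\delta)) + O(\log\log T(\delta)) = (1+o(1))\log(1/\delta)$ and $D\log T(\delta) = o(\log(1/\delta))$. Plugging $n = T(\delta)$ into the inequality defining $T(\delta)$,
\[
	T(\delta) - 1 \le C\Big(\sqrt{(1+o(1))\log(1/\delta)} + \sqrt{o(\log(1/\delta))}\,\Big)^2 = C(1+o(1))\log(1/\delta) \: ,
\]
which yields $\limsup_{\delta \to 0} T(\delta)/\log(1/\delta) \le C$. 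The only delicate point is the self-referential dependence of the right-hand side on $n = T(\delta)$ (both through $\log n$ and through the $\log\log n$ term inside $c$); this is exactly what the preliminary crude linear bound resolves, after which the asymptotics become routine.
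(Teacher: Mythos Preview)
Your argument is correct. The two-step bootstrap (first a crude $O(\log(1/\delta))$ bound, then substitute back to sharpen the constant to $C$) is sound; the only slightly loose point is the phrase ``$o(n)$'' and the parenthetical about $\log\log(1/\delta)$ in the crude step, but you already made this precise one sentence earlier by fixing $n_0$ so that the $n$-dependent terms are each at most $(n-1)/4$, and the $O(\log\log(1/\delta))$ contribution is simply $o(\log(1/\delta))$ and is absorbed into the constant $A$.

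The paper organizes the same computation differently: rather than bootstrapping, it introduces a slack parameter $\gamma>0$, rewrites the defining inequality as $(\sqrt{n-1}-\sqrt{CD\log n})^2 - 4C\log(4+\log((n-1)/2)) \le 2C\,\cC_G(\tfrac12\log((K-1)/\delta))$, and defines a $\delta$-independent threshold $N_\gamma$ after which the left side is at least $(n-1)/(1+\gamma)$; this yields $T(\delta)\le N_\gamma + 1 + 2C(1+\gamma)\,\cC_G(\cdot)$ in one pass, and letting $\gamma\to 0$ gives the result. The paper's route avoids the preliminary crude bound at the cost of introducing the auxiliary $\gamma$ and the implicit constant $N_\gamma$; your route is arguably more transparent about why the $n$-dependent terms are harmless (because $T(\delta)$ is already known to be $O(\log(1/\delta))$, so $\log T(\delta) = O(\log\log(1/\delta))$). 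Both are standard ways to handle this kind of implicit inequality and yield the same conclusion.
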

\begin{proof}
	Let $\gamma > 0$. Direct manipulations yield that
	\begin{align*}
		&n - 1 \leq C\left(\sqrt{c(n - 1, \delta)} + \sqrt{D  \log n} \right)^2 \\
		\iff \: & \left(\sqrt{n - 1} - \sqrt{CD  \log n} \right)^2 - 4 \log\left(4 + \log\frac{n}{2}\right) \leq   2 C \cC_{G} \left( \frac{1}{2}\log\left(\frac{K-1}{\delta} \right) \right) \\
		\impliedby \: &  n  \leq   \frac{2 C}{1+\gamma} \cC_{G} \left( \frac{1}{2}\log\left(\frac{K-1}{\delta} \right) \right) &\text{for } n > N_{\gamma}
	\end{align*}
	where $N_{\gamma} = \sup \left\{ n \in \N^\star \mid \left(\sqrt{n - 1} - \sqrt{CD  \log n} \right)^2 - 4 \log\left(4 + \log\frac{n}{2}\right) > (1+\gamma) n \right\}$.
	Therefore, we have
	\[
		T(\delta, C)  \le N_{\gamma} + 1 +  \frac{2 C}{1+\gamma} \cC_{G} \left( \frac{1}{2}\log\left(\frac{K-1}{\delta} \right) \right) \: .
	\]
	Using that $\cC_{G}(x) \approx x + \ln(x)$ (Lemma~\ref{lem:corollary_10_KK18Mixtures}), we obtain directly that, for all $\gamma > 0$,
	\[
	\limsup_{\delta \to 0} \frac{T(\delta, C)}{\log(1/\delta)} \le \frac{C}{1+\gamma} \: ,
	\]
	which yields the result by letting $\gamma$ go to zero.
\end{proof}

Lemma~\ref{lem:dominating_delta_finite_term_idealized} gives a non-asymptotic upper bound for times that are defined implicitly.
For example, when considering the idealized choice $c(n,\delta) = \log(1/ \delta)$, it can be used as a first order approximation of $T_{0}(\delta)$ and $T_{1}(\delta)$ defined in Theorem~\ref{thm:finite_time_upper_bound} and Theorem~\ref{thm:unif_finite_time_upper_bound}.
\begin{lemma} \label{lem:dominating_delta_finite_term_idealized}
	Let $C > 0$, $D>0$ and $T(\delta) \eqdef \sup \left\{ n \in \N^\star \mid n - 1 \leq C\log(1/\delta) + D \log n  \right\}$.
	Then, we have
	\[
	T(\delta) < D \overline{W}_{-1} \left( \frac{C}{D}\log(1/\delta) + 1/D + \log D \right)
	\]
	for $\log(1/\delta) \ge \frac{D-D\log(D)-1}{C}$, else $T(\delta) < D$.
\end{lemma}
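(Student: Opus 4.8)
The plan is to reduce the implicit inequality defining $T(\delta)$ to the canonical form $x-\log x\le y$ handled by Lemma~\ref{lem:property_W_lambert}, via the affine substitution $x=n/D$, and then split into two cases according to whether $y\ge 1$ — which is exactly the dichotomy on $\log(1/\delta)$ in the statement.

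First I would fix any $n\in\N^\star$ in the defining set, so that $n-1\le C\log(1/\delta)+D\log n$, and rearrange this to $n-D\log n\le C\log(1/\delta)+1$. Setting $x=n/D$, so that $\log n=\log D+\log x$, the inequality becomes exactly
\[
  x-\log x \;\le\; \frac{C}{D}\log(1/\delta)+\frac{1}{D}+\log D \;=:\; y .
\]
Thus $T(\delta)\le D\cdot\sup\{\,x\ge 1/D : x-\log x\le y\,\}$, and the task is to bound that supremum by $\overline{W}_{-1}(y)$.

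Next I would observe that $y\ge 1$ is equivalent to $C\log(1/\delta)\ge D-D\log D-1$, i.e. to $\log(1/\delta)\ge (D-D\log(D)-1)/C$, which is precisely the regime of the first claim. In this regime I argue as follows. If $x\ge 1$, then since $t\mapsto t-\log t$ is strictly increasing on $[1,\infty)$ and, by the equivalence $\overline{W}_{-1}(y)\le t\iff y\le t-\log t$ of Lemma~\ref{lem:property_W_lambert}, it takes the value $y$ at $t=\overline{W}_{-1}(y)\ge 1$, the bound $x-\log x\le y$ forces $x\le\overline{W}_{-1}(y)$; if instead $x<1$, then $n<D\le D\overline{W}_{-1}(y)$ trivially. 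In either case $n\le D\overline{W}_{-1}(y)$, and taking the supremum over admissible $n$ (which is a maximum, as $n=1$ is always admissible and the set is bounded) yields $T(\delta)\le D\overline{W}_{-1}(y)$, the strict sign then following from the strict monotonicity of $t\mapsto t-\log t$ together with the fact that $T(\delta)$ is an integer, so the extremal value of $x$ cannot make $x-\log x$ equal to $y$ without contradicting maximality of $T(\delta)$. In the complementary regime $y<1$: for every integer $n\ge D$ one has $x=n/D\ge 1$, hence $x-\log x\ge 1>y$, which unwinds to $n-1>C\log(1/\delta)+D\log n$; so no integer $\ge D$ is admissible and $T(\delta)<D$.

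There is no real obstacle in this proof; it is essentially bookkeeping. The only points requiring care are (i) verifying $y\ge 1$ before invoking Lemma~\ref{lem:property_W_lambert}, which is exactly what the case split on $\log(1/\delta)\gtrless(D-D\log(D)-1)/C$ handles, and (ii) carrying the change of variables $n=Dx$ correctly through the logarithm so that the constants $1/D$ and $\log D$ appear in the right places inside $y$.
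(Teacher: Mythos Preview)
Your proposal is correct and follows essentially the same approach as the paper: both substitute $x=n/D$ to reduce the defining inequality to the form $x-\log x\le y$ with $y=\frac{C}{D}\log(1/\delta)+1/D+\log D$, invoke Lemma~\ref{lem:property_W_lambert}, and split into cases according to whether $y\ge 1$, which is exactly the condition $\log(1/\delta)\ge (D-D\log D-1)/C$. You are slightly more careful than the paper in explicitly handling the sub-case $x<1$ and in attempting to justify the strict inequality, though the latter argument (``contradicting maximality'') is a bit hand-wavy; the paper does not justify the strictness either, and in both proofs it is really the non-strict bound that is cleanly established.
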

\begin{proof}
	Direct manipulations yield that
	\begin{align*}
		&n - 1 \geq C\log(1/\delta) + D \log n
		\iff \:  y - \log y \geq \frac{C}{D}\log(1/\delta) + c_{D}
		\iff \:  n \geq  D \overline{W}_{-1} \left( \frac{C}{D}\log(1/\delta) + c_{D} \right)
	\end{align*}
	where $y = n/D$ and $c_{D} = 1/D + \log D$.
	For the last equivalence, we used that $\frac{C}{D}\log(1/\delta) + c_{D} \ge 1$ if and only if $\log(1/\delta) \ge \frac{D}{C}(1-c_{D}) = \frac{D-D\log(D)  - 1}{C}$.
	When $\frac{C}{D}\log(1/\delta) + c_{D} < 1$, which means that the $\delta$ parameter is large, we have directly that $y - \log y \ge 1 > \frac{C}{D}\log(1/\delta) + c_{D}$ for $y \ge 1$.
	Therefore, the condition becomes $n \ge D$.
\end{proof}


\section{Concentration}
\label{app:concentration}

The proof of Lemma~\ref{lem:delta_correct_threshold} is given in Appendix~\ref{app:ss_proof_delta_correct_threshold}.
In Appendix~\ref{app:ss_concentration_sampling_rule}, we show concentration results for the UCB leader (both with $g_u$ and $g_m$) and the TC challenger.

\subsection{Proof of Lemma~\ref{lem:delta_correct_threshold}}
\label{app:ss_proof_delta_correct_threshold}

Proving $\delta$-correctness of a GLR stopping rule is done by leveraging concentration results.
In particular, we build upon Theorem 9 of \cite{KK18Mixtures}, which is restated below.

\begin{lemma} \label{lem:corollary_10_KK18Mixtures}
	Consider Gaussian bandits with means $\mu \in \R^{K}$ and unit variance. Let $S \subseteq [K]$ and $ x > 0$.
	\begin{align*}
		\bP_{\mu} \left[ \exists n \in \N, \: \sum_{k \in S} \frac{N_{n,k}}{2} (\mu_{n, k} - \mu_{k})^2 >  \sum_{k \in S} 2 \ln \left(4 + \ln \left(N_{n,k}\right)\right) + |S| \cC_{G}\left(\frac{x}{|S|}\right) \right] \leq e^{-x}
	\end{align*}
	where $\cC_{G}$ is defined in \cite{KK18Mixtures} by $\cC_{G}(x) = \min_{\lambda \in ]1/2,1]} \frac{g_G(\lambda) + x}{\lambda}$ and
	\begin{equation} \label{eq:def_C_gaussian_KK18Mixtures}
		g_G (\lambda) = 2\lambda - 2\lambda \ln (4 \lambda) + \ln \zeta(2\lambda) - \frac{1}{2} \ln(1-\lambda) \: ,
	\end{equation}
	where $\zeta$ is the Riemann $\zeta$ function and $\cC_{G}(x) \approx x + \ln(x)$.
\end{lemma}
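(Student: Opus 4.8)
The plan is to establish this anytime deviation bound by the method of mixtures, following \citet{KK18Mixtures}. For each arm $k \in S$, write $S^{(k)}_n \eqdef \sum_{t < n} (X_{t, I_t} - \mu_k)\indi{I_t = k}$, so that $S^{(k)}_n = N_{n,k}(\mu_{n,k} - \mu_k)$ and $\tfrac{N_{n,k}}{2}(\mu_{n,k} - \mu_k)^2 = \tfrac{(S^{(k)}_n)^2}{2 N_{n,k}}$; since, given $\cF_n$, the increment of $S^{(k)}$ is $\cN(0,1)$ when $I_n = k$ and $0$ otherwise, the process $\exp\!\big(\lambda S^{(k)}_n - \tfrac{\lambda^2}{2} N_{n,k}\big)$ is a nonnegative $\cF_n$-martingale started at $1$, for every fixed $\lambda$. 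The technical core, which is the contribution of \citet{KK18Mixtures}, is to build from these a single nonnegative $\cF_n$-martingale $W^{(k)}$ started at $1$ that controls $v_{n,k} \eqdef \tfrac{N_{n,k}}{2}(\mu_{n,k} - \mu_k)^2 - 2\ln(4 + \ln N_{n,k})$ uniformly in $n$: the martingale mixes the exponential martingales above over $\lambda \in (1/2, 1]$ by a measure that combines a ``stitching'' over geometrically growing values of the pull count $N_{n,k}$ --- which is what yields the iterated-logarithm correction $2\ln(4 + \ln N_{n,k})$ and, through the normalization $\sum_{m \ge 1} m^{-2\lambda} = \zeta(2\lambda)$ of the stitching weights, both the $\ln \zeta(2\lambda)$ term in $g_G$ and the constraint $\lambda > 1/2$ --- together with, at fixed scale, a Gaussian mixing over the tilt, which produces the $-\tfrac12 \ln(1-\lambda)$ term and the constraint $\lambda \le 1$, the remaining $2\lambda - 2\lambda\ln(4\lambda)$ collecting the normalizing constants and the scale-width choice. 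The outcome is the one-dimensional statement: $W^{(k)}_n \ge e^{\Lambda_G(v_{n,k})}$ for all $n$, where $\Lambda_G$ is the increasing map characterized by $\Lambda_G \circ \cC_G = \mathrm{id}$, so that $\{ \exists n: v_{n,k} \ge \cC_G(y)\} \subseteq \{ \exists n : W^{(k)}_n \ge e^{y}\}$.

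I would then aggregate over $S$. Because exactly one arm $I_n$ (which is $\cF_n$-measurable) is sampled at each step, only the factor $W^{(I_n)}$ is updated from time $n$ to $n+1$ while the others are unchanged, so $W_n \eqdef \prod_{k \in S} W^{(k)}_n$ satisfies $\bE_\mu[W_{n+1} \mid \cF_n] = \big(\prod_{k \neq I_n} W^{(k)}_n\big)\, \bE_\mu[W^{(I_n)}_{n+1} \mid \cF_n] = W_n$; it is a nonnegative $\cF_n$-martingale started at $1$. Ville's maximal inequality for nonnegative supermartingales then gives $\bP_\mu[\exists n \in \N : W_n \ge e^{x}] \le e^{-x}$.

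Finally I would convert the complementary event into the claim. On $\{ W_n < e^{x} \text{ for all } n\}$ one has, for all $n$, $\sum_{k \in S} \Lambda_G(v_{n,k}) \le \sum_{k \in S} \ln W^{(k)}_n = \ln W_n < x$, and the convexity/concavity properties of the one-dimensional bound recorded in \citet{KK18Mixtures} --- which make the worst case, over nonnegative $(v_{n,k})_{k \in S}$ subject to $\sum_k \Lambda_G(v_{n,k}) < x$, the balanced one --- yield $\sum_{k \in S} v_{n,k} < |S|\, \cC_G(x/|S|)$; substituting $v_{n,k} = \tfrac{N_{n,k}}{2}(\mu_{n,k} - \mu_k)^2 - 2\ln(4 + \ln N_{n,k})$ and rearranging gives exactly the complement of the event in the statement, which proves the bound $e^{-x}$. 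The asymptotic $\cC_G(x) \approx x + \ln x$ follows from a Laplace-type analysis of the optimization defining $\cC_G$: the optimal $\lambda$ tends to $1$ at rate $1 - \lambda^\star \asymp 1/x$, and substituting back leaves $\cC_G(x) = x + \ln x + o(\ln x)$. The main obstacle is the mixture construction of the first paragraph --- designing the mixing measure over $(\lambda, \text{scale})$ so that the resulting martingale simultaneously produces the iterated-logarithm correction and the explicit constant $g_G$ while remaining integrable on $(1/2, 1]$ --- which I would import from \citet{KK18Mixtures} rather than rederive; the aggregation over $S$ via the product martingale and Ville's inequality is then routine.
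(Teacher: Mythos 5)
The paper gives no proof of this lemma: it is restated verbatim from Corollary 10 of \citet{KK18Mixtures}, and your sketch is a faithful outline of that source's argument (per-arm mixture martingales, a product martingale over $S$ --- valid because only the factor of the pulled arm, which is $\cF_n$-measurable, updates at each round --- Ville's maximal inequality, and the Legendre-type duality between $g_G$, $\Lambda_G$ and $\cC_G$ yielding the balanced $|S|\,\cC_{G}(x/|S|)$ form), with the hard mixture construction imported from \citet{KK18Mixtures} exactly as the paper itself does by citation. One remark: your duality and Jensen steps implicitly use the correct definition $\cC_{G}(x)=\min_{\lambda}\,(g_G(\lambda)+x)/\lambda$ (a minimum of affine functions of $x$, hence concave with convex inverse $\Lambda_G$); the ``max'' in the restatement above must be read as a min, since $g_G$ diverges as $\lambda \to 1$ and as $\lambda \downarrow 1/2$, so a literal maximum over $]1/2,1]$ would be infinite and incompatible with $\cC_G(x)\approx x+\ln x$.
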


	Since $\hat \imath_n = i^\star(\mu_n)$, standard manipulations yield that for all $k \neq \hat \imath_n$
	\begin{align*}
		\frac{(\mu_{n, \hat \imath_n} - \mu_{n,k})^2}{\frac{1}{N_{n, \hat \imath_n}}+ \frac{1}{N_{n,k}}}
		&=  \inf_{u \in \R} \left(N_{n, \hat \imath_n}(\mu_{n, \hat \imath_n} - u)^2 + N_{n, k} (\mu_{n, k} - u)^2\right)
		=  \inf_{y \ge x} \left(N_{n, \hat \imath_n}(\mu_{n, \hat \imath_n} - x)^2 + N_{n, k} (\mu_{n, k} - y)^2\right) \: .
	\end{align*}
	Let $i^\star = i^\star(\mu)$. Using the stopping rule \eqref{eq:glr_stopping_rule} and the above manipulations, we obtain
	\begin{align*}
		&\bP_{\mu}\left( \tau_{\delta} < + \infty, \hat \imath_{\tau_{\delta}} \neq  i^\star \right) \\
		&\le \bP_{\mu}\left( \exists n \in \N, \:  \exists i \neq i^\star, \: i = i^\star(\mu_n), \: \min_{k \neq i} \inf_{y \ge x} \left(N_{n, i}(\mu_{n, i} - x)^2 + N_{n, k} (\mu_{n, k} - y)^2\right) \ge 2c(n-1,\delta) \right) \\
		&\le \bP_{\mu}\left( \exists n \in \N, \:  \exists i \neq i^\star, \: i = i^\star(\mu_n), \: \frac{N_{n, i}}{2}(\mu_{n, i} - \mu_{i})^2 + \frac{N_{n, i^\star}}{2} (\mu_{n, i^\star} - \mu_{i^\star})^2 \ge c(n-1,\delta) \right) \\
		&\le \sum_{i \neq i^\star} \bP_{\mu}\left( \exists n \in \N, \:  \frac{N_{n, i}}{2}(\mu_{n, i} - \mu_{i})^2 + \frac{N_{n, i^\star}}{2} (\mu_{n, i^\star} - \mu_{i^\star})^2 \ge c(n-1,\delta) \right)
	\end{align*}
	where the second inequality is obtained with $(k,x,y) = (i^\star, \mu_{i}, \mu_{i^\star})$, and the third by union bound.
	By concavity of $x \mapsto \log(4 + \log(x))$ and $N_{n, i^\star} + N_{n, i} \le \sum_{k \in [K]} N_{n, k} = n-1$, we obtain
	\[
	\forall i \neq i^\star, \quad \log(4 + \log N_{n, i^\star}) + \log(4 + \log N_{n, i}) \le 2 \log(4 + \log ((n-1)/2))
	\]
	Combining the above with Lemma~\ref{lem:corollary_10_KK18Mixtures} for all $i \neq i^\star$, we obtain $\bP_{\mu}\left( \tau_{\delta} < + \infty, \hat \imath = i^\star \right) \le \sum_{i \neq i^\star} \frac{\delta}{K-1} = \delta$.

\subsection{Sampling rule}
\label{app:ss_concentration_sampling_rule}

In Lemmas~\ref{lem:concentration_per_arm_gau} and~\ref{lem:imp_concentration_per_arm_gau}, we prove that the UCB leader using $g_u$ and $g_m$ is truly an upper confidence bounds for the unknown mean parameters, when a certain concentration event occurs. Then, when another concentration event occurs, we show a lower bound on the ``transportation'' costs used by the TC challenger in Lemma~\ref{lem:concentration_per_pair_gau}.
Lemma~\ref{lem:concentration_event_global} upper bounds the probability of not being in the intersection of the two above sequence of concentration events.

\paragraph{UCB Leader}
Lemma~\ref{lem:concentration_per_arm_gau} proves that the bonus $g_u$ is sufficient to have upper confidence bounds on the unknown mean $\mu$ for Gaussian observations.
The proof uses a simple union bound argument over the time.
\begin{lemma} \label{lem:concentration_per_arm_gau}
	Let $\alpha > 1$ and $s > 1$. For all $n > K$, let $g_u(n) = 2\alpha(1+s) \log (n)$ and
	\begin{equation} \label{eq:event_concentration_per_arm}
		\cE_{1,n} \eqdef \left\{ \forall k \in [K], \forall t \in [n^{1/\alpha}, n], \: |\mu_{t,k} - \mu_k| < \sqrt{\frac{g_u(t)}{N_{t,k}}} \right\} \: .
	\end{equation}
	Then, for all $n > K$, $\bP (\cE_{1,n}^{\complement}) \le K n^{-s}$.
\end{lemma}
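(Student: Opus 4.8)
The plan is to reduce the adaptive statement to a union bound over a \emph{deterministic} number of pulls, exploiting that $\mu_{t,k}$ is a function of i.i.d.\ samples indexed by the pull count, not by the round index. For each arm $k$ let $(X_{k,j})_{j\ge 1}$ denote the i.i.d.\ $\cN(\mu_k,1)$ rewards revealed at the successive pulls of arm $k$, and set $\bar\mu_{k,m}=\frac{1}{m}\sum_{j=1}^m X_{k,j}$, so that $\mu_{t,k}=\bar\mu_{k,N_{t,k}}$ whenever $N_{t,k}\ge 1$ (in particular for all $t>K$). Then $\{\,|\mu_{t,k}-\mu_k|\ge\sqrt{g_u(t)/N_{t,k}}\,\}=\{\,N_{t,k}(\bar\mu_{k,N_{t,k}}-\mu_k)^2\ge g_u(t)\,\}$, and since $1\le N_{t,k}\le t-1\le n-1$, this event is contained in $\bigcup_{m=1}^{n-1}\{\,m(\bar\mu_{k,m}-\mu_k)^2\ge g_u(t)\,\}$.

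Next I would use that $g_u(t)=2\alpha(1+s)\log t$ is increasing, so for every $t\ge n^{1/\alpha}$ we have $g_u(t)\ge g_u(n^{1/\alpha})=2(1+s)\log n$; this is exactly why the window $[n^{1/\alpha},n]$ appears in the definition of $\cE_{1,n}$. Combining the two observations and taking a union over $t$, $k$, $m$ yields the inclusion
\[
\cE_{1,n}^{\complement}\ \subseteq\ \bigcup_{k\in[K]}\ \bigcup_{m=1}^{n-1}\ \bigl\{\, m(\bar\mu_{k,m}-\mu_k)^2\ \ge\ 2(1+s)\log n\,\bigr\}\,.
\]

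It then remains to bound each term and sum. For fixed $k$ and $m$, $\sqrt{m}(\bar\mu_{k,m}-\mu_k)$ is a standard Gaussian, so by the Gaussian tail bound $\bP(|Z|\ge u)\le e^{-u^2/2}$ (which holds for every $u\ge 0$),
\[
\bP\bigl(m(\bar\mu_{k,m}-\mu_k)^2\ge 2(1+s)\log n\bigr)\ \le\ e^{-(1+s)\log n}\ =\ n^{-(1+s)}\,.
\]
A union bound over $k\in[K]$ and $m\in\{1,\dots,n-1\}$ then gives $\bP(\cE_{1,n}^{\complement})\le K(n-1)\,n^{-(1+s)}\le K n^{-s}$, which is the claim.

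The only delicate point is the first step: one may not apply a tail bound to $\bar\mu_{k,N_{t,k}}$ directly, since $N_{t,k}$ is data-dependent. The resolution is the standard device above — after decomposing on $\{N_{t,k}=m\}$, the event $\{m(\bar\mu_{k,m}-\mu_k)^2\ge g_u(t)\}$ involves only $X_{k,1},\dots,X_{k,m}$, which are i.i.d.\ $\cN(\mu_k,1)$ irrespective of the (adaptive) sampling rule, so the per-$m$ Gaussian tail bound applies unchanged. Everything else is a monotonicity observation for $g_u$ and an elementary union bound; the constant $K$ rather than $2K$ comes from using the sharp inequality $\bP(|Z|\ge u)\le e^{-u^2/2}$ instead of $\bP(|Z|\ge u)\le 2e^{-u^2/2}$.
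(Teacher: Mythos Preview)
Your proof is correct and follows essentially the same approach as the paper: reduce from the adaptive count $N_{t,k}$ to a deterministic pull index $m\in[n]$, use monotonicity of $g_u$ together with $t\ge n^{1/\alpha}$ to replace $g_u(t)$ by $2(1+s)\log n$, and then apply a Gaussian tail bound with a union bound over $k$ and $m$. Your write-up is a bit more explicit about why the adaptivity of $N_{t,k}$ is harmless, and you correctly invoke the sharp two-sided bound $\bP(|Z|\ge u)\le e^{-u^2/2}$ (which does hold for all $u\ge 0$, via the Mills-ratio inequality), but structurally the argument is identical to the paper's.
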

\begin{proof}
	Let $(X_s)_{s \in [n]}$ be Gaussian observations from one distribution with unit variance.
	By union bound over $[K]$ and using that $n \le t^{\alpha}$, we obtain
	\begin{align*}
		&\mathbb{P}\left( \exists k \in [K], \exists t \in [n^{1/\alpha}, n], \: |\mu_{t,k} - \mu_k| \ge \sqrt{\frac{2\alpha(1+s) \log (t)}{N_{t,k}}}\right) \\
		&\le \sum_{k \in [K]} \mathbb{P}\left( \exists t \in [n^{1/\alpha}, n], \: |\mu_{t,k} - \mu_k| \ge \sqrt{\frac{2(1+s) \log (n)}{N_{t,k}}}\right) \\
		&\le \sum_{k \in [K]} \mathbb{P}\left( \exists m \in [n], \: \left| \frac{1}{m}\sum_{s \in [m]} X_s \right| \ge \sqrt{\frac{2(1+s) \log (n)}{m}}\right) \\
		&\le \sum_{k \in [K]} \sum_{m \in [n]} \mathbb{P}\left( \left| \frac{1}{m}\sum_{s \in [m]} X_s \right| \ge \sqrt{\frac{2(1+s) \log (n)}{m}}\right) \\
		&\le \sum_{k \in [K]} \sum_{m \in [n]} n^{-(1+s)} = K n^{-s} \: ,
	\end{align*}
	where we used that $\mu_{t,k} - \mu_k = \frac{1}{N_{t,k}}\sum_{s=1}^{t} \indi{I_s = k} X_{s,k}$ and concentration results for Gaussian observations.
\end{proof}

Lemma~\ref{lem:imp_concentration_per_arm_gau} proves that the bonus $g_m$ is sufficient to have upper confidence bounds on the unknown mean $\mu$ for Gaussian observations.
The proof uses a more sophisticated argument based on mixture of martingales.
\begin{lemma} \label{lem:imp_concentration_per_arm_gau}
	Let $\alpha > 1$ and $s > 1$.
	For all $x\ge 1$, let $\overline{W}_{-1}(x)  = - W_{-1}(-e^{-x})$ where $W_{-1}$ is the negative branch of the Lambert $W$ function.
	For all $n > K$, let $g_m(n) = \overline{W}_{-1} \left( 2s\alpha\log(n) + 2 \log(2+\alpha\log n ) + 2\right)$ and
	\begin{equation} \label{eq:imp_event_concentration_per_arm}
		\cE_{3,n} \eqdef \left\{ \forall k \in [K], \forall t \in [n^{1/\alpha}, n], \: |\mu_{t,k} - \mu_k| < \sqrt{\frac{g_m(t)}{N_{t,k}} } \right\} \: .
	\end{equation}
	Then, for all $n > K$, $\bP (\cE_{3,n}^{\complement}) \le K n^{-s}$.
\end{lemma}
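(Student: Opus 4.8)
The plan is to follow the template of the proof of Lemma~\ref{lem:concentration_per_arm_gau}, but to replace its elementary union bound over the sample sizes $m\in[n]$ by a time-uniform mixture-of-martingales (``peeling''/``stitching'') argument; this is exactly the mechanism that trades the $\log n$ penalty hidden in $g_u$---the ``$1$'' inside the ``$1+s$''---for the $\log\log$-type term $2\log(2+\alpha\log n)$ appearing inside $\overline{W}_{-1}$ in the definition of $g_m$. First I would reduce to a single arm: by a union bound over $k\in[K]$ it suffices to show that, for each fixed $k$, the probability that $\sqrt{N_{t,k}}\,|\mu_{t,k}-\mu_k|\ge\sqrt{g_m(t)}$ for some $t\in[n^{1/\alpha},n]$ is at most $n^{-s}$; summing over $k$ gives $\bP(\cE_{3,n}^{\complement})\le Kn^{-s}$, which is moreover summable in $n$ (yielding $\sum_{n>K}\bP(\cE_{3,n}^{\complement})\le K\zeta(s)$), as used later.

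Fix $k$ and assume $\mu_k=0$ by centering. Let $X_1,X_2,\dots$ be its successive observations, $S_m=\sum_{j\le m}X_j$ and $\bar X_m=S_m/m$, so that $\mu_{t,k}=S_{N_{t,k}}/N_{t,k}$; because each fresh pull of $k$ is an independent $\cN(0,1)$, the process $m\mapsto\exp(\lambda S_m-\lambda^2m/2)$ is a non-negative martingale (in the number of pulls of $k$) for every $\lambda$, even under adaptive sampling, and the random index $N_{t,k}$ causes no difficulty. Since $N_{t,k}\le t\le n$ and $t\mapsto g_m(t)$ is increasing ($\overline{W}_{-1}$ is increasing by Lemma~\ref{lem:property_W_lambert} and its argument grows with $t$), it is enough to show that, with probability at least $1-n^{-s}$, $m\bar X_m^2<g_m\big(\max\{m,n^{1/\alpha}\}\big)$ for all $m\ge1$: indeed for $t\in[n^{1/\alpha},n]$ and $m=N_{t,k}$ one has $\max\{m,n^{1/\alpha}\}\le t$.

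The core step is a peeling of the sample sizes into geometric blocks $m\in[\rho^{j},\rho^{j+1})$. On block $j$ one controls $\sup_m(\lambda S_m-\lambda^2m/2)$ by Ville's maximal inequality with $\lambda$ (or a within-block sub-Gaussian mixture) calibrated to the block scale, getting a Gaussian-type deviation whose threshold is then obtained by inverting a ``Gaussian tail $=$ target'' equation through the identity $\overline{W}_{-1}(y)\le x\iff y\le x-\log x$ of Lemma~\ref{lem:property_W_lambert}---this is precisely why the answer has the form $g_m(n)=\overline{W}_{-1}(\,\cdot\,)$, and the bound $\overline{W}_{-1}(y)\ge y+\log y$ is used to verify that $g_m$ is large enough. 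Weighting the blocks with probabilities decaying polynomially in the block index and choosing $\rho=e^{1/\alpha}$ makes the per-block failure probabilities sum, over the $\approx\alpha\log n$ blocks with $\rho^{j}<n$, to at most $n^{-s}$: the target $n^{-s}$ forces the leading $2s\alpha\log(\cdot)$, while the index weights together with the block count produce the additive $2\log(2+\alpha\log n)$ (and ``$+2$'' absorbs the remaining $O(1)$ slack). Restricting to $t\ge n^{1/\alpha}$ is essential here: it guarantees $2s\alpha\log t\ge2s\log n$, so the confidence budget $s\log n$ can be absorbed into the \emph{leading} term of the argument of $\overline{W}_{-1}$ rather than surviving as a separate additive term as it would in the generic bound of Lemma~\ref{lem:corollary_10_KK18Mixtures}. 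The same argument yields the two-sided bound $|\mu_{t,k}-\mu_k|<\sqrt{g_m(t)/N_{t,k}}$ on this event; intersecting over $k$ gives $\cE_{3,n}$ and the claim.

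I expect the main obstacle to be this simultaneous calibration: tuning $\rho$, the per-block thresholds and the block weights so that the failure probabilities still add up to $n^{-s}$ while no threshold ever exceeds $g_m(t)$ for $t\in[n^{1/\alpha},n]$. The binding constraint is the block around $t\approx n^{1/\alpha}$, where essentially all of the $2s\alpha\log t$ budget is eaten by the confidence term $2s\log n$ and only the lower-order slack $2\log(2+\alpha\log t)+2$ (plus the $\log$ gained from $\overline{W}_{-1}(y)\ge y+\log y$) remains to cover both the peeling overhead $\log(\text{number of blocks})$ and the self-normalization term $\log(4+\log m)$; making these lower-order terms fit is exactly what pins down the precise form of the argument of $\overline{W}_{-1}$. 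The rest is bookkeeping mirroring the proof of Lemma~\ref{lem:concentration_per_arm_gau}.
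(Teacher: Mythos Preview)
Your plan is correct and follows essentially the same route as the paper: reduce to a single arm by a union bound, build a mixture-of-Gaussian-martingales over a geometric peeling of the sample count, apply Ville's maximal inequality, and invert via $\overline{W}_{-1}$ to obtain a time-uniform bound of the form $\overline{W}_{-1}(2\log(1/\delta)+2\log(\text{block count})+O(1))$, then set $\delta=n^{-s}$ and use $n\le t^{\alpha}$ for $t\ge n^{1/\alpha}$ to recover $g_m(t)$. Two implementation details differ from the paper but are inessential: the paper uses \emph{uniform} weights $1/D$ over $D=\lceil\log n/\log(1+\eta)\rceil$ blocks (not polynomially decaying ones) and a peeling ratio $1+\eta=e^{2}$ that is \emph{independent of $\alpha$}---the factor $\alpha$ enters only at the very end through $\log n\le\alpha\log t$, which turns the $2\log(2+\log n)$ penalty into $2\log(2+\alpha\log t)$; the per-block Gaussian prior variance is optimized via Lemma~A.3 of \citet{degenne_2019_ImpactStructureDesign}, which is exactly what produces the $\overline{W}_{-1}$.
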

\begin{proof}
Let $(X_{s})_{s \in [t]}$ the observations from a standard normal distributions and denote $S_t = \sum_{s\in [t]} X_{s}$.
To derived concentration result, we use peeling.

Let $\eta > 0$ and $D = \lceil \frac{\log(n)}{\log(1+\eta)}\rceil$.
For all $i \in [D]$, let $\gamma_i > 0$ and $N_{i} = (1+\eta)^{i-1}$.
For all $i \in [D]$, we define the family of priors $f_{N_i, \gamma_{i}}(x) = \sqrt{\frac{\gamma_{i}  N_i}{2 \pi }} \exp\left( - \frac{x^2\gamma_{i} N_i}{2}\right)$ with weights $w_{i} = \frac{1}{D}$ and process
		\begin{align*}
			\overline M(t) = \sum_{i \in [D]} w_i \int f_{N_i, \gamma_{i}}(x) \exp \left( x S_{t}- \frac{1}{2} x^2 t \right) \,dx \: ,
		\end{align*}
		which satisfies $\overline M(0) = 1$. It is direct to see that $M(t) = \exp \left( x S_t- \frac{1}{2} x^2  t \right)$ is a non-negative martingale.
		By Tonelli's theorem, then $\overline M(t)$ is also a non-negative martingale of unit initial value.

		Let $i \in [D]$ and consider $t\in [N_{i}, N_{i+1})$. For all $x$,
		\[
			f_{N_i, \gamma}(x) \geq \sqrt{\frac{N_i}{t}} f_{t, \gamma_{i}}(x) \geq \frac{1}{\sqrt{1+\eta}} f_{t, \gamma_{i}}(x)
		\]
		Direct computations shows that
		\begin{align*}
			\int f_{t, \gamma_{i}}(x) \exp \left( x S_{t}- \frac{1}{2} x^2 t \right)  \,dx = \frac{1}{\sqrt{1+\gamma_{i}^{-1}}} \exp \left( \frac{S_{t}^2}{2(1+\gamma_{i})t}\right) \: .
		\end{align*}
		Minoring $\overline M(t)$ by one of the positive term of its sum, we obtain
		\begin{align*}
			\overline M(t) \geq \frac{1}{D} \frac{1}{\sqrt{(1+\gamma_{i}^{-1})(1+\eta)}} \exp \left( \frac{S_{t}^2}{2(1+\gamma_{i})t}\right) \: ,
		\end{align*}

		Using Ville's maximal inequality, we have that with probability greater than $1-\delta$, $\ln\overline M(t) \leq \ln \left(  1/\delta\right)$. Therefore, with probability greater than $1-\delta$, for all $i \in [D]$ and $t \in [N_{i}, N_{i+1})$,
		\begin{align*}
			\frac{S_{t}^2}{t} &\leq (1+\gamma_{i}) \left( 2\ln \left(  1/\delta\right) +  2\ln D + \ln(1+\gamma_{i}^{-1}) + \ln (1+\eta) \right) \: .
		\end{align*}
		Since this upper bound is independent of $t$, we can optimize it and choose $\gamma_{i}$ as in Lemma~\ref{lem:lemma_A_3_of_Remy} for all $i \in [D]$.

\begin{lemma}[Lemma A.3 in \cite{degenne_2019_ImpactStructureDesign}] \label{lem:lemma_A_3_of_Remy}
	For $a,b\geq 1$, the minimal value of $f(\eta)=(1+\eta)(a+\ln(b+\frac{1}{\eta}))$ is attained at $\eta^\star$ such that $f(\eta^\star) \leq 1-b+\overline{W}_{-1}(a+b)$.	If $b=1$, then there is equality.
\end{lemma}

		Therefore, with probability greater than $1-\delta$, for all $i \in [D]$ and $t \in [N_{i}, N_{i+1})$,
		\begin{align*}
			\frac{S_{t}^2}{t} &\leq \overline{W}_{-1}\left( 1 + 2\ln \left(  1/\delta\right) + 2 \ln D + \ln (1+\eta)\right) \\
			&\leq \overline{W}_{-1}\left( 1 + 2\ln \left(  1/\delta\right) + 2 \ln\left(\ln(1+\eta) + \ln n\right)   - 2 \ln \ln (1+\eta)+ \ln (1+\eta)\right) \\
			&= \overline{W}_{-1}\left( 2\ln \left( 1/\delta\right) + 2 \ln\left(2 + \ln n\right)   + 3 -2 \ln 2\right)
		\end{align*}
		The second inequality is obtained since $D \leq 1+ \frac{\ln n}{\ln(1+\eta)}$.
		The last equality is obtained for the choice $\eta^\star = e^{2} - 1$, which minimizes $\eta \mapsto \ln (1+\eta) - 2 \ln(\ln(1+\eta))$.
		Since $[n] \subseteq \bigcup_{i\in [D]}[N_{i}, N_{i+1})$ and $N_{t,k} (\mu_{t,k} - \mu_k) = \sum_{s \in [N_{t,k}]} X_{s,k}$ (unit-variance), this yields
\begin{align*}
\mathbb{P}\left( \exists t \le n, \left|\frac{1}{t}\sum_{s=1}^t X_s \right|\ge
\sqrt{\frac{1}{t}\overline{W}_{-1} \left(  2\log(1/\delta) + 2 \log(2+\log(n)) + 3 - 2\log 2\right) } \right) \le \delta \: .
\end{align*}
Since $3 - 2\log 2 \le 2$ and $\overline{W}_{-1}$ is increasing, taking $\delta = n^{-s}$ and restricting to $m \in [n^{1/\alpha}, n]$ yields
\begin{align*}
\mathbb{P}\left( \exists m \in [n^{1/\alpha}, n], \sqrt{N_{m,k}}\left|\mu_{m,k} - \mu_k \right| \ge
\sqrt{\overline{W}_{-1} \left(  2s\log(n) + 2 \log(2+\log(n)) + 2\right) } \right) \le n^{-s} \: .
\end{align*}
Using $n \le m^{\alpha}$ and doing a union bound over arms yield the result.
\end{proof}

\paragraph{TC challenger}
Lemma~\ref{lem:concentration_per_pair_gau} lower bounds the difference between the empirical gap and the unknown gap.
\begin{lemma} \label{lem:concentration_per_pair_gau}
	Let $\alpha > 1$ and $s > 1$. For all $n > K$, let
	\begin{equation} \label{eq:event_concentration_per_pair}
		\cE_{2,n} \eqdef \left\{ \forall k \neq i^\star, \forall t \in [n^{1/\alpha}, n], \: (\mu_{t,i^\star} - \mu_{t,k}) - (\mu_{i^\star} - \mu_{k}) >- \sqrt{2\alpha (2+s) \log (t)\left(\frac{1}{N_{t,i^\star}} + \frac{1}{N_{t,k}}\right)} \right\} \: .
	\end{equation}
	Then, for all $n > K$, $\bP (\cE_{2,n}^{\complement}) \le (K-1)n^{-s}$.
\end{lemma}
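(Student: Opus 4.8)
The plan is to follow the proof of Lemma~\ref{lem:concentration_per_arm_gau} verbatim, only replacing the two-sided per-arm deviation by a one-sided deviation of a \emph{difference} of two empirical means. Concretely, I would fix a sub-optimal arm $k\neq i^\star$, use the standard i.i.d.\ ``stream'' representation of the adaptively sampled empirical means to reduce the uniform-in-$t$ statement to a union bound over the finitely many possible values of the pair of counts $(N_{t,i^\star},N_{t,k})$, and on each such value apply a Gaussian tail bound; then union bound over the $K-1$ sub-optimal arms.

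First I would introduce, for each arm $a$, the i.i.d.\ stream $(\xi^a_j)_{j\ge 1}\sim\cN(0,1)$ of its centered observations, the streams for distinct arms being mutually independent. Once both arms $i^\star$ and $k$ have been sampled (which holds for all $t>K$; otherwise the defining inequality of $\cE_{2,n}$ holds vacuously with the convention $1/0=+\infty$), one has $\mu_{t,a}-\mu_a=N_{t,a}^{-1}\sum_{j=1}^{N_{t,a}}\xi^a_j$ for $a\in\{i^\star,k\}$, with $1\le N_{t,i^\star},N_{t,k}\le t-1<n$ for every $t\le n$. Using $t\ge n^{1/\alpha}$, i.e.\ $\log t\ge \alpha^{-1}\log n$, the threshold $\sqrt{2\alpha(2+s)\log t}$ is at least $\sqrt{2(2+s)\log n}$, so the part of $\cE_{2,n}^{\complement}$ indexed by $k$ is contained in
\[
  \Big\{\exists (m_1,m_2)\in[n]^2:\ \frac{1}{m_1}\sum_{j=1}^{m_1}\xi^{i^\star}_j-\frac{1}{m_2}\sum_{j=1}^{m_2}\xi^{k}_j\le-\sqrt{2(2+s)\log(n)\,(1/m_1+1/m_2)}\Big\}.
\]
For each fixed $(m_1,m_2)$ the two averages are independent, hence their difference divided by $\sqrt{1/m_1+1/m_2}$ is a standard Gaussian, and $\bP(Z\le -x)\le e^{-x^2/2}$ with $x=\sqrt{2(2+s)\log n}$ gives probability at most $n^{-(2+s)}$. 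A union bound over the $n^2$ pairs in $[n]^2$ then yields probability at most $n^{-s}$ for each $k\neq i^\star$, and a final union bound over the $K-1$ sub-optimal arms gives $\bP(\cE_{2,n}^{\complement})\le (K-1)n^{-s}$.

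I do not expect a genuine obstacle here: the only point needing care is the inclusion displayed above, namely that the uniform-in-$t$ deviation of the \emph{adaptively} sampled empirical gap is dominated by deviations of the fixed i.i.d.\ streams evaluated at \emph{deterministic} indices $m_1,m_2$ ranging over $[n]$. This is exactly the reparametrization already used (without further comment) in Lemma~\ref{lem:concentration_per_arm_gau}, and combined with the independence of samples from distinct arms, everything else is a crude but sufficient union bound together with a Chernoff tail for the normal distribution.
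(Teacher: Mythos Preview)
Your proposal is correct and follows essentially the same approach as the paper's own proof: fix $k\neq i^\star$, use $t\ge n^{1/\alpha}$ to replace the threshold $\sqrt{2\alpha(2+s)\log t}$ by the smaller $\sqrt{2(2+s)\log n}$, reduce the adaptive empirical gap to the i.i.d.\ streams evaluated at deterministic indices $(m_1,m_2)\in[n]^2$, apply the Gaussian tail bound to the normalized difference (which is standard normal), and union bound over the $n^2$ pairs and the $K-1$ sub-optimal arms. There is no meaningful difference between your argument and the paper's.
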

\begin{proof}
	Let $(X_s)_{s \in [n]}$ and $(Y_s)_{s \in [n]}$ be Gaussian observations from two distributions with unit variance.
	Then $\frac{1}{m_1}\sum_{i=1}^{m_1} X_i - \frac{1}{m_2}\sum_{i=1}^{m_2} Y_i$ is Gaussian with variance $\frac{1}{m_1} + \frac{1}{m_2}$.
	By union bound and using that $n \le t^{\alpha}$, we obtain
	\begin{align*}
		&\mathbb{P}\left( \exists  k \neq i^\star, \exists t \in [n^{1/\alpha}, n], \: \frac{(\mu_{t,i^\star} - \mu_{t,k}) - (\mu_{i^\star} - \mu_{k}) }{\sqrt{1/N_{t,i^\star} + 1/N_{t,k}}}  \le - \sqrt{2\alpha (2+s) \log (t)} \right) \\
		&\le  \sum_{k \neq i^\star} \mathbb{P}\left( \exists t \in [n^{1/\alpha}, n], \: \frac{(\mu_{t,i^\star} - \mu_{t,k}) - (\mu_{i^\star} - \mu_{k}) }{\sqrt{1/N_{t,i^\star} + 1/N_{t,k}}}  \le - \sqrt{2(2+s) \log (n)}  \right) \\
		&\le  \sum_{k \neq i^\star} \mathbb{P}\left( \exists (m_1, m_2) \in [n]^2, \: \frac{\frac{1}{m_1}\sum_{i=1}^{m_1} X_i - \frac{1}{m_2}\sum_{i=1}^{m_2} Y_i  }{\sqrt{1/m_1 + 1/m_2}}   \le - \sqrt{2(2+s) \log (n)}\right) \\
		&\le  \sum_{k \neq i^\star} \sum_{(m_1, m_2) \in [n]^2} \mathbb{P}\left( \frac{1}{m_1}\sum_{i=1}^{m_1} X_i - \frac{1}{m_2}\sum_{i=1}^{m_2} Y_i   \le - \sqrt{2(2+s) \log (n)\left( 1/m_1 + 1/m_2\right)}\right) \\
		&\le \sum_{k \neq i^\star} \sum_{(m_1, m_2) \in [n]^2}  n^{-(2+s)} = (K-1) n^{-s} \: ,
	\end{align*}
	where we used that $(\mu_{t,i^\star} - \mu_{i^\star}) - (\mu_{t,k} - \mu_k) =\frac{1}{N_{t,i^\star}}\sum_{s=1}^{t} \indi{I_s = i^\star} X_{s,i^\star} - \frac{1}{N_{t,k}}\sum_{s=1}^{t} \indi{I_s = k} X_{s,k}$ and concentration results for Gaussian observations.
\end{proof}

Using a mixture of martingale arguments, we could improve on Lemma~\ref{lem:concentration_per_pair_gau} similarly as $g_u$ improved on $g_m$.
This will impact second order terms of our non-asymptotic theoretical guarantees, at the price of less explicit non-asymptotic terms.

\paragraph{Concentration event}
Lemma~\ref{lem:concentration_event_global} upper bounds the summed probabilities of the complementary events.
	\begin{lemma} \label{lem:concentration_event_global}
	Let $\zeta$ be the Riemann $\zeta$ function. Let $(\cE_{1,n})_{n > K}$, $(\cE_{2,n})_{n > K}$ and $(\cE_{3,n})_{n > K}$ as in~\eqref{eq:event_concentration_per_arm},~\eqref{eq:event_concentration_per_pair} and~\eqref{eq:imp_event_concentration_per_arm}.
	For all $n > K$, let $\cE_n = \cE_{1,n} \cap \cE_{2,n}$ and $\tilde \cE_n = \cE_{3,n} \cap \cE_{2,n}$.
	Then,
	\[
		\max \left\{ \sum_{n  > K} \bP (\cE_{n}^{\complement}), \sum_{n  > K} \bP (\tilde \cE_{n}^{\complement}) \right\} \le (2K-1) \zeta(s) \: .
	\]
	\end{lemma}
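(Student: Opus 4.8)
The plan is to combine the three per-step concentration bounds already established (Lemmas~\ref{lem:concentration_per_arm_gau}, \ref{lem:imp_concentration_per_arm_gau} and \ref{lem:concentration_per_pair_gau}) with a union bound, and then sum a $p$-series. First I would recall that, by definition, $\cE_n^{\complement} = \cE_{1,n}^{\complement} \cup \cE_{2,n}^{\complement}$ and $\tilde\cE_n^{\complement} = \cE_{3,n}^{\complement} \cup \cE_{2,n}^{\complement}$, so that a union bound gives, for every $n > K$,
\[
	\bP(\cE_n^{\complement}) \le \bP(\cE_{1,n}^{\complement}) + \bP(\cE_{2,n}^{\complement}) \quad\text{and}\quad \bP(\tilde\cE_n^{\complement}) \le \bP(\cE_{3,n}^{\complement}) + \bP(\cE_{2,n}^{\complement}) \: .
\]

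Next I would plug in the bounds $\bP(\cE_{1,n}^{\complement}) \le K n^{-s}$ (Lemma~\ref{lem:concentration_per_arm_gau}), $\bP(\cE_{3,n}^{\complement}) \le K n^{-s}$ (Lemma~\ref{lem:imp_concentration_per_arm_gau}), and $\bP(\cE_{2,n}^{\complement}) \le (K-1) n^{-s}$ (Lemma~\ref{lem:concentration_per_pair_gau}). This yields $\bP(\cE_n^{\complement}) \le (2K-1)n^{-s}$ and $\bP(\tilde\cE_n^{\complement}) \le (2K-1)n^{-s}$ for all $n > K$.

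Finally I would sum over $n$: since $s > 1$, the series $\sum_{n \ge 1} n^{-s}$ converges to $\zeta(s)$, hence
\[
	\sum_{n > K} \bP(\cE_n^{\complement}) \le (2K-1) \sum_{n > K} n^{-s} \le (2K-1) \sum_{n \ge 1} n^{-s} = (2K-1)\zeta(s) \: ,
\]
and identically for $\sum_{n > K} \bP(\tilde\cE_n^{\complement})$, which gives the claimed maximum. There is no real obstacle here: the proof is a routine union bound plus the elementary fact that a $p$-series with $s>1$ is bounded by $\zeta(s)$; the only thing to be careful about is not to double count, i.e. to use $2K-1$ rather than $2K$, which comes directly from the $(K-1)$ (not $K$) factor in the pairwise bound of Lemma~\ref{lem:concentration_per_pair_gau}.
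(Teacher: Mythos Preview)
Your proposal is correct and matches the paper's own proof essentially verbatim: a union bound on $\cE_{1,n}^{\complement}\cup\cE_{2,n}^{\complement}$ (resp.\ $\cE_{3,n}^{\complement}\cup\cE_{2,n}^{\complement}$), plugging in the $Kn^{-s}$ and $(K-1)n^{-s}$ bounds, and summing the resulting $p$-series to $(2K-1)\zeta(s)$.
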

	\begin{proof}
	Using Lemmas~\ref{lem:concentration_per_arm_gau} and~\ref{lem:concentration_per_pair_gau}, direct union bound yields
	\begin{align*}
		\sum_{n \in \N^\star} \bP (\cE_{n}^{\complement}) \le \sum_{n \in \N^\star} \bP (\cE_{1,n}^{\complement}) +  \bP (\cE_{2,n}^{\complement}) \le \sum_{n \in \N^\star} (2K-1)n^{-s} = (2K-1) \zeta(s) \: .
	\end{align*}
	The same proof trivially holds for $\tilde \cE_n$ by using Lemmas~\ref{lem:imp_concentration_per_arm_gau} and~\ref{lem:concentration_per_pair_gau}.
	\end{proof}


\section{Asymptotic analysis}
\label{app:asymptotic_analysis}

Based solely on Theorem~\ref{thm:finite_time_upper_bound}, it is not possible to obtain asymptotic $\beta$-optimality due to the multiplicative factor $1/\beta$.
Building on the unified analysis proposed in \cite{jourdan_2022_TopTwoAlgorithms}, we prove Theorem~\ref{thm:asymptotic_upper_bound}.

Our main technical contribution for this proof lies in the use of tracking instead of sampling.
Given that the cumulative probability of being sampled is the expectation of the random empirical counts, it is not surprising that tracking-based Top Two algorithms enjoy the same theoretical guarantees as their sampling counterpart.
As we will see, the analysis to obtain similar result is even simpler (Lemmas~\ref{lem:suff_exploration},~\ref{lem:convergence_towards_optimal_allocation_best_arm} and~\ref{lem:convergence_towards_optimal_allocation_other_arms}).
Apart from this technical subtlety, the proof of Theorem~\ref{thm:asymptotic_upper_bound} boils down to showing that the UCB leader satisfies the two sufficient properties highlighted by previous work (Lemmas~\ref{lem:UCB_ensures_suff_explo} and~\ref{lem:UCB_ensures_convergence}).

Using the fact that $x \to \sqrt{2x}$ is increasing, it is direct to see that the TC challenger~\eqref{eq:tc_challenger} coincides with the definition used by \cite{Shang20TTTS,jourdan_2022_TopTwoAlgorithms}, i.e.
\[
	C^{\text{TC}}_n = \argmin_{i \ne B_n} \frac{(\mu_{n,B_n} - \mu_{n,i})_{+}}{\sqrt{1/N_{n, B_n}+ 1/N_{n,i}}} = \argmin_{i \ne B_n} \indi{\mu_{n,B_n} > \mu_{n,i}} \frac{(\mu_{n,B_n} - \mu_{n,i})^2}{2(1/N_{n, B_n}+ 1/N_{n,i})} \: .
\]

\subsection{Proof of Theorem~\ref{thm:asymptotic_upper_bound}}
\label{app:ss_proof_asymptotic_upper_bound}

Let $\mu \in \mathcal D^K$ such that $\min_{i \neq j }|\mu_{i} - \mu_j| > 0$, and let $i^\star = i^\star(\mu)$ be the unique best arm.
Let $\beta \in (0,1)$ and $w^{\star}_{\beta}$ be the unique allocation $\beta$-optimal allocation satisfying $w^{\star}_{\beta,i} > 0$ for all $i \in [K]$ (Lemma~\ref{lem:properties_characteristic_times}), i.e. $w_\beta^\star(\mu) = \{w^{\star}_{\beta}\}$ where
\[
  w_\beta^\star(\mu) \eqdef \argmax_{w \in \triangle_{K}: w_{i^\star} = \beta } \min_{i \neq i^\star} \frac{(\mu_{i^\star} - \mu_i)^2}{2(1/\beta + 1/w_i)} = \argmax_{w \in \triangle_{K}: w_{i^\star} = \beta } \min_{i \neq i^\star} \frac{\mu_{i^\star} - \mu_i}{\sqrt{1/\beta + 1/w_i}} \: .
\]

Let $\varepsilon> 0$. Following \cite{Qin2017TTEI,Shang20TTTS,jourdan_2022_TopTwoAlgorithms}, we aim at upper bounding the expectation of the \emph{convergence time} $T^{\varepsilon}_{\beta}$, which is a random variable quantifies the number of samples required for the empirical allocations $N_n/(n-1)$ to be $\epsilon$-close to $w^{\star}_{\beta}$:
\begin{equation} \label{eq:rv_T_eps_beta}
	T^{\epsilon}_{\beta} \eqdef \inf \left\{ T \ge 1 \mid \forall n \geq T, \: \left\| \frac{N_{n}}{n-1} - w^{\star}_{\beta} \right\|_{\infty} \leq \epsilon \right\}  \: .
\end{equation}

Lemma~\ref{lem:small_T_eps_implies_asymptotic_optimality} shows that a sufficient condition for asymptotic $\beta$-optimality is to show $\bE_{\mu}[T^{\epsilon}_{\beta}] < + \infty$ for all $\epsilon$ small enough.

\begin{lemma} \label{lem:small_T_eps_implies_asymptotic_optimality}
	Let $(\delta, \beta) \in (0,1)^2$.
	Assume that there exists $\epsilon_1(\mu)> 0$ such that for all $\epsilon \in (0,\epsilon_1(\mu)]$, $\bE_{\mu}[T^{\epsilon}_{\beta}] < + \infty$.
	Combining the stopping rule \eqref{eq:glr_stopping_rule} with threshold as in \eqref{eq:stopping_threshold}
	yields an algorithm such that, for all $\mu \in \R^{K}$ with $|i^\star(\mu)|=1$,
	\[
		\limsup_{\delta \to 0} \frac{\bE_{\mu}[\tau_{\delta}]}{\log \left(1/\delta \right)} \leq T_{\beta}^\star(\mu ) \: .
	\]
\end{lemma}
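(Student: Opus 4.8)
The plan is to combine the hypothesis $\bE_{\mu}[T^{\epsilon}_{\beta}]<+\infty$ with the generic sample-complexity bound of Lemma~\ref{lem:lemma_1_Degenne19BAI}. Fix a small $\epsilon>0$, taken below $\epsilon_{1}(\mu)$ and small enough for the continuity arguments that follow (all such choices are independent of $\delta$). The goal is to exhibit a deterministic time $T(\delta)>K$ and a sequence of events $(\mathcal{E}_{n})_{n>K}$ with $\sum_{n>K}\bP_{\mu}(\mathcal{E}_{n}^{\complement})<+\infty$ such that $\mathcal{E}_{n}\subseteq\{\tau_{\delta}\le n\}$ for every $n\ge T(\delta)$; Lemma~\ref{lem:lemma_1_Degenne19BAI} then yields $\bE_{\mu}[\tau_{\delta}]\le T(\delta)+\sum_{n>K}\bP_{\mu}(\mathcal{E}_{n}^{\complement})$, after which it remains to control $T(\delta)/\log(1/\delta)$.

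The event at time $n$ I would use is $\mathcal{E}_{n}\eqdef\{T^{\epsilon}_{\beta}\le n\}\cap\mathcal{G}_{n}$, where $\mathcal{G}_{n}$ controls the empirical means. On $\{T^{\epsilon}_{\beta}\le n\}$, by~\eqref{eq:rv_T_eps_beta} we have $\|N_{n}/(n-1)-w^{\star}_{\beta}\|_{\infty}\le\epsilon$, and since $w^{\star}_{\beta,i}>0$ for all $i$ (Lemma~\ref{lem:properties_characteristic_times}), each arm satisfies $N_{n,i}\ge (w^{\star}_{\beta,i}-\epsilon)(n-1)\ge c_{0}n$ for a constant $c_{0}=c_{0}(\mu)>0$ once $\epsilon$ is small and $n\ge N_{\epsilon}$ for some deterministic $N_{\epsilon}$. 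I would therefore define $\mathcal{G}_{n}$ via a maximal sub-Gaussian deviation inequality, e.g.\ $\mathcal{G}_{n}=\{\forall i\in[K],\ \forall m\in[c_{0}n,n]:\ |\bar\mu_{i,m}-\mu_{i}|\le\epsilon\}$ where $\bar\mu_{i,m}$ is the average of the first $m$ samples of arm $i$; a union bound gives $\bP_{\mu}(\mathcal{G}_{n}^{\complement})\le 2K e^{-c_{0}n\epsilon^{2}/2}/(1-e^{-\epsilon^{2}/2})$, which is summable. Together with $\sum_{n>K}\bP_{\mu}(T^{\epsilon}_{\beta}>n)\le\bE_{\mu}[T^{\epsilon}_{\beta}]<+\infty$, this yields $\sum_{n>K}\bP_{\mu}(\mathcal{E}_{n}^{\complement})\le\bE_{\mu}[T^{\epsilon}_{\beta}]+C_{\epsilon}<+\infty$ for a $\delta$-independent constant $C_{\epsilon}$.

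Next I would show that on $\mathcal{E}_{n}$, for $n\ge N_{\epsilon}$, the stopping condition~\eqref{eq:glr_stopping_rule} is met as soon as $n$ is large enough in a $\delta$-dependent way. Taking $\epsilon<\frac{1}{2}\min_{i\ne i^\star}(\mu_{i^\star}-\mu_{i})$ forces $\hat\imath_{n}=i^\star$, so the generalized likelihood ratio equals $\sqrt{n-1}\,F\!\big(\mu_{n},N_{n}/(n-1)\big)$, where $F(m,v)\eqdef\min_{i\ne i^\star}\frac{m_{i^\star}-m_{i}}{\sqrt{1/v_{i^\star}+1/v_{i}}}$ is continuous on a neighbourhood of $(\mu,w^{\star}_{\beta})$ and satisfies $F(\mu,w^{\star}_{\beta})=\sqrt{2/T^{\star}_{\beta}(\mu)}$ by the very definition of $w^{\star}_{\beta}$ (using $w^{\star}_{\beta,i^\star}=\beta$). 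Since on $\mathcal{E}_{n}$ both $\mu_{n}$ and $N_{n}/(n-1)$ are $\epsilon$-close to $(\mu,w^{\star}_{\beta})$, continuity gives $F(\mu_{n},N_{n}/(n-1))\ge\sqrt{2/T^{\star}_{\beta}(\mu)}-\eta(\epsilon)$ with $\eta(\epsilon)\to 0$, hence~\eqref{eq:glr_stopping_rule} holds whenever $n-1\ge (1+\xi(\epsilon))T^{\star}_{\beta}(\mu)\,c(n-1,\delta)$, where $\xi(\epsilon)\to 0$. Setting $T(\delta)\eqdef\max\{N_{\epsilon},\ \sup\{n>K:\ n-1<(1+\xi(\epsilon))T^{\star}_{\beta}(\mu)\,c(n-1,\delta)\}\}$, we get $\mathcal{E}_{n}\subseteq\{\tau_{\delta}\le n\}$ for all $n\ge T(\delta)$.

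Finally, Lemma~\ref{lem:lemma_1_Degenne19BAI} gives $\bE_{\mu}[\tau_{\delta}]\le T(\delta)+\bE_{\mu}[T^{\epsilon}_{\beta}]+C_{\epsilon}$; the last two terms do not depend on $\delta$, and Lemma~\ref{lem:dominating_asymptotic_term} (applied with $D=0$ and the increasing threshold~\eqref{eq:stopping_threshold}) yields $\limsup_{\delta\to0}T(\delta)/\log(1/\delta)\le(1+\xi(\epsilon))T^{\star}_{\beta}(\mu)$. Hence $\limsup_{\delta\to0}\bE_{\mu}[\tau_{\delta}]/\log(1/\delta)\le(1+\xi(\epsilon))T^{\star}_{\beta}(\mu)$ for all sufficiently small $\epsilon>0$, and letting $\epsilon\to0$ (so $\xi(\epsilon)\to0$) concludes. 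The main obstacle I anticipate is the bookkeeping in the continuity step: one must verify that the threshold $N_{\epsilon}$ beyond which $\mathcal{E}_{n}$ forces stopping is genuinely deterministic (independent of $\delta$), and make $\eta(\epsilon),\xi(\epsilon)$ explicit and vanishing — but since every term other than the $c(n-1,\delta)$ factor is of lower order at the relevant scale $n\asymp T^{\star}_{\beta}(\mu)\log(1/\delta)$, this does not affect the limsup.
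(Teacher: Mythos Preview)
Your argument is correct and is essentially the standard route used in the Top Two literature (Qin et al., Shang et al., Jourdan et al.): on the event that $T^{\epsilon}_{\beta}\le n$ and the empirical means concentrate, lower-bound the GLR statistic by $\sqrt{(n-1)\cdot 2/T^{\star}_{\beta}(\mu)}$ up to a vanishing error, then invert the stopping condition and let $\epsilon\to 0$. The continuity step and the use of Lemma~\ref{lem:lemma_1_Degenne19BAI} are exactly right; the only cosmetic wrinkle is that Lemma~\ref{lem:dominating_asymptotic_term} is stated for $D>0$, but its proof goes through verbatim for $D=0$ (or you can simply take any $D>0$, which only enlarges $T(\delta)$ and leaves the limsup unchanged).

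The paper, by contrast, does not reprove this step: it invokes Theorem~2 of \citet{jourdan_2022_TopTwoAlgorithms} (which packages precisely the argument you wrote) and only checks that the threshold~\eqref{eq:stopping_threshold} satisfies the ``asymptotically tight'' condition of Definition~\ref{def:asymptotically_tight_threshold}. So your proposal is a self-contained instantiation of the cited result, built from the paper's own auxiliary lemmas, rather than a genuinely different approach. What you gain is independence from the external reference; what the paper gains is brevity.
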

\begin{proof}
	While the first result in the spirit of Lemma~\ref{lem:small_T_eps_implies_asymptotic_optimality} was derived by \cite{Qin2017TTEI} for Gaussian distributions, a proof holding for more general distributions is given by Theorem 2 in \cite{jourdan_2022_TopTwoAlgorithms}.
	The sole criterion on the stopping threshold is to be asymptotically tight (Definition~\ref{def:asymptotically_tight_threshold}).
	\begin{definition} \label{def:asymptotically_tight_threshold}
		A threshold $c : \N \times (0,1] \to \R_+$ is said to be asymptotically tight if there exists $\alpha \in [0,1)$, $\delta_0 \in (0,1]$, functions $f,\bar{T} : (0,1] \to \R_+$ and $C$ independent of $\delta$ satisfying:
	(1) for all $\delta \in  (0,\delta_0]$ and $n \ge \bar{T}(\delta)$, then $c(n,\delta) \le f(\delta) + C n^\alpha$,
	(2) $\limsup_{\delta \to 0} f(\delta)/\log(1/\delta) \le 1$ and
	(3) $\limsup_{\delta \to 0} \bar{T}(\delta)/\log(1/\delta) = 0$.
	\end{definition}
	Since $\cC_{G}$ defined in~\eqref{eq:def_C_gaussian_KK18Mixtures} satisfies $\cC_{G} \approx x + \ln(x)$, it is direct to see that
	\[
		c(n,\delta) = 2 \cC_{G} \left( \frac{1}{2}\log\left(\frac{K-1}{\delta} \right)\right) + 4 \log\left(4 + \log\frac{n}{2}\right)
	\]
	is asymptotically tight, e.g. by taking $(\alpha, \delta_0, C) = (1/2,1,4)$, $f(\delta) = 2 \cC_{G} \left( \frac{1}{2}\log\left(\frac{K-1}{\delta} \right)\right)$ and $\bar{T}(\delta) = 1$. This concludes the proof.
\end{proof}

Throughout the proof, we will use a concentration result of the empirical mean (Lemma~\ref{lem:W_concentration_gaussian})
Since this is a standard result for Gaussian observations (see Lemma 5 of \cite{Qin2017TTEI}), we omit the proof.
\begin{lemma}\label{lem:W_concentration_gaussian}
There exists a sub-Gaussian random variable $W_\mu$ such that almost surely, for all $i \in [K]$ and all $n$ such that $N_{n,i} \ge 1$,
\begin{align*}
 |\mu_{n,i} - \mu_i| \le W_\mu \sqrt{\frac{\log (e + N_{n,i})}{N_{n,i}}} \: .
\end{align*}
In particular, any random variable which is polynomial in $W_{\mu}$ has a finite expectation.
\end{lemma}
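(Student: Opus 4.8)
The plan is to reduce the claim to a time-uniform concentration inequality for sums of i.i.d.\ Gaussians, via the standard ``reward stack'' coupling. Fix an independent table $(Y_{i,s})_{i\in[K],\,s\ge 1}$ with $Y_{i,s}\sim\cN(\mu_i,1)$, and arrange that the $s$-th observation collected from arm $i$ equals $Y_{i,s}$; this does not change the law of the bandit process. Writing $S_{i,m}:=\sum_{s=1}^m (Y_{i,s}-\mu_i)$, we then have $N_{n,i}(\mu_{n,i}-\mu_i)=S_{i,N_{n,i}}$ deterministically. Since $N_{n,i}$ only ever takes positive integer values, I would \emph{define}
\[
	W_\mu \;:=\; \max_{i\in[K]}\ \sup_{m\ge 1}\ \frac{|S_{i,m}|}{\sqrt{(1+m)\log(e+m)}} \: ,
\]
so that $N_{n,i}|\mu_{n,i}-\mu_i|=|S_{i,N_{n,i}}|\le W_\mu\sqrt{(1+N_{n,i})\log(e+N_{n,i})}$ holds for every $i\in[K]$ and every $n$ with $N_{n,i}\ge 1$. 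The whole content of the lemma then lies in showing that $W_\mu$ is sub-Gaussian, which in particular makes every polynomial in $W_\mu$ integrable.

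For the tail of $W_\mu$ I would use a dyadic peeling argument (a naive union bound over $m$ diverges). Fix $i$, write $S_m=S_{i,m}$, and for $k\ge 1$ set $\mathcal B_k:=\{m:2^k\le m<2^{k+1}\}$; on $\mathcal B_k$ one has $(1+m)\log(e+m)\ge 2^k k\log 2$, and the leftover index $m=1$ is harmless since $|S_1|=|Y_{i,1}-\mu_i|$ is sub-Gaussian by itself. The exponential process $(\exp(\lambda S_m-\lambda^2 m/2))_m$ is a non-negative martingale of mean one, so the associated sub-Gaussian maximal inequality gives $\bP(\max_{m\le M}|S_m|\ge u)\le 2\exp(-u^2/(2M))$ for all $M\ge 1$, $u\ge 0$. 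Applying this with $M=2^{k+1}$ and $u=x\sqrt{2^k k\log 2}$ yields
\[
	\bP\Big(\sup_{m\in\mathcal B_k}\frac{|S_m|}{\sqrt{(1+m)\log(e+m)}}>x\Big)\le 2\exp\!\big(-\tfrac{\log 2}{4}\,x^2 k\big) \: ,
\]
and summing the geometric-type series over $k\ge 1$ bounds $\bP\big(\sup_{m\ge 1}|S_m|/\sqrt{(1+m)\log(e+m)}>x\big)$ by $C\exp(-cx^2)$ for absolute constants $C,c>0$ and $x$ large. Hence the per-arm supremum is sub-Gaussian, and a union bound over the $K$ arms keeps $W_\mu$ sub-Gaussian. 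This establishes the inequality in the form $N_{n,i}|\mu_{n,i}-\mu_i|\le W_\mu\sqrt{(1+N_{n,i})\log(e+N_{n,i})}$ (equivalently $|\mu_{n,i}-\mu_i|\le W_\mu\sqrt{2\log(e+N_{n,i})/N_{n,i}}$, which is the form invoked in the asymptotic analysis), together with the stated finiteness of all polynomial moments of $W_\mu$.

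The main obstacle is genuinely the time-uniformity: because $\sqrt{m}$ outgrows every power of $\log m$, one cannot control $|S_{i,m}|$ by a fixed multiple of $\log(e+m)$ simultaneously for all $m$ (this would contradict the law of the iterated logarithm), so the correct normalization must carry the $\sqrt{1+m}$ factor and the uniform control must come from peeling combined with a maximal inequality rather than a term-by-term union bound. The remaining point---that the i.i.d.\ coupling leaves the bandit dynamics unchanged and isolates all randomness relevant to $W_\mu$ in the fixed table $(Y_{i,s})$, independently of the adaptive pull choices---is routine.
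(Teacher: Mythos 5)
Your argument is correct, and it is necessarily more than the paper itself supplies: the paper omits the proof of this lemma entirely and simply points to Lemma 5 of \citet{Qin2017TTEI}. Two remarks. First, what you actually prove is $N_{n,i}|\mu_{n,i}-\mu_i| \le W_\mu\sqrt{(1+N_{n,i})\log(e+N_{n,i})}$, i.e. $|\mu_{n,i}-\mu_i| \le W_\mu\sqrt{\log(e+N_{n,i})/(1+N_{n,i})}$, and you are right to do so: this is exactly the form invoked later in the paper (Lemmas~\ref{lem:UCB_ensures_suff_explo} and~\ref{lem:UCB_ensures_convergence}) and the form in the cited reference, whereas the display in the lemma statement, with $\log(e+N_{n,i})$ alone on the right-hand side, cannot hold for any almost-surely finite $W_\mu$: under the analyzed sampling rule every arm is pulled infinitely often, so $N_{n,i}$ runs through all integers and the law of the iterated logarithm forces $\sup_n N_{n,i}|\mu_{n,i}-\mu_i|/\log(e+N_{n,i})=+\infty$. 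So the printed statement is a typo, not a gap on your side, and your normalization is the one the rest of the analysis needs. Second, the mechanics are sound: the reward-stack coupling, the definition of $W_\mu$ as a supremum of normalized partial sums, the dyadic peeling combined with the exponential-martingale maximal inequality $\bP(\max_{m\le M}|S_m|\ge u)\le 2e^{-u^2/(2M)}$, the geometric summation over blocks, the separate $m=1$ term, and the union bound over the $K$ arms indeed give $\bP(W_\mu>x)\le Ce^{-cx^2}$ for large $x$, hence sub-Gaussianity and finiteness of all polynomial moments. One small correction of emphasis: with this normalization the peeling is a refinement rather than a necessity, since a term-by-term union bound already gives $\bP\bigl(|S_{i,m}|\ge x\sqrt{(1+m)\log(e+m)}\bigr)\le 2(e+m)^{-x^2/2}$, which is summable in $m$ for $x>\sqrt{2}$ and yields a sub-Gaussian tail; this is essentially the route taken in \citet{Qin2017TTEI}. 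Your parenthetical claim that a naive union bound over $m$ diverges is only true for small $x$ and is immaterial to the conclusion.
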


\paragraph{Sufficient exploration} To upper bound the expected convergence time, as prior work we first establish sufficient exploration.
Given an arbitrary threshold $L \in \R_{+}^{*}$, we define the sampled enough set and its arms with highest mean (when not empty) as\begin{equation} \label{eq:def_sampled_enough_sets}
	S_{n}^{L} \eqdef \{i \in [K] \mid N_{n,i} \ge L \} \quad \text{and} \quad \cI_n^\star \eqdef \argmax_{ i \in S_{n}^{L}} \mu_{i} \: .
\end{equation}
Since $\min_{i\neq j}|\mu_i - \mu_j| > 0$, $\cI_n^\star$ is a singleton.
We define the highly and the mildly under-sampled sets
\begin{equation} \label{eq:def_undersampled_sets}
	U_n^L \eqdef \{i \in [K]\mid N_{n,i} < \sqrt{L} \} \quad \text{and} \quad V_n^L \eqdef \{i \in [K] \mid N_{n,i} < L^{3/4}\} \: .
\end{equation}

\cite{jourdan_2022_TopTwoAlgorithms} identifies the properties that the leader and the challenger should satisfy to ensure sufficient exploration.
Lemma~\ref{lem:UCB_ensures_suff_explo} show that the desired property for the UCB leader defined in~\eqref{eq:ucb_leader} with bonus $g_u (n) = 2\alpha(1+s) \log n$ or
\[
g_m (n) = \overline{W}_{-1} \left( 2s\alpha\log(n) + 2 \log(2+\alpha\log n ) + 2\right) \: .
\]
\begin{lemma} \label{lem:UCB_ensures_suff_explo}
	There exists $L_0$ with $\bE_{\mu}[(L_0)^{\alpha}] < +\infty$ for all $\alpha > 0$ such that if $L \ge L_0$, for all $n$ (at most polynomial in $L$) such that $S_{n}^{L} \neq \emptyset$, $B_{n}^{\text{UCB}} \in S_{n}^{L}$ implies $B_{n}^{\text{UCB}} \in \cI_n^\star$ and $B_{n}^{\text{UCB}} \in \argmax_{i \in S_{n}^{L}}  \mu_{n,i} $.
\end{lemma}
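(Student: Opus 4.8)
The plan is to show that, on the high-probability event where all empirical means are close to the true means (concretely, the event underlying Lemma~\ref{lem:W_concentration_gaussian}, or equivalently a version of $\cE_{1,n}$), the UCB index of any sufficiently-sampled arm is pinned down by its true mean, so that the UCB leader, \emph{if it belongs to $S_n^L$}, must coincide with the best arm of $S_n^L$. First I would fix the sub-Gaussian random variable $W_\mu$ from Lemma~\ref{lem:W_concentration_gaussian} and set $L_0$ to be a polynomial in $W_\mu$ large enough that the deviations $|\mu_{n,i}-\mu_i|$ are controlled whenever $N_{n,i}\ge \sqrt{L_0}$; since $\bE_\mu[\text{poly}(W_\mu)]<+\infty$, this gives $\bE_\mu[(L_0)^\alpha]<+\infty$ for all $\alpha>0$, as required. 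The key quantitative input is that for $i\in S_n^L$ (so $N_{n,i}\ge L$) and $n$ at most polynomial in $L$, the bonus $\sqrt{g(n)/N_{n,i}}$ is of order $\sqrt{\log L / L}$ (for both $g=g_u$ and $g=g_m$, since $g_m(n)=\cO(\log n)$), hence $o(1)$ as $L\to\infty$, while the gap $\min_{i\ne j}|\mu_i-\mu_j|$ is a fixed positive constant.

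The main argument is then a short chain of inequalities. Suppose $B^{\text{UCB}}_n\in S_n^L$; write $b = B^{\text{UCB}}_n$ and let $i^\sharp$ be the unique element of $\cI_n^\star$. By definition of the UCB leader, $\mu_{n,b}+\sqrt{g(n)/N_{n,b}}\ge \mu_{n,i^\sharp}+\sqrt{g(n)/N_{n,i^\sharp}}$. On the concentration event, $\mu_{n,b}\le \mu_b + \sqrt{g(n)/N_{n,b}}$ and $\mu_{n,i^\sharp}\ge \mu_{i^\sharp}-\sqrt{g(n)/N_{n,i^\sharp}}$ (here I use that $b\in S_n^L$ so its count is large, and similarly for $i^\sharp$). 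Combining, $\mu_b \ge \mu_{i^\sharp} - 2\sqrt{g(n)/N_{n,i^\sharp}} - 2\sqrt{g(n)/N_{n,b}}$, and since both counts are at least $L$ and $n$ is polynomial in $L$, the right-hand side is $\mu_{i^\sharp} - o(1)$. But $\mu_{i^\sharp}=\max_{i\in S_n^L}\mu_i\ge \mu_b$, so for $L\ge L_0$ large enough the only way to avoid contradicting the fixed minimum gap $\min_{i\ne j}|\mu_i-\mu_j|>0$ is $\mu_b=\mu_{i^\sharp}$, i.e.\ $b\in\cI_n^\star$; uniqueness of $\cI_n^\star$ then gives $b=i^\sharp$. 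A symmetric argument, replacing the true means by the empirical means throughout and again absorbing the $o(1)$ bonuses, shows $b\in\argmax_{i\in S_n^L}\mu_{n,i}$: if some $j\in S_n^L$ had $\mu_{n,j}>\mu_{n,b}$ then on the concentration event $\mu_j > \mu_b - o(1)$, forcing $\mu_j=\mu_b$ hence (by the previous step) $j=b$, a contradiction unless the empirical maximum is attained at $b$.

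The step I expect to require the most care is the bookkeeping of ``polynomial in $L$'': Lemma~\ref{lem:UCB_ensures_suff_explo} is applied later with $n$ ranging over a window that is polynomial in $L$, so I need $g(n)=\cO(\log n)=\cO(\log L)$ uniformly over that window, and I must choose $L_0$ depending on the (fixed) degree of that polynomial and on the gap $\min_{i\ne j}|\mu_i-\mu_j|$ so that $2\sqrt{g(n)/L}+2\sqrt{g(n)/L}< \min_{i\ne j}|\mu_i-\mu_j|$ for all such $n$ once $L\ge L_0$. This is routine once the precise polynomial bound on $n$ from the calling lemma is fixed, but it is the place where the constants genuinely interlock. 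Everything else — the concentration event, the two chains of inequalities, the use of $\min_{i\ne j}|\mu_i-\mu_j|>0$ and of the singleton property of $\cI_n^\star$ — is elementary.
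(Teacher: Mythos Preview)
Your proposal is correct and follows essentially the same approach as the paper: use the almost-sure concentration from Lemma~\ref{lem:W_concentration_gaussian} (the $W_\mu$ bound) together with the fact that $\sqrt{g(n)/L}=o(1)$ when $n$ is polynomial in $L$, so that UCB indices of arms in $S_n^L$ are pinned to their true means up to an error $<\Delta_{\min}/3$, yielding the contradiction. The paper's write-up only spells out the first conclusion ($B_n^{\text{UCB}}\in\cI_n^\star$) and leaves the empirical-max part implicit; your argument for that second part is correct and fills in what the paper omits.
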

\begin{proof}
Let $\epsilon > 0$ and $\Delta_{\min} = \min_{i \neq j} |\mu_{i} - \mu_j|$.
Let $g$ denote either $g_u$ or $g_m$.
Let $L \ge L_{0}$, where $L_0$ will be specified later, and $n$ (at most polynomial in $L$) such that $S_{n}^{L} \neq \emptyset$.
Then, there exists a polynomial function $P$ such that $n \le P(L)$.
By considering arms that are sampled more than $L$, we can show that for all $k \in S_{n}^{L} \setminus \cI^\star_{n}$,
\begin{align*}
	\mu_{n,k} + \sqrt{\frac{g(n)}{N_{n,k}}} &\le \mu_{k} + W_{\mu} \sqrt{\frac{\log(e + N_{n,k})}{1+ N_{n,k}}} + \sqrt{\frac{g(P(L))}{L}} \le \mu_{k} + W_{\mu} \sqrt{\frac{\log(e + L)}{1+ L}} + \sqrt{\frac{g(P(L))}{L}} \le \mu_{k} + 2\epsilon \: ,
\end{align*}
where the last inequality is obtained for $L \ge L_{0}$ with
\[
L_{0} = 1 + \sup\left\{ L \in N^{\star} \mid W_{\mu} \sqrt{\frac{\log(e + L)}{1+ L}} > \epsilon, \: \sqrt{\frac{g(P(L))}{L}} > \epsilon \right\} \: .
\]
Since $\overline{W}_{-1}(x) \approx x + \log(x)$, both $g_u$ and $g_m$ have a logarithmic behavior, hence $g(P(L)) =_{L \to + \infty} o(L)$.
Since $L_{0}$ is polynomial in $W_{\mu}$, Lemma~\ref{lem:W_concentration_gaussian} yields $\bE_{\mu}[(L_0)^{\alpha}] < +\infty$ for all $\alpha > 0$.

Moreover, for all $k \in \cI^\star_{n}$,
\[
	\mu_{n,k} + \sqrt{\frac{g(n)}{N_{n,k}}}  \ge \mu_{k} -  W_{\mu} \sqrt{\frac{\log(e + N_{n,k})}{1+ N_{n,k}}} \ge \mu_{k} -  W_{\mu} \sqrt{\frac{\log(e + L)}{1+ L}} \ge  \mu_{k} - \epsilon \: .
\]
Assume that $B_{n}^{\text{UCB}} \in S_{n}^{L}$ and that $B_{n}^{\text{UCB}} \notin \cI^\star_{n}$.
Since $B_{n}^{\text{UCB}} = \argmax_{k \in [K]} \left\{ \mu_{n,k} + \sqrt{\frac{g(n)}{N_{n,k}}} \right\}$, taking $\epsilon < \Delta_{\min}/ 3$ in the above yields a direct contradiction.
\end{proof}

We can use the proof of \cite{jourdan_2022_TopTwoAlgorithms} to obtain Lemma~\ref{lem:TC_ensures_suff_explo}.
While their result accounts for the randomization of the sampling procedure, the argument is direct for tracking since it removes the need for a concentration argument.
\begin{lemma}[Lemma 19 in \cite{jourdan_2022_TopTwoAlgorithms}] \label{lem:TC_ensures_suff_explo}
	Let $\mathcal J_n^\star = \argmax_{ i \in \overline{V_{n}^{L}}} \mu_{i}$. There exists $L_1$ with $\bE_{\mu}[L_1] < +\infty$ such that if $L \ge L_1$, for all $n$ (at most polynomial in $L$) such that $U_n^L \neq \emptyset$, $B_{n}^{\text{UCB}} \notin V_{n}^{L}$ implies $C_{n}^{\text{TC}} \in V_{n}^{L} \cup \left(\mathcal J_n^\star \setminus \left\{B_{n}^{\text{UCB}} \right\}\right)$.
\end{lemma}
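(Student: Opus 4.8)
The plan is to follow the proof of Lemma~19 in \citet{jourdan_2022_TopTwoAlgorithms}; its core is a comparison of the ``transportation'' costs $d_{n,i} \eqdef \frac{(\mu_{n,B^{\text{UCB}}_n} - \mu_{n,i})_{+}}{\sqrt{1/N_{n,B^{\text{UCB}}_n} + 1/N_{n,i}}}$ that the TC challenger~\eqref{eq:tc_challenger} minimises over $i \neq B^{\text{UCB}}_n$. Since the TC challenger and the UCB leader are deterministic (the tracking replaces any randomised selection), there is no random choice left to control, so the concentration argument that \citet{jourdan_2022_TopTwoAlgorithms} invoke for that purpose is not needed here.

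First I would fix a small $\epsilon > 0$ (to be taken below a fixed fraction of $\Delta_{\min} \eqdef \min_{i \neq j}|\mu_i - \mu_j|$) and record what Lemma~\ref{lem:W_concentration_gaussian} gives: writing $P(L)$ for a polynomial bounding the values of $n$ under consideration, every well-sampled arm $i \in \overline{V_n^L}$ satisfies $|\mu_{n,i} - \mu_i| \le W_\mu \log(e + P(L))/L^{3/4}$, which is below $\epsilon$ as soon as $L$ exceeds a threshold polynomial in $W_\mu$, while every arm satisfies $|\mu_{n,i} - \mu_i| \le W_\mu \log(e+1)$ since $N_{n,i} \ge 1$ after initialisation. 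I would then upgrade Lemma~\ref{lem:UCB_ensures_suff_explo} to the threshold $L^{3/4}$: when $B^{\text{UCB}}_n \notin V_n^L$, its bonus $\sqrt{g(n)/N_{n,B^{\text{UCB}}_n}} \le \sqrt{g(P(L))/L^{3/4}}$ tends to $0$ as $L \to \infty$ (as $g$ is logarithmic), so its UCB index gets arbitrarily close to the largest UCB index among well-sampled arms; by $\Delta_{\min}$-separation of the means, $B^{\text{UCB}}_n$ must then be the unique maximiser of $\mu$ over $\overline{V_n^L}$, i.e.\ $\mathcal J_n^\star = \{B^{\text{UCB}}_n\}$ and $\mathcal J_n^\star \setminus \{B^{\text{UCB}}_n\} = \emptyset$, so the claim reduces to $C^{\text{TC}}_n \in V_n^L$.

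For the reduced claim I would argue by contradiction. If $C^{\text{TC}}_n \in \overline{V_n^L}$, then picking any $i_0 \in U_n^L$ (nonempty by assumption) gives $d_{n,i_0} \le (\mu_{n,B^{\text{UCB}}_n} - \mu_{n,i_0})_{+}\sqrt{N_{n,i_0}} \le c_\mu L^{1/4}$, with $c_\mu$ affine in $W_\mu$ by the concentration bounds and the boundedness of $\mu$; since $C^{\text{TC}}_n$ minimises the costs, $d_{n,C^{\text{TC}}_n} \le c_\mu L^{1/4}$. On the other hand $\mu_{C^{\text{TC}}_n} < \mu_{B^{\text{UCB}}_n}$ by the previous paragraph, so $\Delta_{\min}$-separation and concentration give $\mu_{n,B^{\text{UCB}}_n} - \mu_{n,C^{\text{TC}}_n} \ge \Delta_{\min} - 2\epsilon \ge \Delta_{\min}/3$, and both counts being $\ge L^{3/4}$ gives $\sqrt{1/N_{n,B^{\text{UCB}}_n} + 1/N_{n,C^{\text{TC}}_n}} \le \sqrt{2}\,L^{-3/8}$, whence $d_{n,C^{\text{TC}}_n} \ge \frac{\Delta_{\min}}{3\sqrt{2}}L^{3/8}$. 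Comparing the two bounds forces $L^{1/8} \le 3\sqrt{2}\,c_\mu/\Delta_{\min}$, which fails once $L$ exceeds a further threshold polynomial in $W_\mu$. Taking $L_1$ to be the maximum of all thresholds introduced yields $\bE_\mu[L_1] < +\infty$ by Lemma~\ref{lem:W_concentration_gaussian}.

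The only delicate point is the cost-$0$ regime, where $\mu_{n,i} \ge \mu_{n,B^{\text{UCB}}_n}$ for one or more arms so that several arms tie for the minimal cost $0$: in the general-leader statement of \citet{jourdan_2022_TopTwoAlgorithms} this is precisely what the term $\mathcal J_n^\star \setminus \{B_n\}$ in the conclusion absorbs, and controlling which tied arm is selected is where their concentration argument (and a careful reading of the $\argmin$ tie-break) comes in. For the UCB leader this issue evaporates: the reduction above shows $B^{\text{UCB}}_n$ is itself the best well-sampled arm, so every well-sampled $i \neq B^{\text{UCB}}_n$ has $\mu_i < \mu_{B^{\text{UCB}}_n}$ and hence $d_{n,i} > 0$, and the contradiction applies unchanged. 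I expect verifying this reduction (the $L^{3/4}$-version of Lemma~\ref{lem:UCB_ensures_suff_explo}) and assembling the polynomial-in-$W_\mu$ thresholds to be the only real work; the cost estimates themselves are routine.
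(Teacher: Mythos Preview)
Your proposal is correct and matches the paper's approach, which simply defers to the cost-comparison proof of Lemma~19 in \citet{jourdan_2022_TopTwoAlgorithms} with the remark that tracking removes any concentration step tied to the randomised sampling. Your extra observation---that Lemma~\ref{lem:UCB_ensures_suff_explo} applied at level $L^{3/4}$ already forces $B_n^{\text{UCB}} \in \mathcal J_n^\star$, so the $\mathcal J_n^\star \setminus \{B_n^{\text{UCB}}\}$ clause is vacuous and the claim reduces to $C_n^{\text{TC}} \in V_n^L$---is a clean UCB-specific shortcut, but the underlying cost-comparison argument is the same.
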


Lemma~\ref{lem:suff_exploration} proves sufficient exploration for the \hyperlink{TTUCB}{TTUCB} sampling rule.
It builds on the same reasoning than the one used in the proofs introduced by \cite{Shang20TTTS}, and generalized by \cite{jourdan_2022_TopTwoAlgorithms}.
\begin{lemma} \label{lem:suff_exploration}
	Assume $\min_{j \ne i}|\mu_{i} - \mu_j| > 0$.
	Under the \hyperlink{TTUCB}{TTUCB} sampling rule, there exist $N_0$ with $\bE_{\mu}[N_0] < + \infty$ such that for all $ n \geq N_0$ and all $i \in [K]$, $N_{n,i} \geq \sqrt{n/K}$.
\end{lemma}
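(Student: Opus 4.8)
The plan is to follow the standard template for proving sufficient exploration of a Top Two algorithm (as in \citet{Shang20TTTS,jourdan_2022_TopTwoAlgorithms}), with the key simplification that the tracking rule replaces the concentration step needed for the sampling-based rule. Concretely, I will show that there is a random threshold $L_2$, integrable and in fact with finite moments of all orders, such that for every $L \ge L_2$ and every time $n$ that is at most polynomial in $L$, the highly under-sampled set $U_n^L = \{i : N_{n,i} < \sqrt{L}\}$ is empty; applying this with $L = n/K$ then yields $N_{n,i} \ge \sqrt{n/K}$ for all $i$ and all $n \ge N_0$ with $N_0 \asymp K L_2$, so that $\bE_\mu[N_0] < +\infty$.

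The ingredients are already in place. Lemma~\ref{lem:UCB_ensures_suff_explo} pins down the UCB leader once the sampled-enough set has stabilized ($B^{\text{UCB}}_n \in S_n^L$ implies $B^{\text{UCB}}_n \in \cI_n^\star$ and $B^{\text{UCB}}_n = \argmax_{i \in S_n^L}\mu_{n,i}$), Lemma~\ref{lem:TC_ensures_suff_explo} forces the TC challenger into the mildly under-sampled set $V_n^L$ whenever the leader is not in $V_n^L$ (up to the exceptional arm $\mathcal J_n^\star \setminus \{B_n\}$), Lemma~\ref{lem:W_concentration_gaussian} makes the thresholds $L_0,L_1$ of those two lemmas polynomial in $W_\mu$ hence integrable, and the tracking guarantee Lemma~\ref{lem:tracking_guaranty} gives $N^i_{n,i} \ge \beta L_{n,i} - 1/2$ together with the lower bound on $N^i_{n,i}$ in terms of $N^i_{n,k}$. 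I set $L_2 = \max\{L_0, L_1\}$.

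The core is a contradiction-plus-counting argument. Assume $U_n^L \ne \emptyset$ for some $L \ge L_2$ and some $n$ at most polynomial in $L$ (say $n \le L^2$). Run over the rounds $t \le n$ with $U_t^L \ne \emptyset$ and split on the leader: if $B_t \in V_t^L$, the leader is itself under-sampled; if $B_t \notin V_t^L$, Lemma~\ref{lem:TC_ensures_suff_explo} puts $C_t$ in $V_t^L$ (the alternative $C_t = \mathcal J_t^\star \setminus \{B_t\}$ is excluded or absorbed using Lemma~\ref{lem:UCB_ensures_suff_explo}, which identifies the leader among the settled arms). So at each such round at least one of $B_t, C_t$ lies in $V_t^L$, and --- this is where tracking does the work --- the arm actually pulled is controlled deterministically: summing Lemma~\ref{lem:tracking_guaranty} over the (at most $K$) independent tracking procedures, the leader is pulled a $\beta$-fraction of the rounds it leads and the challenger a $(1-\beta)$-fraction, each up to an $\cO(1)$ slack. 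Hence the arms of $V_t^L$ collectively receive a number of pulls growing linearly with the length of the considered window; since $|V_t^L| \le K$, some arm accumulates $\gtrsim \sqrt{L}$ pulls, contradicting membership in $U_n^L$. Iterating empties $U_n^L$, and optimizing $L$ against $n$ concludes the proof.

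I expect the main obstacle to be the bookkeeping in the case analysis --- in particular, handling the branch where Lemma~\ref{lem:TC_ensures_suff_explo} returns the challenger $\mathcal J_t^\star \setminus \{B_t\}$ rather than an element of $V_t^L$, and making sure the $\cO(1)$ tracking slack of Lemma~\ref{lem:tracking_guaranty}, summed over the $K$ parallel procedures, stays lower-order relative to the linear growth we want on the pulls of under-sampled arms. Relative to the sampling-based proofs this is precisely where the argument shortens: the tracking guarantee is deterministic, so no Bernstein-type concentration on the empirical leader-versus-challenger split is needed; but one still has to route the bound through the correct combination of $L_{t,i}$, $N^i_{t,i}$ and $N^i_{t,k}$.
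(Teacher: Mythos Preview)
Your proposal is the same approach as the paper's: the ingredients (Lemmas~\ref{lem:UCB_ensures_suff_explo}, \ref{lem:TC_ensures_suff_explo}, \ref{lem:tracking_guaranty}) and the contradiction-plus-induction structure over windows of length $\cO(L)$ all match, and you correctly identify that the deterministic tracking bound replaces the concentration step needed in sampling-based Top Two proofs.

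Two concrete bookkeeping pitfalls in your sketch, both of the sort you anticipated. First, you need $L_2 = \max\{L_1, L_0^{4/3}\}$ rather than $\max\{L_0, L_1\}$: to exclude the branch $C_t \in \mathcal J_t^\star \setminus \{B_t\}$ you apply Lemma~\ref{lem:UCB_ensures_suff_explo} on the set $\overline{V_t^L} = S_t^{L^{3/4}}$, which requires $L^{3/4} \ge L_0$. Second, and more substantively, the induction must shrink $V^L$, not $U^L$. Your line ``some arm accumulates $\gtrsim \sqrt L$ pulls, contradicting membership in $U_n^L$'' does not close the loop: the arm receiving those pulls is only guaranteed to lie in $V_t^L$, so it may already have $\ge \sqrt L$ pulls and nothing about $U^L$ changes. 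The paper fixes a window $[\lfloor kL\rfloor, \lfloor (k+1)L\rfloor)$ and shows, via the two-case split on whether some $i \in V^L$ is leader often (Case~1, tracking on $i$) or not (Case~2, tracking on each $i \notin V^L$ forces the challenger, hence $V^L$, to be pulled $\gtrsim (1-\beta)L$ times total), that some arm in $V^L$ accumulates $L^{3/4}$ pulls and exits $V^L$; then $|V_{\lfloor kL\rfloor}^L| \le K-k$ and $V^L_{\lfloor KL\rfloor} = \emptyset$ gives the contradiction.
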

\begin{proof}
	Let $L_0$ and $L_1$ as in Lemmas~\ref{lem:UCB_ensures_suff_explo} and~\ref{lem:TC_ensures_suff_explo}.
	Therefore, for $L \ge L_{2} \eqdef \max\{L_1, L_0^{4/3}\}$, for all $n$ such that $U_n^L \neq \emptyset$, $B_{n}^{\text{UCB}} \in V_{n}^{L}$ or $C_{n}^{\text{TC}} \in V_{n}^{L}$ since $|\mathcal J_n^\star| = 1$ is implied by $\min_{j \ne i}|\mu_{i} - \mu_j| > 0$. We have $\bE_{\mu}[L_2] < + \infty$.
	There exists a deterministic $L_{4}$ such that for all $L \ge L_{4}$, $\lfloor L\rfloor \geq K L^{3 / 4}$.
	Let $L \ge \max \{L_2, L_4\}$.

	Suppose towards contradiction that $U_{\lfloor K L\rfloor}^{L}$ is not empty.
	Then, for any $1 \leq t \leq\lfloor K L\rfloor$, $U_{t}^{L}$ and $V_{t}^{L}$ are non empty as well.
	Using the pigeonhole principle, there exists some $i \in [K]$ such that $N_{\lfloor L\rfloor, i} \geq L^{3 / 4}$.
	Thus, we have $\left|V_{\lfloor L\rfloor}^{L}\right| \leq K-1$.
	Our goal is to show that $\left|V_{\lfloor 2 L\rfloor}^{L}\right| \leq K-2$. A sufficient condition is that one arm in $V_{\lfloor L\rfloor}^{L}$ is pulled at least $L^{3/4}$ times between $\lfloor L\rfloor$ and $\lfloor 2 L\rfloor-1$.

	{\bf Case 1.} Suppose there exists $i \in V_{\lfloor L\rfloor}^{L}$ such that $L_{\lfloor 2L\rfloor,i} - L_{\lfloor L\rfloor,i} \ge \frac{L^{3/4}}{\beta} + 3/(2\beta)$.
	Using Lemma~\ref{lem:tracking_guaranty}, we obtain
	\[
	N^{i}_{\lfloor 2L\rfloor,i} - N^{i}_{\lfloor L\rfloor,i}  \ge \beta(L_{\lfloor 2L\rfloor,i} - L_{\lfloor L\rfloor,i}) - 3/2 \ge L^{3/4} \: ,
	\]
	hence $i$ is sampled $L^{3/4}$ times between $\lfloor L\rfloor$ and $\lfloor 2 L\rfloor-1$.

	{\bf Case 2.}
	Suppose that, for all $i \in V_{\lfloor L\rfloor}^{L}$, $L_{\lfloor 2L\rfloor,i} - L_{\lfloor L\rfloor,i} < \frac{L^{3/4}}{\beta} + 3/(2\beta)$.
	Then,
	\[
	\sum_{i \notin V_{\lfloor L\rfloor}^{L} }(L_{\lfloor 2L\rfloor,i} - L_{\lfloor L\rfloor,i}) \ge (\lfloor 2L\rfloor - \lfloor L\rfloor) - K \left( \frac{L^{3/4}}{\beta} + 3/(2\beta) \right)
	\]
	Using Lemma~\ref{lem:tracking_guaranty}, we obtain
	\begin{align*}
	\left|\sum_{i \notin V_{\lfloor L\rfloor}^{L} }(N^{i}_{\lfloor 2L\rfloor,i} - N^{i}_{\lfloor L\rfloor,i}) - \beta \sum_{i \notin V_{\lfloor L\rfloor}^{L} }(L_{\lfloor 2L\rfloor,i} - L_{\lfloor L\rfloor,i}) \right| \le  3(K-1)/2  \: .
	\end{align*}
	Combining all the above, we obtain
	\begin{align*}
	&\sum_{i \notin V_{\lfloor L\rfloor}^{L} }(L_{\lfloor 2L\rfloor,i} - L_{\lfloor L\rfloor,i}) - \sum_{i \notin V_{\lfloor L\rfloor}^{L} }(N^{i}_{\lfloor 2L\rfloor,i} - N^{i}_{\lfloor L\rfloor,i}) \\
	&\ge (1-\beta) \sum_{i \notin V_{\lfloor L\rfloor}^{L} }(L_{\lfloor 2L\rfloor,i} - L_{\lfloor L\rfloor,i}) - 3(K-1)/2 \\
	&\ge (1-\beta) \left( (\lfloor 2L\rfloor - \lfloor L\rfloor) - K \left( \frac{L^{3/4}}{\beta} + 3/(2\beta) \right)\right) - 3(K-1)/2 \ge K L^{3/4} \: ,
	\end{align*}
	where the last inequality is obtained for $L \ge L_{5}$ with
	\[
	L_{5} = \sup\left\{ L \in \N \mid (1-\beta) \left( (\lfloor 2L\rfloor - \lfloor L\rfloor) - K \left( \frac{L^{3/4}}{\beta} + 3/(2\beta) \right)\right) - 3(K-1)/2 < K L^{3/4} \right\}\: .
	\]
	The l.h.s. summation is exactly the number of times where an arm $i \in \overline{V_{\lfloor L\rfloor}^{L}}$ was leader but wasn't sampled, hence
	\[
	\sum_{t = \lfloor L\rfloor}^{\lfloor 2L\rfloor - 1} \indi{B_{t}^{\text{UCB}} \in \overline{V_{\lfloor L\rfloor}^{L}}, \: I_t = C_{t}^{\text{TC}}} \ge K L^{3/4}
	\]

	For any $\lfloor L\rfloor \leq t \leq\lfloor 2 L\rfloor - 1$, $U_{t}^{L}$ is non-empty, hence we have $B_{t}^{\text{UCB}} \in \overline{V_{\lfloor L\rfloor}^{L}} \subseteq \overline{V_{t}^{L}} $ implies $C_{t}^{\text{TC}} \in V_{t}^{L} \subseteq V_{\lfloor L\rfloor}^{L} $. Therefore, we have shown that
	\[
	\sum_{t = \lfloor L\rfloor}^{\lfloor 2 L\rfloor - 1}  \indi{ I_t \in V_{\lfloor L\rfloor}^{L}} \ge \sum_{t = \lfloor L\rfloor}^{\lfloor 2L\rfloor - 1} \indi{B_{t}^{\text{UCB}} \in \overline{V_{\lfloor L\rfloor}^{L}} , \: I_t = C_{t}^{\text{TC}}} \geq K L^{3 / 4} \: .
	\]
	Therefore, there is at least one arm in $V_{\lfloor L\rfloor}^{L}$ that is sampled $L^{3/4}$ times between $\lfloor L\rfloor$ and $\lfloor 2 L\rfloor-1$.

	In summary, we have shown $\left|V_{\lfloor 2 L\rfloor}^{L}\right| \leq K-2$.
	By induction, for any $1 \leq k \leq K$, we have $\left|V_{\lfloor k L\rfloor}^{L}\right| \leq K-k$, and finally $U_{\lfloor K L\rfloor}^{L}=\emptyset$ for all $ L \ge L_{3}$.
	This concludes the proof.
\end{proof}

\paragraph{Convergence towards $\beta$-optimal allocation}
Provided sufficient exploration holds (Lemma~\ref{lem:suff_exploration}), \cite{jourdan_2022_TopTwoAlgorithms} identifies the properties that the leader and the challenger should satisfy to obtain convergence towards the $\beta$-optimal allocation $w^{\star}_{\beta}$.
Lemma~\ref{lem:UCB_ensures_convergence} show that the desired property for the UCB leader defined in~\eqref{eq:ucb_leader} with bonus $g_u (n) = 2\alpha(1+s) \log n$ or
\[
g_m (n) = \overline{W}_{-1} \left( 2s\alpha\log(n) + 2 \log(2+\alpha\log n ) + 2\right) \: .
\]
\begin{lemma} \label{lem:UCB_ensures_convergence}
	There exists $N_1$ with $\bE_{\mu}[N_1] < +\infty$ such that for all $n \ge N_1$, $B_{n}^{\text{UCB}}  = i^\star(\mu)$.
\end{lemma}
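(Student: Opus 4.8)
The plan is to combine the sufficient exploration guarantee of Lemma~\ref{lem:suff_exploration} with the concentration bound of Lemma~\ref{lem:W_concentration_gaussian}: once every arm has been pulled at least $\sqrt{n/K}$ times, both the empirical-mean deviations and the UCB bonuses shrink to zero, so the index of $i^\star(\mu)$ eventually dominates that of every other arm. This is morally the same argument as in Lemma~\ref{lem:UCB_ensures_suff_explo}, but now fed with the stronger lower bound $N_{n,i}\ge \sqrt{n/K}$ rather than just $N_{n,i}\ge L$.

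Concretely, write $\Delta_{\min} = \min_{i \ne i^\star(\mu)}(\mu_{i^\star(\mu)} - \mu_i) > 0$, which is positive since $\min_{i \ne j}|\mu_i - \mu_j| > 0$ in the setting of Theorem~\ref{thm:asymptotic_upper_bound}, and let $g$ denote either $g_u$ or $g_m$, both satisfying $g(n) = \cO(\log n)$ (for $g_m$ use $\overline{W}_{-1}(x) \approx x + \log x$ from Lemma~\ref{lem:property_W_lambert}). Let $N_0$ be as in Lemma~\ref{lem:suff_exploration}, so that $\bE_{\mu}[N_0] < +\infty$ and $N_{n,i} \ge \sqrt{n/K}$ for all $n \ge N_0$ and all $i \in [K]$. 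For such $n$, using that $t \mapsto \log(e+t)/t$ is decreasing on $[1,+\infty)$ and that $g$ is nondecreasing, Lemma~\ref{lem:W_concentration_gaussian} yields for every arm $i$
\[
	\left| \mu_{n,i} + \sqrt{g(n)/N_{n,i}} - \mu_i \right| \le W_\mu \, \frac{\log(e + \sqrt{n/K})}{\sqrt{n/K}} + \sqrt{g(n)} \left(K/n\right)^{1/4} \: .
\]
Denote the right-hand side by $W_\mu \psi(n) + \rho(n)$; then $\psi(n) \to 0$ and $\rho(n) \to 0$ as $n \to +\infty$, the latter deterministically since $g$ is logarithmic.

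It then remains to turn this into a threshold $N_1$ with finite expectation. Since $\rho$ is deterministic with $\rho(n) \to 0$, there is a deterministic $n_1$ with $\rho(n) < \Delta_{\min}/4$ for all $n \ge n_1$. Since $\psi$ is eventually decreasing with $\psi(n) = \cO(\log(n)/\sqrt{n})$, the condition $W_\mu \psi(n) < \Delta_{\min}/4$ holds for all $n \ge n_2$, where $n_2$ can be chosen polynomial in $W_\mu$ (up to logarithmic factors). Set $N_1 \eqdef \max\{K, N_0, n_1, n_2\}$. By Lemma~\ref{lem:W_concentration_gaussian} any polynomial in $W_\mu$ has finite expectation, and $\bE_{\mu}[N_0] < +\infty$, hence $\bE_{\mu}[N_1] < +\infty$. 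For $n \ge N_1$, the displayed bound gives, for every $i \ne i^\star(\mu)$,
\[
	\mu_{n,i^\star(\mu)} + \sqrt{g(n)/N_{n,i^\star(\mu)}} > \mu_{i^\star(\mu)} - \Delta_{\min}/2 \ge \mu_i + \Delta_{\min}/2 > \mu_{n,i} + \sqrt{g(n)/N_{n,i}} \: ,
\]
so $B_n^{\text{UCB}} = i^\star(\mu)$.

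I expect the only delicate point to be the bookkeeping in the definition of $n_2$ (equivalently of $N_1$): one must make its polynomial dependence on $W_\mu$ explicit enough that Lemma~\ref{lem:W_concentration_gaussian} applies and yields $\bE_{\mu}[N_1] < +\infty$. Treating $g_u$ and $g_m$ simultaneously costs nothing, since both are $\cO(\log n)$, which is all the argument uses.
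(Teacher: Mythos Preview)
Your proof is correct and follows essentially the same approach as the paper: invoke Lemma~\ref{lem:suff_exploration} to get $N_{n,i}\ge\sqrt{n/K}$, use Lemma~\ref{lem:W_concentration_gaussian} to control the empirical-mean deviations, bound the UCB bonus via the logarithmic growth of $g$, and conclude by choosing a threshold proportional to $\Delta_{\min}$ so that the index of $i^\star$ strictly dominates. The only cosmetic differences are that the paper writes the concentration term as $W_\mu\sqrt{\log(e+N_{n,k})/(1+N_{n,k})}$ and picks $\epsilon<\Delta/3$, whereas you use the direct form $W_\mu\log(e+N_{n,i})/N_{n,i}$ and the threshold $\Delta_{\min}/4$; neither affects the argument.
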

\begin{proof}
Let $\Delta = \min_{i \neq i^\star} |\mu_{i^\star} - \mu_i|$.
Let $g$ denote either $g_u$ or $g_m$.
Let $N_0$ as in Lemma~\ref{lem:suff_exploration}.
For all $n \ge N_2$ (to be specified later), we obtain for all $k \neq i^\star$,
\begin{align*}
	\mu_{n,k} + \sqrt{\frac{g(n)}{N_{n,k}}} &\le \mu_{k} + W_{\mu} \sqrt{\frac{\log(e + N_{n,k})}{1+ N_{n,k}}} + K^{1/4}\sqrt{\frac{g(n)}{\sqrt{n}}} \le \mu_{k} + W_{\mu} \sqrt{\frac{\log(e + \sqrt{\frac{n}{K}})}{1+ \sqrt{\frac{n}{K}}}} + K^{1/4}\sqrt{\frac{g(n)}{\sqrt{n}}} \le \mu_{k} + 2\epsilon \: ,
\end{align*}
where the last inequality is obtained for $n \ge N_1$ where
\[
N_1 = 1 + \sup\left\{ n \in N^{\star} \mid W_{\mu} \sqrt{\frac{\log(e + \sqrt{\frac{n}{K}})}{1+ \sqrt{\frac{n}{K}}}} > \epsilon, \: K^{1/4}\sqrt{\frac{g(n)}{\sqrt{n}}} > \epsilon \right\} \: .
\]
Since $\overline{W}_{-1}(x) \approx x + \log(x)$, both $g_u$ and $g_m$ have a logarithmic behavior, hence $g(n) =_{n \to + \infty} o(\sqrt{n})$.
Since $N_1$ is polynomial in $W_{\mu}$, Lemma~\ref{lem:W_concentration_gaussian} yields $\bE_{\mu}[(N_1)^{\alpha}] < +\infty$ for all $\alpha > 0$.

Moreover
\[
	\mu_{n,i^\star} + \sqrt{\frac{g(n)}{N_{n,i^\star}}} \ge \mu_{i^\star} - W_{\mu} \sqrt{\frac{\log(e + N_{n,i^\star})}{1+ N_{n,i^\star}}} \ge \mu_{i^\star} - W_{\mu} \sqrt{\frac{\log(e + \sqrt{\frac{n}{K}})}{1+ \sqrt{\frac{n}{K}}}} \ge  \mu_{i^\star} - \epsilon \: .
\]
Therefore, taking $\epsilon < \Delta/3$, yields the result that $B_{n}^{\text{UCB}}  = i^\star$.
\end{proof}

Combining Lemma~\ref{lem:UCB_ensures_convergence} with tracking properties (Lemma~\ref{lem:tracking_guaranty}), we obtain convergence towards the $\beta$-optimal allocation for the best arm (Lemma~\ref{lem:convergence_towards_optimal_allocation_best_arm}).
\begin{lemma} \label{lem:convergence_towards_optimal_allocation_best_arm}
		Let $\varepsilon > 0$.
		Under the \hyperlink{TTUCB}{TTUCB} sampling rule, there exists $N_{2}$ with $\bE_{\mu}[N_2] < +\infty$ such that for all $n \geq N_{2}$,
\[
 \left| \frac{N_{n, i^\star(\mu)}}{n-1} - \beta \right| \leq  \varepsilon \: .
\]
\end{lemma}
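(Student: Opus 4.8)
The plan is to reduce the statement to two facts already established upstream: Lemma~\ref{lem:UCB_ensures_convergence}, which supplies an integrable time $N_1$ after which the UCB leader is always the best arm, and the tracking guarantee Lemma~\ref{lem:tracking_guaranty}, which pins $N^{i^\star}_{n,i^\star}$ to $\beta L_{n,i^\star}$ up to additive constants. The only genuine bookkeeping step is to pass from the ``leader-count'' $N^{i^\star}_{n,i^\star}$ (pulls of $i^\star$ while $i^\star$ was the leader) to the total count $N_{n,i^\star}$, and this is where the design with $K$ independent tracking procedures is convenient.

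First I would fix $\varepsilon > 0$, take $N_1 > K$ as in Lemma~\ref{lem:UCB_ensures_convergence}, and note that on $\{n > N_1\}$ we have $B^{\text{UCB}}_t = i^\star$ for every $t \in \{N_1, \dots, n-1\}$. Hence $L_{n,i^\star} \ge n - N_1$, and since $\sum_{j \in [K]} L_{n,j} = n-1-K$, also $\sum_{j \ne i^\star} L_{n,j} \le N_1$. Because $i^\star$ can only be sampled as a challenger at a round whose leader is some $j \ne i^\star$, this gives $\sum_{j \ne i^\star} N^j_{n,i^\star} \le \sum_{j \ne i^\star} L_{n,j} \le N_1$. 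Writing $N_{n,i^\star} = 1 + N^{i^\star}_{n,i^\star} + \sum_{j \ne i^\star} N^j_{n,i^\star}$ (the $1$ being the initialisation pull), I obtain $N^{i^\star}_{n,i^\star} \le N_{n,i^\star} \le N^{i^\star}_{n,i^\star} + N_1 + 1$.

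Next I would apply Lemma~\ref{lem:tracking_guaranty}, namely $\beta L_{n,i^\star} - \tfrac12 \le N^{i^\star}_{n,i^\star} \le \beta L_{n,i^\star} + 1$, together with $n - N_1 \le L_{n,i^\star} \le n-1$, to conclude $\beta(n-1) - \beta N_1 - \tfrac12 \le N_{n,i^\star} \le \beta(n-1) + N_1 + 2$, that is $|N_{n,i^\star} - \beta(n-1)| \le N_1 + 2$. Dividing by $n-1$ and setting $N_2 \eqdef \max\{N_1, \lceil (N_1+2)/\varepsilon \rceil + 1\}$ makes $|N_{n,i^\star}/(n-1) - \beta| \le \varepsilon$ for all $n \ge N_2$. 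Finally, integrability follows from $N_2 \le N_1 + (N_1+2)/\varepsilon + 2$, so $\bE_\mu[N_2] \le (1 + 1/\varepsilon)\bE_\mu[N_1] + 2 + 2/\varepsilon < +\infty$, using $\bE_\mu[N_1] < +\infty$ (indeed all moments of $N_1$ are finite by Lemma~\ref{lem:W_concentration_gaussian}).

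I do not expect a real obstacle here: the substantive content is packaged in Lemmas~\ref{lem:suff_exploration}, \ref{lem:UCB_ensures_convergence} and \ref{lem:tracking_guaranty}. The one point needing a little care is the second paragraph's accounting, i.e.\ checking that the challenger pulls of $i^\star$ after time $N_1$ are negligible; this is precisely where having $\sum_{j \ne i^\star} L_{n,j}$ small (rather than merely $\sum_{j\ne i^\star} N^j_{n,j}$ small) makes the argument immediate.
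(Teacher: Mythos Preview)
Your proposal is correct and follows essentially the same approach as the paper: use Lemma~\ref{lem:UCB_ensures_convergence} to freeze the leader at $i^\star$ after a random integrable time, bound the challenger pulls of $i^\star$ by the total leader count of the other arms (which is at most that random time), and use the tracking guarantee Lemma~\ref{lem:tracking_guaranty} to tie $N^{i^\star}_{n,i^\star}$ to $\beta L_{n,i^\star}$. The paper packages the same three ingredients into the triangle-inequality decomposition
\[
\left|\frac{N_{n,i^\star}}{n-1}-\beta\right|\le \frac{|N^{i^\star}_{n,i^\star}-\beta L_{n,i^\star}|}{n-1}+\beta\left|\frac{L_{n,i^\star}}{n-1}-1\right|+\frac{1}{n-1}\sum_{k\neq i^\star}N^{k}_{n,i^\star},
\]
and also invokes $N_0$ from Lemma~\ref{lem:suff_exploration} alongside $N_1$; that extra reference is redundant here (sufficient exploration is already used inside the proof of Lemma~\ref{lem:UCB_ensures_convergence}), so your omission is fine.
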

\begin{proof}
Let $N_0$ as in Lemma~\ref{lem:suff_exploration} and $C_{1}$ as in Lemma~\ref{lem:UCB_ensures_convergence}.
For all $n \geq \max \{ N_1, N_0 \}$, we have $B_n = i^\star$. Let $M \ge \max \{ N_1, N_0 \}$.
Then, we have $L_{n,i^\star} \ge n - M$ and $\sum_{k \neq i^\star} N^{k}_{n,i^\star} \le M - 1$ for all $n \ge M +1$.
Using Lemma~\ref{lem:tracking_guaranty}, we have
\begin{align*}
	\left| \frac{N_{n, i^\star}}{n-1} - \beta \right| &\le \frac{|N^{i^\star}_{n, i^\star} - \beta L_{n,i^\star}|}{n-1} + \beta \left|\frac{L_{n,i^\star}}{n-1} - 1 \right| + \frac{1}{n-1} \sum_{k \neq i^\star} N^{k}_{n,i^\star} \le \frac{1}{2(n-1)} + \beta \frac{2(M-1)}{n-1} \: .
\end{align*}
Taking $N_{2} = \max \{ N_1, N_0 , \frac{1/2+2\beta (M-1)}{\epsilon} + 1\}$ yields the result.
\end{proof}

We can use the proof of \cite{jourdan_2022_TopTwoAlgorithms} to obtain Lemma~\ref{lem:TC_ensures_convergence}.
While their result accounts for the randomization of the sampling procedure, the argument is direct for tracking since it removes the need for a concentration argument.
\begin{lemma}[Lemma 20 in \cite{jourdan_2022_TopTwoAlgorithms}] \label{lem:TC_ensures_convergence}
		Let $\varepsilon > 0$.
		Under the \hyperlink{TTUCB}{TTUCB} sampling rule, there exists $N_3$ with $\bE_{\mu}[N_3] < +\infty$ such that for all $n \geq N_3$ and all $i \neq i^\star(\mu)$,
	\begin{equation*} 
	\frac{N_{n,i}}{n-1} \geq w^{\star}_{\beta,i} + \epsilon  \quad \implies \quad C_{n}^{\text{TC}} \neq i  \: .
\end{equation*}
\end{lemma}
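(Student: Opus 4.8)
The plan is to reproduce, in the tracking setting, the argument behind Lemma~20 of \citet{jourdan_2022_TopTwoAlgorithms}; since we use tracking, the concentration-of-sampling step there is replaced by Lemma~\ref{lem:tracking_guaranty}. We work under the standing assumption $\min_{i\neq j}|\mu_i-\mu_j|>0$. Fix $\varepsilon>0$ and a parameter $\eta\in(0,\varepsilon)$ to be chosen at the end as a function of $(\mu,\varepsilon)$. First I would collect the ``good event'' valid for all large $n$: by Lemma~\ref{lem:suff_exploration} there is $N_0$ with $\bE_\mu[N_0]<+\infty$ such that $N_{n,j}\ge\sqrt{n/K}$ for every $j\in[K]$; by Lemma~\ref{lem:UCB_ensures_convergence} there is $N_1$ of finite expectation with $B_n=i^\star$; by Lemma~\ref{lem:convergence_towards_optimal_allocation_best_arm} there is $N_2$ of finite expectation with $|N_{n,i^\star}/(n-1)-\beta|\le\eta$; and by Lemma~\ref{lem:W_concentration_gaussian} together with $N_{n,j}\ge\sqrt{n/K}$ there is $N_2'$ of finite expectation with $|\mu_{n,j}-\mu_j|\le\eta$ for all $j$. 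Taking $2\eta<\min_{j\neq i^\star}(\mu_{i^\star}-\mu_j)$ we then have, on $\{n\ge\max(N_0,N_1,N_2,N_2')\}$, that $B_n=i^\star$ and $\mu_{n,i^\star}-\mu_{n,j}>0$ for all $j\neq i^\star$, so by \eqref{eq:tc_challenger}
\[
  C^{\text{TC}}_n=\argmin_{j\neq i^\star}\frac{\mu_{n,i^\star}-\mu_{n,j}}{\sqrt{1/N_{n,i^\star}+1/N_{n,j}}}\,.
\]

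Next I would record the equalizer property of $w^{\star}_{\beta}$: since $w^{\star}_{\beta,j}>0$ for all $j$ (Lemma~\ref{lem:properties_characteristic_times}), the nonnegativity constraints are inactive and, as recalled in the proof of Lemma~\ref{lem:rewriting_half_characteristictime}, at the optimum all transportation costs are equal, so $\frac{(\mu_{i^\star}-\mu_j)^2}{2(1/\beta+1/w^{\star}_{\beta,j})}=T^\star_\beta(\mu)^{-1}$ for every $j\neq i^\star$, i.e. $\frac{\mu_{i^\star}-\mu_j}{\sqrt{1/\beta+1/w^{\star}_{\beta,j}}}=\rho:=\sqrt{2T^\star_\beta(\mu)^{-1}}$ for every $j\neq i^\star$. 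Now fix $i\neq i^\star$ with $N_{n,i}/(n-1)\ge w^{\star}_{\beta,i}+\varepsilon$; if $K=2$ this is impossible for large $n$ (then $N_{n,i}/(n-1)\to 1-\beta=w^{\star}_{\beta,i}$), so assume $K\ge3$. From $\sum_{j\neq i^\star}N_{n,j}/(n-1)=1-N_{n,i^\star}/(n-1)\le 1-\beta+\eta=\sum_{j\neq i^\star}w^{\star}_{\beta,j}+\eta$, subtracting the over-allocated arm $i$ and averaging over the remaining $K-2$ suboptimal arms produces some $k\notin\{i^\star,i\}$ with $N_{n,k}/(n-1)\le w^{\star}_{\beta,k}-(\varepsilon-\eta)/(K-2)\le w^{\star}_{\beta,k}$.

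Then I would compare the transportation costs of $i$ and $k$. Using $N_{n,i^\star}/(n-1)\in[\beta-\eta,\beta+\eta]$, $N_{n,i}/(n-1)\ge w^{\star}_{\beta,i}+\varepsilon$, $N_{n,k}/(n-1)\le w^{\star}_{\beta,k}$, $|\mu_{n,j}-\mu_j|\le\eta$, and that all counts are $\Theta(n-1)$, division by $\sqrt{n-1}$ gives
\[
  \frac{1}{\sqrt{n-1}}\cdot\frac{\mu_{n,i^\star}-\mu_{n,i}}{\sqrt{1/N_{n,i^\star}+1/N_{n,i}}}\ \ge\ \frac{\mu_{i^\star}-\mu_i-2\eta}{\sqrt{1/(\beta-\eta)+1/(w^{\star}_{\beta,i}+\varepsilon)}},
\]
\[
  \frac{1}{\sqrt{n-1}}\cdot\frac{\mu_{n,i^\star}-\mu_{n,k}}{\sqrt{1/N_{n,i^\star}+1/N_{n,k}}}\ \le\ \frac{\mu_{i^\star}-\mu_k+2\eta}{\sqrt{1/(\beta+\eta)+1/w^{\star}_{\beta,k}}}.
\]
As $\eta\to0$ the first right-hand side converges to $(\mu_{i^\star}-\mu_i)/\sqrt{1/\beta+1/(w^{\star}_{\beta,i}+\varepsilon)}$, which exceeds $\rho$ by a fixed positive amount (strict monotonicity of $x\mapsto(\mu_{i^\star}-\mu_i)/\sqrt{1/\beta+1/x}$), while the second converges to $(\mu_{i^\star}-\mu_k)/\sqrt{1/\beta+1/w^{\star}_{\beta,k}}=\rho$. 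Hence there is $\eta_0=\eta_0(\mu,\varepsilon)\in(0,\varepsilon)$ for which, with $\eta=\eta_0$, the cost of $k$ is strictly below that of $i$ for all $n\ge N_3:=\max(N_0,N_1,N_2,N_2')$ (evaluated at $\eta_0$). Then $i$ is not an $\argmin$, so $C^{\text{TC}}_n\neq i$; each of $N_0,N_1,N_2,N_2'$ has finite expectation, so $\bE_\mu[N_3]<+\infty$.

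The main obstacle is the uniform choice of $\eta_0$: one must pick a single threshold so that both strict inequalities survive the $O(\eta)$ perturbations of the means and of $N_{n,i^\star}/(n-1)$, simultaneously over every over-allocated suboptimal arm $i$ and every pigeonhole partner $k$. This is fine because there are only finitely many arms and $\mu,w^{\star}_{\beta}$ are fixed, so one takes the worst case over the finitely many pairs $(i,k)$; the only place where $n$ enters is the $\Theta(\sqrt{n-1})$ scaling, which dominates the constant error $\eta$ once the exploration bound $N_{n,j}\ge\sqrt{n/K}$ is active. This is also precisely where tracking simplifies matters: $N_{n,i^\star}/(n-1)$ is pinned to $\beta$ deterministically via Lemma~\ref{lem:tracking_guaranty}, rather than through a martingale/Bernoulli concentration argument as in the randomized-sampling analysis.
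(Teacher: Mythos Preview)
Your argument is correct and is precisely the route the paper alludes to: it does not give a self-contained proof but defers to Lemma~20 of \citet{jourdan_2022_TopTwoAlgorithms}, remarking only that under tracking ``the argument is direct \ldots\ since it removes the need for a concentration argument.'' You have filled in exactly that argument---sufficient exploration plus $B_n=i^\star$ plus the equalizer property of $w^\star_\beta$, then a pigeonhole to exhibit an under-allocated competitor $k$ whose transportation cost undercuts that of the over-allocated arm $i$---and you correctly replace the Bernoulli-sampling concentration step by the deterministic control of $N_{n,i^\star}/(n-1)$ from Lemma~\ref{lem:convergence_towards_optimal_allocation_best_arm} (itself a consequence of Lemma~\ref{lem:tracking_guaranty}).
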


Combining Lemma~\ref{lem:TC_ensures_convergence} with tracking properties (Lemma~\ref{lem:tracking_guaranty}), we obtain convergence towards the $\beta$-optimal allocation for all arms (Lemma~\ref{lem:convergence_towards_optimal_allocation_other_arms}).
\begin{lemma} \label{lem:convergence_towards_optimal_allocation_other_arms}
		Let $\varepsilon > 0$.
		Under the \hyperlink{TTUCB}{TTUCB} sampling rule, there exists $N_{4}$ with $\bE_{\mu}[N_4] < +\infty$ such that for all $n \geq N_{4}$,
	\[
		\forall i \in [K], \quad \left| \frac{N_{n,i}}{n-1} - w^{\star}_{\beta,i} \right| \leq  \varepsilon \: .
	\]
\end{lemma}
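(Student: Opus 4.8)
The plan is to follow the standard two-step template for the convergence of Top Two allocations, as in \citet{Shang20TTTS,jourdan_2022_TopTwoAlgorithms}. First I would show that no sub-optimal arm can \emph{overshoot} its target $w^{\star}_{\beta,i}$ by more than $\varepsilon' \eqdef \varepsilon/(K-1)$ once $n$ is large enough, using Lemma~\ref{lem:TC_ensures_convergence}. Then I would combine these one-sided upper bounds with the best-arm convergence (Lemma~\ref{lem:convergence_towards_optimal_allocation_best_arm}, with tolerance $\varepsilon'$) and the simplex identities $\sum_{i} N_{n,i} = n-1$ and $\sum_{i \neq i^\star} w^{\star}_{\beta,i} = 1-\beta$ to produce matching lower bounds, and hence the two-sided bound $|N_{n,i}/(n-1) - w^{\star}_{\beta,i}| \le \varepsilon$ for every arm.

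For the overshoot step, fix $i \neq i^\star$ and suppose $N_{n,i}/(n-1) > w^{\star}_{\beta,i} + \varepsilon'$ for some large $n$. By Lemma~\ref{lem:suff_exploration}, for $n \ge N_0$ arm $i$ has been pulled at least $\sqrt{n/K}$ times, so a last pull time $t \eqdef \max\{ s < n \mid I_s = i \}$ exists and $N_{n,i} = N_{t,i}+1$. If $t < \max\{N_1,N_3\}$ ($N_1$ from Lemma~\ref{lem:UCB_ensures_convergence}, $N_3$ from Lemma~\ref{lem:TC_ensures_convergence} with tolerance $\varepsilon'/2$), then $N_{n,i} \le \max\{N_1,N_3\}$, which contradicts the overshoot as soon as $n-1 \ge \max\{N_1,N_3\}/\varepsilon'$. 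Otherwise $t \ge \max\{N_1,N_3\}$, so $B_t = B^{\text{UCB}}_t = i^\star$ and, since $I_t = i \neq i^\star = B_t$, we must have $C_t = C^{\text{TC}}_t = i$. On the other hand, from $t \le n-1$,
\[
\frac{N_{t,i}}{t-1} \ge \frac{N_{t,i}}{t} = \frac{N_{n,i}-1}{t} \ge \frac{N_{n,i}}{n-1} - \frac{1}{t} > w^{\star}_{\beta,i} + \varepsilon' - \frac{1}{t} \ge w^{\star}_{\beta,i} + \frac{\varepsilon'}{2}
\]
whenever $t \ge 2/\varepsilon'$, so Lemma~\ref{lem:TC_ensures_convergence} gives $C^{\text{TC}}_t \neq i$, a contradiction. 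Hence there is a random time of finite expectation past which $N_{n,i}/(n-1) \le w^{\star}_{\beta,i} + \varepsilon'$ for all $i \neq i^\star$.

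For the second step, with $N_2$ as in Lemma~\ref{lem:convergence_towards_optimal_allocation_best_arm} applied with tolerance $\varepsilon'$ we have $N_{n,i^\star}/(n-1) \le \beta + \varepsilon'$ for $n \ge N_2$, hence $\sum_{i \neq i^\star} N_{n,i}/(n-1) \ge 1-\beta-\varepsilon'$. For a fixed $j \neq i^\star$,
\[
\frac{N_{n,j}}{n-1} \ge (1-\beta-\varepsilon') - \sum_{i \neq i^\star,\, i \neq j}\left( w^{\star}_{\beta,i} + \varepsilon' \right) = w^{\star}_{\beta,j} - (K-1)\varepsilon' = w^{\star}_{\beta,j} - \varepsilon ,
\]
using $\sum_{i \neq i^\star} w^{\star}_{\beta,i} = 1-\beta$ and $w^{\star}_{\beta,i^\star}=\beta$. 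Together with the upper bounds of the first step and the bound for $i^\star$ (within $\varepsilon' \le \varepsilon$ of $\beta$), this yields $|N_{n,i}/(n-1) - w^{\star}_{\beta,i}| \le \varepsilon$ for all $i \in [K]$ once $n \ge N_4$, where $N_4$ is the maximum of the finitely many threshold random variables used above ($N_0,N_1,N_2,N_3$ and the deterministic quantities $\max\{N_1,N_3\}/\varepsilon'$, $2/\varepsilon'$). Each has finite expectation — deterministic, or bounded by a polynomial in the sub-Gaussian variable $W_\mu$ of Lemma~\ref{lem:W_concentration_gaussian} — so $\bE_\mu[N_4] < +\infty$.

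The main obstacle here is purely bookkeeping in the overshoot argument: correctly passing between $N_{t,i}/(t-1)$ at the last-pull time $t$ and $N_{n,i}/(n-1)$ (the $\pm 1$ corrections, and the small-$t$ subcase), and collecting the various a.s.-finite thresholds into a single $N_4$ with finite expectation. The genuinely new ingredient relative to the sampling-based analyses is that the tracking guarantee (Lemma~\ref{lem:tracking_guaranty}) eliminates the concentration step needed to go from pull probabilities to realized counts; this is already absorbed into Lemmas~\ref{lem:UCB_ensures_convergence}, \ref{lem:convergence_towards_optimal_allocation_best_arm} and \ref{lem:TC_ensures_convergence}, so the present lemma follows by the deterministic manipulation above.
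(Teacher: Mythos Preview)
Your proposal is correct and follows essentially the same two–step template as the paper: use Lemma~\ref{lem:TC_ensures_convergence} (together with $B_n=i^\star$ from Lemma~\ref{lem:UCB_ensures_convergence}) to rule out overshoots $N_{n,i}/(n-1)>w^\star_{\beta,i}+\varepsilon'$ for $i\neq i^\star$, then combine with Lemma~\ref{lem:convergence_towards_optimal_allocation_best_arm} and the simplex identity to get the matching lower bounds. The only cosmetic difference is that the paper bounds the number of pulls of $i$ after a fixed time $M$ via the last time $t_{n-1,i}(\epsilon)$ at which $N_{t,i}/(n-1)\le w^\star_{\beta,i}+\epsilon$, whereas you argue by contradiction at the last pull time; both are standard and equivalent. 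One tiny bookkeeping slip: in your ``otherwise'' branch you need $t\ge 2/\varepsilon'$ as well as $t\ge\max\{N_1,N_3\}$, so the case split should use $\max\{N_1,N_3,2/\varepsilon'\}$, which makes the relevant $n$–threshold $\max\{N_1,N_3,2/\varepsilon'\}/\varepsilon'$ (not the separately listed $2/\varepsilon'$); this does not affect finiteness of $\bE_\mu[N_4]$.
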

\begin{proof}
Let $N_0$, $C_{1}$, $N_{2}$ and $N_3$ as in Lemmas~\ref{lem:suff_exploration},~\ref{lem:UCB_ensures_convergence},~\ref{lem:convergence_towards_optimal_allocation_best_arm} and~\ref{lem:TC_ensures_convergence}.
For all $n \geq \max \{N_0, N_1, N_2, N_3\}$, we have $B_{n}^{\text{UCB}} = i^\star$, $\left| \frac{N_{n, i^\star}}{n-1} - \beta \right| \leq  \varepsilon$ and for all $i \neq i^\star$,
\[
 	\frac{N_{n-1,i}}{n-1} \geq w^{\star}_{\beta,i} + \epsilon \: \implies \: C_{n}^{\text{TC}} \neq i  \: .
\]
	Let $M \ge \max \{N_0, N_1, N_2, N_3\}$ and $n \ge N_5 = \max\{\frac{M-1}{\epsilon} + 1, M\}$.
	Let $t_{n-1,i}(\epsilon)=\max \left\{t \leq n \mid \frac{N_{t, i}}{n-1} \leq w^{\star}_{\beta,i}+ \epsilon \right\}$.
	Since $\frac{N_{t, i}}{n-1} \le \frac{N_{t, i}}{t-1}$ for $t\leq n$, we have
	\begin{align*}
	\frac{N_{n,i}}{n-1} &\le \frac{M-1}{n-1} + \frac{1}{n-1} \sum_{t=M}^{n} \indi{B_{t}^{\text{UCB}} = i^\star, C_{t}^{\text{TC}} = i, \: I_t = i} \\
	&\le \epsilon + \frac{1}{n-1} \sum_{t=M}^{n} \indi{\frac{N_{t, i}}{n-1} \le w^{\star}_{\beta,i}+ \epsilon} \indi{B_{t}^{\text{UCB}} = i^\star, C_{t}^{\text{TC}} = i, \: I_t = i} \\
	&\le  \epsilon + \frac{N_{t_{n-1,i}(\epsilon),i}}{n-1} \le w^{\star}_{\beta,i} + 2\epsilon \: .
	\end{align*}
As a similar upper bound is shown in the proof of Lemma~\ref{lem:convergence_towards_optimal_allocation_best_arm}, we obtain $\frac{N_{n, i}}{n-1}  \leq w^{\star}_{\beta,i} + 2\epsilon$ for all $i \in [K]$ and all $n \ge N_4 \eqdef \max \{N_0, N_1, N_2, N_3, N_5\}$.
Since $\frac{N_{n,i}}{n-1}$ and $ w^{\star}_{\beta,i}$ sum to $1$, we obtain for all $n \geq N_4$ and all $i \in [K]$,
\begin{align*}
\frac{N_{n,i}}{ n - 1} &= 1 - \sum_{k \neq i} \frac{N_{n,k}}{ n -1}
\geq 1 - \sum_{k \neq i} \left(  w^{\star}_{\beta,k} + 2\epsilon \right)
=  w^{\star}_{\beta,i} - 2(K-1)\epsilon \: .
\end{align*}
Therefore, for all $ n \geq N_4$ and all $i \in [K]$,
$
\left|\frac{N_{n, i}}{n-1} - w^{\star}_{\beta,i}\right| \leq 2(K-1)\epsilon \: .
$
Since $\bE_{\mu}[N_4] < + \infty$ and we showed the result for all $\epsilon$, this concludes the proof.
\end{proof}

Lemma~\ref{lem:finite_mean_time_eps_convergence_beta_opti_alloc} shows that $\bE_{\mu}[T_{\beta}^{\epsilon}] < +\infty$, it is a direct consequence of Lemma~\ref{lem:convergence_towards_optimal_allocation_other_arms}.
\begin{lemma} \label{lem:finite_mean_time_eps_convergence_beta_opti_alloc}
	Let $\epsilon>0$ and $T_{\beta}^{\epsilon}$ as in \eqref{eq:rv_T_eps_beta}.
	Under the \hyperlink{TTUCB}{TTUCB} sampling rule, we have $\bE_{\mu}[T_{\beta}^{\epsilon}] < +\infty$.
\end{lemma}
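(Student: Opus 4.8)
The statement is an immediate consequence of Lemma~\ref{lem:convergence_towards_optimal_allocation_other_arms}, which has already done all the heavy lifting (sufficient exploration, convergence of the best-arm proportion, and convergence of the sub-optimal proportions towards $w^{\star}_{\beta}$). The plan is simply to unwind the definition of $T_{\beta}^{\epsilon}$ in~\eqref{eq:rv_T_eps_beta} and compare it with the random time produced by that lemma.

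Concretely, fix $\epsilon > 0$ and apply Lemma~\ref{lem:convergence_towards_optimal_allocation_other_arms} with $\varepsilon = \epsilon$. This yields a random variable $N_{4}$ with $\bE_{\mu}[N_4] < +\infty$ such that, almost surely, for every $n \ge N_{4}$ we have $\left\| \frac{N_{n}}{n-1} - w^{\star}_{\beta} \right\|_{\infty} = \max_{i \in [K]} \left| \frac{N_{n,i}}{n-1} - w^{\star}_{\beta,i} \right| \le \epsilon$. In particular the set $\left\{ T \ge 1 \mid \forall n \geq T, \: \left\| \frac{N_{n}}{n-1} - w^{\star}_{\beta} \right\|_{\infty} \leq \epsilon \right\}$ contains $N_{4}$ and is therefore non-empty almost surely, so $T_{\beta}^{\epsilon}$ is well defined and $T_{\beta}^{\epsilon} \le N_{4}$ almost surely. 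Taking expectations gives $\bE_{\mu}[T_{\beta}^{\epsilon}] \le \bE_{\mu}[N_4] < +\infty$, which is the claim.

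There is essentially no obstacle here: the only point worth noting is that one must invoke Lemma~\ref{lem:convergence_towards_optimal_allocation_other_arms} at the \emph{same} accuracy level $\epsilon$ appearing in the definition of $T_{\beta}^{\epsilon}$ (which is legitimate since that lemma holds for every positive accuracy), and that finiteness of the expectation of $N_4$ in particular forces $N_4 < +\infty$ almost surely, guaranteeing $T_{\beta}^{\epsilon} < +\infty$ almost surely. All the genuine difficulty — controlling the leader via the UCB concentration events and propagating it through the tracking guarantee of Lemma~\ref{lem:tracking_guaranty} — was handled in the earlier lemmas.
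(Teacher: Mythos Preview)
Your proposal is correct and matches the paper's own proof, which simply states that the lemma is a direct consequence of Lemma~\ref{lem:convergence_towards_optimal_allocation_other_arms}. You have merely spelled out the one-line implication $T_{\beta}^{\epsilon} \le N_4$ that the paper leaves implicit.
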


Theorem~\ref{thm:asymptotic_upper_bound} is a direct consequence of Lemma~\ref{lem:small_T_eps_implies_asymptotic_optimality} and Lemma~\ref{lem:finite_mean_time_eps_convergence_beta_opti_alloc}.


\section{Implementation details and additional experiments}
\label{app:additional_experiments}

The implementations details and supplementary experiments are detailed in Appendix~\ref{app:ss_implementation_details} and Appendix~\ref{app:ss_supplementary_experiments}.

\subsection{Implementation details}
\label{app:ss_implementation_details}

\paragraph{Top Two sampling rules}
Existing Top Two algorithms are based on a sampling procedure to choose between the leader $B_n$ and the challenger $C_n$, namely sample $B_n$ with probability $\beta$, else sample $C_n$.
The difference lies in the choice of the leader and the challenger themselves.

TTTS \citep{Russo2016TTTS} uses a TS (Thompson Sampling) leader and a RS (Re-Sampling) challenger based on a sampler $\Pi_{n}$.
For Gaussian bandits, the sampler $\Pi_{n}$ is the posterior distribution $\bigtimes_{i \in [K]} \cN(\mu_{n,i}, 1/N_{n,i})$ given the improper prior $\Pi_{1} = (\cN(0, + \infty))^{K}$.
The TS leader is $B_{n}^{\text{TS}} \in \argmax_{ i \in [K]} \theta_i$ where $\theta \sim \Pi_{n}$.
The RS challenger samples vector of realizations $\theta \in \Pi_{n}$ until $B_{n} \notin \argmax_{ i \in [K]} \theta_i$, then it is defined as $C_{n}^{\text{RS}}  \argmax_{ i \in [K]} \theta_i$ for this specific vector of realization.
When the posterior $\Pi_n$ and the leader $B_n$ have almost converged towards the Dirac distribution on $\mu$ and the best arm $i^\star(\mu)$ respectively, the event $B_{n} \notin \argmax_{ i \in [K]} \theta_i$ becomes very rare.
The experiments in \cite{jourdan_2022_TopTwoAlgorithms} reveals that computing the RS challenger can require more than millions of re-sampling steps.
Therefore, the RS challenger can become computationally intractable even for Gaussian distribution where sampling from $\Pi_n$ can be done more efficiently.

T3C \citep{Shang20TTTS} combines the TS leader and the TC challenger.
$\beta$-EB-TCI \citep{jourdan_2022_TopTwoAlgorithms} combines the EB leader with the TCI challenger defined as
\[
	C^{\text{TCI}}_n = \argmin_{i \ne B_n} \indi{\mu_{n,B_n} > \mu_{n,i}} \frac{(\mu_{n,B_n} - \mu_{n,i})^2}{2(1/N_{n, B_n}+ 1/N_{n,i})} + \log(N_{n,i}) \: .
\]
Since the TC and TCI challenger can re-use computations from the stopping rule, those two challengers have no additional computational cost which makes it very attractive for larger sets of arms.

Each tracking procedure has a computational and memory cost in $\cO(1)$, hence total cost of $\cO(K)$ for the $K$ independent procedures.
Each UCB index has a computational and memory cost in $\cO(1)$, hence total cost of $\cO(K)$ to compute $B^{\text{UCB}}_{n}$ as in~\eqref{eq:ucb_leader}.
Each TC index (or stopping rule index) has a computational and memory cost in $\cO(1)$, hence total cost of $\cO(K)$ to compute $C^{\text{TC}}_{n}$ as in~\eqref{eq:tc_challenger}.
Therefore, the per-round computational and memory cost of \hyperlink{TTUCB}{TTUCB} is in $\cO(K)$.

\paragraph{Other sampling rules}
At each time $n$, Track-and-Stop (TaS) \citep{GK16} computes the optimal allocation for the current empirical mean, $w_{n} = w^\star(\mu_n)$.
Given $w_n \in \simplex$, it uses a tracking procedure to obtain an arm $I_{n}$ to sample.
On top of this tracking a forced exploration is used to enforce convergence towards the optimal allocation for the true unknown parameters.
The optimization problem defining $w^\star(\mu)$ can be rewritten as solving an equation $\psi_{\mu}(r) = 0$, where
\[
	\forall r \in (1/\min_{i \neq i^\star} (\mu_{i^\star} - \mu_i)^2, +\infty), \: \psi_{\mu}(r)  = \sum_{i \neq i^\star} \frac{1}{\left( r(\mu_{i^\star} - \mu_i)^2 - 1\right)^2} - 1
\]
The function $\psi_{\mu}$ is decreasing, and satisfies $\lim_{r \to +\infty} \psi_{\mu}(r) = -1$ and $\lim_{y \to 1/\min_{i \neq i^\star} (\mu_{i^\star} - \mu_i)^2} F_{\mu}(y) = + \infty$.
For the practical implementation of the optimal allocation, we use the approach of \cite{GK16} and perform binary searches to compute the unique solution of $\psi_{\mu}(r) = 0$.
A faster implementation based on Newton's iterates was proposed by \cite{barrier_2022_NonAsymptoticApproach} after proving that $\psi_{\mu}$ is convex.
While this improvement holds only for Gaussian distributions, the binary searches can be used for more general distributions.

DKM \citep{Degenne19GameBAI} view $T^\star(\mu)^{-1}$ as a min-max game between the learner and the nature, and design saddle-point algorithms to solve it sequentially.
At each time $n$, a learner outputs an allocation $w_n$, which is used by the nature to compute the worst alternative mean parameter $\lambda_n$.
Then, the learner is updated based on optimistic gains based on $\lambda_n$.

FWS \citep{wang_2021_FastPureExploration} alternates between forced exploration and Frank-Wolfe (FW) updates.

LUCB \citep{Shivaramal12} samples and stops based on upper/lower confidence indices for a bonus function $g$.
For Gaussian distributions, it rewrites for all $i \in [K]$ as
\begin{align*}
	U_{n,i} = \mu_{n,i} + \sqrt{\frac{2c(n-1, \delta)}{N_{n,i}}} \quad \text{and} \quad
	L_{n,i} = \mu_{n,i} - \sqrt{\frac{2c(n-1, \delta)}{N_{n,i}}} \: .
\end{align*}
At each time $n$, it samples $\hat \imath_n$ and $\argmax_{i \neq \hat \imath_n} U_{n,i}$ and stops when $L_{n,\hat \imath_n} \ge \max_{i \neq \hat \imath_n} U_{n,i}$.
The $\beta$-LUCB algorithm samples $\hat \imath_n$ with probability $\beta$, else it samples $\argmax_{i \neq \hat \imath_n} U_{n,i}$.
The stopping time is the same as for LUCB.

\paragraph{Adaptive proportions}
\cite{you2022information} propose IDS, whcih is an update mechanism for $\beta$ based on the optimality conditions for the problem underlying $T^\star(\mu)$.
For Gaussian with known homoscedastic variance, IDS can be written as
\[
	\beta_n = \frac{N_{n,B_n} d_{\text{KL}}(\mu_{n,B_n}, u_{n}(B_n,C_n))}{N_{n,B_n} d_{\text{KL}}(\mu_{n,B_n}, u_{n}(B_n,C_n)) + N_{n,C_n} d_{\text{KL}}(\mu_{n,C_n}, u_{n}(B_n,C_n))} = \frac{N_{n,C_n}}{N_{n,B_n} + N_{n,C_n}} \: ,
\]
where the second equality is obtained by direct computations which uses that
\[
	u_{n}(i,j) = \inf_{x \in \R} \left[N_{n,i}d_{\text{KL}}(\mu_{n,i},x) + N_{n,j}d_{\text{KL}}(\mu_{n,j},x)\right] = \frac{N_{n,i}\mu_{n,i} + N_{n,j}\mu_{n,j}}{N_{n,i} + N_{n,j}}\: .
\]

\paragraph{Reproducibility}
Our code is implemented in \texttt{Julia 1.7.2}, and the plots are generated with the \texttt{StatsPlots.jl} package.
Other dependencies are listed in the \texttt{Readme.md}.
The \texttt{Readme.md} file also provides detailed julia instructions to reproduce our experiments, as well as a \texttt{script.sh} to run them all at once.
The general structure of the code (and some functions) is taken from the \href{https://bitbucket.org/wmkoolen/tidnabbil}{tidnabbil} library.\footnote{This library was created by \cite{Degenne19GameBAI}, see https://bitbucket.org/wmkoolen/tidnabbil. No license were available on the repository, but we obtained the authorization from the authors.}

\begin{table}[ht]
\caption{CPU running time in seconds on random Gaussian instances ($K=10$).}
\label{tab:average_number_pulls_per_arm_add}
\begin{center}
\begin{tabular}{c c c c c c c c c c}
  \toprule
  & \hyperlink{TTUCB}{TTUCB} & EB-TCI & T3C & TTTS & TaS & FWS & DKM & LUCB & Uniform \\
\cmidrule(l){2-10}
 Average 						& $0.14$ & $0.10$ & $0.06$ & $0.82$ & $78.38$ & $7.10$ & $0.40$ & $0.06$ & $0.14$ \\
 Std 								& $0.11$ & $0.30$ & $0.05$ & $0.65$ & $50.34$ & $9.6$ & $0.30$ & $0.09$ & $0.10$ \\
\bottomrule
\end{tabular}
\end{center}
\end{table}

\begin{figure}[h]
	\centering
	\includegraphics[width=0.5\linewidth]{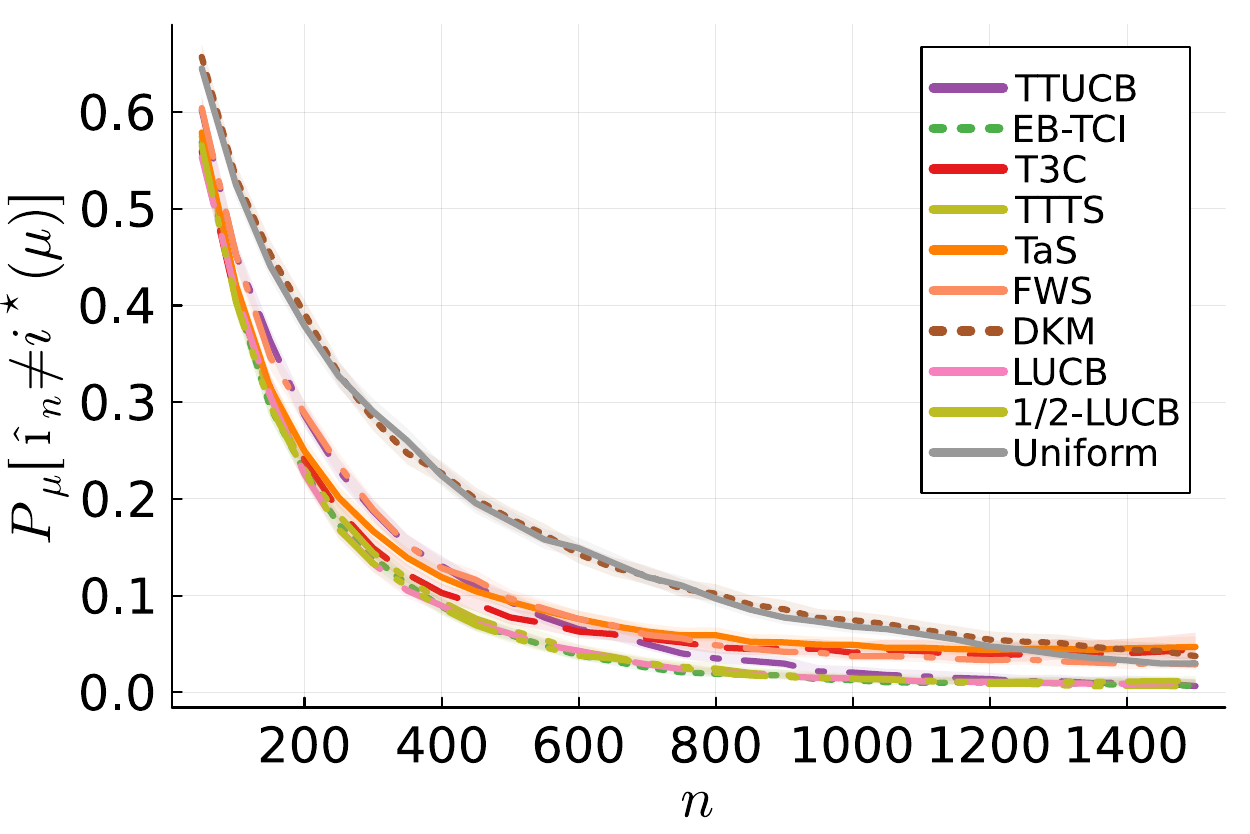}
	\caption{Empirical errors at time $n < \tau_{\delta}$ on random Gaussian instances ($K=10$).}
	\label{fig:empirical_errors_before_stopping}
\end{figure}

\subsection{Supplementary experiments}
\label{app:ss_supplementary_experiments}

\paragraph{Running time}
The CPU running time corresponding to the experiment displayed in Figure~\ref{fig:random_and_large_instances_experiments}(a) are reported in Table~\ref{tab:average_number_pulls_per_arm_add}.
They match our discussion on computational cost detailed in Appendix~\ref{app:ss_implementation_details}.
The slowest algorithm is TaS, followed closely by FWS and TTTS.
All remaining algorithms have similar computational cost: \hyperlink{TTUCB}{TTUCB}, $\beta$-EB-TCI, T3C, LUCB and uniform sampling.
It is slightly higher for DKM.

We emphasize that this is a coarse empirical comparison of the CPU running time in order to grasp the different orders of magnitude.
More efficient implementation could (and should) be used by practitioners.
As an example, the computational cost of TaS can be improved for Gaussian distributions by using the algorithm from \cite{barrier_2022_NonAsymptoticApproach} based on Newton's iterates.
However, we doubt that the faster implementation of TaS will match the computational cost of DKM or FWS.

\paragraph{Empirical errors before stopping}
At the exception of LUCB, all the considered algorithms are anytime algorithms (see \cite{JunNowak16} for a definition) since they are not using $\delta$ in their sampling rule.
While all those algorithms are $\delta$-correct, none enjoy theoretical guarantees on the probability of error before stopping, i.e. upper bounds on $\bP_{\mu}(\hat \imath_{n} \neq i^\star(\mu))$ for $n < \tau_{\delta}$.
In Figure~\ref{fig:empirical_errors_before_stopping}, we display their averaged empirical errors at time $n < \tau_{\delta}$ ( i.e. $\indi{\hat \imath_n \neq i^\star}$) corresponding to the experiment displayed in Figure~\ref{fig:random_and_large_instances_experiments}(a), with their associated Wilson Score Intervals \citep{wilson1927probable}.
To avoid an unfair comparison between algorithms having different stopping time, we restrict our plots to the median of the observed empirical stopping time.
Therefore, even the fastest algorithm will average its empirical errors on at least $2500$ instances.

Based on Figure~\ref{fig:empirical_errors_before_stopping}, we see that uniform sampling and DKM perform the worst in terms of empirical error.
At all times, the smallest empirical errors are achieved by $\beta$-EB-TCI, TTTS and LUCB.
While \hyperlink{TTUCB}{TTUCB} is at first as bad as FWS, its empirical error tends to match the one of the best algorithms for larger time.
For TaS and T3C, the trend is reversed.

\begin{figure}[h]
	\centering
	\includegraphics[width=0.5\linewidth]{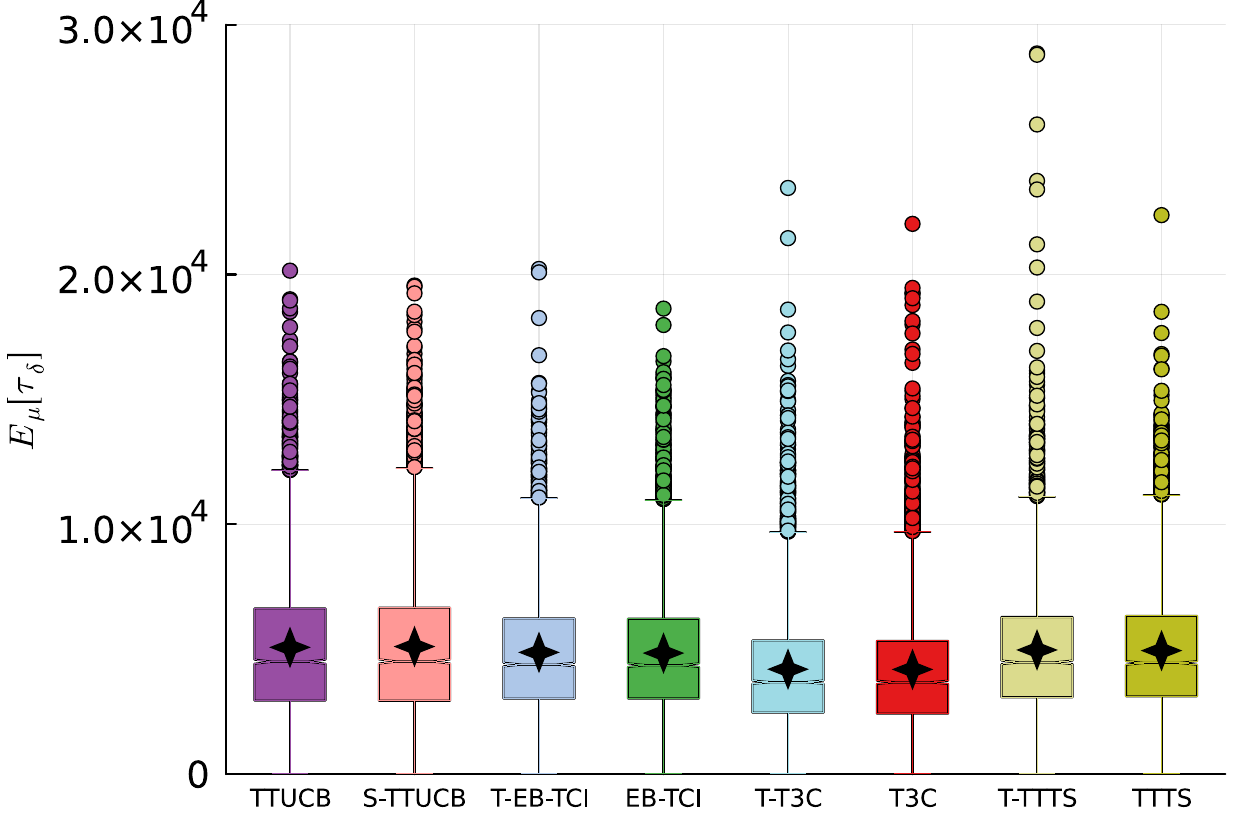}
	\caption{Empirical stopping time on random Gaussian instances ($K=10$): tracking (T-) versus sampling (S-).}
	\label{fig:tracking_vs_sampling}
\end{figure}

\paragraph{Tracking versus sampling}
The \hyperlink{TTUCB}{TTUCB} sampling rule uses tracking instead of sampling.
Since both approaches aim at doing the same with either a deterministic or a randomized approach, it is interesting to assess whether they lead to different empirical performance.
Therefore, we compare both approaches for four Top Two sampling rules.
Figure~\ref{fig:tracking_vs_sampling} reveals that the algorithmic choice of tracking or sampling has a negligible impact on the empirical stopping time.

In light of this experiment, the choice of tracking or sampling is mostly a question of theoretical analysis.
Since the analysis of a randomized sampling requires to control of the randomness of the allocation, we choose a deterministic tracking for analytical simplicity.
Moreover, this choice is natural when both the leader and challenger are deterministic.

\begin{figure*}[h]
	\centering
	\includegraphics[width=0.48\linewidth]{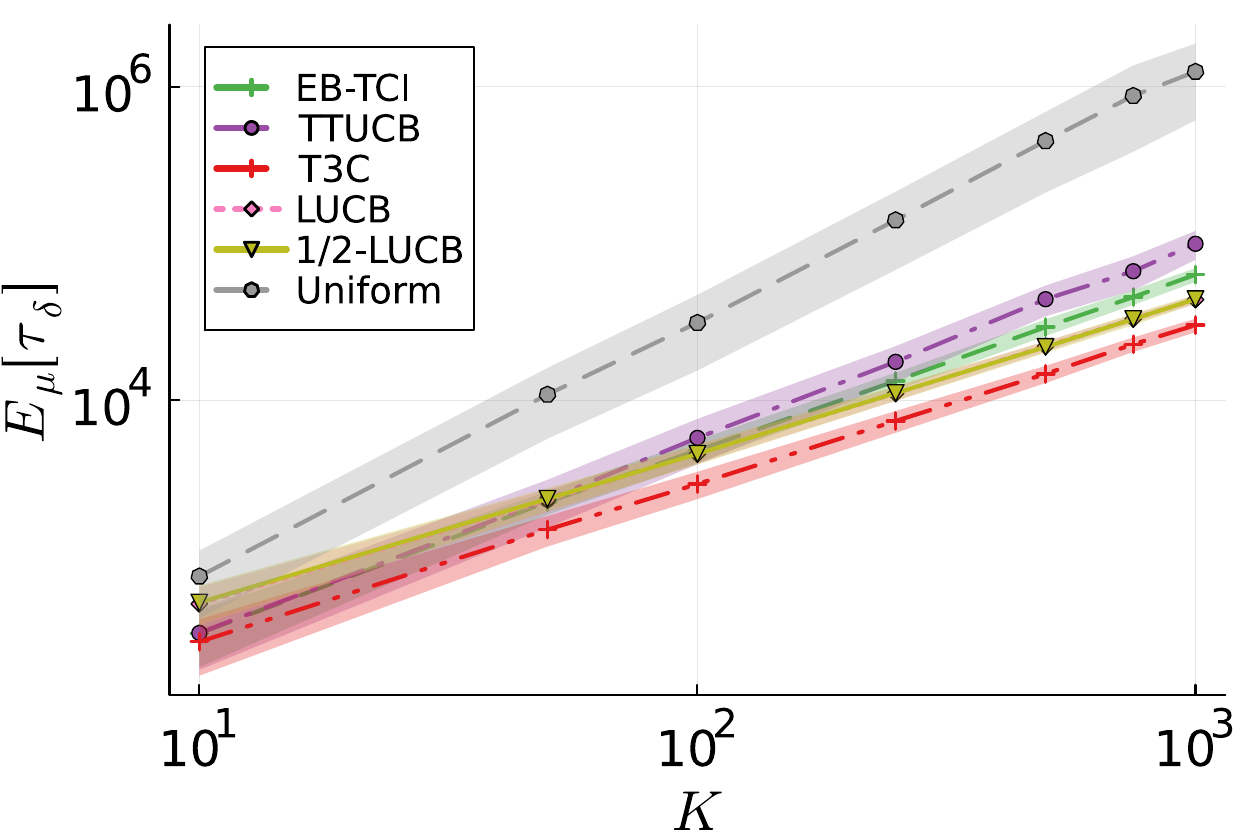}
	\includegraphics[width=0.48\linewidth]{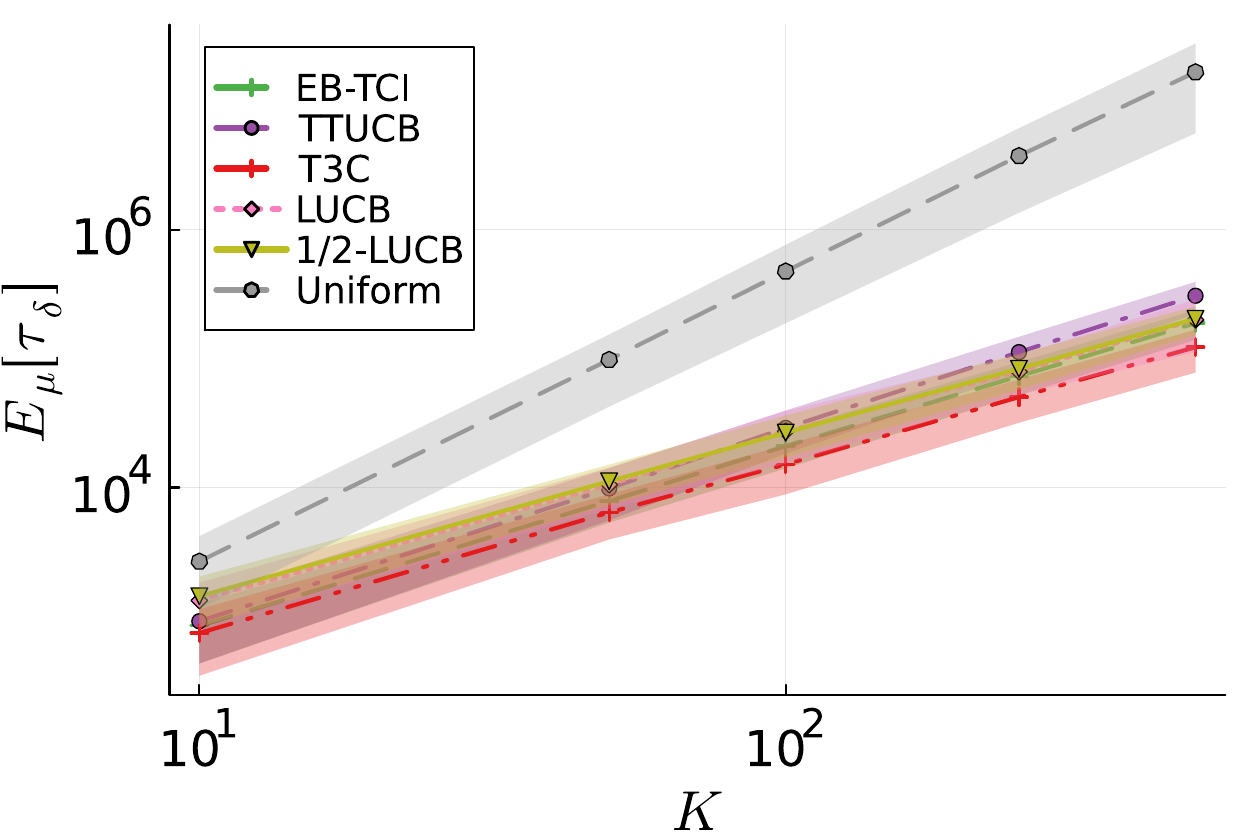}
	\caption{Influence of the dimension $K$ on the average empirical stopping time ($\pm$ standard deviation) for the Gaussian benchmark (a) ``$\alpha = 0.3$'' and (b) ``$\alpha = 0.6$''.}
	\label{fig:larger_set_arms_supp}
\end{figure*}

\paragraph{Larger sets of arms}
In addition to the results presented in Section~\ref{sec:experiments}, we also evaluate the performance of our algorithm on the two other benchmarks used in \cite{Jamieson14Survey}.
The ``$\alpha = 0.3$'' scenarios consider $\mu_i = 1 - \left(\frac{i-1}{K-1}\right)^{\alpha}$ for all $i \in [K]$, with hardness $H(\mu)  \approx 3K$.
The ``$\alpha = 0.6$'' scenarios consider $\mu_i = 1 - \left(\frac{i-1}{K-1}\right)^{\alpha}$ for all $i \in [K]$, with hardness $H(\mu)  \approx 12 K^{1.2}$.
The observations from Figure~\ref{fig:larger_set_arms_supp} are consistent with the ones in Figure~\ref{fig:random_and_large_instances_experiments}(b).
Overall, T3C performs the best for larger sets of arms.

\subsubsection{Adaptive proportions}
\label{app:sss_adaptive_proportions}

\begin{figure}[t]
	\centering
	\includegraphics[width=0.48\linewidth]{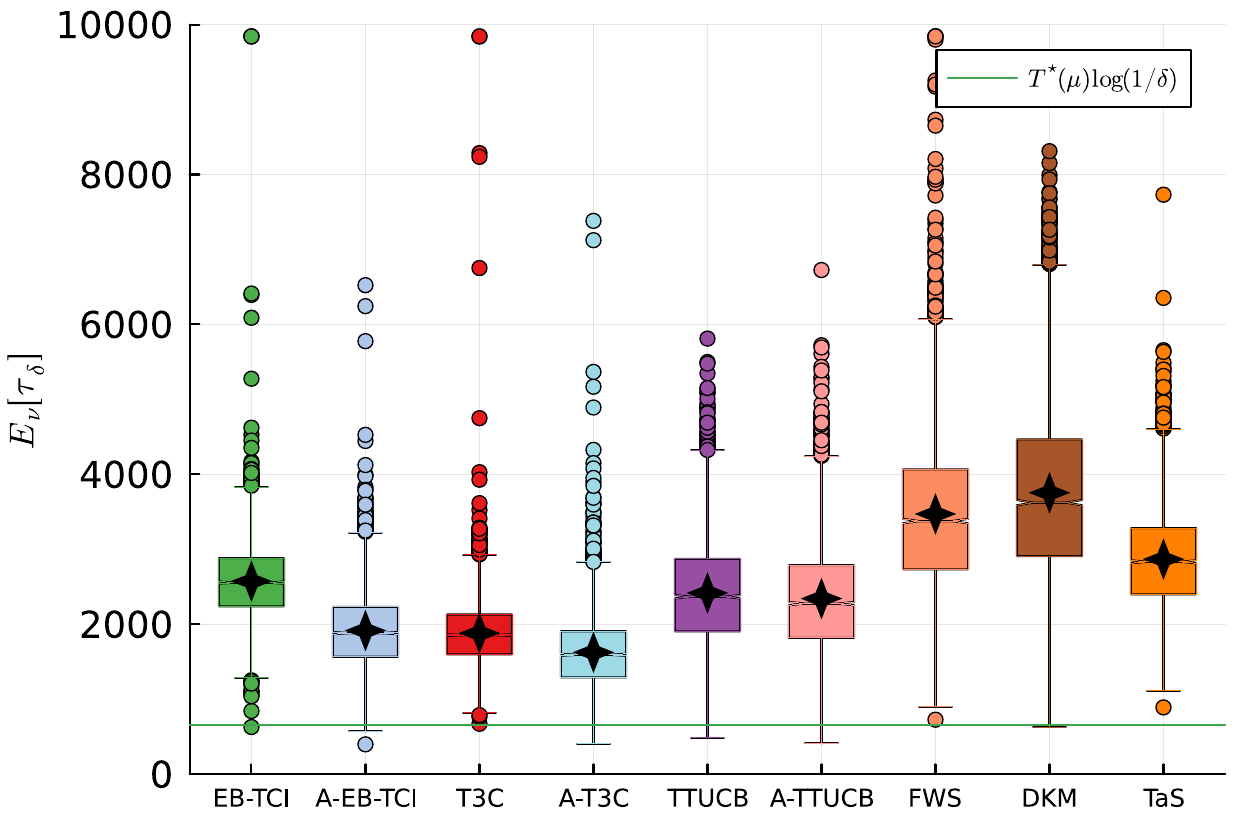} \\
	\includegraphics[width=0.48\linewidth]{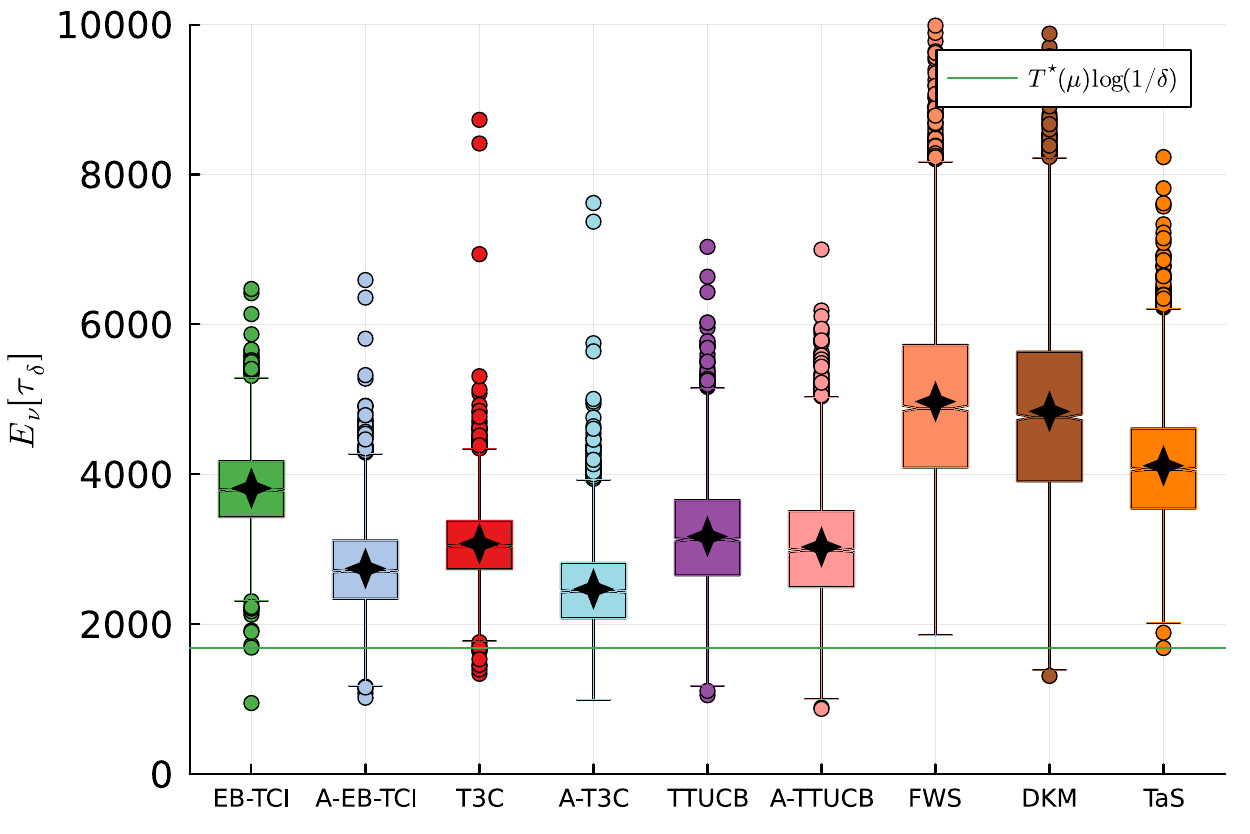}
	\includegraphics[width=0.48\linewidth]{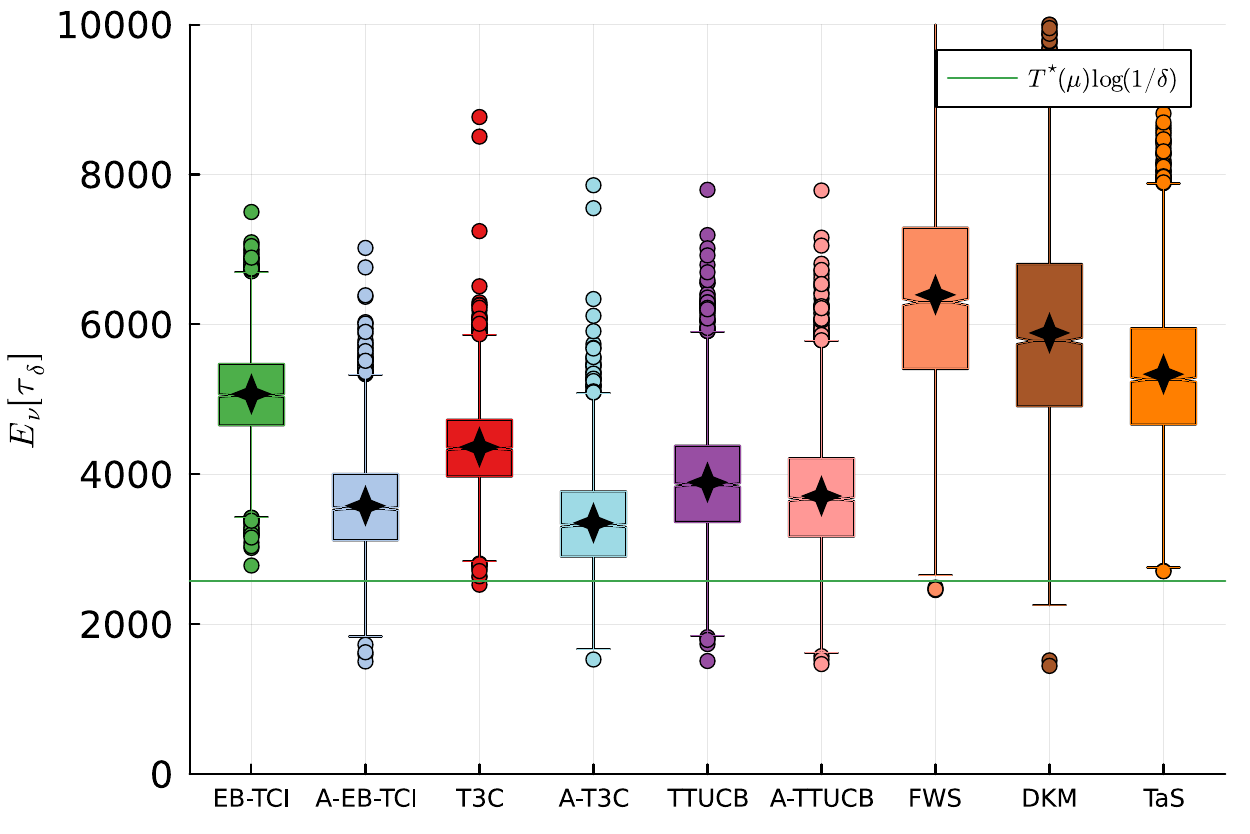}
	\caption{Empirical stopping time on ``equal means'' instances $(K, \mu_{i^\star}, \Delta) = (35, 0, 0.5)$ for (top) $\delta = 0.1$ and (bottom) $\delta \in \{0.01,0.001\}$: constant $\beta = 1/2$ and adaptive (A-).}
	\label{fig:hard_adaptive_choice_beta}
\end{figure}

The ratio $T^\star_{1/2}(\mu) / T^\star(\mu)$ seems to reach its highest value $r_{K} = 2K/(1+\sqrt{K-1})^2$ for ``equal means'' instances (Lemma~\ref{lem:improved_robust_beta_optimality_gaussian}), i.e. $\mu_{i} = \mu_{i^\star} - \Delta$ for all $i\ne i^\star$ with $\Delta > 0$.
To best observe differences between Top Two algorithms with $\beta =1/2$ and their adaptive version, we consider such instances with $K=35$ ($r_{K} \approx 3/2$).
Figure~\ref{fig:hard_adaptive_choice_beta} reveals that adaptive proportions yield better empirical performance, with an empirical speed-up close to $r_{K} \approx 3/2$.
We also compare them with three asymptotically optimal BAI algorithms (TaS, FWS and DKM). 
Even on those hard instances, Top Two algorithms with fixed $\beta = 1/2$ are outperforming the asymptotically optimal algorithms for all the values $\delta \in \{0.1,0.01,0.001\}$. 
While being only $1/2$ asymptotically optimal, Top Two algorithms (with fixed $\beta=1/2$) can obtain significantly better empirical performances compared to existing asymptotically optimal in the finite-confidence regime. 
The gap between the empirical performance of the adaptive and the fixed $\beta$ Top Two algorithms is increasing with $\delta$ decreasing. 
Interestingly, TTUCB appears to be slightly more robust to decreasing confidence $\delta$ compared to other Top Two algorithms.

We also compare Top Two algorithms using a fixed proportion $\beta =1/2$ with their adaptive counterpart using $\beta_n = N_{n,C_n}/(N_{n,B_n} + N_{n,C_n})$ at time $n$ on random instances.

\begin{figure}[h]
	\centering
	\includegraphics[width=0.5\linewidth]{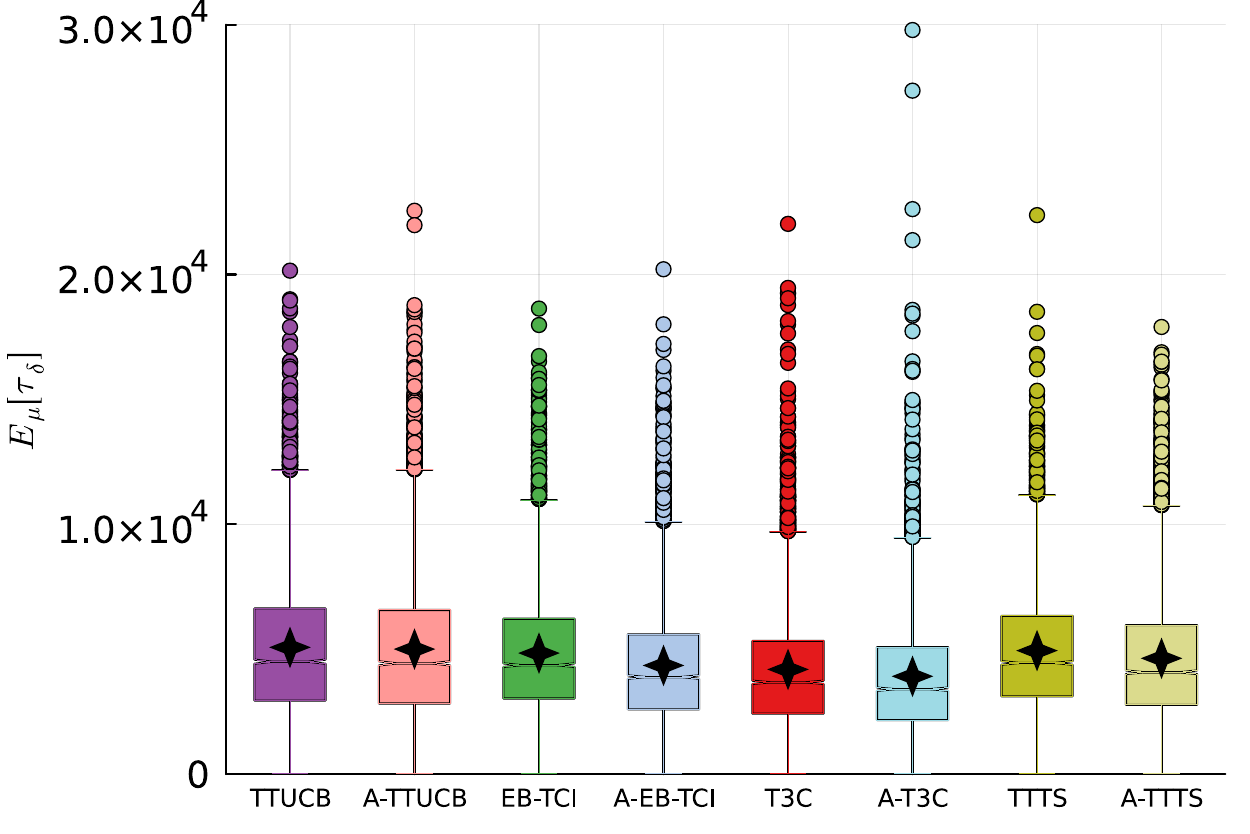}
	\caption{Empirical stopping time on random Gaussian instances ($K=10$): constant $\beta = 1/2$ and adaptive (A-).}
	\label{fig:adaptive_choice_beta}
\end{figure}

Experiments are conducted on $5000$ random Gaussian instances with $K=10$ such that $\mu_1 = 0.6$ and $\mu_i \sim \mathcal U([0.2,0.5])$ for all $i \neq 1$.
Figure~\ref{fig:adaptive_choice_beta} shows that their performances are highly similar, with a slim advantage for adaptive algorithms.
For instances with multiple close competitors, the same phenomenon appears (see Figure~\ref{fig:multiple_close_comptetitors} below).
This is expected as $T^\star_{1/2}(\mu)/ T^\star(\mu)$ is close to one when sub-optimal arms have significantly distinct means (see Figure~\ref{fig:simulation_ratio_times}).

\begin{figure}[h]
	\centering
	\includegraphics[width=0.48\linewidth]{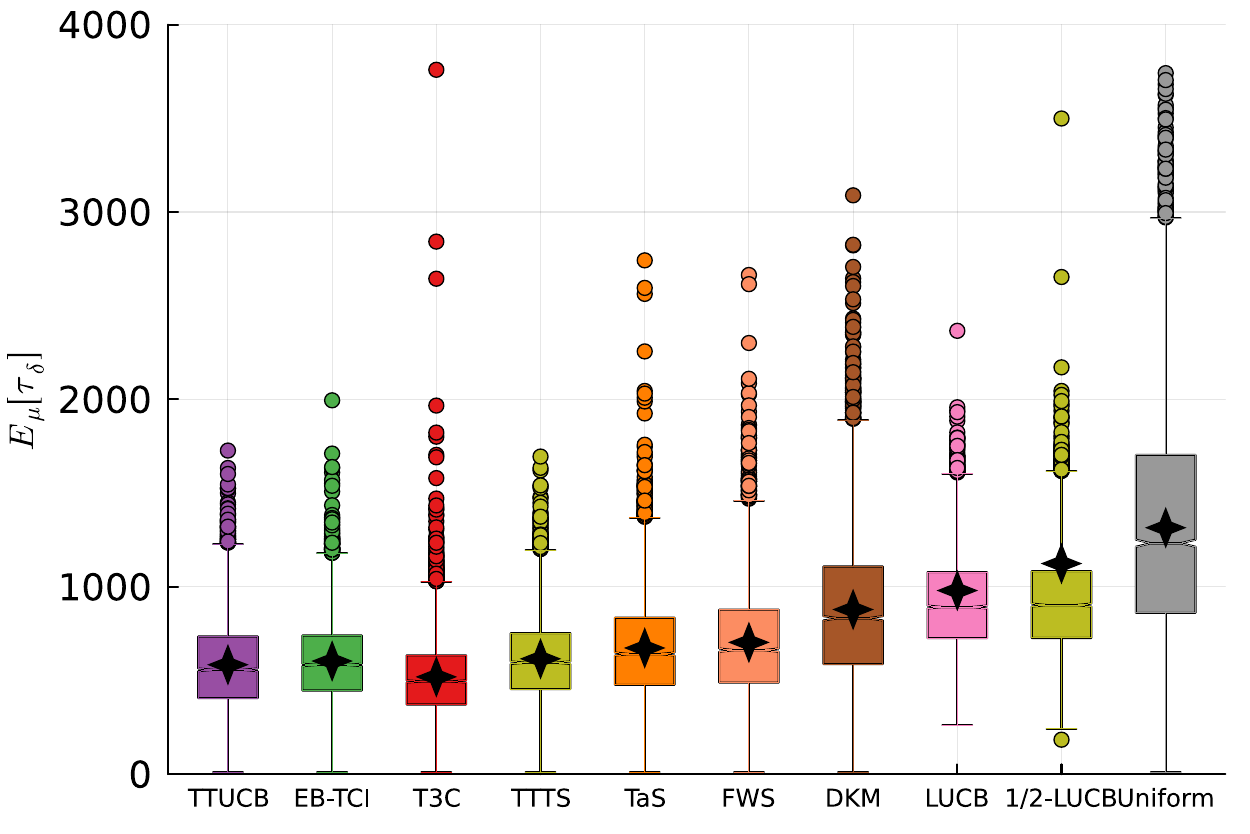}
	\includegraphics[width=0.48\linewidth]{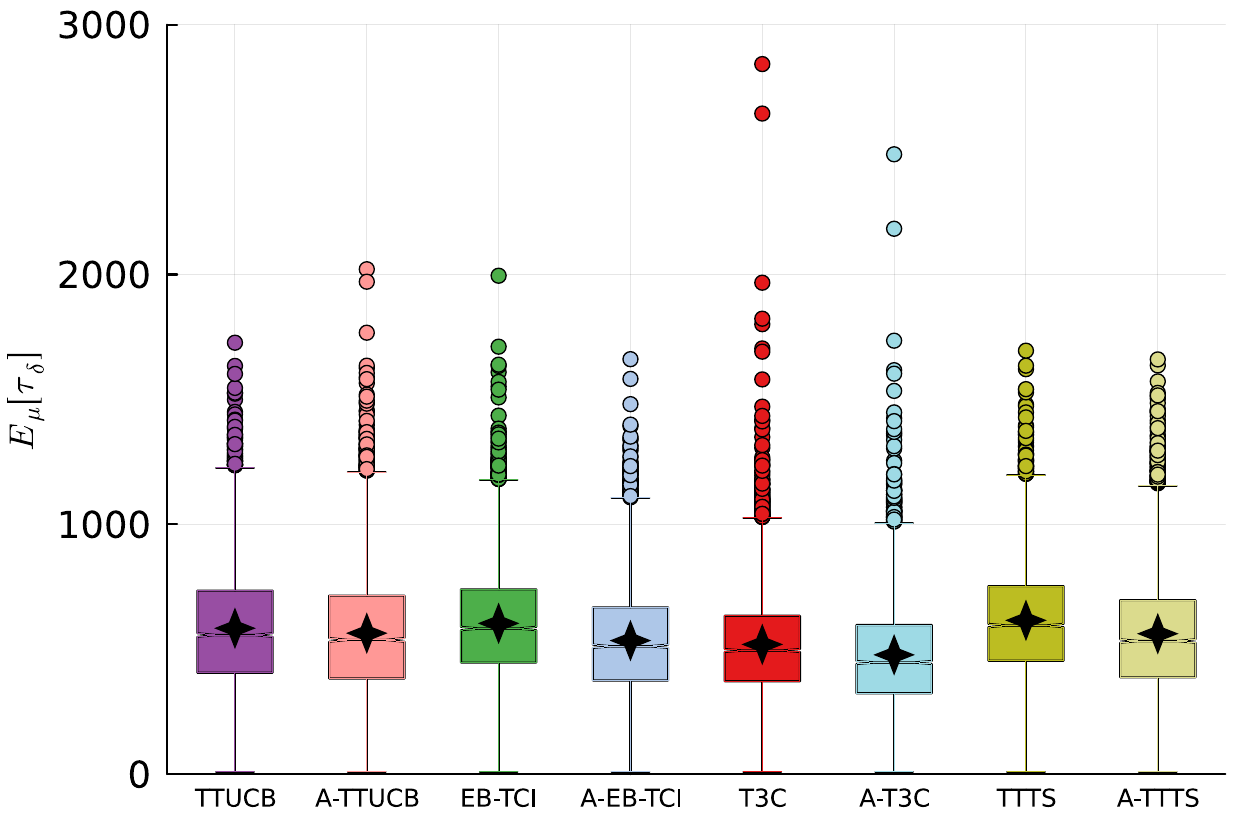}
	\caption{Empirical stopping time on random Gaussian instances ($K=10$) with multiple close competitors.}
	\label{fig:multiple_close_comptetitors}
\end{figure}

Then, we assess the performance on $5000$ random Gaussian instances with $K=10$ such that $\mu_1 = 0.6$ and $\mu_i = \mu_{1} - \Delta_{i}$ for $i \neq 1$ where $\Delta_i  = \frac{1}{20} \left( \frac{995}{1000} + \frac{u_{i}}{100}\right)$ for all $i \in \{2,3,4,5,6\}$ and $\Delta_i = \frac{1}{10} \left( \frac{995}{1000} + \frac{u_{i}}{100}\right)$ for all $i \in \{7,8,9,10\}$ with $u_i \sim U([0,1])$.
Numerically, we observe $ w^\star(\mu)_{i^\star} \approx 0.28276 \pm 0.0003$ (mean $\pm$ std).
Both plots in Figure~\ref{fig:multiple_close_comptetitors} show the same phenomena than the ones observed in Figures~\ref{fig:random_and_large_instances_experiments}(b) and~\ref{fig:adaptive_choice_beta}.

\end{document}